\DeclareMathOperator{\E}{\mathbb{E}}
\DeclareMathOperator*{\diag}{diag}
\DeclareMathOperator*{\tr}{tr}
\DeclareMathOperator*{\Tr}{Tr}
\newcommand{\norm}[1]{\left\| #1 \right\|_2}
\DeclarePairedDelimiter{\abs}{\lvert}{\rvert}
\DeclareRobustCommand{\element}[1]{\@element#1\@nil}
\def\@element#1#2\@nil{%
  #1%
  \if\relax#2\relax\else\MakeLowercase{#2}\fi}
\def \C {\mathbb{C}}
\def \N {\mathbb{N}}
\def \P {\mathbb{P}}
\def \R {\mathbb{R}}
\def \nc {s}
\def \cumu {\kappa}
\def \icacumu {c}
\def \alg {\mathcal{X}}
\def \alge {x}
\def \fnl {\varphi}
\def \mixmq {\bm{Q}}
\def \mixmqq {\bm{q}}
\def \clc {c}
\def \orth {\bm{O}}
\def \ps {{ps}}
\def \permm {\bm{P}}
\def \signm {\bm{S}}
\DeclareMathOperator*{\argmax}{arg\,max}
\def \argkurt {\bm{w}}
\def \Argkurt {\bm{W}}
\def \Argent {\bm{W}}
\def \ent {\chi}
\def \eg{\lambda}
\def \cov {\bm{C}}
\def \cond {\mathcal D}
\begin{document}

\title{Free Component Analysis: Theory, Algorithms \& Applications}

\author{\name Hao Wu \email lingluan@umich.edu \\
       \addr Department of Mathematics\\
       University of Michigan\\
       Ann Arbor, MI 48109, USA
       \AND
       \name Raj Rao Nadakuditi  \email rajnrao@umich.edu \\
       \addr Electrical Engineering  and Computer Science\\
       University of Michigan\\
       Ann Arbor, MI 48109, USA   
       }

\editor{}

\maketitle

\begin{abstract}

We describe a method for unmixing mixtures of freely independent random variables in a manner analogous to the independent component analysis (ICA) based method for unmixing independent random variables from their additive mixtures. Random matrices play the role of free random variables in this context so the method we develop, which we call Free component analysis (FCA), unmixes matrices from additive mixtures of matrices.  Thus, while the mixing model is  standard, the novelty and difference in unmixing performance  comes from the introduction of a new statistical criteria, derived from free probability theory, that quantify freeness analogous to how kurtosis and entropy quantify independence. We describe the theory, the various algorithms, and compare FCA to vanilla ICA which does not account for spatial or temporal structure. We highlight why the statistical criteria make FCA also vanilla despite its matricial underpinnings and show that FCA performs comparably to, and sometimes better than, (vanilla) ICA in every application, such as image and speech unmixing, where ICA has been known to succeed.   Our computational experiments suggest that not-so-random matrices, such as images and short time fourier transform matrix of waveforms are (closer to being) freer ``in the wild'' than we might have theoretically expected.
 
\end{abstract}

\begin{keywords}
Independent component analysis, free probability, random matrices, blind source separation
\end{keywords}

% introduction
\section{Introduction}
Principal component analysis
(PCA) \citep{pearson_1901} is a widely used dimensionality reduction technique in statistical machine learning. The principal components learned by PCA are the directions that maximize the variance, subject to a set of orthogonality constraints. Mathematically speaking, given a (centered) data matrix $\bm{Y} = \begin{bmatrix} \bm{y}_1 & \ldots & \bm{y}_{\nc}\end{bmatrix}^T$, the $i$-th principal component is the solution to the manifold optimization problem
\begin{equation}\label{eq:pca_opt}
  \bm{w}^{{\sf pca}}_{i} = \argmax_{||\bm{w}||_2 = 1} \textrm{variance}(\bm{w}^T \bm{Y}) \textrm{ subject to } \bm{w} \perp \bm{w}_1^{{\sf pca}}, \ldots, \bm{w}^{{\sf pca}}_{i-1}.   
\end{equation}

\subsection{From PCA to ICA via cumulants} 

The variance or the second  cumulant \citep{cornish1938moments} of a random variable $x$ is defined as 
\begin{equation}\label{eq:c2}
\clc_2(x) = \textrm{variance}(x) := \mathbb{E}[x^2] - \left(\mathbb{E}[x]\right)^2
\end{equation}
Substituting (\ref{eq:c2}) into (\ref{eq:pca_opt}) allows us to cast PCA as a maximization of the second cumulant:
\begin{equation}
  \bm{w}^{{\sf pca}}_{i} = \argmax_{||\bm{w}||_2 = 1}  \clc_2(\bm{w}^T \bm{Y}) \textrm{ subject to }  \bm{w} \perp \bm{w}_1^{{\sf pca}}, \ldots, \bm{w}^{{\sf pca}}_{i-1}.   \label{eq::pca_opt2}
\end{equation}

Independent component analysis (ICA) \citep{comon_1994, hyvarinen_2004} is a signal separation and dimensionality reduction technique that is obtained by replacing (in our notation) $\clc_2(\cdot)$ on the right hand side of  (\ref{eq::pca_opt2}) by the fourth cumulant $\clc_4(\cdot)$,  thereby yielding the optimization problem 
\begin{equation} \label{eq:ica_opt} 
  \bm{w}^{{\sf ica}}_{i} = \argmax_{ \norm{\bm{w}} = 1}  \left \lvert \clc_4(\bm{w}^T \bm{Y}) \right \rvert \textrm{ subject to }  \bm{w} \perp \bm{w}_1^{{\sf ica}}, \ldots, \bm{w}^{{\sf ica}}_{i-1}.      
\end{equation}
The fourth cumulant $\clc_4(\cdot)$ of a scalar random variable $x$ is equivalent to its kurtosis \citep{chissom1970interpretation,cornish1938moments}, and when $\mathbb{E}[x] = 0$ it is given by 
\citep[Eq. (6)]{smith1995recursive}
\begin{equation}
\clc_4(x) = \textrm{kurtosis}(x) := \mathbb{E}[x^4] - 3 \left(\mathbb{E}[x^2]\right)^2.
\end{equation}
We refer to the formulation in (\ref{eq:ica_opt}) as kurtosis, or $\clc_4$-ICA in short.  Replacing $\clc_4(\cdot)$ on the right hand side of (\ref{eq:ica_opt}) with the $\clc_j(\cdot)$ for integer $j \geq 3$ yields  $\clc_j$-ICA.  There are other formulations of ICA involving different objective functions, such as for example any non-quadratic, well-behaving even function as in \citep{hyvarinen1997one,hyvarinen1997family}; see 
\citep{comon_1994} for a discussion on other such contrast functions.

\subsection{Known result: ICA  unmixes mixtures of independent random variables} 

Suppose we are given a multivariate vector $\bm{z}$ modeled as 
\begin{equation}\label{eq:general model ica}
    \underbrace{\begin{bmatrix} z_1  \\ \vdots \\ z_s \end{bmatrix}}_{=:\bm{z}} =\underbrace{\begin{bmatrix}\bm{a}_1 & \cdots & \bm{a}_\nc \end{bmatrix}}_{=: \bm{A}} \underbrace{\begin{bmatrix} x_1 \\ \vdots \\ x_s   \end{bmatrix}}_{=: \bm{x}}, 
    \end{equation}
where $\bm{A}$ is a non-singular $s \times s$  mixing matrix and $\bm{{x}}$ is a vector of independent scalar-valued random variables. Assume, without loss of generality, that  $\mathbb{E}[\bm{x}] = \bm{0}$ and  $\mathbb{E}[\bm{x}\bm{x}^T] = \bm{I}$. Let $\bm{A} = \bm{U} \bm{\Sigma} \bm{V}^T$ be the singular value decomposition (SVD) of the mixing matrix. Then, we have that
$$
\cov_{\bm{z}\bm{z}} := \mathbb{E}[\bm{z}\bm{z}^T] = \bm{A} \mathbb{E}[\bm{x}\bm{x}^T] \bm{A}^T = \bm{A}\bm{A}^T = \bm{U} \bm{\Sigma}^2 \bm{U}^T.
$$
The whitened vector $\bm{y} = \cov_{\bm{z} \bm{z}}^{-1/2} \bm{z}$ has identity covariance and can be rewritten in terms of the SVD of $\bm{A}$ as 
\begin{equation}\label{eq:ica model 2}
    \bm{y} = \cov_{\bm{z} \bm{z}}^{-1/2} \bm{z} =  \bm{U} \bm{\Sigma}^{-1} \bm{U}^T  \bm{U} \bm{\Sigma} \bm{V}^T \bm{x} = \underbrace{\left(\bm{U} \bm{V}^T\right)}_{=:    \mixmq} \bm{x}.
\end{equation}
Note that $\mixmq = \bm{U} \bm{V}^T$ in (\ref{eq:ica model 2}) is an orthogonal matrix, because $\bm{U}$ and $\bm{V}$ are orthogonal matrices derived from the SVD of $\bm{A}$. Equation (\ref{eq:ica model 2}) thus reveals that the whitened vector $\bm{y}$ is related to the latent independent random variables that we wish to unmix via an orthogonal transformation. If we can estimate $\mixmq$ from $\bm{y}$, we can unmix the independent random variables by computing $\widehat{\bm{W}}^T \bm{y}$ provided $\widehat{\bm{W}} = \mixmq \bm P \bm S$ where $\bm P$ is a permutation matrix and $\signm$ is a diagonal matrix with $\pm 1$ as diagonal elements.

It is a remarkable fact \citep{comon_1994, hyvarinen_2004}   that, generically, for $\bm{y}$ modeled as in (\ref{eq:ica model 2}),  $\clc_4$-ICA as in (\ref{eq:ica_opt}) returns $\bm{W}_{\sf ica}$ such that $\bm{W}_{\sf ica}^T \bm{y}$ unmixes the mixed independent random variables. Thus ICA can be viewed as a procedure for unmixing sums of independent random variables from each other. 

The caveat of $c_4$-ICA is that no more than one of the independent random variables is Gaussian, and that the random variables do not all have a kurtosis identically equal to zero. The latter condition rules out the use of $\clc_k$-ICA for odd $k>2$ because the cumulants of a symmetric random variable are identically equal to zero, so that we would not be able to unmix a large class of random variables. 

Replacing $\clc_4$-ICA with $\clc_k$-ICA for even $k > 4$ would still not allow us to unmix more than one Gaussian random variable: this is a fundamental limit of ICA \citep[Section 2]{comon_1994}. \citep{cardoso1999high} discusses aspects related to the use of higher order contrast functions for ICA while \cite{chen2006efficient} address the important issue of the statistical efficiency of ICA estimators in the presence of limited samples.

\subsection{Our contribution: From ICA to FCA via free cumulants}

Free probability theory  is a mathematical theory developed by Voiculescu \cite{voiculescu1993analogues, Voiculescu_1994, Voiculescu_1995, Voiculescu_1997} that is a counterpart of classical probability theory, except that the random variables are  non-commutative in a manner that scalar random variables are not. 

As a probability theory, it is not surprising that free probability possesses (i) a linear functional $\varphi(\cdot)$ mapping the random variables to scalar, which plays the same role as the expectation operator $\mathbb{E}[\cdot]$ in classical probability theory,
(ii) an analog of the notion of classical independence, which is so-called ``freeness'' or free independence. 
Since free probability is designed for non-commutative random variables, intuitions that is natural in classical probability is not longer granted: for free independent $x_1$ and $x_2$, the mixed moments 
$$\varphi[(x_1 x_2)^2] = \varphi[x_1 x_2 x_1 x_2]$$
is not necessarily equal to $\varphi(x_1^2) \cdot \varphi(x_2^2)$, since $x_1 x_2 x_1 x_2 \neq x_1^2 x_2^2$ whenever $x_1$ and $x_2$ are assumed to be non-commutative. Readers are referred to Appendix \ref{sec:math_prelim} for a self-contained introduction to free probability and how it differs from classical probability.

Free probability, via free independence, provides a recipe for computing such mixed moments of freely independent random variables in a manner that is analogous to but different from classical probability theory. For our purpose here, there is a notion of free cumulants $\cumu(\cdot)$ for integer $m$ which exhibit the same properties as the classical cumulants (see Theorem \ref{thm:vanish_mix_cumu} and  \eqref{eq:freeadditivity} in Appendix \ref{subsec:prelim self adjoint}). This allows us to  cast FCA analogous to the ICA in (\ref{eq:ica_opt})  as a fourth free cumulant maximization problem of the form
\begin{equation} \label{eq:fca_opt} 
  \bm{w}^{{\sf fca}}_{i} = \argmax_{\norm{\bm{w}} = 1} \left \lvert \kappa_4(\bm{w}^T \bm{y}) \right \rvert \textrm{ subject to }  \bm{w} \perp \bm{w}_1^{{\sf fca}}, \ldots, \bm{w}^{{\sf fca}}_{i-1},   
\end{equation}
where $\kappa_4(\cdot)$ is the fourth free cumulant.  We can similarly formulate $\kappa_m$-FCA for $m \geq 3$ as we did for ICA. 

This is also where we depart from ICA in another crucial sense. We can model the random variables as self-adjoint (or symmetric) or non-self adjoint (or rectangular/non-symmetric), which give us self-adjoint and rectangular variants of FCA, respectively. \cite{voiculescu1993analogues} developed free probability theory for self-adjoint random variables; \cite{ florent_free_probability} extended it to rectangular random variables.  

In the self-adjoint setting $\kappa_4(\cdot)$ is given by  (\ref{eq:free_kurtosis_expfor}) while in the non-self adjoint (or rectangular, in a sense we shall shortly see) setting $\kappa_4(\cdot)$ is given by   (\ref{eq:def_kurt_rec}).

The development and analysis of algorithms for self-adjoint and rectangular FCA is the main contribution of this paper.

\subsection{Our main finding: FCA unmixes mixtures of free random variables} 

If we whiten the vector $\bm{z}$ as in (\ref{eq:ica model 2}) with the covariance matrix defined via the $\varphi(\cdot)$ operator as in Definition \ref{def:cov}, then we show that $\kappa_4$-FCA, just as $c_4$-ICA, returns $\bm{W}_{\sf fca} = \mixmq \bm P \bm S$ (see Theorem \ref{thm:kurt3}), and thus $\bm{W}_{\sf fca}^T \bm{y}$ unmixes the mixed free random variables.

The caveat of $\cumu_4$-FCA, analogous to the $c_4$-ICA algorithm, is that  no more than one of the free random variables can be the free probabilistic equivalent of the classical Gaussian random variable, and that the random variables do not all have a free kurtosis equal to zero. In the self-adjoint setting, the free analog of the Gaussian is the free semi-circular element \citep{free_entropy_HP} while in the rectangular setting, it is the free Poisson element \citep{florent_free_probability}.

Just as for ICA, the condition that the free kurtosis of the free random variables cannot all  equal to zero rules out the use of $\kappa_m$-FCA for odd valued $m \geq 3$  in the self-adjoint setting, because the free cumulants  of a symmetric free random variable are identically equal to zero and so we would not be able to unmix a large class of free random variables with symmetric distribution. 
On the other hand, for rectangular free random variables, cumulants odd orders are zeros by default (see \cite[(b), pp. 6]{florent_free_entropy}).

We will prove that just as in the ICA setting, replacing $\kappa_4$-FCA with $\kappa_m$-FCA for even valued $m \geq 4$ would still not allow us to unmix more than one Gaussian analog free random variable: this is a fundamental limit of FCA. Thus FCA fails whenever we have more than one free Gaussian analogs mixed together. This is the fundamental limit of FCA.

The free semi-circular element in the self-adjoint setting, and  the Poisson element in the rectangular case, are the only non-commutative random variables with higher order kurtosis equal to zero, analogous to the Gaussian in the scalar setting. Thus, we might say that FCA finds directions that maximize deviation from the semi-circularity (or Poissonity) when the random variables are self-adjoint (or rectangular, respectively).

We also develop an algorithm for FCA based on the maximization of the free entropy for both the self-adjoint \citep{voiculescu1993analogues, free_entropy_HP} and rectangular settings \citep{florent_free_entropy}, and show that FCA successfully unmixes free random variables in a similar way.  Table \ref{table:fcathems} summarizes our results. 

\begin{table}[h]
\begin{center}
\caption{FCA algorithms and their limits.}
\label{table:fcathems}
\begin{tabular}{|c|c|c|c|c|}
\hline
& \shortstack{\\ self-adjoint\\ FCA \\ (free kurtosis) } & \shortstack{self-adjoint\\ FCA \\ (free entropy) } & \shortstack{rect. FCA \\ (free rect.\\ kurtosis) } &\shortstack{rect. FCA \\ (free rect.\\ entropy) } \\
                                                      \hline
\hline
\shortstack{\\ Recovery \\ Guarantee} &  \shortstack{Theorem \ref{thm:kurt3}\\ \vspace{0.5ex} }& \shortstack{Theorem \ref{thm:ent} \\ \vspace{0.5ex}} & \shortstack{ Theorem \ref{thm:kurt3} \\ \vspace{0.5ex}}  &   \shortstack{Theorem \ref{thm:ent}\\ \vspace{0.5ex} }      \\
\hline
\shortstack{Identifiability\\ Condition \\ \vspace{0.5ex}} &   \shortstack{\\At most one \\ component\\ with $\cumu_4 = 0$} &\shortstack{\\ At most one \\ free semicircular\\ element}  & \shortstack{At most one \\ component\\ with $\cumu_4 = 0$} &   \shortstack{At most one\\  free Poisson\\  element}           \\ 
\hline
\end{tabular}
\end{center}
\end{table}

\subsection{Insight: FCA unmixes mixtures of 
(asymptotically) free random matrices} 
Voiculescu \citep{voiculescu1991limit, mingo2017free} showed that symmetric random matrices are good models for asymptotically free self-adjoint random variables (also see Appendix \ref{sec:connection sym}). The non-commutativity comes in because matrix multiplication is non-commutative. Florent \citep{florent_free_probability} showed that rectangular random matrices are good models for asymptotically free rectangular random variables (also see Appendix \ref{subsubsec:recmat}). 

In the self-adjoint setting, Voiculescu showed that random matrices $\bm{X}_1$ and $\bm{X}_2$  are asymptotically free whenever $\bm{X}_1$ and $\bm{X}_2$ are independent of each other if one, or both, of the random matrices have isotropically random (or Haar distributed) eigenvectors. In the non-self-adjoint or rectangular setting, Benaych-Georges showed analogous that rectangular random matrix $\bm{X}_1$ and $\bm{X}_2$ are free whenever they are independent of each other and if the singular vectors of one or both of the random matrices are Haar distributed. Since these pioneering works, many authors have relaxed the conditions and broadened the class of random matrices that we now know to be asymptotically free   -- see, for example the work of \cite{male2011traffic} and \cite{anderson2014asymptotically}. 
We can thus consider the matrix mixing model 
\begin{equation}\label{eq:general matrix model}
\underbrace{\begin{bmatrix} \bm{Z}_1  \\ \vdots \\ \bm{Z}_s \end{bmatrix}}_{=:\bm{Z}}
     =
     \underbrace{\begin{bmatrix}a_{11} \bm{I} & \ldots & a_{1 \nc } \bm{I} \\ \vdots & \ldots & \vdots \\ a_{\nc 1} \bm{I} & \ldots & a_{\nc \nc} \bm{I} \end{bmatrix}}_{=: \bm{A} \otimes \bm I }
     \underbrace{\begin{bmatrix} \bm{X}_1 \\ \vdots \\ \bm{X}_s   \end{bmatrix}}_{=: \bm{X}}, 
\end{equation}

When the matrices $\bm{X}_1, \ldots, \bm{X}_s \in \R^{N \times N}$ are symmetric or Hermitian, then we are in the self-adjoint setting. Voiculescu \citep{voiculescu1991limit} showed that the appropriate linear function $\varphi(\cdot)$ is exactly the normalized trace function. That is,
\begin{equation}\label{eq:sym varphi}
\varphi(\bm{X}_i) = \dfrac{1}{N} \textrm{Tr}(\bm{X}_i)
\end{equation}
and
\begin{equation}\label{eq:rect varphi}
\varphi(\bm{X}_i \bm{X}_{j}) =  \dfrac{1}{N} \textrm{Tr}(\bm{X}_i \bm{X}_j).
\end{equation}
Replacing this with their sample analogs gives us a concrete algorithm for self-adjoint FCA; see Algorithm \ref{alg:free_whiten} and Algorithm \ref{alg:fcf_pro}.

When the matrices $\bm{X}_1, \ldots, \bm{X}_s \in \R^{N \times M} (N \leq M)$ are square but not self-adjoint or just non-square (we will call this type of matrices rectangular for the rest of paper), we are in the non-self-adjoint setting \citep{florent_free_probability}. Then the appropriate pair of the linear functionals $\varphi_1(\cdot)$ and $\varphi_2(\cdot)$ are exactly the normalized trace functions in $\R^{N \times N}$ and $\R^{M \times M}$:
$$\varphi_1(\bm{X}_i \bm{X}_j^H) =  \dfrac{1}{N} \textrm{Tr}(\bm{X}_i \bm{X}_j^H)$$
and
$$\varphi_2(\bm{X}_i^H \bm{X}_j) = \dfrac{1}{M} \textrm{Tr}(\bm{X}_i^H \bm{X}_j).$$

Thus we expect that asymptotically, FCA should unmix asymptotically free random matrices. In the setting where the random matrices are large but finite, we expect FCA to approximately unmix the asymptotically free random matrices, with some non-zero but small unmixing error, analogous to the finite sample unmixing performance of ICA \citep{ilmonen2010new,frieze1996learning,arora2012provable}. We will use numerical simulations to demonstrate that FCA can near perfectly unmix mixtures of large, finite sized (asymptotically free) matrices - see Sections \ref{sec:sim self adjoint} and \ref{sec:rec_rm_separation}.
    
\begin{figure}[t]
          \centering
            \begin{subfigure}[t]{0.24\textwidth}
              \includegraphics[width = \textwidth,trim={0cm 0cm 0cm 0cm},clip]{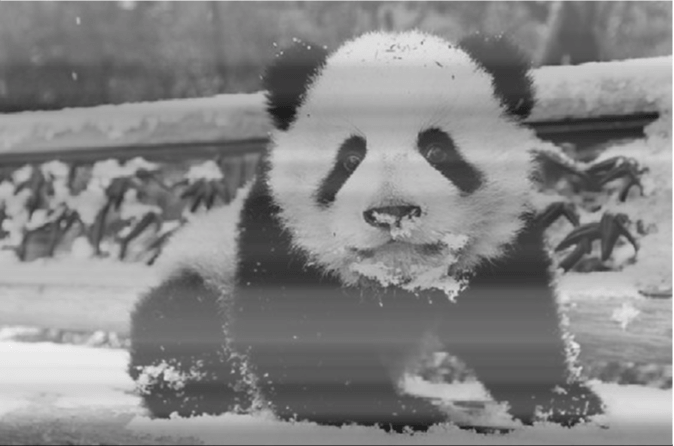}
          \caption{Panda}
            \label{fig:panda}
            \end{subfigure}
            \begin{subfigure}[t]{0.24\textwidth}
              \includegraphics[width = \textwidth,trim={0cm 0cm 0cm 0cm},clip]{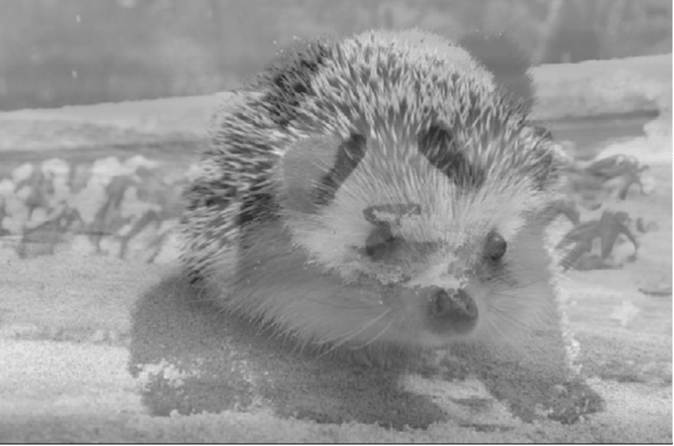}
          \caption{Mixed Image1}
            \label{fig:panmix1}
            \end{subfigure}  
            \begin{subfigure}[t]{0.24\textwidth}
              \includegraphics[width = \textwidth,trim={0cm 0cm 0cm 0cm},clip]{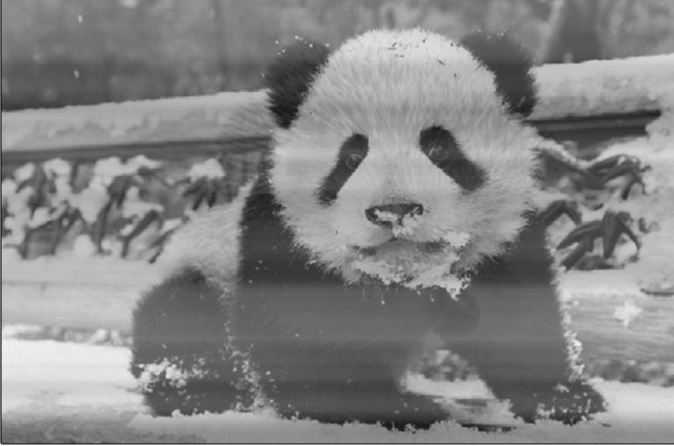}
          \caption{Image1 via ICA}
            \label{fig:pandaica}
            \end{subfigure}
            \begin{subfigure}[t]{0.24\textwidth}
              \includegraphics[width = \textwidth,trim={0cm 0cm 0cm 0cm},clip]{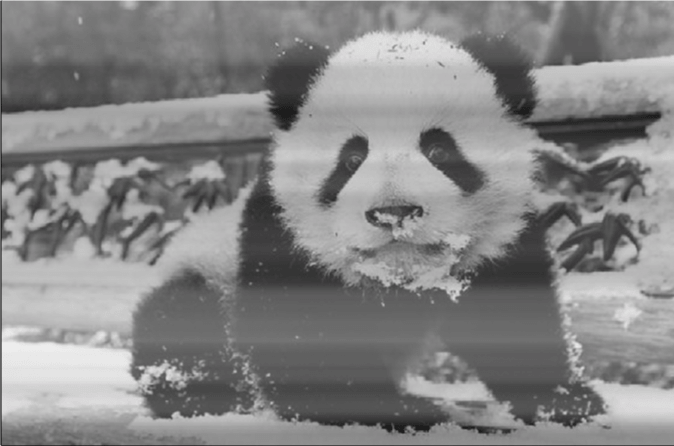}
          \caption{Image1 via FCA}
            \label{fig:pandafca}
            \end{subfigure}
            \begin{subfigure}[t]{0.24\textwidth}
              \includegraphics[width = \textwidth,trim={0cm 0cm 0cm 0cm},clip]{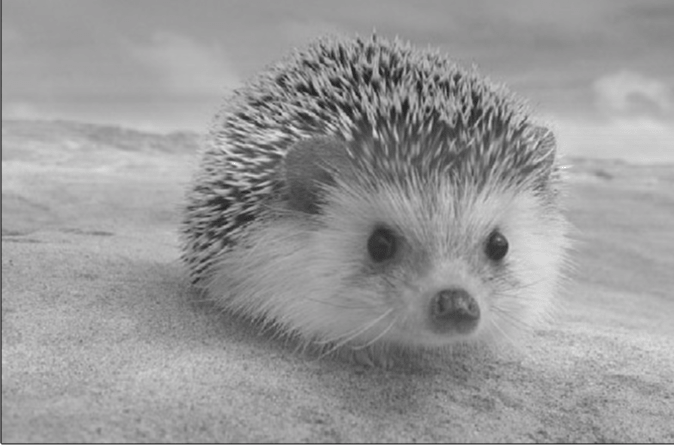}
          \caption{Hedgehog}
            \label{fig:hedgehog}
            \end{subfigure} 
            \begin{subfigure}[t]{0.24\textwidth}
              \includegraphics[width = \textwidth,trim={0cm 0cm 0cm 0cm},clip]{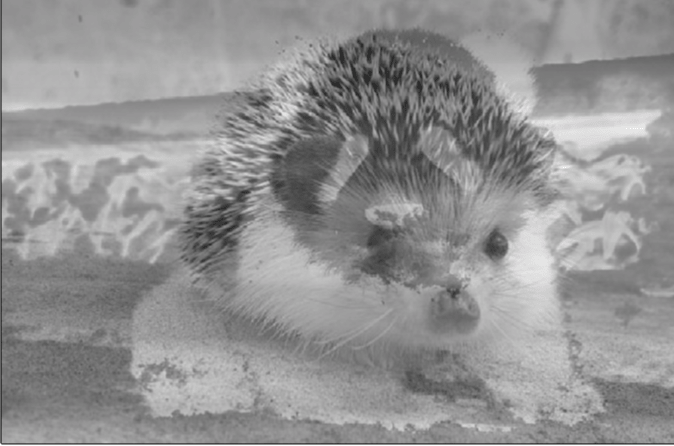}
          \caption{Mixed Image2}
            \label{fig:pandamix2}
            \end{subfigure}
            \begin{subfigure}[t]{0.24\textwidth}
              \includegraphics[width = \textwidth,trim={0cm 0cm 0cm 0cm},clip]{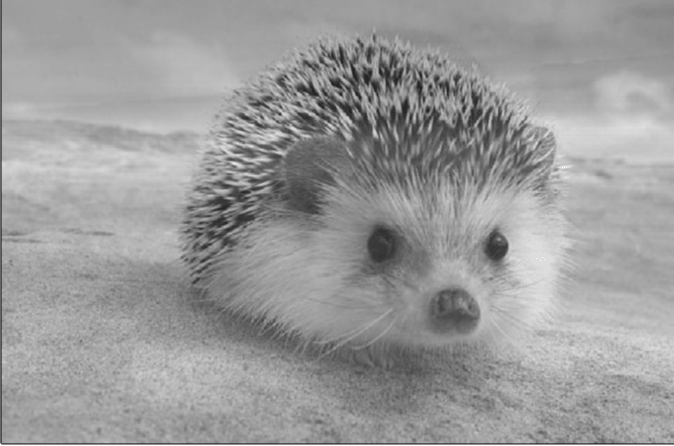}
          \caption{Image2 via ICA}
            \label{fig:hedgeica}
            \end{subfigure}
            \begin{subfigure}[t]{0.24\textwidth}
              \includegraphics[width = \textwidth,trim={0cm 0cm 0cm 0cm},clip]{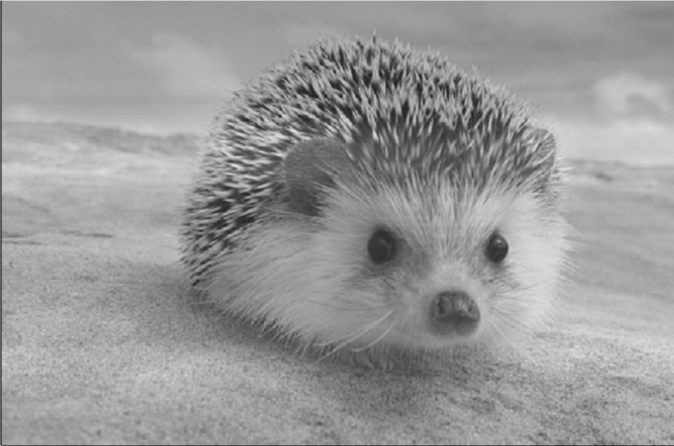}
          \caption{Image2 via FCA}
            \label{fig:hedgefca}
            \end{subfigure}
            \caption{An experiment in image separation using ICA and FCA. Note that subplots (c), (g) (unmixed images via ICA) and (d), (h) (unmixed images via FCA) both recover (a), (e) respectively. Here, $\bm{A} = [\sqrt{2}, \sqrt{2}; -\sqrt{2}, \sqrt{2}]/2$ in \eqref{eq:general matrix model}.
            The error of ICA is $6.08\times 10^{-2}$ while the error of FCA is $2.69\times 10^{-2}$. See \eqref{eq:error} for the definition of the error.
            }
            \label{fig:panhogmix}
            \end{figure}

\begin{figure}[t]
  \centering
    \begin{subfigure}[t]{0.24\textwidth}
      \includegraphics[width = \textwidth,trim={0cm 0cm 0cm 0cm},clip]{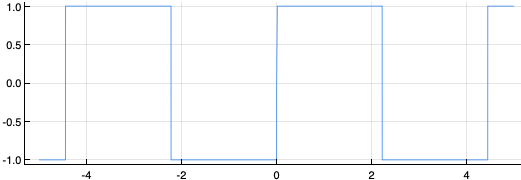}
  \caption{Square wave}
    \label{fig:wave1x1}
    \end{subfigure}
    \begin{subfigure}[t]{0.24\textwidth}
      \includegraphics[width = \textwidth,trim={0cm 0cm 0cm 0cm},clip]{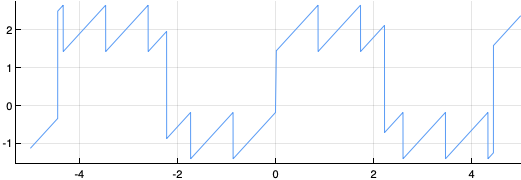}
  \caption{Mixed wave1}
    \label{fig:wave1z1}
    \end{subfigure}  
    \begin{subfigure}[t]{0.24\textwidth}
      \includegraphics[width = \textwidth,trim={0cm 0cm 0cm 0cm},clip]{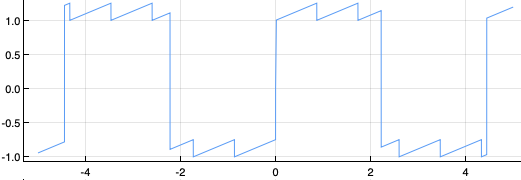}
  \caption{Wave1 via ICA}
    \label{fig:wave1ica1}
    \end{subfigure}
    \begin{subfigure}[t]{0.24\textwidth}
      \includegraphics[width = \textwidth,trim={0cm 0cm 0cm 0cm},clip]{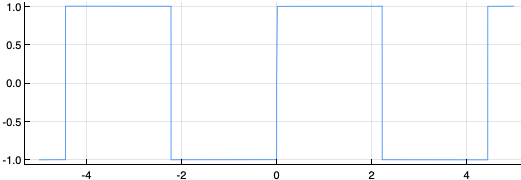}
  \caption{Wave1 via FCA}
    \label{fig:wave1fca1}
    \end{subfigure}
    \begin{subfigure}[t]{0.24\textwidth}
      \includegraphics[width = \textwidth,trim={0cm 0cm 0cm 0cm},clip]{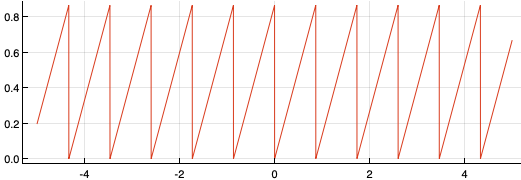}
  \caption{Sawtooth wave}
    \label{fig:wave1x2}
    \end{subfigure} 
    \begin{subfigure}[t]{0.24\textwidth}
      \includegraphics[width = \textwidth,trim={0cm 0cm 0cm 0cm},clip]{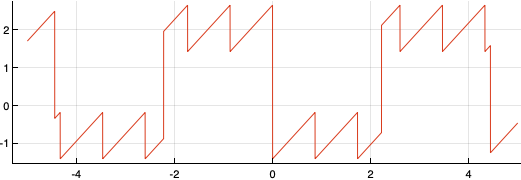}
  \caption{Mixed wave2}
    \label{fig:wave1z2}
    \end{subfigure}
    \begin{subfigure}[t]{0.24\textwidth}
      \includegraphics[width = \textwidth,trim={0cm 0cm 0cm 0cm},clip]{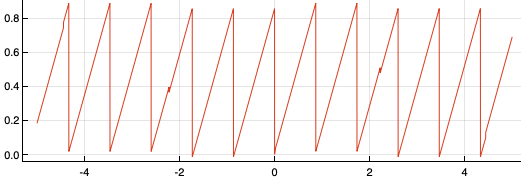}
  \caption{Wave2 via ICA}
    \label{fig:wave1ica2}
    \end{subfigure}
    \begin{subfigure}[t]{0.24\textwidth}
      \includegraphics[width = \textwidth,trim={0cm 0cm 0cm 0cm},clip]{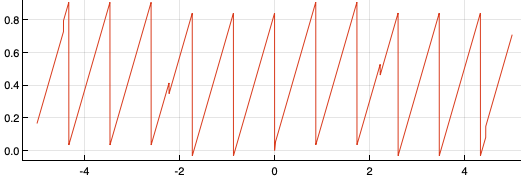}
  \caption{Wave2 via FCA}
    \label{fig:wave1fca2}
    \end{subfigure}
    \caption{An experiment in waveform separation using ICA and FCA. Note that subplots (c), (g) (unmixed waves via ICA) and (d), (h) (unmixed waves via FCA) both recover (a), (e). Visually, FCA performs better in this experiment. In this simulation, $\bm{A} =[\sqrt{2}, \sqrt{2}; -\sqrt{2}, \sqrt{2}]/2$ in (\ref{eq:general matrix model}). 
    The errors for ICA and FCA are $9.95\times 10^{-2}$ and $6.66\times 10^{-2}$ respectively. 
    % The frobenius error of ICA is 14.64. Error of FCA is 3.33. The total norm of original signals are 111.70. The mixing matrix is $[\sqrt{2}, \sqrt{2}; -\sqrt{2}, \sqrt{2}]/2$.
    }
\label{fig:wave1}
\end{figure}

\subsection{Observation: FCA sometimes better unmixes random variables than ICA}

In the  examples in Figures \ref{fig:panhogmix} and \ref{fig:spectrogram_sep}, FCA did better than ICA in a quantitative sense. Figure \ref{fig:locustdenoise} shows a setting where we are unmixing two mixed images and where one of the images corresponds to a Gaussian random matrix. In this setting, FCA performs better than ICA in a visually perceptible way. We have observed that FCA usually does at least as well as ICA and sometimes better. 

In a similar setting, we replace the locust image by a matrix $\bm U\bm D\bm V^T$ in SVD form, where $\bm U$, $\bm V$ are Discrete Cosine Transformation (DCT) matrices and $\bm D$ is a diagonal matrix (see Section \ref{sec:rec_rm_separation}).
This matrix model enables us to increase the dimension and compare the asymptotic behavior of ICA and FCA.
Our numerical simulations show that $\kappa_4$-FCA and $c_4$-ICA perform similarly. 
However, we observe that  free entropy based FCA significantly outperforms ICA (see Figure \ref{fig:fca comparison}) at the cost of increased  computational complexity, since estimating the free entropy involves eigenvalue (or singular value) computation, which are of order $O(N^3)$.

\subsection{Insight: FCA can be applied wherever vanilla ICA has been applied}

ICA has been successfully applied  to image unmixing, audio separation and waveform unmixing problem  \citep{lee1998independent, mitsui2017blind}. Here we show  that \emph{FCA can be successfully applied wherever ICA has succeeded}, including in settings where there are seemingly no matrices in sight.

Figure \ref{fig:panhogmix} showcases the successful use of FCA for unmixing mixed images. This is a natural place to apply FCA because (grayscale) images are matrices. Applying ICA to unmix the images involves vectorizing the images, and treating them as mixed scalar random variables in a way that ignores the spatial matrix information that FCA uses. Perhaps it is therefore not surprising that FCA can outperforms ICA.

What \emph{is} surprising is that the images in Figure \ref{fig:panhogmix} \emph{are not}  textbook examples of asymptotically free random matrices. By this we mean that would not have predicted that the panda and hedgehog matrices are free according to the definition in Appendix \ref{sec:math_prelim}. One might even argue that they are not really random matrices.  And yet, FCA unmixes them as though they are free. For this and many, many other examples of mixed natural images. It is as though matrices in the wild are free-er than we might expect. We hope that experiments with FCA and computational reasoning on its unexpected successes can guide  free probabilists looking to expand the class of matrix models for which freeness holds.

Figures \ref{fig:wave1}  and \ref{fig:spectrogram_sep} show examples where we are trying to unmix mixed deterministic waveforms and audio signals respectively. ICA is known to succeed in these examples, and it is natural to apply ICA here since the latent variables are scalar valued. FCA seems unnatural because there are no matrices in sight, let alone mixed matrices!

The surprising insight is that if we compute the short time fourier transform (STFT) matrix of the mixed signals, then the matrix mixing model in  (\ref{eq:general matrix model}) is with respect to the STFT matrix of the mixed signals: we can use FCA to unmix the signals! Here, FCA on the STFT embdedding outperforms ICA. We might compute other matrix embeddings (say via the short time wavelet transform) and apply FCA there. We do not (yet) have a theory to predict which embedding would lead to better unmixing; nonetheless, the important point is that by embedding scalar valued signals as matrices, we can apply FCA wherever ICA has been applied, and that we can also possibly get better (or worse -- see Figure \ref{fig:wave2}) unmixing performance by varying the matrix embedding. 

Figure \ref{fig:diagmram} summarizes our worldview on this and our sense that there is a theory waiting to be fully revealed on the relation between non-asymptotic recovery of mixed variables and a to-be-defined notion of distance to the various notions of freeness and independence that can provide a principled way to reason about whether ICA or FCA will better unmix the mixed variables.  What we wish to emphasize is that by simply changing the statistical criterion to one that is more matrix-centric (via free probability), one is accessing different embeddings than vanilla ICA can/does.

% \clearpage

\subsection{Discussion: Why do we only compare FCA with vanilla ICA?}
There are variants of ICA that explicitly account for spatial and temporal structures in a way that vanilla ICA does not -- see for example, \cite[Chapters 2, 7, 10, 11]{comon2010handbook}.  A natural question arises: should we be comparing FCA to the not-so-vanilla flavors of ICA instead? To that end, we begin by noting that FCA as developed here relies on a statistical criteria quantifying freeness that account for matricial structure via the computation of the empirical free cumulants and empirical free entropy. The criteria, as summarized in Algorithm \ref{alg:fcf_pro}, involve matrix computations in Table \ref{table:fcf_Fhat}.

However, it too is vanilla in the sense that it does not account for the spatio-temporal structure in the matrices. To note this observe that in (\ref{eq:general matrix model}), if we were to transform $\mathbf{Z}_i \mapsto \mathbf{P}_1 \mathbf{Z}_i \mathbf{P}_1^*$ ($\mathbf{Z}_i \mapsto \mathbf{P}_1 \mathbf{Z}_i \mathbf{P}_2$ for non-self adjoint case), where $\mathbf{P}_1$ and $\mathbf{P}_2$ are appropriately sized, arbitrary permutation or orthogonal  matrices  then the mixing model still holds with $\mathbf{X}_i \mapsto \mathbf{P}_1 \mathbf{X}_i \mathbf{P}_1^*$ ($\mathbf{X}_i \mapsto \mathbf{P}_1 \mathbf{X}_i \mathbf{P}_2$ for non-self adjoint case) and FCA will still succeed even though the spatio-temporal structure has been altered because the quantities being computed in Table \ref{table:fcf_Fhat} will be invariant to these transformations.

Thus vanilla FCA, as presented here, also \textbf{does not} exploit the spatio-temporal structure either the way vanilla ICA does not. Fundamentally, vanilla FCA is different from vanilla ICA because of the difference between independence and freeness as captured in the statistical criterion. Hence comparing FCA to vanilla ICA is appropriate. Our goal is to bring into sharper focus a new statistical criterion for unmixing variables, derived from a different probabilistic model/embedding, and highlight its applicability in settings where vanilla ICA has succeeded while emphasizing how it is the  different statistical criterion and embedding that result in the improvement in performance relative to vanilla ICA (e.g. see Figure \ref{fig:locustdenoise}).  Just as in vanilla ICA, the surprise and delight of FCA is that they both work well out-of-the-box and it is the statistical criterion quantifying the degree of independence (for ICA) or degree of freeness (for FCA) that is making the difference. That they succeed when they do with the bare minimum of orthogonality constraints is what makes them magical -- our goal is simply to add FCA and the underlying statistical criterion to the ICA list.

\begin{figure}[t]
          \centering
            \begin{subfigure}[t]{0.24\textwidth}
              \includegraphics[width = \textwidth,trim={0cm 0cm 0cm 0cm},clip]{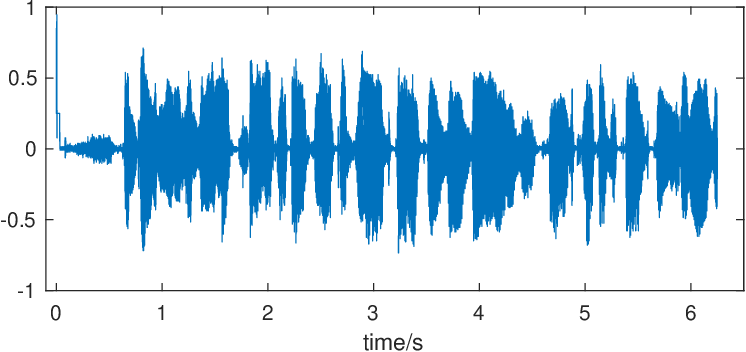}
          \caption{Audio 1}
            \label{fig:x1_sw}
            \end{subfigure}
            \begin{subfigure}[t]{0.24\textwidth}
              \includegraphics[width = \textwidth,trim={0cm 0cm 0cm 0cm},clip]{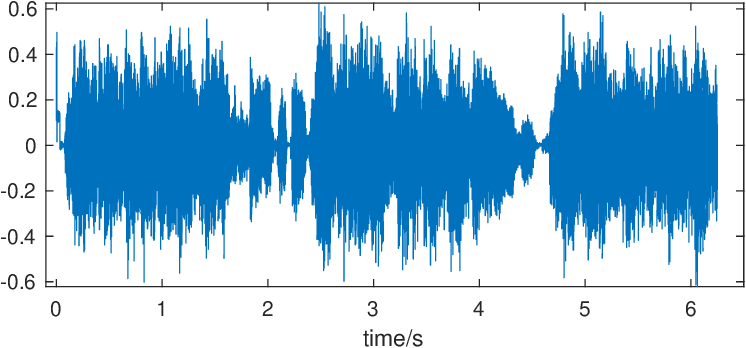}
          \caption{Mixed audio1}
            \label{fig:z1_sw}
            \end{subfigure}  
            \begin{subfigure}[t]{0.24\textwidth}
              \includegraphics[width = \textwidth,trim={0cm 0cm 0cm 0cm},clip]{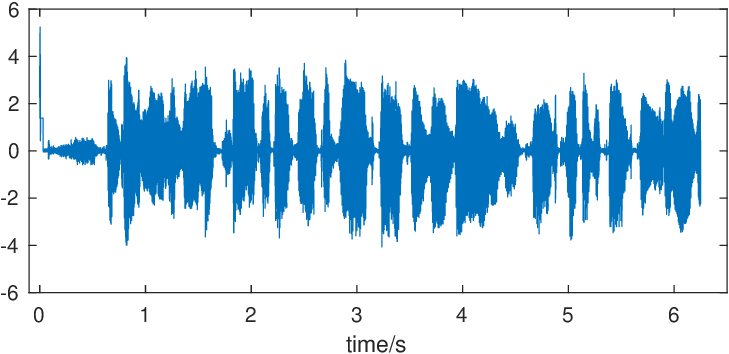}
          \caption{Audio 1 via ICA}
            \label{fig:xhat1_ica_sw}
            \end{subfigure}
            \begin{subfigure}[t]{0.24\textwidth}
              \includegraphics[width = \textwidth,trim={0cm 0cm 0cm 0cm},clip]{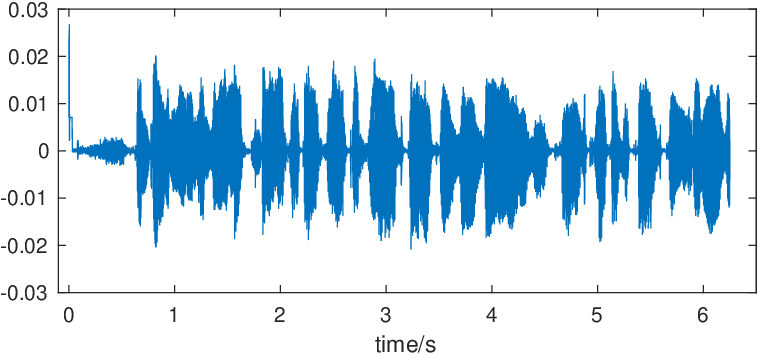}
          \caption{Audio 1 via FCA}
            \label{fig:xhat1_fca_sw}
            \end{subfigure}
            \begin{subfigure}[t]{0.24\textwidth}
              \includegraphics[width = \textwidth,trim={0cm 0cm 0cm 0cm},clip]{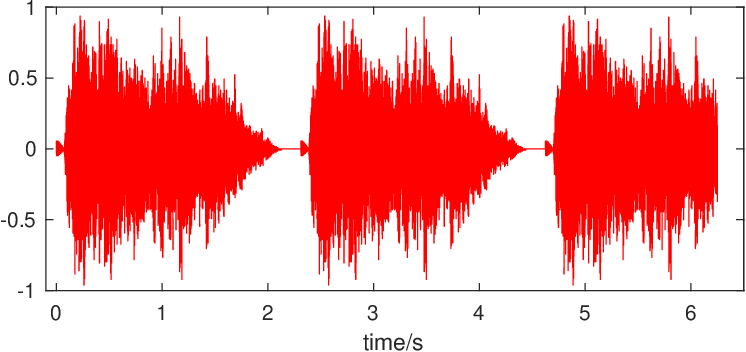}
          \caption{Audio 2}
            \label{fig:x2_sw}
            \end{subfigure} 
            \begin{subfigure}[t]{0.24\textwidth}
              \includegraphics[width = \textwidth,trim={0cm 0cm 0cm 0cm},clip]{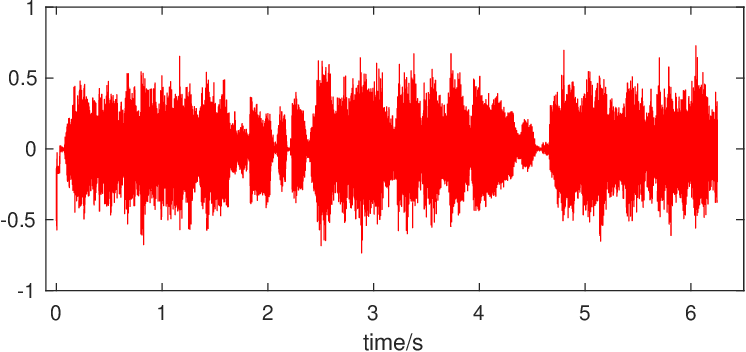}
          \caption{Mixed Audio 2}
            \label{fig:z2_sw}
            \end{subfigure}
            \begin{subfigure}[t]{0.24\textwidth}
              \includegraphics[width = \textwidth,trim={0cm 0cm 0cm 0cm},clip]{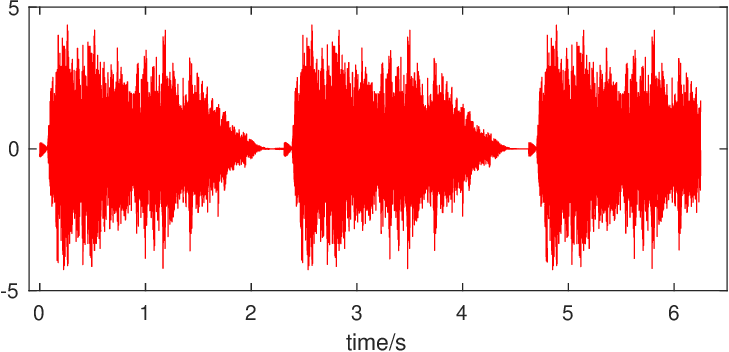}
          \caption{Audio 2 via ICA}
            \label{fig:xhat2_ica_sw}
            \end{subfigure}
            \begin{subfigure}[t]{0.24\textwidth}
              \includegraphics[width = \textwidth,trim={0cm 0cm 0cm 0cm},clip]{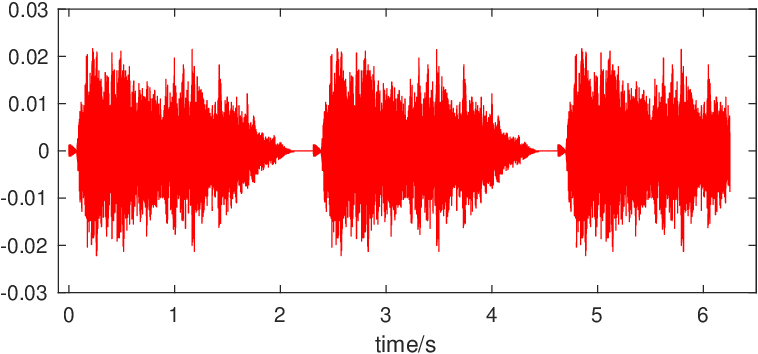}
          \caption{Audio 2 via FCA}
            \label{fig:xhat2_fca_sw}
            \end{subfigure}
            \caption{An experiment in audios separation via ICA and FCA: Note that subplots (c), (g) (unmixed audio signals via ICA) and (d), (h) (unmixed audio signals via FCA) both recover (a), (e).
            In this experiment, $\bm{A} = [\sqrt{2}, \sqrt{2}; -\sqrt{2}, \sqrt{2}]/2$ in \eqref{eq:general matrix model}.
           % The optimizer is fminsearch. The parameters for spectrogram: the window is of Hamming type, the length is 250 samples, and the overlap coefficient is $1/2$. 
           The errors for ICA and FCA are $1.47\times 10^{-2}$ and $1.79\times 10^{-2}$ respectively.}
           \label{fig:spectrogram_sep}
            \end{figure}

  \begin{figure}[t]
          \centering
            \begin{subfigure}[t]{0.23\textwidth}
              \includegraphics[width = \textwidth,trim={0cm 0cm 0cm 0cm},clip]{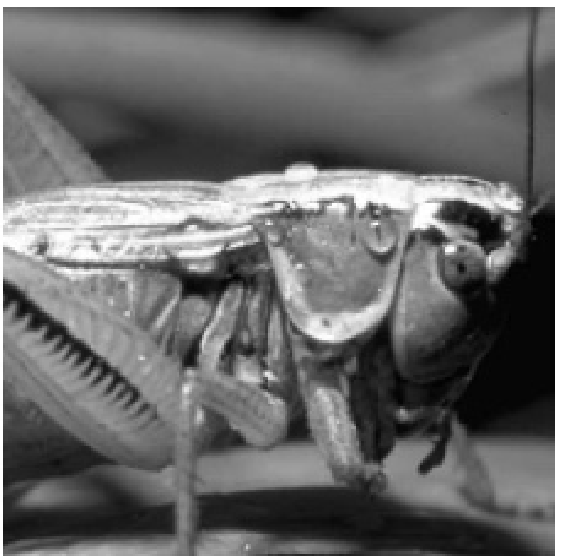}
          \caption{Original Image}
            \label{fig::locust}
            \end{subfigure}
            \begin{subfigure}[t]{0.23\textwidth}
              \includegraphics[width = \textwidth,trim={0cm 0cm 0cm 0cm},clip]{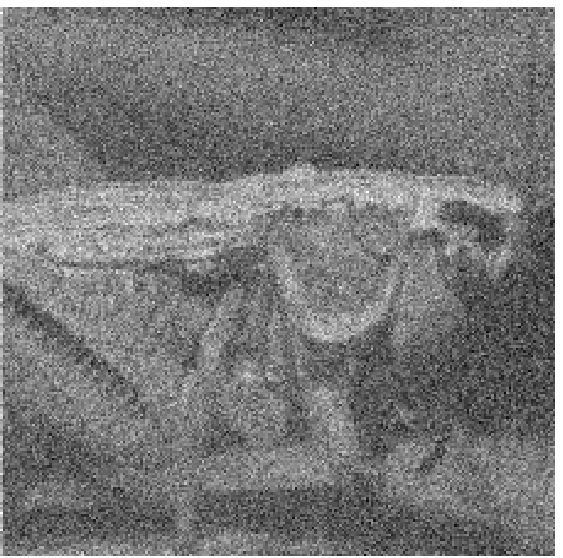}
          \caption{Mixed Image 1}
            \label{fig::mix1}
            \end{subfigure}           
            \begin{subfigure}[t]{0.23\textwidth}
              \includegraphics[width = \textwidth,trim={0cm 0cm 0cm 0cm},clip]{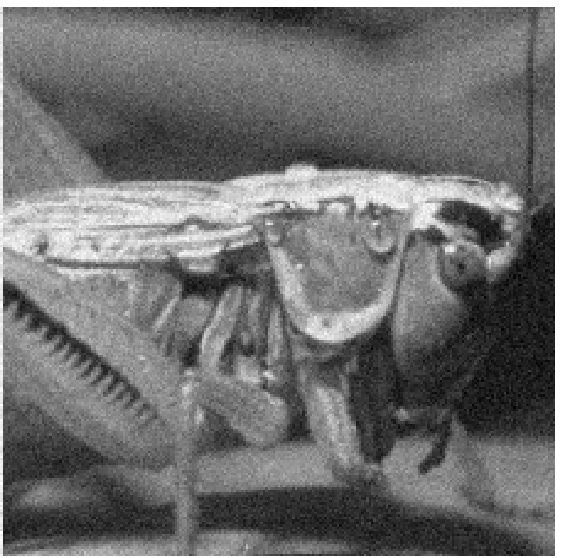}
          \caption{Image via ICA}
            \label{fig::ica_recover}
            \end{subfigure}
            \begin{subfigure}[t]{0.23\textwidth}
              \includegraphics[width = \textwidth,trim={0cm 0cm 0cm 0cm},clip]{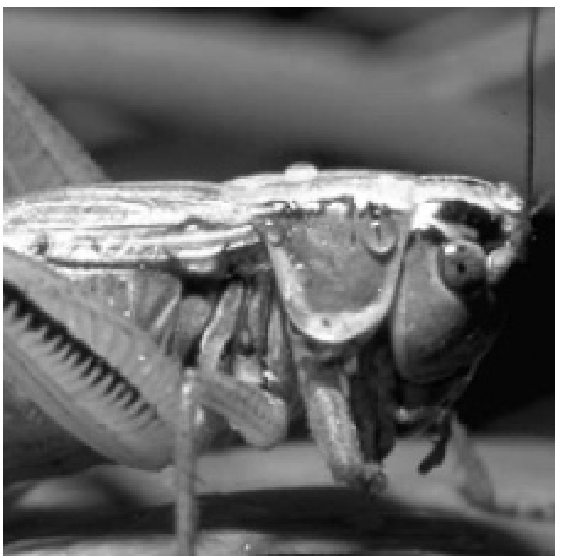}
          \caption{Image via FCA}
            \label{fig::fca_recover}
            \end{subfigure}
            \begin{subfigure}[t]{0.23\textwidth}
              \includegraphics[width = \textwidth,trim={0cm 0cm 0cm 0cm},clip]{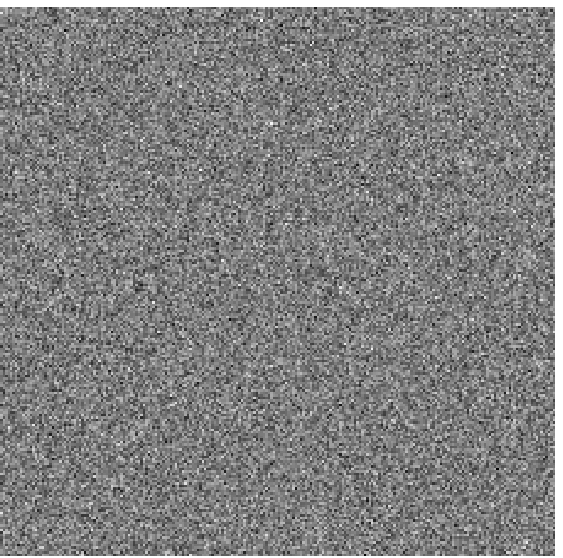}
          \caption{Gaussian Noise}
            \label{fig::Gau_noise}
            \end{subfigure}            
            \begin{subfigure}[t]{0.23\textwidth}
              \includegraphics[width = \textwidth,trim={0cm 0cm 0cm 0cm},clip]{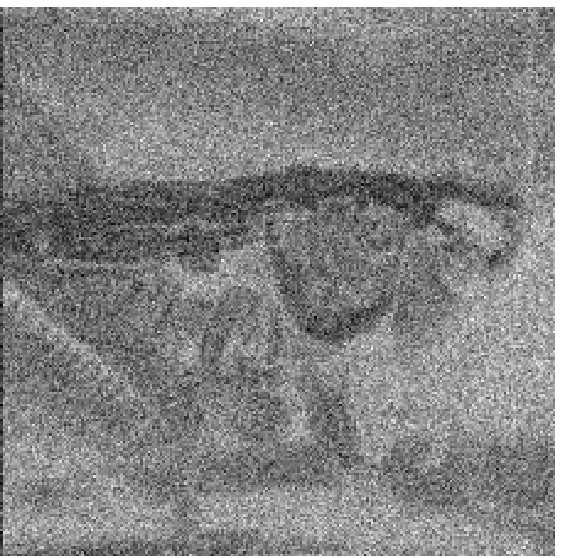}
          \caption{Mixed Image 2}
            \label{fig::mix2}
            \end{subfigure}
            \begin{subfigure}[t]{0.23\textwidth}
              \includegraphics[width = \textwidth,trim={0cm 0cm 0cm 0cm},clip]{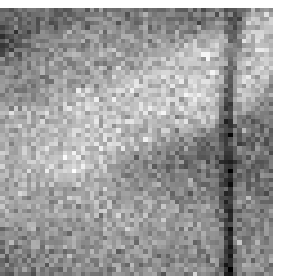}
          \caption{Image via ICA (zoom in)}
            \label{fig::ica_zoom_in}
            \end{subfigure}
            \begin{subfigure}[t]{0.23\textwidth}
              \includegraphics[width = \textwidth,trim={0cm 0cm 0cm 0cm},clip]{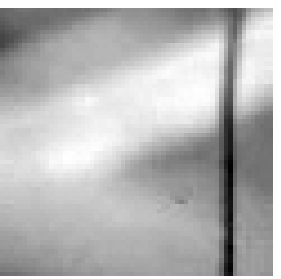}
          \caption{Image via FCA (zoom in)}
            \label{fig::fca_zoom_in}
            \end{subfigure}
            \caption{
            An experiment in image denoising via ICA and (kurtosis-based) FCA: 
            %(a) Original Image (b) Gaussian Noise (c) Mixed Image 1 (d) Mixed Image 2 (e) Denoised Image by FCF (f) Denoised Image by ICF. (g) zoom in of FCF denoised image (h) zoom in of ICF denoised image. 
            Comparing (g) and (f), we observe that FCA does a better job then ICA in this experiment. Here $\bm{A} = [\sqrt{2}, \sqrt{2}; -\sqrt{2}, \sqrt{2}]/2$  in (\ref{eq:general matrix model}). 
            % FCF and ICF are based on (free)kurtosis for this experiment.
            The variance of whitened Gaussian noise is set to equal the empirical variance of original image.}
            \label{fig:locustdenoise}
            \end{figure}

\begin{figure}
\centering
\begin{tikzpicture}[scale=0.65,>=latex]
% border of the surface1
\path[draw,name path=line1] (-6,0)to[out=220,in=80] (-8,-3);
% border of the surface1
\path[draw,name path=border1] (8,-1)to[out=150,in=-10](-6,0);
% border of the surface1
\draw[draw,thick,name path=line2](7,-4.5) to[out = 110,in=230](8,-1);
% border of the surface1
\draw[draw,thick,name path=border2] (-8,-3)to[out = 0,in=150](7,-4.5);
% draw the surface1
\shade[left color=gray!10,right color=gray!70,opacity=.50] 
  (-6,0)to[out=220,in=80] (-8,-3)to[out = 0,in=150](7,-4.5) to[out = 110,in=230](8,-1)to[out=150,in=-10](-6,0);
% label of surface 1
\node at (-7.5,0.5) (label1) {Independence};
\draw[->] (label1) -- +(-40:2);

% border of the surface2
\path[draw,name path=line3] (0,3)to[out=50,in=200] (3,5);
% border of the surface2
\path[draw,name path=border3] (3,5)to[out=250,in=110] (3,-5);
% border of the surface2
\path[draw,name path=line4] (3,-5)to[out=200,in=50] (0,-7);
% border of the surface2
\path[draw,name path=border4] (0,-7)to[out=110,in=250] (0,3);
% draw the surface2
\shade[top color=blue!10,bottom color=blue!70,opacity=.50] 
  (0,3)to[out=50,in=200] (3,5)to[out=250,in=110] (3,-5)to[out=200,in=50] (0,-7)to[out=110,in=250] (0,3);
% label of surface 2
\node at (5,-5.5) (label1) {Rectangular Freeness};
\draw[->] (label1) -- +(165:3.35);

% border of the surface3
\path[draw,name path=line5] (-5,2)to[out=50,in=200] (-2,4);
% border of the surface3
\path[draw,name path=border5] (-2,4)to[out=250,in=110] (-1,-4.5);
% border of the surface3
\path[draw,name path=line6] (-1,-4.5)to[out=200,in=50] (-4,-6.5);
% border of the surface3
\path[draw,name path=border6] (-4,-6.5)to[out=110,in=250] (-5,2);
% draw the surface3
\shade[top color=red!10,bottom color=red!70,opacity=.50] 
  (-5,2)to[out=50,in=200] (-2,4)to[out=250,in=110] (-1,-4.5)to[out=200,in=50] (-4,-6.5)to[out=110,in=250] (-5,2);
% label of surface 3
\node at (-7,-5.5) (label1) {Self-Adjoint Freeness};
\draw[->] (label1) -- +(30:3.35);

% intersection points
\path[name intersections={of=border1 and border3,by={a}}];
\path[name intersections={of=border2 and border4,by={b}}];

% intersection of the surfaces
\draw[thick,dashed] (a) to[out=200,in=50] (b);

% intersection points
\path[name intersections={of=border1 and border5,by={c}}];
\path[name intersections={of=border2 and border6,by={d}}];

% intersection of the surfaces
\draw[thick,dashed] (c) to[out=200,in=50] (d);

% point Z1 close to FC surfcae
\draw[fill] (3,3) circle [radius=0.1];
\node at (4.3,3.3) {$(X^{(1)}_1,X^{(1)}_2)$};

% Z1's projection to FC surface1
\draw[fill] (1.8,3.3) circle [radius=0.05];
\draw[dashed] (3,3) to (1.8,3.3);

% Z1's projection to FC surface2
\draw[fill] (-3.2,1.8) circle [radius=0.05];
\draw[dashed] (3,3) to (-3.2,1.8);

% Z1's projection to IC surface
\draw[fill] (3,0) circle [radius=0.05];
\draw[dashed] (3,3) to (3,0);

% point Z2 close to IC surface
\draw[fill] (6,0.5) circle [radius=0.1];
\node at (7.3,0.8) {$(X^{(2)}_1,X^{(2)}_2)$};

% Z2's projection to FC surface1
\draw[fill] (1,1) circle [radius=0.05];
\draw[dashed] (6,0.5) to (1,1);

% Z2's projection to FC surface2
\draw[fill] (-3.2,-1.8) circle [radius=0.05];
\draw[dashed] (6,0.5) to (-3.2,-1.8);

% Z2's projection to IC surface
\draw[fill] (5.5,-1) circle [radius=0.05];
\draw[dashed] (6,0.5) to (5.5,-1);
\end{tikzpicture}
\caption{We can regard ICA and FCA with various embedding as ``projections" onto corresponding manifolds. Here, the gray surface denotes the manifold of independent pairs. The red and blue surfaces stand for self-adjoint free pairs and rectangular free pairs respectively. In order to achieve the best performance, one shall pick the projection into the closest manifold. For example, if the latent data is $(X^{(1)}_1,X^{(1)}_2)$, then rectangular FCA should have the best performance when separating them from the additive mixture. In contrast, for the underlying data $(X^{(2)}_1,X^{(2)}_2)$, one should pick ICA.}
\label{fig:diagmram}
\end{figure}
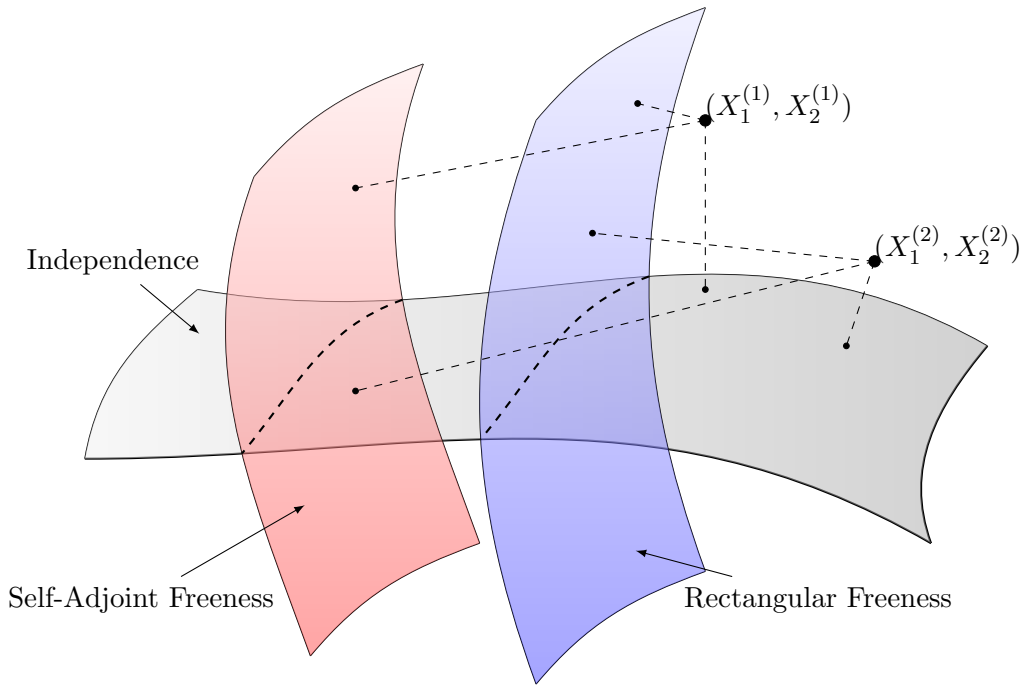

\subsection{Organization}
The remainder of the paper is organized as follows. 
We will develop FCA for self-adjoint and rectangular non-commutative random variables (corresponding to self-adjoint and rectangular random matrices) in Section \ref{sec:main result} by describing the objective functions whose maximization, analogous to the ICA setting, leads to successful unmixing of the `free' components from their additive mixture. Then we describe FCA based algorithms for factorizing data matrices in Section \ref{sec:algo}.
We illustrate  our theorems and ability of FCA to successfully unmix real-world images using numerical simulation in Section \ref{sec:num_simu}. We present some concluding remarks and highlight some open problems in Section \ref{sec:conclusions}

A self-contained introduction to the free probability is given in Section \ref{subsec:prelim self adjoint} and \ref{subsec:prelim rec} for self-adjoint and rectangular random variables respectively. 
We build the connection between non-commutative random variables and random matrices in Section \ref{sec:connection}.

% section for result of self-adjoint FCA

\section{Main result: Recovery guarantees for FCA}  \label{sec:main result}

\subsection{Setup and assumptions under an orthogonal mixing model}

\subsubsection{The self-adjoint setting}
\label{sec:res_sym1}
Given a probability space $(\alg, \fnl)$, let $\alge_1, \ldots, \alge_\nc$ be $\nc$ self-adjoint and free random variables (see Appendix \ref{subsec:prelim self adjoint}). Let $\bm{y}$ denote the vector which contains as its elements the various additive mixtures of $\alge_1, \ldots, \alge_\nc$. We model $\bm{y}$ as  
\begin{equation}\label{eq:sym fca model}
    \underbrace{\begin{bmatrix} y_1  \\ \vdots \\ y_s \end{bmatrix}}_{=:\bm{y}} =\underbrace{\begin{bmatrix}\mixmqq_1 & \cdots & \mixmqq_\nc \end{bmatrix}}_{=: \mixmq{}} \underbrace{\begin{bmatrix} x_1 \\ \vdots \\ x_s  \end{bmatrix}}_{=: \bm{x}}, 
    \end{equation}
where $\mixmq$ is a $s \times s$ orthogonal matrix.

For self-adjoint FCA, we assume that the variables $\alge_i$ are centered and have unit variance, \textit{i.e.} for  $i = 1, \cdots, \nc$, we have that   $\fnl(\alge_i) = 0$ and that $\fnl(\alge_i^2) = 1$ .

\subsubsection{The non-self-adjoint setting}
\label{sec:res_rec}
Given a $(\rho_1, \rho_2)$-rectangular probability space $(\alg, p_1, p_2, \fnl_1, \fnl_2)$ -- (see Appendix \ref{subsec:prelim rec}) -- we consider a setup similar to that in \eqref{eq:sym fca model} where we model $\bm{y}$ as 
\begin{equation}
\label{model_eq_rec}
\bm{y} = \bm{Q} \bm{x},    
\end{equation}
where $\bm{Q}$ is an $\nc \times \nc$ orthogonal matrix. 
We assume that for  $i = 1, \cdots, \nc$, $x_i, y_i$ are rectangular random variables (i.e., $x_i,y_i \in \alg_{12} := p_1 \alg p_2$) and $\fnl_1(x_ix_i^*) = 1$. Note that $\fnl(x_i) = 0$ by default.
The fundamental assumption is now that $(x_i)_{i=1}^\nc$ are free with amalgamation over the linear span of $p_1$ and $p_2$. We will simply say that $(x_i)_{i=1}^\nc$ are free if there is no ambiguity.

\subsection{Free kurtosis based FCA}
The \emph{free kurtosis} of a centered self-adjoint random variable $x \in \alg$ is defined as 
\begin{equation}
\label{eq:free_kurtosis_expfor}
    \cumu_4(x) = \fnl(x^4) - 2\,\fnl(x^2)^2.
\end{equation}
The  \emph{rectangular free kurtosis} of a rectangular random variable $x \in \alg_{12}$ is defined as 
\begin{equation}
\label{eq:def_kurt_rec}
    \cumu_4(x) = \fnl_1((xx^*)^2) - \left(1 + \frac{\fnl(p_1)}{\fnl(p_2)}\right) \left(\fnl_1(xx^*)\right)^2.
\end{equation}
We now state a result on the largest free component.

\begin{theorem}[Largest free component] 
\label{thm:kurt1}
%Set $|\cumu_4|(x) := |\cumu_4(x)|$. 
Assume $\bm{x}$ and $\bm{y}$ are related either via \eqref{eq:sym fca model} in the self-adjoint setting or via \eqref{model_eq_rec} in the non-self-adjoint setting. Suppose, additionally, without of loss of generality,  that
\begin{equation}    \label{eq:kurt1_main}
      \abs*{ \cumu_4(x_1) } \geq \abs*{\cumu_4(x_2)} \geq \cdots \geq \abs*{\cumu_4(x_\nc)}.
\end{equation}
Let $\argkurt^{(1)}$ denote the solution of the manifold optimization problem
\begin{equation}
\label{eq:opt_kurt}
\argkurt^{(1)}= \mathop{\arg\max}_{\argkurt} \abs*{\cumu_4(\bm w^T \bm y)}\textrm{ subject to } \norm{\argkurt} = 1
\end{equation}
\begin{enumerate}
\item[(a)] 
Suppose 
\begin{equation} \label{eq:strictdeckurt}
  \left \lvert \cumu_4(x_1) \right \rvert > \abs{\cumu_4(x_2)} \geq \cdots \geq \abs{\cumu_4(x_\nc)}.
\end{equation}
Then
\begin{equation}
    \argkurt^{(1)} = \pm \, \mixmqq_1
\end{equation}
\item[(b)] 
Suppose there is an integer $r \in [2..\nc]$, such that 
\begin{equation} \label{eq:firstrequal}
\abs*{\cumu_4(x_1)} = \cdots = \abs*{\cumu_4(x_r)} > \abs*{\cumu_4(x_{r + 1})} \geq \cdots \geq \abs*{\cumu_4(x_\nc)}.
\end{equation}
Then 
\begin{equation} \label{eq:opt_kurt_r}
    \argkurt^{(1)}  \in \{\pm\,\mixmqq_1, \ldots, \pm \,\mixmqq_r \}    .
\end{equation}
\end{enumerate}
\end{theorem}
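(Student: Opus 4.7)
The plan is to reduce the problem to an elementary optimization on the sphere by exploiting the orthogonality of $\mixmq$ and the additivity of $\cumu_4$ across free sums, after which the two cases of the theorem fall out of a simple equality analysis.

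First, I would change variables. Writing $\bm{v} = \mixmq^T \argkurt$, the orthogonality of $\mixmq$ gives $\norm{\bm{v}} = \norm{\argkurt} = 1$, and
\[
\argkurt^T \bm{y} \;=\; \argkurt^T \mixmq \bm{x} \;=\; \bm{v}^T \bm{x} \;=\; \sum_{i=1}^{\nc} v_i \, \alge_i.
\]
Multiplying a free (or rectangular-free) random variable by a real scalar preserves freeness, so $\{v_i \alge_i\}_{i=1}^{\nc}$ remain jointly free. Combining additivity of free cumulants over free sums (Theorem \ref{thm:vanish_mix_cumu} and \eqref{eq:freeadditivitycopy}) with the homogeneity identity $\cumu_4(\lambda \alge) = \lambda^4 \cumu_4(\alge)$, which follows by direct inspection of \eqref{eq:free_kurtosis_expfor} in the self-adjoint case and of \eqref{eq:def_kurt_rec} in the rectangular case, yields
\[
\cumu_4(\argkurt^T \bm{y}) \;=\; \sum_{i=1}^{\nc} v_i^{4} \, \cumu_4(\alge_i).
\]
Thus \eqref{eq:opt_kurt} is equivalent to maximizing $\bigl|\sum_i v_i^4 \cumu_4(\alge_i)\bigr|$ subject to $\sum_i v_i^2 = 1$.

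Second, I would pass to the simplex. Setting $u_i = v_i^2$ and $c_i = \cumu_4(\alge_i)$, the problem becomes $\max_{u \in \Delta_{\nc-1}} \bigl|\sum_i u_i^2 c_i\bigr|$ where $\Delta_{\nc-1} = \{u \in \R_{\geq 0}^{\nc} : \sum_i u_i = 1\}$. The triangle inequality together with the assumed ordering \eqref{eq:kurt1_main} gives
\[
\Bigl|\sum_i u_i^2 c_i \Bigr| \;\leq\; |c_1| \sum_i u_i^2 \;\leq\; |c_1| \cdot \bigl(\max_i u_i\bigr) \cdot \sum_i u_i \;\leq\; |c_1|,
\]
where the last bound uses $u_i \in [0,1]$ and $\sum_i u_i = 1$. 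Equality in the middle step forces $\max_i u_i = 1$, i.e., $u$ must be a vertex $\bm{e}_j$ of the simplex; equality in the leftmost step then forces $|c_j| = |c_1|$.

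Third, I would read off the two cases. Under the strict ordering \eqref{eq:strictdeckurt} the only index satisfying $|c_j| = |c_1|$ is $j = 1$, so $u = \bm{e}_1$, which pulls back to $\bm{v} = \pm \bm{e}_1$, hence $\argkurt^{(1)} = \mixmq \bm{v} = \pm \mixmqq_1$. Under \eqref{eq:firstrequal} the admissible vertices are exactly $\bm{e}_1, \ldots, \bm{e}_r$, producing the set $\{\pm \mixmqq_1, \ldots, \pm \mixmqq_r\}$ in \eqref{eq:opt_kurt_r}.

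The only nontrivial ingredient is the invocation of additivity and scalar homogeneity of $\cumu_4$; everything else is elementary sphere/simplex optimization. In the self-adjoint setting these are standard consequences of the free moment-cumulant formula and are already packaged in the cited appendix material. In the rectangular setting they require the amalgamated free cumulant formalism developed by Benaych-Georges, and I expect ensuring that this scalar-homogeneity-plus-free-additivity bookkeeping transfers cleanly to the rectangular $\kappa_4$ defined by \eqref{eq:def_kurt_rec} (rather than the optimization itself) to be the main technical obstacle.
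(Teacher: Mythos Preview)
Your proposal is correct and follows essentially the same route as the paper: both perform the orthogonal change of variables $\bm v=\mixmq^T\argkurt$, invoke free additivity together with the degree-$4$ homogeneity of $\cumu_4$ to reduce to $|\sum_i v_i^4\,\cumu_4(\alge_i)|$, and then bound this by $|\cumu_4(\alge_1)|$ via the triangle inequality and $\sum_i v_i^4\le\sum_i v_i^2=1$, with identical equality analysis. The paper phrases the last inequality directly as $g_i^4\le g_i^2$ rather than passing to the simplex variables $u_i=v_i^2$, but this is purely cosmetic.
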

\begin{remark}
(b) of the above theorem considers the case where there are multiple indexes corresponding to the largest absolute kurtosis. In contrast to the principal component analysis, the maximizers of \eqref{eq:opt_kurt} (and also of \eqref{eq:opt_kurt_r}) only contains corresponding columns of $\mixmq$, and not their general linear combinations. This is a consequence of that we are using the fourth order statistics of random variables.
\end{remark}

\begin{theorem}[The $k$-th largest free component] 
\label{thm:kurt2}
Assume that $\bm{x}$ and $\bm{y}$ are related as in Theorem \ref{thm:kurt1}. Let $\argkurt^{(k)}$ denote the solution to the manifold optimization problem 
\begin{equation}
\label{eq:kurt2_a}
\argkurt^{(k)} =  \mathop{\mathop{\arg \max}_{\argkurt}}\left \lvert\cumu_4(\bm w^T\bm y)\right \rvert \textrm { subject to } \norm{\argkurt} = 1, \argkurt \perp \argkurt^{(1)}, \cdots, \argkurt^{(k - 1)}
\end{equation}
  
Suppose
$$
  \abs*{\cumu_4(x_1)} > \abs*{\cumu_4(x_2)} > \cdots > \abs*{\cumu_4(x_\nc)}.$$ 
  
\flushleft Then 
\begin{equation}
    \argkurt^{(k)}  = \pm\,\mixmqq_k.
\end{equation}
\end{theorem}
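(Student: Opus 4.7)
The plan is to reduce Theorem \ref{thm:kurt2} to the one-component analysis of Theorem \ref{thm:kurt1} by an induction on $k$, exploiting the fact that under the orthogonal mixing model, the fourth free cumulant of any unit-norm linear combination of $\bm y$ reduces to a weighted $\ell^4$ sum of the free kurtoses of the underlying free variables. Concretely, for any unit vector $\bm w$ set $v_i := \bm w^\tran \mixmqq_i$, so that $\bm w^\tran \bm y = \sum_{i=1}^{\nc} v_i x_i$ and, since $\mixmq$ is orthogonal, $\sum_i v_i^2 = 1$. Using (a) free additivity of free cumulants on free families and (b) the homogeneity $\cumu_4(\lambda x) = \lambda^4 \cumu_4(x)$ (both recalled from Appendix \ref{subsec:prelim self adjoint} for the self-adjoint case, and from the rectangular analogues in Appendix \ref{subsec:prelim rec} for \eqref{eq:def_kurt_rec}), we obtain the key identity
\begin{equation}
\cumu_4\!\left(\bm w^\tran \bm y\right) \;=\; \sum_{i=1}^{\nc} v_i^{\,4}\, \cumu_4(x_i).
\label{eq:sketch-cumu-expansion}
\end{equation}
This identity is the single object on which the entire proof rests; it has already been used implicitly in the proof of Theorem \ref{thm:kurt1}, so invoking it here requires no new work.

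Next I proceed by induction on $k$. The base case $k=1$ is exactly part (a) of Theorem \ref{thm:kurt1} under hypothesis \eqref{eq:strictdeckurt}. For the inductive step, suppose $\argkurt^{(j)} = \pm\,\mixmqq_j$ for $j = 1, \dots, k-1$. Because $\mixmq$ is orthogonal, the constraints $\argkurt \perp \argkurt^{(1)}, \dots, \argkurt^{(k-1)}$ are equivalent to $v_1 = \cdots = v_{k-1} = 0$, while $\|\argkurt\|_2 = 1$ becomes $\sum_{i=k}^{\nc} v_i^2 = 1$. Substituting into \eqref{eq:sketch-cumu-expansion}, problem \eqref{eq:kurt2_a} becomes
\begin{equation}
\max_{\,\sum_{i\geq k} v_i^2 = 1\,}\; \Bigl\lvert \sum_{i=k}^{\nc} v_i^{\,4}\, \cumu_4(x_i) \Bigr\rvert.
\label{eq:sketch-reduced}
\end{equation}

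Finally I bound \eqref{eq:sketch-reduced}: by the triangle inequality and the strict ordering hypothesis,
\begin{equation}
\Bigl\lvert \sum_{i=k}^{\nc} v_i^{\,4}\, \cumu_4(x_i) \Bigr\rvert \;\leq\; \lvert \cumu_4(x_k)\rvert \sum_{i=k}^{\nc} v_i^{\,4} \;\leq\; \lvert \cumu_4(x_k)\rvert\Bigl(\sum_{i=k}^{\nc} v_i^{\,2}\Bigr)^2 \;=\; \lvert \cumu_4(x_k)\rvert,
\label{eq:sketch-bound}
\end{equation}
with equality in the second inequality only when some $v_i^2 = 1$ and the rest vanish (since $\sum v_i^4 \leq (\max v_i^2)\sum v_i^2$), and equality in the first only when the active index satisfies $|\cumu_4(x_i)| = |\cumu_4(x_k)|$. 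By strict monotonicity of $|\cumu_4(x_i)|$, both equalities hold simultaneously iff $v_k = \pm 1$ and $v_j = 0$ for $j \neq k$, i.e. $\argkurt^{(k)} = \pm\,\mixmqq_k$, closing the induction.

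The only step that is not completely routine is justifying the cumulant expansion \eqref{eq:sketch-cumu-expansion} uniformly in the self-adjoint and rectangular settings: in the self-adjoint case it follows directly from \eqref{eq:freeadditivitycopy} and homogeneity, but in the rectangular case one must verify that the operation $x_i \mapsto v_i x_i$ (scaling by a real scalar) preserves freeness with amalgamation over $\Span\{p_1, p_2\}$ and interacts correctly with the rectangular free cumulant defined by \eqref{eq:def_kurt_rec}; once this is in hand—and it is the same verification already used in Theorem \ref{thm:kurt1}—the rest of the argument above is unchanged. No other step should present difficulty.
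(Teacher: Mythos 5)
Your proposal is correct and follows essentially the same route as the paper's proof: reduce via $\bm g = \mixmq^T \argkurt$ to the weighted $\ell^4$ expansion $\sum_i g_i^4 \cumu_4(x_i)$, note that the orthogonality constraints force $g_1=\cdots=g_{k-1}=0$, and then run the same chain of inequalities with the same equality analysis. The only cosmetic differences are that you make the induction explicit and bound $\sum_i v_i^4$ by $\bigl(\sum_i v_i^2\bigr)^2$ rather than by $\sum_i v_i^2$ as in the paper's \eqref{eq:proof_kurt_5}; both yield the same conclusion.
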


\begin{theorem}[Principal free components]
\label{thm:kurt3}
Assume that $\bm{x}$ and $\bm{y}$ are related as in Theorem \ref{thm:kurt1}. Let $\orth(\nc)$ denote the set of $\nc \times \nc$ orthogonal matrices, and consider the manifold optimization problem 
\begin{equation}
\label{eq:kurt3}
  \mathop{\max}_{\Argent} \sum_{i = 1}^\nc \left \lvert\cumu_4\left([\Argkurt^T\bm{y}]_i\right)\right \rvert \textrm{ subject to } \Argkurt \in \orth(\nc),
\end{equation}
where $[\Argkurt^T  \bm{y}]_i$ denotes the $i$-th  element of $\Argkurt^T  \bm{y}$. 
Suppose that all components have non-zero kurtosis, i.e., 
\begin{equation}
\label{eq:kurt3_assumption}
\abs*{\cumu_4(x_i)} > 0 \quad \text{for $i = 1, \cdots, s$.} 
\end{equation}
Then $\bm{W}$ is an optimum if and only if :
\begin{equation}
\bm{W}= \mixmq \permm\signm,
\end{equation}
for some $\permm$ and $\signm$ where 
where $\permm$  is a permutation matrix and $\signm$ is a diagonal matrix with $\pm 1$ as diagonal elements.
\end{theorem}

\begin{remark}
\label{rmk:onezerok4}
Theorem \ref{thm:kurt3} still holds if there is at most one components with zero free kurtosis.
\end{remark}

\begin{remark}
\label{rmk:nonconvex}
In ICA, it is known that similar optimization problem based on cumulants are non-concave (see e.g. Fig 2 and Fig 3 of \cite{cruces2002robust}).
On the other hand, free cumulants are analogous to classic cumulants; they both has free/independent additivity and thus the corresponding kurtosis optimization problems share the similar nature.
Therefore, the object functions in \eqref{eq:opt_kurt} and \eqref{eq:kurt3} are non-concave. 
For further discussion, see Section \ref{sec:op_conv}.
\end{remark}

\subsubsection{Higher-order free cumulant based FCA}

\begin{remark}
It can be shown that above theorems still hold with $\cumu_4(\cdot)$ replaced by any $\cumu_{2m}(\cdot)$, for $m \geq 3$.
\end{remark}

\begin{remark}
The maximizer of \eqref{eq:kurt3} is not guaranteed to recover $\mixmq$ when there are multiple components of $x$ with zero free kurtosis. 
In this case, one may try to use optimization problem based on $\cumu_{2m}(\cdot)$, $m \geq 3$.
However, the semicircle elements (for the self-adjoint case; see Appendix \ref{subsubsec:semicirclr}) and the Poisson elements (for the non-self-adjoint case; see Appendix \ref{subsubsec:poisson}) have all vanishing free cumulants of order higher than $2$.
In Theorem \ref{thm:identifiability}, we will prove that $\mixmq$ can be recovered whenever $\bm x$ contains at most one semicircular element or free Poissonian element for the self-adjoint or non-self-adjoint settings, respectively.
\end{remark}

\subsubsection{Free-entropy based FCA}

The free entropy $\ent(a_1,\cdots,a_n)$ (see Sections \ref{subsubsec:free entropy} and \ref{subsubsec:free entropy rec} for the definitions in the self-adjoint and non-self-adjoint settings) of a tuple of free random variables encodes the dependence between the variables $a_i$. Analogous to the scalar setting,  the free entropy is maximized when the random variables are freely independent. Thus we can pose FCA as an entropy maximization problem as stated next. 

\begin{theorem}[FCA based on free entropy]
\label{thm:ent}
Assume that $\bm{x}$ and $\bm{y}$ are related as in Theorem \ref{thm:kurt1} and at most one component of $x$ is semicircular in the self-adjoint setting or a free Poisson in the non-self adjoint setting.  Let $\orth(\nc)$ denote the set of $\nc \times \nc$ orthogonal matrices.  
Suppose that 
$$\ent(\alge_i) > -\infty \textrm{ for }  i = 1,\cdots, s.$$ 
Consider the manifold optimization problem 
\begin{equation}
\label{eq:ent1}
 \max_{\Argent} \sum_{i = 1}^s -\ent\left([\Argkurt^T \bm{y}]_i\right) \textrm{ subject to } \Argent \in \orth(\nc),
\end{equation}
where $[\Argkurt^T  \bm{y}]_i$ denotes the $i$-th  element of $\Argkurt^T  \bm{y}$.
Then $\bm W$ is an optimum if and only if:
\begin{equation}
 \bm{W}= \mixmq \permm\signm,
 \end{equation}
for some $\permm$ and $\signm$ 
where $\permm$  is a permutation matrix and $\signm$ is a diagonal matrix with $\pm 1$ as diagonal elements.
\end{theorem}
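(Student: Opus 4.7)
The plan is to mirror the classical entropy-based identifiability argument for ICA in the free setting, using three structural properties common to both Voiculescu's microstate free entropy $\ent$ in the self-adjoint case and Benaych-Georges's rectangular free entropy in the non-self-adjoint case: (i) orthogonal invariance of the joint entropy, $\ent(\bm{R}\bm{a}) = \ent(\bm{a})$ whenever $\bm{R}$ is orthogonal; (ii) subadditivity, $\ent(a_1,\ldots,a_\nc)\le \sum_i \ent(a_i)$, with equality if and only if $(a_i)$ is a free family; and (iii) additivity on free tuples, $\ent(x_1,\ldots,x_\nc) = \sum_i \ent(x_i)$. Setting $\bm{a} := \bm{W}^T\bm{y} = (\bm{W}^T\mixmq)\bm{x} =: \bm{R}\bm{x}$, where $\bm{R}$ is orthogonal, I chain (iii), (i), (ii) to obtain
$$\sum_{i=1}^\nc \ent(x_i) = \ent(x_1,\ldots,x_\nc) = \ent(a_1,\ldots,a_\nc) \leq \sum_{i=1}^\nc \ent(a_i),$$
so the objective $\sum_i -\ent([\bm{W}^T\bm{y}]_i)$ is bounded above by $\sum_i -\ent(x_i)$, and the bound is attained whenever $\bm{W}=\mixmq\permm\signm$, since then $\bm{W}^T\bm{y}$ is a signed permutation of $\bm{x}$ and $\ent$ is invariant under sign flips.

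If $\bm{W}$ is any maximizer, the chain must be an equality, and by the equality clause of (ii) the tuple $(a_1,\ldots,a_\nc)$ is free. It remains to establish a free Darmois--Skitovich-type statement: if $\bm{R}$ is orthogonal, $(x_j)_{j=1}^\nc$ is free, and the linear combinations $a_i = \sum_j R_{ij}x_j$ also form a free family, and if at most one $x_j$ is semicircular (respectively, free Poisson in the rectangular case), then $\bm{R}$ must be a signed permutation. My approach is to combine the vanishing-of-mixed-cumulants criterion (Theorem~\ref{thm:vanish_mix_cumu}) with the multilinearity of $\cumu_m$; freeness of $(x_j)$ collapses the multi-index expansion to the diagonal and yields
$$\cumu_m(a_{i_1},\ldots,a_{i_m}) = \sum_{j=1}^\nc R_{i_1 j}\cdots R_{i_m j}\,\cumu_m(x_j) = 0$$
for every $m\geq 3$ and every non-constant tuple $(i_1,\ldots,i_m)$. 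For each column $j_0$ whose $x_{j_0}$ is non-semicircular some $\cumu_{m_0}(x_{j_0})$ is non-zero; combining the resulting identities over enough orders $m$ with the row-normalization $\sum_j R_{ij}^2 = 1$ pins that column down to a $\{0,\pm 1\}$-valued vector with a single non-zero entry. The hypothesis that at most one $x_j$ is semicircular, together with the orthogonality of $\bm{R}$, then forces the entire matrix to be a signed permutation. The rectangular case is handled analogously using the rectangular free cumulants and the characterization of the free Poisson element as the unique rectangular variable whose higher rectangular free cumulants all vanish.

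Writing $\bm{R} = \permm\signm$ finally yields $\bm{W} = \mixmq\bm{R}^T = \mixmq\signm\permm^T$, which, after rewriting the sign and permutation factors as a new pair, is the announced form $\bm{W} = \mixmq\permm\signm$. The main obstacle is the free Darmois--Skitovich statement above: even for $\nc = 2$ the self-adjoint version is non-trivial because a non-semicircular element need only have \emph{some} non-vanishing higher free cumulant, so the argument must handle infinitely many orders $m$ simultaneously; in the rectangular setting, the bookkeeping with the two functionals $\fnl_1,\fnl_2$ and the projections $p_1,p_2$ requires separately establishing the characterization of the free Poisson element. The remaining steps are essentially formal once these properties of $\ent$ and the Darmois--Skitovich lemma are granted.
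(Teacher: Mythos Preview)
Your overall strategy is exactly the paper's: bound the objective via subadditivity of free entropy (Proposition~\ref{prop:entfreeind}) combined with orthogonal invariance of the joint entropy (Proposition~\ref{prop:entlintran}), check that $\mixmq\permm\signm$ attains the bound, and for the converse use the equality case of subadditivity to conclude that $\bm{W}^T\bm{y}$ has free components, then invoke a free Darmois--Skitovich statement (the paper's Theorem~\ref{thm:identifiability}/Lemma~\ref{free_darmois}).

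The one place where you diverge is your sketch of the Darmois--Skitovich step. You try to argue directly, column by column: for each non-semicircular $x_{j_0}$ pick some $m_0$ with $\cumu_{m_0}(x_{j_0})\neq 0$ and ``combine the resulting identities over enough orders $m$'' to pin the $j_0$-th column to a signed standard basis vector. As stated this is too vague to be a proof, because the mixed-cumulant identities $\sum_j R_{i_1j}\cdots R_{i_mj}\cumu_m(x_j)=0$ couple all columns at once, and different $x_j$ may have different non-vanishing orders. The paper's argument is cleaner and avoids this difficulty by arguing contrapositively: a short combinatorial lemma (Proposition~\ref{prop:orth_ps}) says that any orthogonal matrix that is not a signed permutation contains a $2\times 2$ submatrix with all four entries nonzero, say in columns $k,\ell$. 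Then for each fixed $m\ge 3$ the paper writes the $n$ specific mixed cumulants $\cumu_m(y_i,\ldots,y_i,y_j,y_p)$, $p=1,\ldots,n$, as the matrix equation $\mixmq\,\mathrm{diag}(q_{i1}^{m-2}q_{j1},\ldots,q_{in}^{m-2}q_{jn})(\cumu_m(x_1),\ldots,\cumu_m(x_n))^T=0$; invertibility of $\mixmq$ together with the nonzero $2\times 2$ block forces $\cumu_m(x_k)=\cumu_m(x_\ell)=0$ for every $m\ge 3$, so $x_k$ and $x_\ell$ are both semicircular (free Poisson in the rectangular case). This contrapositive route handles all orders $m$ uniformly and is what you should aim for to complete your proof.
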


\subsubsection{FCA identifiability condition}

In the self-adjoint setting $c_4$-FCA will fail when $\bm x$ contains semicircular elements because free semi-circular elements have a free kurtosis identically equal to zero. Moreover, suppose $\bm{\alge} = (\alge_1, \alge_2)^T$ where $\alge_i$ are free semicircular elements with $\fnl(\alge_i) = 0$ and $\fnl(\alge_i^2) = 1$. Then, it can be shown that for any $\mixmq \in \orth(2)$,  the components of $\mixmq \bm{\alge}$ are still free semicircular elements. Therefore, if there are more two components are  semicircular elements, it is impossible to identify $\mixmq$ with the mere knowledge of free independence between the components of $\bm x$. The analog of this holds for the non-self-adjoint setting as well.

We now state an FCA identiability condition based on this observation.

\begin{theorem}[Identifiability Condition]
\label{thm:identifiability}
Consider \(\bm \alge\) and \(\bm y\) and $\mixmq \in \orth(\nc)$ such that $\bm \alge$ and $\bm y$ are related as in Theorem \ref{thm:kurt1}. 
Assume $\bm \alge$ has free elements. Assume that at most one component of $\bm{x}$ is semicircular in the self-adjoint setting or free Poisson in the non-self-adjoint setting. 

Now, if there is a $\Argkurt \in \orth(\nc)$ such that $\Argkurt^T \bm y$ has free components, then 
\begin{equation}
\Argkurt = \mixmq \permm \signm.
\end{equation}
for some $\permm$ and $\signm$ 
where $\permm$  is a permutation matrix and $\signm$ is a diagonal matrix with $\pm 1$ as diagonal elements.
That is, $\Argkurt$ can be obtained by permuting the columns of $\mixmq$ with possible sign flips and vice versa. 
\end{theorem}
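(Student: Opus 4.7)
The plan is to reduce Theorem \ref{thm:identifiability} to the free-entropy recovery guarantee (Theorem \ref{thm:ent}). First, set $\bm{R} := \bm{W}^T \mixmq$; then $\bm{R} \in \orth(\nc)$, $\bm{W}^T \bm{y} = \bm{R}\bm{x}$, and the desired conclusion $\bm{W} = \mixmq \permm \signm$ is equivalent to $\bm{R}$ being a signed permutation. Thus it suffices to prove: if $\bm{R}\in \orth(\nc)$ and both $\bm{x}$ and $\bm{R}\bm{x}$ have free components (with at most one semicircular / free Poisson entry in $\bm{x}$), then $\bm{R}$ must be a signed permutation.

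The main tool is the combination of two properties of free entropy: (i) subadditivity with equality iff free, $\ent(a_1, \ldots, a_\nc) \leq \sum_i \ent(a_i)$ (see Appendix \ref{subsubsec:free entropy} for the self-adjoint setting and \ref{subsubsec:free entropy rec} for the rectangular one), and (ii) orthogonal invariance of the joint free entropy, $\ent(\bm{R}\bm{a}) = \ent(\bm{a})$ for $\bm{R} \in \orth(\nc)$. Freeness of $\bm{x}$ and of $\bm{R}\bm{x}$ then chains together into
\begin{equation*}
\sum_i \ent\bigl((\bm{R}\bm{x})_i\bigr) = \ent(\bm{R}\bm{x}) = \ent(\bm{x}) = \sum_i \ent(x_i),
\end{equation*}
so $\bm{R}$ realizes the minimum of $\bm{R}' \mapsto \sum_i \ent((\bm{R}'\bm{x})_i)$ over $\orth(\nc)$. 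Under the substitution $\bm{W}' = \mixmq(\bm{R}')^T$, this says $\bm{W}$ maximizes the objective (\ref{eq:ent1}) in Theorem \ref{thm:ent}, and the ``only if'' direction there immediately yields $\bm{W} = \mixmq \permm \signm$.

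The main obstacle is that this entropy shortcut implicitly requires $\ent(x_i) > -\infty$ for every $i$, which is not assumed in Theorem \ref{thm:identifiability}. To handle the general case, one argues directly via free cumulants: freeness of $\bm{R}\bm{x}$, combined with freeness of $\bm{x}$ and multilinearity/cyclic invariance of $\cumu_m$, forces
\begin{equation*}
\sum_{j=1}^{\nc} R_{i_1 j}\, R_{i_2 j} \cdots R_{i_m j}\, \cumu_m(x_j) = 0
\end{equation*}
for every $m \geq 3$ and every non-constant index tuple $(i_1, \ldots, i_m)$ (with the rectangular-cumulant analog in the non-self-adjoint setting). Specializing to two rows $U, V$ of $\bm{R}$ and invoking the $R$-transform characterization of pairwise freeness ($R_{y_i + t\,y_k} = R_{y_i} + R_{t\,y_k}$ for all $t$) converts the system into a free-probability analog of the Darmois--Skitovich functional equation, which forces $U_j V_j = 0$ for every index $j$ with $x_j$ non-semicircular (resp.\ non-free-Poisson). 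Because at most one component of $\bm{x}$ is of ``Gaussian analog'' type, the orthogonality relation $\bm{R}^T \bm{R} = \bm{I}$ combined with a pigeonhole argument then collapses $\bm{R}$ to signed-permutation form. The delicate step is precisely this last pigeonhole argument: one must ensure that the possible single semicircular/Poisson column cannot be used to smuggle in a non-permutation entry of $\bm{R}$, which requires carefully pairing the support-disjointness across every pair of rows with the unit-norm constraint.
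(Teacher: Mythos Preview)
Your entropy shortcut is circular. In this paper, the ``only if'' direction of Theorem~\ref{thm:ent} is itself proved by invoking Theorem~\ref{thm:identifiability} (see the proof of part~(c) of Theorem~\ref{thm:ent}, where the conclusion that any optimizer $\widehat{\bm W}$ has the form $\mixmq\permm\signm$ is obtained precisely by appealing to the identifiability condition). So you cannot deduce Theorem~\ref{thm:identifiability} from Theorem~\ref{thm:ent}; the logical dependence runs the other way. The obstacle you flagged---the missing hypothesis $\ent(x_i)>-\infty$---is real but secondary to this circularity.

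Your second approach via vanishing mixed free cumulants is the right one and matches the paper's route in spirit, but the paper's execution is more direct than your sketch and does not pass through the $R$-transform or a Darmois--Skitovich functional equation. The paper first establishes a combinatorial fact (Proposition~\ref{prop:orth_ps}): any $\bm R\in\orth(\nc)\setminus\orth_{\ps}(\nc)$ contains a $2\times 2$ submatrix with all four entries nonzero, say at rows $i,j$ and columns $k,\ell$. Then, for each $m\geq 3$, it expands the $\nc$ vanishing mixed cumulants $\cumu_m\bigl((\bm R\bm x)_i,\ldots,(\bm R\bm x)_i,(\bm R\bm x)_j,(\bm R\bm x)_p\bigr)=0$, $p=1,\ldots,\nc$, using multilinearity and freeness of $\bm x$, to obtain the linear system
\[
\bm R\,\diag\!\bigl(R_{i1}^{\,m-2}R_{j1},\ldots,R_{i\nc}^{\,m-2}R_{j\nc}\bigr)\,\bigl(\cumu_m(x_1),\ldots,\cumu_m(x_\nc)\bigr)^{T}=\bm 0.
\]
Since $\bm R$ is invertible and the diagonal entries at positions $k$ and $\ell$ are nonzero by construction, one reads off $\cumu_m(x_k)=\cumu_m(x_\ell)=0$ for every $m\geq 3$; hence $x_k$ and $x_\ell$ are both semicircular (free Poisson in the rectangular case). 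This directly contradicts the at-most-one hypothesis, so $\bm R\in\orth_{\ps}(\nc)$. No support-disjointness or pigeonhole argument across all pairs of rows is needed---the $2\times 2$ nonzero submatrix pinpoints two bad columns at once.
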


\begin{remark}[Weakness of FCA condition relative to ICA]
Here we only establish the FCA identifiability condition for orthogonal mixing matrix. With help of whitening process (see Section \ref{sec:free_whiten}), this can be easily extended to invertible mixing matrix. However, above invertible condition of FCA is weaker than the corresponding condition for ICA, which applies to rectangular mixing matrix under mild condition. See Section \ref{subsec:over_under_fca} for further discussion. 
\end{remark}

\subsection{Setup and assumptions under a non-orthogonal mixing model}
\label{sec:free_whiten}

\subsubsection{The self-adjoint setting}

Given a probability space $(\alg, \fnl)$, let $\alge_1, \ldots, \alge_\nc$ be $\nc$ self-adjoint and free random variables. Let $\bm{{x}}$ be a vector of free, but not necessarily centered random variables. Then the  variable $\widetilde{x}_i$ defined as 
$$\widetilde{x}_i = x_i -  \underbrace{\fnl(x_i) \, 1_\alg}_{=:\overline{x}_i},$$
is centered since $\fnl(\widetilde{x}_i) = 0$.  Substituting $x_i = \widetilde{x}_i + \fnl(x_i)$ in \eqref{eq:sym fca model} we obtain the mixed model
\begin{equation}\label{eq:sym fca model 2}
    \underbrace{\begin{bmatrix} z_1  \\ \vdots \\ z_s \end{bmatrix}}_{=:\bm{z}} =\underbrace{\begin{bmatrix}\bm{a}_1 & \cdots & \bm{a}_\nc \end{bmatrix}}_{=: \bm{A}} \underbrace{\begin{bmatrix} \widetilde{x}_1 + \overline{x}_1  \\ \vdots \\ \widetilde{x}_s + \overline{x}_s    \end{bmatrix}}_{=: \bm{\widetilde{x}}+ \bm{\overline{x}}} = \bm{A} \bm{\widetilde{x}} + \bm{A} \overline{\bm{x}}, 
\end{equation}
In this general, non-orthogonal mixing setup, we assume, without loss of generality, that $\varphi(\widetilde{x}_i^2) = 1$ and covariance $\cov_{\bm{x} \bm{x}} = \bm I$, where the covariance matrix $\cov_{\bm{x} \bm{x}} $ is defined as following.

\begin{definition}[Covariance matrix of self-adjoint random variables]
\label{def:cov}
Let \\
$\bm{z} = \begin{bmatrix} z_1 & \ldots & z_{\nc} \end{bmatrix}^T$ be a vector of self-adjoint random variables.  The covariance $\cov_{\bm{z} \bm{z}}$ matrix of $\bm{z}$ is the $\nc \times \nc$ matrix given by: 
\begin{equation}
    [\cov_{\bm{z}\bm{z}}]_{ij} = \fnl\left(\widetilde z_i \widetilde z_j\right) \quad \text{for $i, j = 1, \ldots, \nc$,}
\end{equation}
where $\widetilde{z}_i$ is the centered random variable

$$\widetilde z_i = z_i - \fnl(z_i) 1_\alg.$$
\end{definition}

\subsubsection{The non-self-adjoint setting}
Given a rectangular probability space $(\alg, p_1, p_2, \fnl_1, \fnl_2)$, let  $\alge_1, \ldots, \alge_\nc$ be $\nc$ self-adjoint and free random variables. We assume that $\bm{z}$ is modeled as in (\ref{eq:sym fca model 2}). In the non-self-adjoint setting, the variables are centered by construction -- we assume additionally that for all  $x_i \in \alg_{12}$,  $\varphi_1(x_i x_i^*) = 1$ and 
covariance $\cov_{\bm{x} \bm{x}} = \bm I$, where the covariance matrix $\cov_{\bm{x} \bm{x}} $ is defined as following.
 
\begin{definition}[Covariance matrix of non-self-adjoint random variables] \label{def:cov_rec}
For an arbitrary random vector $\bm{z}^T = \begin{bmatrix} z_1 & \ldots & z_{\nc} \end{bmatrix}$ of rectangular random variables from $\alg_{12}$, note that $\varphi(z_i) = 0$ by default, the covariance matrix of $z$ is defined by a $\nc \times \nc$ matrix $\cov_{\bm z \bm z}$ where
\begin{equation}
    [\cov_{\bm z \bm z}]_{ij} = \fnl_1\left( z_i  z_j^*\right).
\end{equation}
\end{definition}

\subsection{Unmixing mixed free random variables using FCA}
We first establish some properties of the covariance matrices thus computed.

\begin{proposition}\label{prop:positive cov}
The covariance matrix as in Definitions \ref{def:cov} and \ref{def:cov_rec} is  positive semi-definite. 
\end{proposition}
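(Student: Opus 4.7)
The plan is to reduce the positive semi-definiteness of $\cov_{\bm z \bm z}$ to the positivity of the underlying linear functional $\fnl$ (respectively $\fnl_1$) on the non-commutative probability space, which is the free-probabilistic analog of the classical identity $\Var(\bm v^T \bm z) = \E[(\bm v^T \bm z)^2] \geq 0$. So I would pick an arbitrary $\bm v \in \R^\nc$ and compute $\bm v^T \cov_{\bm z \bm z} \bm v$ by pushing the sum inside the functional, using linearity of $\fnl$.

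In the self-adjoint setting, with $\widetilde z_i = z_i - \fnl(z_i)1_\alg$, I would set $w := \sum_{i=1}^\nc v_i \widetilde z_i \in \alg$. Since each $\widetilde z_i$ is self-adjoint and the $v_i$ are real, $w$ is self-adjoint, so $w^2 = w^* w$. Then
\begin{equation*}
\bm v^T \cov_{\bm z \bm z} \bm v = \sum_{i,j} v_i v_j \, \fnl(\widetilde z_i \widetilde z_j) = \fnl\!\left(\sum_i v_i \widetilde z_i \sum_j v_j \widetilde z_j\right) = \fnl(w^* w) \geq 0,
\end{equation*}
where the final inequality is the defining positivity of the state $\fnl$ on $(\alg, \fnl)$.

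In the non self-adjoint (rectangular) setting, I would set $w := \sum_{i=1}^\nc v_i z_i \in \alg_{12}$. A parallel computation gives
\begin{equation*}
\bm v^T \cov_{\bm z \bm z} \bm v = \sum_{i,j} v_i v_j \, \fnl_1(z_i z_j^*) = \fnl_1\!\left(\left(\sum_i v_i z_i\right)\left(\sum_j v_j z_j\right)^{\!*}\right) = \fnl_1(w w^*) \geq 0,
\end{equation*}
again by positivity of $\fnl_1$ on the rectangular probability space (here one uses $ww^* \in p_1 \alg p_1$ and $\fnl_1(aa^*) \geq 0$ for all $a \in p_1 \alg p_2$).

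The main (and only) obstacle is essentially bookkeeping: one must verify that the linear functionals are genuinely states (positive on elements of the form $a^*a$), which is part of the standing assumption on the non-commutative probability space recalled in Appendix \ref{sec:math_prelim}. Given that, the proof is a two-line manipulation in each of the two settings, exactly as above. No use of freeness is needed for this proposition, which is why the result is stated as a simple preliminary before the unmixing theorems.
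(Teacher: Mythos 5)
Your proof is correct and takes essentially the same route as the paper's: both reduce positive semi-definiteness to positivity of the functional by writing the quadratic form as $\fnl(ww^*)$ (resp.\ $\fnl_1(ww^*)$) for $w = \sum_i v_i \widetilde z_i$ and invoking $\fnl(aa^*)\geq 0$. The only cosmetic difference is that the paper tests against complex vectors $\bm \alpha$ (after first checking the matrix is Hermitian), which is the appropriate notion of positive semi-definiteness in the rectangular case where the entries $\fnl_1(z_i z_j^*)$ may be complex; your computation extends verbatim by replacing $v_j$ with $\overline{v_j}$.
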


For the covariance of $\bm{z}$ satisfying (\ref{eq:sym fca model 2}), we have the following stronger result.

\begin{proposition}\label{prop:real positiv cov}
The vector of mixed variables $\bm{z}$ modeled as in (\ref{eq:sym fca model 2}) has covariance 
$\cov_{\bm z \bm z}$ that is real and positive definite.
\end{proposition}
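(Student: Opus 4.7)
The approach is to reduce the computation of $\cov_{\bm z \bm z}$ to the matrix product $\bm A \bm A^T$, whose reality and positive definiteness are then immediate from elementary linear algebra. The self-adjoint and non self-adjoint cases must be handled separately since the centering conventions differ, but the underlying mechanism is identical.

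For the self-adjoint setting, the first step is to observe that because $\bm A$ has real scalar entries and $\overline{x}_k = \fnl(x_k)\, 1_\alg$ is a multiple of the unit in $\alg$, the centered version of $z_i$ is simply $\widetilde z_i = \sum_k A_{ik} \widetilde x_k$. Using bilinearity of $\fnl$,
\[
[\cov_{\bm z \bm z}]_{ij} = \fnl(\widetilde z_i \widetilde z_j) = \sum_{k,\ell} A_{ik} A_{j\ell}\, \fnl(\widetilde x_k \widetilde x_\ell) = [\bm A\, \cov_{\bm x \bm x}\, \bm A^T]_{ij}.
\]
Invoking the standing assumption $\cov_{\bm x \bm x} = \bm I$ collapses this to $\cov_{\bm z \bm z} = \bm A \bm A^T$. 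Reality is immediate since $\bm A \in \R^{\nc \times \nc}$, and positive definiteness follows from the invertibility of $\bm A$, which is implicit in the mixing model since otherwise FCA could not hope to recover the sources.

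For the non self-adjoint setting, no explicit centering step is needed because $\fnl_1(x_i) = 0$ holds automatically for $x_i \in p_1 \alg p_2$. The analogous calculation gives
\[
[\cov_{\bm z \bm z}]_{ij} = \fnl_1(z_i z_j^*) = \sum_{k,\ell} A_{ik} A_{j\ell}\, \fnl_1(x_k x_\ell^*).
\]
The off-diagonal terms $\fnl_1(x_k x_\ell^*)$ for $k \ne \ell$ vanish by freeness with amalgamation over $\mathrm{span}(p_1, p_2)$: the conditional expectation onto this subalgebra annihilates each $x_k$ (because $p_1 x_k p_2$ has no component on $p_1$ or $p_2$ when the projections are orthogonal), and amalgamated freeness then forces $\fnl_1(x_k x_\ell^*) = 0$. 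Combined with the normalization $\fnl_1(x_k x_k^*) = 1$, this again yields $\cov_{\bm z \bm z} = \bm A \bm A^T$.

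The main technical obstacle is the rectangular off-diagonal vanishing, which requires unpacking the definition of amalgamated freeness from Appendix \ref{subsec:prelim rec}; everything else reduces to bookkeeping of centering and to the linear-algebraic fact that $\bm A \bm A^T$ is real, symmetric, and positive definite whenever the real matrix $\bm A$ is invertible.
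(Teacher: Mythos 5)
Your proof is correct and follows essentially the same route as the paper's: bilinearity of the functional gives $\cov_{\bm z\bm z}=\bm A\,\cov_{\bm x\bm x}\,\bm A^T=\bm A\bm A^T$, and reality plus positive definiteness then follow from $\bm A$ being real and non-singular (an explicit hypothesis of the mixing model \eqref{eq:general model ica}, so you need not appeal to recoverability). The extra details you supply --- that centering commutes with the real linear mixing, and that the off-diagonal entries of $\cov_{\bm x\bm x}$ vanish by amalgamated freeness in the rectangular case --- are valid but not needed, since the paper takes $\cov_{\bm x\bm x}=\bm I$ as a standing normalization.
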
 

This proposition allows us to formulate FCA on the whitened vector and prove a recovery result as stated next. 
\begin{theorem} \label{thm:general kurt}
Assume $\bm{x}$ and $\bm{z}$ are related as in (\ref{eq:general model ica}). Let $\bm A = \bm{U} \bm{\Sigma} \bm{V}^T$ be the singular value decomposition of $\bm{A}$. 
Consider the manifold optimization problem
\begin{equation}
\label{eq:general kurt}
 \max_{\Argent} \sum_{i = 1}^s \left \lvert \cumu_4\left([\Argkurt^T \bm{y}]_i\right) \right \rvert \textrm{ subject to } \Argent \in \orth(\nc),
\end{equation}
where $\bm{y}$ is the whitened and centered vector given by:
\begin{equation}\label{eq:ztil ytil}
\bm{y} = \bm{C}_{\bm{z}\bm{z}}^{-\sfrac{1}{2}} \bm{\widetilde{z}},
\end{equation}
where $\bm{C}_{\bm{z}\bm{z}}^{-\sfrac{1}{2}} = \bm{U} \bm{\Sigma}^{-1} \bm{U}^T$ is the inverse of the square root of the covariance matrix $\bm{C}_{\bm{z}\bm{z}}$ and $\bm{z}$ is the centered vector whose $i$-th element is given by 
$$\widetilde{z}_i = z_i - \varphi(z_i) 1_\alg .$$

Suppose that all components have non-zero kurtosis, i.e., 
\begin{equation}
% \label{eq:kurt3_assumption}
\abs*{\cumu_4(x_i)} > 0 \quad \text{for $i = 1, \cdots, s$.} 
\end{equation}
\flushleft Then $\bm{W}$ is an optimum if and only if:
\begin{equation}
\bm{W} = \left(\bm{U}\bm{V}^T\right) \permm\signm,
\end{equation}
for some $\permm$ and $\signm$ where 
where $\permm$  is a permutation matrix and $\signm$ is a diagonal matrix with $\pm 1$ as diagonal elements.
\end{theorem}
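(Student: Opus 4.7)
The plan is to reduce the general (non-orthogonal) mixing setup to the orthogonal one already handled by Theorem~\ref{thm:kurt3}, by showing that the whitening transformation converts $\bm{A}$ into an orthogonal matrix. I will treat the self-adjoint case explicitly; the non self-adjoint case will proceed identically once covariances are computed from Definition~\ref{def:cov_rec} in place of Definition~\ref{def:cov}.

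First, I would center. From (\ref{eq:general model ica}) the linearity of $\varphi$ gives $\varphi(\bm z) = \bm A \varphi(\bm x)$, hence
\begin{equation}
\widetilde{\bm z} \;=\; \bm z - \varphi(\bm z)\,1_{\alg} \;=\; \bm A\bigl(\bm x - \varphi(\bm x)\,1_{\alg}\bigr) \;=\; \bm A\,\widetilde{\bm x}.
\end{equation}
The $\widetilde{x}_i$ are centered, free, and by hypothesis satisfy $\varphi(\widetilde{x}_i^{\,2})=1$. The key computation is the covariance of $\bm z$. Using Definition~\ref{def:cov}, bilinearity of $\varphi$, and the fact that centered free variables are uncorrelated ($\varphi(\widetilde{x}_i \widetilde{x}_j) = \delta_{ij}$), I get
\begin{equation}
[\bm C_{\bm z\bm z}]_{ij} \;=\; \sum_{k,\ell} A_{ik} A_{j\ell}\,\varphi(\widetilde{x}_k \widetilde{x}_\ell) \;=\; [\bm A \bm A^T]_{ij},
\end{equation}
so $\bm C_{\bm z \bm z} = \bm A \bm A^T = \bm U \bm\Sigma^2 \bm U^T$, which is consistent with Proposition~\ref{prop:real positiv cov} and in particular invertible.

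Next I would whiten. Plugging the SVD into the definition of $\bm y$ in (\ref{eq:ztil ytil}) yields
\begin{equation}
\bm y \;=\; \bm C_{\bm z\bm z}^{-1/2}\,\widetilde{\bm z} \;=\; \bm U \bm\Sigma^{-1}\bm U^T \cdot \bm U\bm\Sigma\bm V^T\,\widetilde{\bm x} \;=\; \underbrace{\bigl(\bm U\bm V^T\bigr)}_{=:\,\bm Q}\,\widetilde{\bm x},
\end{equation}
so $\bm y$ and $\widetilde{\bm x}$ are related through the orthogonal matrix $\bm Q := \bm U\bm V^T$, exactly the model (\ref{eq:sym fca model}) covered by Theorem~\ref{thm:kurt3}. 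Since free cumulants are translation-invariant (adding $\varphi(x_i)\,1_{\alg}$ does not affect $\kappa_4$), we have $\kappa_4(\widetilde{x}_i) = \kappa_4(x_i)$, so the strict ordering hypothesis on $|\kappa_4(x_i)|$ transfers verbatim to $\widetilde{\bm x}$.

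Now I would invoke Theorem~\ref{thm:kurt3} applied to $(\bm y, \widetilde{\bm x}, \bm Q)$: its characterization of optima states that $\bm W$ is a maximizer of (\ref{eq:general kurt}) if and only if $\bm W = \bm Q \bm P \bm S = (\bm U\bm V^T)\bm P \bm S$ for some permutation matrix $\bm P$ and sign matrix $\bm S$, which is precisely the claim. The non self-adjoint case is identical, with Definition~\ref{def:cov_rec} replacing Definition~\ref{def:cov}, using $\varphi_1(x_i x_j^*) = \delta_{ij}$ for free centered rectangular variables. I do not anticipate a genuine obstacle here: the only point requiring a small verification is that centered free variables are uncorrelated (so that $\bm C_{\bm z\bm z} = \bm A \bm A^T$), which follows from the vanishing of mixed first-order free cumulants between free elements and the definition of freeness; everything else is linear algebra plus a direct appeal to Theorem~\ref{thm:kurt3}.
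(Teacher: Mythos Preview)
Your proposal is correct and follows essentially the same approach as the paper: reduce to the orthogonal mixing model by whitening, observe that $\bm y = (\bm U\bm V^T)\widetilde{\bm x}$, and invoke Theorem~\ref{thm:kurt3}. The paper's own proof is a two-line version of exactly this; you have simply spelled out the covariance computation $\bm C_{\bm z\bm z} = \bm A\bm A^T$ and the translation invariance of $\kappa_4$ that the paper leaves implicit.
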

\begin{proof}
It suffices to observe via (\ref{eq:ica model 2}) that
\begin{equation}\label{eq:ztil zxil}
\bm{y} = \left(\bm{U}\bm{V}^T\right) \widetilde{\bm{x}}.
\end{equation}

The matrix $\bm{U}\bm{V}^{T}$ is an orthogonal matrix because $\bm{U}$ and $\bm{V}$ are orthogonal matrices and so we can recover $\bm{U}\bm{V}^T$ from the stated manifold optimization problem via an application of Theorem \ref{thm:kurt3}.
\end{proof}

\begin{theorem} \label{thm:general ent}
Suppose $\bm{x}$ and $\bm{z}$ are related as in Theorem \ref{thm:general kurt}. Let $\bm A = \bm{U} \bm{\Sigma} \bm{V}^T$ be the singular value decomposition of $\bm{A}$.  
Also suppose at most one element of $\bm x$ is semicircular in the self-adjoint setting and free Poissonian in the non-self-adjoint setting and that
\begin{equation}    \label{eq:finite ent}
      \ent(x_i) > -\infty \qquad \text{for $i = 1, \cdots, \nc$.}
\end{equation}
Consider the manifold optimization problem
\begin{equation}
\label{eq:general ent}
 \max_{\Argent} \sum_{i = 1}^s -\ent\left([\Argkurt^T \bm{y}]_i\right) \textrm{ subject to } \Argent \in \orth(\nc),
\end{equation}
where $\bm{y}$ is the whitened and centered vector given by \eqref{eq:ztil ytil}.
\flushleft Then $\bm{W}$ is an optimum if and only if:
\begin{equation}
\bm{W} = \left(\bm{U}\bm{V}^T\right) \permm\signm,
\end{equation}
where $\permm$ is a permutation matrix and $\signm$ is a diagonal matrix with $\pm 1$ diagonal elements.
\end{theorem}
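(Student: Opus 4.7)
The plan is to mirror the proof of Theorem~\ref{thm:general kurt}: use the whitening identity to reduce the non-orthogonal mixing setting to an orthogonal one, and then invoke the orthogonal-model result, Theorem~\ref{thm:ent}, in place of Theorem~\ref{thm:kurt3}.

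First I would apply \eqref{eq:ztil zxil} to rewrite the whitened and centered vector as $\bm{y} = (\bm{U}\bm{V}^T)\,\widetilde{\bm{x}}$, where $\bm{U}\bm{V}^T$ is orthogonal (being a product of orthogonal factors from the SVD of $\bm{A}$) and $\widetilde{\bm{x}}$ is the centered source vector (in the rectangular setting $\widetilde{\bm{x}} = \bm{x}$ since elements of $\alg_{12}$ satisfy $\fnl(x_i) = 0$ by default). This puts the problem exactly into the form of the orthogonal mixing model \eqref{eq:sym fca model}/\eqref{model_eq_rec} with $\mixmq = \bm{U}\bm{V}^T$ and source $\widetilde{\bm{x}}$, and recasts \eqref{eq:general ent} as an instance of \eqref{eq:ent1}.

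Next I would verify that the hypotheses of Theorem~\ref{thm:ent} transfer from $\bm{x}$ to $\widetilde{\bm{x}}$. Freeness is preserved under shifting each coordinate by a scalar multiple of $1_{\alg}$, so the components of $\widetilde{\bm{x}}$ are free. The normalization $\fnl(\widetilde{x}_i^2) = 1$ (or $\fnl_1(x_ix_i^*) = 1$) is part of the standing hypothesis $\cov_{\bm{x}\bm{x}} = \bm{I}$. The ``at most one semicircular (resp.\ free Poisson) component'' assumption carries over, since a centered shift of a semicircular element is again semicircular, and in the rectangular case no shift occurs. Finally, by translation invariance of the free entropy in the self-adjoint case, $\ent(\widetilde{x}_i) = \ent(x_i) > -\infty$; in the rectangular case this equality is vacuous.

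Having done this, an application of Theorem~\ref{thm:ent} to the orthogonal mixing model $(\widetilde{\bm{x}},\bm{y})$ with mixing matrix $\bm{U}\bm{V}^T$ yields immediately that $\bm{W}$ is an optimum of \eqref{eq:general ent} if and only if $\bm{W} = (\bm{U}\bm{V}^T)\,\permm\,\signm$, which is the claim. I do not expect any substantial obstacle beyond the bookkeeping in the hypothesis transfer step above; the only non-trivial ingredient is translation invariance of $\ent(\cdot)$ and the shift-stability of the semicircular/free Poisson families, both of which are standard in the free probability literature cited in the paper.
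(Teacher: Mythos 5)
Your proposal is correct and follows essentially the same route as the paper: the paper's proof simply observes $\bm{y} = (\bm{U}\bm{V}^T)\widetilde{\bm{x}}$ with $\bm{U}\bm{V}^T$ orthogonal and then invokes Theorem~\ref{thm:ent} in place of Theorem~\ref{thm:kurt3}, exactly as you do. Your extra bookkeeping on why the hypotheses transfer to $\widetilde{\bm{x}}$ (translation invariance of $\ent$, shift-stability of semicircular elements, preservation of freeness under centering) is a sound elaboration of details the paper leaves implicit.
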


\begin{proof}
The proof is exactly same as the the proof of the Theorem \ref{thm:general kurt}, except for our application of Theorem \ref{thm:ent} to \eqref{eq:ztil zxil} instead of Theorem \ref{thm:kurt3}.
\end{proof}

\begin{corollary}[Unmixing via FCA]
\label{coro:selffcf}
Suppose $\bm{x}$ and $\bm{z}$ are related as in Theorem \ref{thm:general kurt} and that the $x_i$'s satisfy the conditions in Theorem \ref{thm:general kurt} or \ref{thm:general ent}.
Let $\Argkurt_{\sf opt}$ denote an optimum of the optimization problem in (\ref{eq:general kurt}) or \eqref{eq:general ent}. Consider the factorization
\begin{equation}\label{eq:unmix free}
    \bm{z} = \widehat{\bm{A}} \, \widehat{\bm{x}},
\end{equation}
\textrm{where }
\begin{subequations}
\begin{equation}
\widehat{\bm{A}} = \label{eq: Ahatunmix free} \bm{C}_{\bm{zz}}^{\sfrac{1}{2}} \, \Argkurt_{\sf opt},
\end{equation}
\textrm{and }
\begin{equation}\label{eq:xhat unmix free}
\widehat{\bm{x}} = \widehat{\bm{A}}^{-1} \bm{z}.
\end{equation}
\end{subequations}
Then $\widehat{\bm A} = \bm A \bm P \bm S$ for some $\permm$ is a permutation matrix and $\signm$ is a diagonal matrix with $\pm 1$ diagonal elements. Therefore, $\widehat{\bm x}$ recovers $\bm x$ up to permutation and sign flips.

\end{corollary}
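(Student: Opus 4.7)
The plan is to chain together Theorem \ref{thm:general kurt} (equivalently Theorem \ref{thm:general ent}) with the explicit form of the whitening matrix, then read off both assertions via direct linear algebra. The substantive content --- that the maximizer of the free-kurtosis or negative free-entropy objective recovers the orthogonal factor $\bm U \bm V^T$ up to permutation and sign --- has already been done in those theorems; the corollary is essentially a bookkeeping statement that repackages their output into a usable factorization recipe.

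First, I would invoke Theorem \ref{thm:general kurt} (or Theorem \ref{thm:general ent}) applied to the stated optimization problem, which yields
\[\Argkurt_{\sf opt} = \bigl(\bm U \bm V^T\bigr)\,\permm\,\signm\]
for some permutation $\permm$ and diagonal sign matrix $\signm$, where $\bm A = \bm U \bm \Sigma \bm V^T$ is the SVD of the mixing matrix. In parallel, I would compute the symmetric positive-definite square root of the covariance by the same calculation used to derive \eqref{eq:ica model 2}: since $\cov_{\bm x \bm x} = \bm I$,
\[\cov_{\bm z \bm z} = \bm A \cov_{\bm x \bm x} \bm A^T = \bm A \bm A^T = \bm U \bm \Sigma^2 \bm U^T,\]
so $\cov_{\bm z \bm z}^{\sfrac{1}{2}} = \bm U \bm \Sigma \bm U^T$. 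Proposition \ref{prop:real positiv cov} is what guarantees that this real, positive-definite square root makes sense even in the free-probabilistic setting.

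Multiplying these two ingredients together telescopes:
\[\widehat{\bm A} = \cov_{\bm z \bm z}^{\sfrac{1}{2}}\,\Argkurt_{\sf opt} = \bm U \bm \Sigma \bm U^T \cdot \bm U \bm V^T \permm \signm = \bm U \bm \Sigma \bm V^T \permm \signm = \bm A \permm \signm,\]
which is the first conclusion. For the second, substituting $\bm z = \bm A \bm x$ into \eqref{eq:xhat unmix free} and using $\signm^{-1} = \signm$, $\permm^{-1} = \permm^T$ gives
\[\widehat{\bm x} = \widehat{\bm A}^{-1} \bm z = (\bm A \permm \signm)^{-1} \bm A \bm x = \signm \permm^T \bm x,\]
which is $\bm x$ up to a coordinate permutation followed by sign flips.

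There is no substantive obstacle here --- once the maximizer has been identified by the preceding two theorems, the rest is a two-line SVD identity. The only point that needs minor care is confirming that $\cov_{\bm z \bm z}^{\sfrac{1}{2}}$ is literally $\bm U \bm \Sigma \bm U^T$ (and not some other unitary square root), which is where Proposition \ref{prop:real positiv cov} is used, and checking that the argument is insensitive to the two separate settings (self-adjoint vs.\ rectangular) --- but in both settings the identities $\cov_{\bm z \bm z} = \bm A \bm A^T$ and $\bm z = \bm A \bm x$ hold, so the same calculation applies verbatim.
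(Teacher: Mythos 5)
Your proposal is correct and follows essentially the same route as the paper: invoke Theorem \ref{thm:general kurt} (or \ref{thm:general ent}) to identify $\Argkurt_{\sf opt} = (\bm U \bm V^T)\permm\signm$, then multiply by $\cov_{\bm z\bm z}^{\sfrac{1}{2}} = \bm U\bm\Sigma\bm U^T$ to telescope to $\bm A\permm\signm$. You are in fact slightly more complete than the paper, which asserts but does not spell out the final step $\widehat{\bm x} = \signm\permm^T\bm x$, and your explicit identification of the square root also quietly corrects a typo in the paper's own proof, where $\cov_{\bm z\bm z}^{\sfrac{1}{2}}$ is written as $\bm U\bm\Sigma^{-1}\bm U^T$ but used as $\bm U\bm\Sigma\bm U^T$.
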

\begin{proof}
As $\bm{C}_{\bm{z}\bm{z}}^{\sfrac{1}{2}} = \bm{U} \bm{\Sigma}^{-1} \bm{U}^T$, given an optimum $\bm W$ satisfying $\bm{W} = \left(\bm{U}\bm{V}^T\right) \permm\signm$,  
\begin{equation}
    \bm{C}_{\bm{z}\bm{z}}^{\sfrac{1}{2}} \bm W = \bm U \bm \Sigma \bm V^T \bm P \bm S = \bm A \bm P \bm S. 
\end{equation}
That is, we recover mixing matrix $\bm A$ up to column permutation and column sign flips. This completes the proof.
\end{proof}
\subsection{Overdetermined and underdetermined FCA}
\label{subsec:over_under_fca}

We now consider same model as in \eqref{eq:general matrix model} for the settings where the mixing matrix $\bm{A}$ is rectangular. For the over-determined setting where  $\bm{A}$ is a $p \times s$ mixing matrix with $p > s$, it can be shown that FCA applied to $\bm{y} = \bm{\Sigma}_s^{-1} \bm{U}_{s}^T \bm{z}$  will unmix the free random variables. Here $\bm{U}_s$ is a $p \times s$ matrix and $\bm{\Sigma}_s$ is an $s \times s$ diagonal matrix of the singular values of $\bm{A}$.
These matrices can obtained by using eigenvalue decomposition $\bm{C}_{zz} = \bm{U}_{s} \bm{\Sigma}_s \bm{\Sigma}_s^T \bm{U}_s^T$. 

In the under-determined setting where $p < s$, it is established in ICA that mixing matrix is unique up to column permutation and scaling under mild condition \cite[Theorem 3]{eriksson2004identifiability}. This result is a consequence of scalar Cram\'ers Lemma \cite[Lemma 9, pp. 294]{comon_1994} and the Lemma of Marcinkiewicz-Dugue \cite[Lemma 10, pp. 294]{comon_1994}. And the estimation of mixing matrix and separation of the sources can be done via decomposition of a higher order tensor in a sum of rank-1 terms (see \cite{de2007fourth} and references therein). We don't have analogous result for uniqueness of mixing matrix in FCA yet due to missing analog of these lemmas in free probability. Acutally, the analog of Cram\'ers Lemma in the free probability does not hold \citep{lehner2004cumulants, marcinkiewicz_free_prob}, thus we expect the uniquenss result (if exists) will be weaker. On the other hand, it seems possible to develop analog of tensor method for free component separation since it mainly depends the free/indpendent additivity of the cumulants.

% section for result of rectangular FCA
%\input{mainresults_rectangular.tex}

% section for algorithms
\subsection{Unmixing mixtures of matrices using FCA}
\label{sec:algo}
The multiplication of matrices is non-commutative, therefore we can consider the mixing model in (\ref{eq:general matrix model}) where $\bm{X}_1, \ldots, \bm{X}_{s}$ are  finite dimensional (asymptotically) free self-adjoint or rectangular matrices (see Definitions \ref{def:asy_ind} and \ref{def:asy_ind_rec}). The goal is to unmix $\bm{X}_1, \ldots, \bm{X}_{s}$ from their additive mixtures $\bm{Z}_1, \ldots, \bm{Z}_s$. 

Corollary \ref{coro:selffcf} provides a recipe for unmixing the mixture of matrices by factorizing $\bm{Z}$ into the matricial analog of (\ref{eq:unmix free}). In the matricial setting, this is equivalent to factorizing $\bm{Z} = (\widehat{\bm A} \otimes \bm{I}_N) \widehat{\bm X}$. We shall refer to this factorization of an array of matrices as Free Component Factorization (FCF). 

To compute $\widehat{\bm A}$ in FCF as prescribed by Corollary \ref{coro:selffcf} we must compute the matricial analog $\bm{Y}$ of $\bm{y}$ in (\ref{eq:ztil ytil}). This involves first computing the matricial $\nc \times \nc$ covariance matrix  analog as in Algorithm \ref{alg:free_whiten} where we have replaced the $\varphi(\cdot)$ and $\varphi_1(\cdot)$ in the self-adjoint and the rectangular settings with their matricial analogs as in (\ref{eq:sym varphi}) and (\ref{eq:rect varphi}), respectively. 

Having computed the whitened array of matrices $\bm{Y}$ we can compute the matrix $\widehat{\bm A}$ via Algorithm \ref{alg:fcf_pro} where the dot operator is as defined next.

\begin{definition}[Dot operator]
\label{def:dot_operator}
Let $\bm Y = [\bm Y_1^T, \cdots, \bm Y_\nc^T]^T$ be an array of matrices where $\bm{Y}_i \in \mathbb{R}^{N \times M}$. Given a function $F: \mathbb{R}^{N \times M} \mapsto \mathbb{R}$, define
\begin{equation}
    F_{\cdot}(\bm Y) := \begin{bmatrix} F(\bm Y_1) \\ \vdots \\ F(\bm Y_\nc)\end{bmatrix}
\end{equation}
For later convenience, we denote the sum of all elements of $F_{\cdot}(\bm Y)$ by $\sum F_{\cdot}(\bm Y)$, i.e.,
\begin{equation}
\label{eqn:dot_sum}
    \sum F_{\cdot}(\bm Y) = \sum_{i = 1}^\nc \left[F_{\cdot}(\bm Y)\right]_i = \sum_{i = 1}^\nc F(\bm Y_i)
\end{equation}
\end{definition}

\begin{algorithm}[H]
\caption{Free whitening for random matrices}
\label{alg:free_whiten}
\begin{spacing}{1.2}
        \textbf{Input:} $\bm{Z} = [\bm{Z}_1^T, \cdots, \bm{Z}_\nc^T]^T$ where $\bm{Z}_i$ are $N \times M$ matrices. $M = N$ if $\bm Z_i$ are self-adjoint.\\
        1. Compute $\overline{\bm{Z}} = [\overline{\bm{Z}}^T_1, \cdots, \overline{\bm{Z}}^T_\nc]^T$, where
        $$
        \overline{\bm{Z}}_i = 
        \begin{cases}
        \frac1N \Tr(\bm{Z}_i) \bm I_N & \text{if $\overline{\bm{Z}}_i$ are self-adjoint,} \\
         \mathrm{mean}(vec(\bm{Z}_i)) \times \mathrm{ones}(N,M) & \text{if $\overline{\bm{Z}}_i$ are rectangular.}
        \end{cases}
        $$
        %\vspace{-1cm}
        \\
        2. Compute $\widetilde{\bm{Z}} = \bm{Z} - \overline{\bm{Z}}$ and  $\nc \times \nc$ empirical covariance matrix $\bm{C}$ where for $i, j = 1, \ldots, \nc$:
        \begin{flalign*}
            \bm C_{ij} = \frac1N \Tr(\bm{\widetilde{Z}}_i \bm{\widetilde Z}^H_j).
        \end{flalign*}
        \vspace{-.5cm}
        \\
        3. Compute eigen-decomposition , $\bm{C} = \bm{U} \bm{\Sigma}^2 \bm{U}^T$.\\
        4. Compute $\bm{Y} = ((\bm{U} \bm{\Sigma}^{-1}\bm{U}^T) \otimes \bm I_N) \widetilde{\bm Z}$.\\
        5. \textbf{return:} $\bm{Y}, \bm{\Sigma},$ and $\bm{U}$. 
        \vspace{1ex}
\end{spacing}
\end{algorithm}

\begin{algorithm}[H]
    \caption{Free Component Factorization (FCF) of an array of matrices}
    \label{alg:fcf_pro}
\begin{spacing}{1.5}
    \textbf{Input}: Array of matrices $\bm{Z} = [\bm Z_1^T, \cdots, \bm Z_\nc^T]^T$ where $\bm{Z}_i$ are $N \times M$ matrices.
    1. Compute $\bm Y, \bm \Sigma, \bm U$ by applying Algorithm \ref{alg:free_whiten} to $\bm Z$. \\
    2. Compute \footnotemark
    \vspace{0.2cm}
    $$\bm {\widehat W} = \mathop{\text{arg min}}_{\bm W \in O(n)} \sum \widehat F_{\cdot} \left(\bm{\widetilde{W}}^T \bm Y\right), \quad \text{where $\bm {\widetilde{W} }= \bm W \otimes \bm I_N$}.$$
    \vspace{-.8cm}

    3. Compute $\bm{\widehat A} = \bm U \bm \Sigma \bm U^T \bm {\widehat W}$ and $\bm {\widehat X} = (\bm{ \widehat A}^{-1} \otimes \bm{I}_N) \bm{Z}$.\\
    4. Sort components of $\widehat{\bm X}$ by magnitude of $\widehat{F}(\widehat{\bm X}_i)$. \\
    5. Permute the columns of $\widehat{\bm A}$ such that $\bm{Z} = (\bm{\widehat A} \otimes \bm{I}_N) \bm{\widehat X}$. \\
    6. \textbf{return:} $\bm{\widehat A}$ and $\bm{\widehat X}$ 
    \vspace{.2cm}
\end{spacing}
\end{algorithm}

\footnotetext{Here  $\widehat F(\cdot)$ is either the (self-adjoint or rectangular) free kurtosis, the  free entropy or a higher (than fourth) order (even valued) free cumulant. See Table \ref{table:fcf_Fhat}.}  

\subsection{Numerical algorithms for Free Component Factorization}
\label{sec:opt_summary}

The manifold optimization problem in FCF can be solved using a gradient descent with retraction  method \citep{manopt, mogensen2018optim}. 

\begin{theorem}[Gradient of the objective function]    \label{thm:grad_F}

Let $\bm{Y} = [\bm{Y}_1^T,\cdots, \bm{Y}_\nc^T]^T$ and $\bm{W} = [\bm{w}_1, \cdots, \bm{w}_\nc] \in \R^{\nc \times \nc}$. Suppose 
$$\widetilde{\bm{W}} = \bm{W}\otimes \bm{I}_N,$$
we are interested in the gradient
$$\partial_{\bm{W}_{k\ell}}  \sum \widehat F_{\cdot} \left(\widetilde{\bm W}^T \bm Y\right),$$
which depends on whether $\bm{Y}$ is an array of self-adjoint or rectangular matrices. 

Suppose $\widehat{F}(\cdot)$ is chosen to be    free kurtosis or free entropy for the self-adjoint or rectangular setting as in Table \ref{table:fcf_Fhat}, then the gradient is given by the corresponding expression in Table \ref{table:fcf_gradF} where 
$$\bm{X}_\ell = \widetilde{\bm{w}}_\ell^T \bm{Y},$$
and $\widetilde{\bm{w}}_\ell = \bm{w}_\ell\otimes \bm{I}_N$.
\end{theorem}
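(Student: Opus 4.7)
The plan is to strip away the Kronecker block structure by a single application of the chain rule, and then reduce to four scalar matrix-calculus exercises, one per entry of Table \ref{table:fcf_Fhat}. Writing \(\bm W=[\bm w_1,\ldots,\bm w_\nc]\) and using \((A\otimes B)^T=A^T\otimes B^T\), the \(i\)-th block of \(\widetilde{\bm W}^T\bm Y\) is \(\bm X_i=\sum_{j=1}^\nc \bm W_{ji}\bm Y_j\), so that \(\partial_{\bm W_{k\ell}}\bm X_i=\delta_{i\ell}\bm Y_k\). Hence
\[
\partial_{\bm W_{k\ell}}\sum_{i=1}^\nc \widehat F(\bm X_i)
\;=\;\bigl\langle\nabla_{\bm X}\widehat F(\bm X_\ell),\;\bm Y_k\bigr\rangle_F,
\]
where \(\langle A,B\rangle_F:=\Tr(A^{H}B)\). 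Everything therefore reduces to identifying the matricial gradient \(\nabla_{\bm X}\widehat F(\bm X)\) for each of the four choices of \(\widehat F\) and substituting \(\bm X=\bm X_\ell\).

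For the kurtosis rows this is routine trace differentiation. In the self-adjoint case \eqref{eq:free_kurtosis_expfor} reads \(\widehat F(\bm X)=\tfrac1N\Tr(\bm X^4)-2\bigl(\tfrac1N\Tr(\bm X^2)\bigr)^2\), and the standard identities \(\partial_{\bm X}\Tr(\bm X^{k})=k\bm X^{k-1}\) for symmetric \(\bm X\) yield \(\nabla_{\bm X}\widehat F(\bm X)=\tfrac{4}{N}\bm X^{3}-\tfrac{8}{N^{2}}\Tr(\bm X^{2})\bm X\). In the rectangular case one differentiates \eqref{eq:def_kurt_rec} through the product \(\bm X\bm X^{*}\) via \(\partial_{\bm X}\Tr(\bm X\bm X^{*})=2\bm X\) and \(\partial_{\bm X}\Tr((\bm X\bm X^{*})^{2})=4\bm X\bm X^{*}\bm X\), producing an analogous expression with \(\bm X^{3}\) replaced by \(\bm X\bm X^{*}\bm X\), \(\Tr(\bm X^{2})\) replaced by \(\Tr(\bm X\bm X^{*})\), and the extra factor \(\alpha:=1+N/M\) inherited from \eqref{eq:def_kurt_rec}. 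Plugging these into the chain-rule identity above gives exactly the kurtosis columns of Table \ref{table:fcf_gradF}.

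The entropy rows are where the real work sits, and this is the main obstacle. Here I would diagonalise \(\bm X=\bm V\bm\Lambda\bm V^{T}\) (self-adjoint) or \(\bm X=\bm U\bm\Sigma\bm V^{T}\) (rectangular) and invoke the empirical microstates free entropy \(\widehat\chi(\bm X)=\tfrac1{N^{2}}\sum_{i\neq j}\log|\lambda_i-\lambda_j|\) recalled in Appendix \ref{subsubsec:free entropy}, together with its rectangular analogue in Appendix \ref{subsubsec:free entropy rec}. The Hadamard first-variation formulas \(\partial_{\bm X}\lambda_i=\bm v_i\bm v_i^{T}\) (self-adjoint) and \(\partial_{\bm X}\sigma_i=\bm u_i\bm v_i^{T}\) (rectangular) give, termwise,
\[
\nabla_{\bm X}\widehat\chi(\bm X)\;=\;\frac{2}{N^{2}}\,\bm V\,\diag\!\Bigl(\sum_{j\neq i}\tfrac{1}{\lambda_i-\lambda_j}\Bigr)_{\!i}\,\bm V^{T},
\]
which one recognises as the discrete Hilbert transform of the empirical spectral measure of \(\bm X\), applied in \(\bm X\)'s own spectral basis; a parallel expression in terms of the singular triplet \((\bm U,\bm\Sigma,\bm V)\) handles the rectangular case. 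The delicate point is that this pointwise formula is valid only on the open dense set of matrices with simple (singular) spectrum, so one must argue that coincidences form a measure-zero set along a generic retraction trajectory and invoke a continuity/subgradient argument to justify using the closed form inside the gradient-descent-with-retraction loop. Substituting these expressions back into the chain-rule identity of the first paragraph yields precisely the entropy rows of Table \ref{table:fcf_gradF}, completing the proof.
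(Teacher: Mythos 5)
Your proposal follows essentially the same route as the paper: the chain rule reduces the derivative to $\langle\nabla_{\bm X}\widehat F(\bm X_\ell),\bm Y_k\rangle_F$ since only $\bm X_\ell$ depends on $\bm W_{k\ell}$, the kurtosis rows are obtained by cyclic trace identities (the paper's Lemma \ref{lemma:devr_cumu}), and the entropy rows by first-order eigenvalue perturbation (Lemma \ref{lemma:devr_ent}), with your remark on simple spectra being a point the paper glosses over. The only slips are cosmetic: Table \ref{table:fcf_Fhat} sets $\widehat F=-\lvert\widehat\cumu_4\rvert$, so you must carry the factor $-\mathrm{sign}(\widehat\cumu_4(\bm X_\ell))$ through the chain rule to literally match Table \ref{table:fcf_gradF}, and the empirical free entropy is normalized by $1/(N(N-1))$ rather than $1/N^2$.
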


Armed with these gradients we can compute the free component factorization of an array of matrices using numerical solvers for manifold optimization, such as for example the  \texttt{manopt} \citep{manopt} package (for \texttt{MATLAB}) or the  \texttt{Optim.jl} \citep{mogensen2018optim}  package for \texttt{Julia}.  Our  software implemntation via the \texttt{FCA.jl} package \citep{fca} does precisely this.
% \clearpage

%\begin{landscape}
\begin{table}[]
\centering
\caption{Formulas for $\widehat F(\cdot)$ in Algorithm \ref{alg:fcf_pro}. Here $\bm X$ is either a self-adjoint or a rectangular matrix.}
\label{table:fcf_Fhat}
\begin{tabular}{|c|c|c|}
\hline
& self-adjoint FCF & rectangular FCF \\
\hline \hline 
\shortstack{\\ free \\ kurtosis}&      
\begin{minipage}[c]{.35\linewidth}
\begin{equation*}
\begin{aligned}
&\widehat F(\bm{X})  = -\left \lvert \widehat \cumu_4(\bm{X}) \right \rvert, \text{where} \\
&\widehat \cumu_4(\bm{X}) = \frac{1}{N} \Tr(\bm{X}^4) \\
& \qquad \qquad - 2\left[\frac1N\Tr(\bm{X}^2)\right]^2 \\
\end{aligned}
\end{equation*}
\vspace{1ex} 
\end{minipage}  
&    
\begin{minipage}[c]{.43\linewidth}
\vspace{0.5ex}
\begin{equation*}
\begin{aligned}
& \widehat F(\bm{X})  = -\left \lvert \widehat \cumu_4(\bm{X}) \right \rvert, \quad \text{where} \\
& \widehat \cumu_4(\bm{X}) =  \frac{1}{N} \Tr((\bm{X}\bm{X}^H)^2) \\
& \quad -  \left(1 + \frac{N}{M}\right)\left[\frac1N\Tr(\bm{X}\bm{X}^H)\right]^2
\end{aligned}
\end{equation*}
\vspace{0.5ex} 
\end{minipage}             
\\
\hline 
\shortstack{\\ free \\ entropy} &       
\begin{minipage}[c]{.35\linewidth}
\begin{equation*}
\begin{aligned}
& \text{Let $\eg_i$ be eigenvalues of $\bm{X}$}\\
& \widehat F(\bm{X})  = \widehat \ent(\bm{X}), \quad \text{where} \\
& \widehat \ent(\bm{X}) = \sum_{i \neq j} \frac{\log \abs*{\eg_i - \eg_j}}{N(N - 1)}
\end{aligned}
\end{equation*}
\vspace{1ex} 
\end{minipage}           
&
\begin{minipage}[c]{.43\linewidth}
\begin{equation*}
\begin{aligned}
&\text{Let $\eg_i$ be eigenvalues of $\bm{X}\bm{X}^H$,}\\
&\text{set $\alpha = \frac{N}{N + M}$ and $\beta = \frac{M}{N + M}$}, \\
& \widehat F(\bm{X})  = \widehat \ent(\bm{X}), \quad \text{where} \\
& \widehat \ent(\bm{X})  = \alpha^2 \sum_{i\neq j} \frac{\log \abs*{\eg_i - \eg_j}}{N(N - 1)} \\
& \quad \quad \quad + (\beta - \alpha)\alpha \sum_{i = 1}^N \frac{\log \eg_i}{N}
\end{aligned}
\end{equation*}
\vspace{0.5ex}
\end{minipage}   
\\ \hline             
\end{tabular}
\end{table}
%\end{landscape}

%\begin{landscape}
\begin{table}
\centering
\caption{Euclidean gradients for the setting in Theorem \ref{thm:grad_F}}
\label{table:fcf_gradF}
\begin{tabular}{|c|c|c|}
\hline
& self-adjoint FCF & rectangular FCF \\
\hline \hline 
\shortstack{\\ free \\ kurtosis} &      
\begin{minipage}[c]{.40\linewidth}
\begin{equation*}
\begin{aligned}
& \partial_{\bm{W}_{k\ell}} \sum \widehat F_{\cdot}\left(\widetilde{\bm W}^T \bm Y\right) = \\
 & - \mathrm{sign}(\widehat{\cumu}_4(\bm{X}_\ell)) \times \Bigg[\frac4N \Tr(\bm{Y}_k\bm{X}_\ell^3) \\
&  - \frac{8}{N^2} \Tr(\bm{X}_\ell^2) \Tr(\bm{Y}_k\bm{X}_\ell)\Bigg]
\end{aligned}
\end{equation*}
\vspace{1ex} 
\end{minipage}  
&    
\begin{minipage}[c]{.45\linewidth}
\vspace{1ex}
\begin{equation*}
\begin{aligned}
& \partial_{\bm{W}_{k\ell}}  \sum \widehat F_{\cdot} \left(\widetilde{\bm W}^T \bm Y\right) = \\
& - \mathrm{sign}(\widehat{\cumu}_4(\bm{X}_\ell)) \times \Bigg[\frac{4}{N}\Tr(\bm{Y}_k\bm{X}_\ell^H\bm{X}_\ell\bm{X}_\ell^H) \\
& - \left(1 + \frac NM \right) \frac{4}{N^2} \Tr(\bm{X}_\ell\bm{X}_\ell^H) \Tr(\bm{Y}_k\bm{X}_\ell^H)\Bigg]
\end{aligned}
\end{equation*}
\vspace{1ex} 
\end{minipage}             
\\
\hline 
\shortstack{\\ free \\ entropy}  &       
\begin{minipage}[c]{.40\linewidth}
\begin{equation*}
\begin{aligned}
&\text{Denote the eigenvalues and }\\
&\text{eigenvectors  of $\bm{X}_\ell $ by $\eg^{(\ell)}_i$ and $v^{(\ell)}_i$,}
\end{aligned}
\end{equation*}
\begin{equation*}
\begin{aligned}
& \partial_{\bm{W}_{k\ell}}  \sum \widehat F_{\cdot}\left(\widetilde{\bm W}^T \bm Y\right) = \\
&\frac{1}{N(N - 1)} \sum_{i \neq j} \frac{\partial_{\bm{W}_{k\ell}} \eg^{(\ell)}_i - \partial_{\bm{W}_{k\ell}} \eg^{(\ell)}_j}{\eg^{(\ell)}_i - \eg^{(\ell)}_j}\\
& \text{with } \partial_{\bm{W}_{k\ell}} \eg^{(\ell)}_i = (v^{(\ell)}_i)^H \bm{Y}_k v^{(\ell)}_i
\end{aligned}
\end{equation*}
\vspace{2ex} 
\end{minipage}           
&
\begin{minipage}[c]{.45\linewidth}
\begin{equation*}
\begin{aligned}
&\text{Denote the eigenvalues and eigenvectors}\\
&\text{of $\bm{X}_\ell\bm{X}_\ell^H$ by $\eg^{(\ell)}_i$ and $v^{(\ell)}_i$,}
\end{aligned}
\end{equation*}
\begin{equation*}
\begin{aligned}
& \partial_{\bm{W}_{k\ell}} \sum \widehat F_{\cdot}\left(\widetilde{\bm W}^T \bm Y\right) = \\
 & \frac{\alpha^2}{N(N - 1)} \sum_{i \neq j} \frac{\partial_{\bm{W}_{k\ell}} \eg^{(\ell)}_i - \partial_{\bm{W}_{k\ell}} \eg^{(\ell)}_j}{\eg^{(\ell)}_i - \eg^{(\ell)}_j} \\
+ & \frac{(\beta - \alpha)\alpha}{N}\sum_{i = 1}^N \frac{\partial_{\bm{W}_{k\ell}}\eg^{\ell}_i}{\eg^{(\ell)}_i}\\
& \text{with } \partial_{\bm{W}_{k\ell}}\eg^{(\ell)}_i =  (v^{(\ell)}_i)^H (\bm{Y}_k\bm{X}_\ell^H  \\
& \qquad \qquad \qquad \qquad + \bm{X}_\ell\bm{Y}_k^H )v^{(\ell)}_i
\end{aligned}
\end{equation*}
\vspace{1ex}
\end{minipage}   
\\ \hline             
\end{tabular}
\end{table}
%\end{landscape}

\section{Numerical Simulations}
\label{sec:num_simu}

We will now validate the unmixing performance of FCA on additive mixtures on random matrices and compare the unmixing performance with that of ICA. To that end, we first define a permutation invariant unmixing error metric that is also invariant to scaling and sign ambiguities.

\begin{definition}[Unmixing Error Metric]
Let $\bm{A}$ be the mixing matrix in (\ref{eq:general matrix model}) and $\widehat{\bm{A}}$ be an estimate of the mixing matrix. The scaling and permutation invariant unmixing error is defined as 
            \begin{equation}
            \label{eq:error}
              \mathcal{E}(\bm{A},\widehat{\bm A}) = \min_{\bm{D}\in \mathcal{D},\bm{P} \in \Pi}|| \bm{P}\bm{D}\widehat {\bm{A}}^{-1} \bm{A}-\bm{I}||_F,
            \end{equation}
where $\mathcal{D}$ denotes the set of non-singular diagonal matrices and $\Pi$ denotes the set of (square) permutation matrices. 
\end{definition}
We shall utilize this metric to compare FCA and ICA in what follows. 

\subsection{Unmixing of self-adjoint matrices using self-adjoint FCA}\label{sec:sim self adjoint}

We now verify  Theorems \ref{thm:general kurt}, \ref{thm:general ent} and Corollary \ref{coro:selffcf} by showing that self-adjoint FCA can successfully, while not perfectly, unmix mixtures of self-adjoint matrices.

Let $\bm {G}_1\in \R^{N \times N}$ and $\bm{G}_2 \in \R^{N \times M}$ be two independent Gaussian matrices composed of i.i.d. $\mathcal{N}(0,1)$ entries. Define 
\begin{equation}
\bm{X}_1 = \frac{\bm{G}_1 + \bm{G}_1^T}{\sqrt{2N}}\quad  \text{and}\quad \bm{X}_2 = \frac{\bm{G}_2\bm{G}_2^T}{M}.
\end{equation}
The matrices $\bm X_1, \bm X_2 $ are self-adjoint by construction, and their eigen-spectra are displayed in Figures \ref{fig:x1} and \ref{fig:x2} respectively. In the parlance of random matrix theory \citep{edelman2005random}, $\bm X_1$ is a matrix drawn from  the Gaussian orthogonal ensemble (GOE) and its limiting eigen-distribution obeys the semi-circle distribution, while  $\bm X_2$ is a matrix drawn from Laguerre orthogonal ensemble (LOE) and its limiting eigen-distribution obeys the Mar\v{c}enko-Pastur distribution. 

We now mix the matrices as in (\ref{eq:general matrix model}) for a non-singular  

$$\bm{A}= \begin{bmatrix} 0.5 & 0.5 \\ -0.5 & 0.6 \end{bmatrix}.$$
The eigen-spectra of the mixed matrices $\bm Z_1$ and $\bm Z_2$ are plotted in Figures \ref{fig:z1} and \ref{fig:z2}.

The distributions of $\bm X_1$ and $\bm X_2$ are orthogonally invariant, and according to the discussion following Definition \ref{def:asy_ind}, $\bm{X}_1$ and $\bm{X}_2$ are asymptotically free. Moreover, only one matrix ($\bm X_1$) has a limiting eigen-distribution that converges to that of an abstract free semicircular element. Hence, we can apply self-adjoint FCA to factorize $\bm{Z} = \begin{bmatrix} \bm{Z}_1 & \bm{Z}_2 \end{bmatrix}^T$  using Algorithm \ref{alg:fcf_pro}  and obtain estimates $\widehat{\bm A}$, $\widehat{\bm{X}}_1$ and $\widehat{\bm{X}}_2$ which should be good estimates of $\bm{A}$, $\bm{X}_1$ and $\bm{X}_2$ respectively. 

Figures \ref{fig:xhat1_kurt} and \ref{fig:xhat2_kurt} display the eigen-spectra of the matrices $\widehat{\bm{ X}}_1$ and $\widehat{\bm{X}}_2$ returned by self-adjoint  free kurtosis-based FCA. Comparing Figures \ref{fig:xhat1_kurt}, \ref{fig:xhat2_kurt} and  \ref{fig:x1}, \ref{fig:x2} reveals that free kurtosis-based self-adjoint FCA successfully unmixes the mixed matrices  well. Figures \ref{fig:xhat1_ent} and \ref{fig:xhat2_ent} show that free entropy based self-adjoint FCA successfully unmixes the mixed matrices. Both free kurtosis and free entropy based unmixing have comparably small but not zero error, which we compute over $200$ Monte-Carlo realizations. This is expected since the matrices are asymptotically free and the simulations are with finite dimensional matrices.  

  \begin{figure}
          \centering
            \begin{subfigure}[t]{0.24\textwidth}
              \includegraphics[width = \textwidth,trim={0cm 0cm 0cm 0cm},clip]{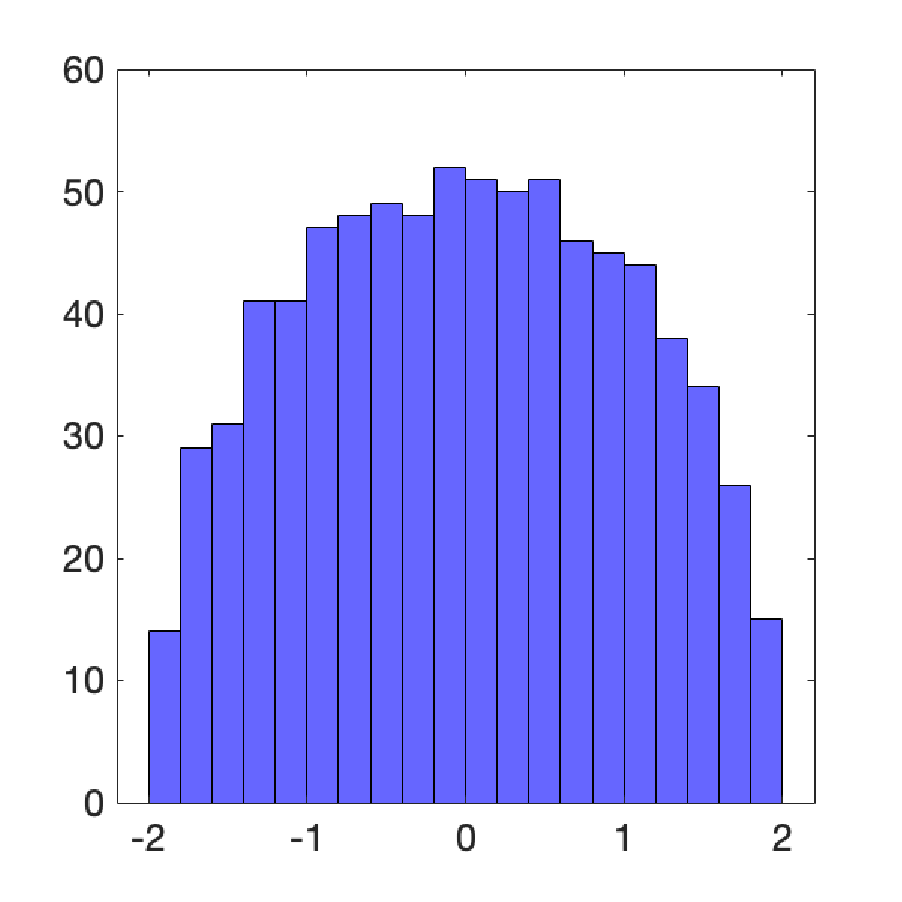}
          \caption{$\bm X_1$}
            \label{fig:x1}
            \end{subfigure}     
            \begin{subfigure}[t]{0.24\textwidth}
              \includegraphics[width = \textwidth,trim={0cm 0cm 0cm 0cm},clip]{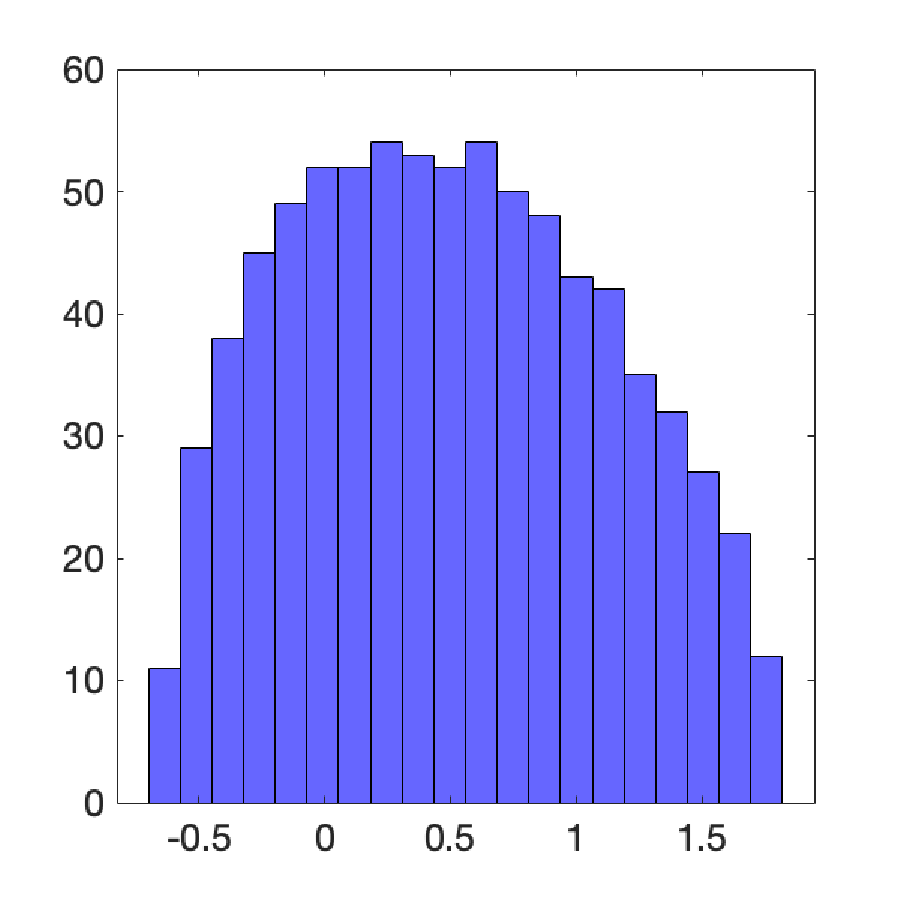}
          \caption{$\bm Z_1$}
            \label{fig:z1}
            \end{subfigure}           
            \begin{subfigure}[t]{0.24\textwidth}
              \includegraphics[width = \textwidth,trim={0cm 0cm 0cm 0cm},clip]{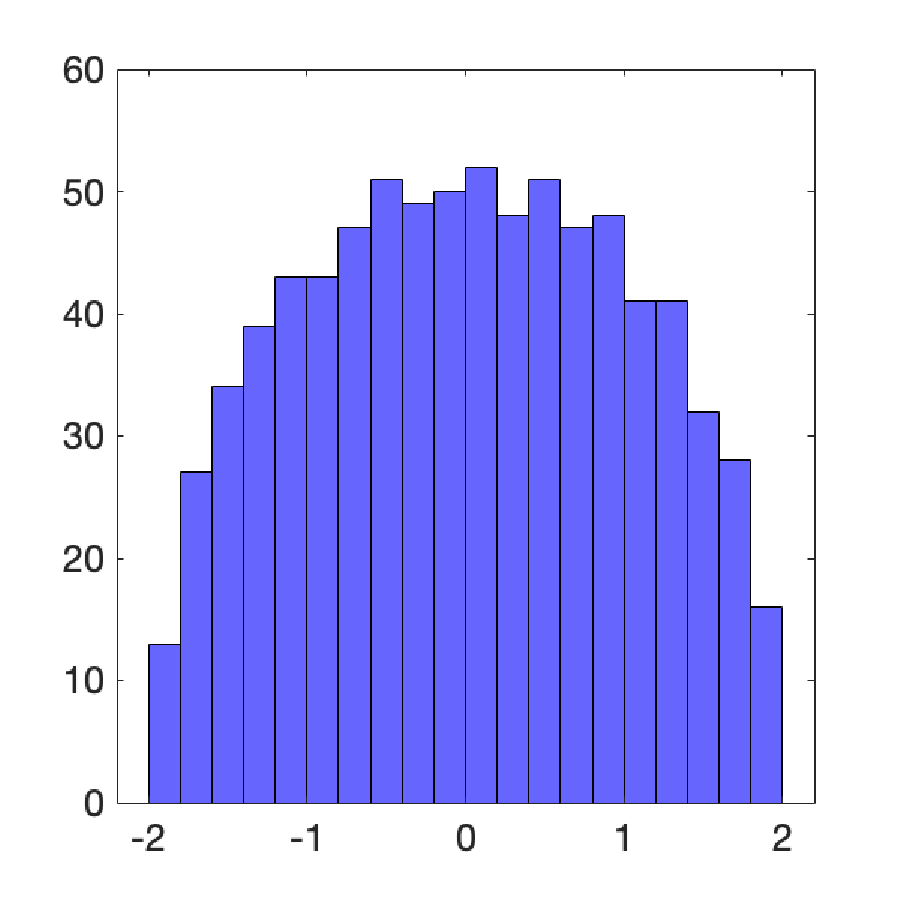}
          \caption{$\widehat{\bm X}_1$ via kurtosis-based FCA}
            \label{fig:xhat1_kurt}
            \end{subfigure}
             \begin{subfigure}[t]{0.24\textwidth}
              \includegraphics[width = \textwidth,trim={0cm 0cm 0cm 0cm},clip]{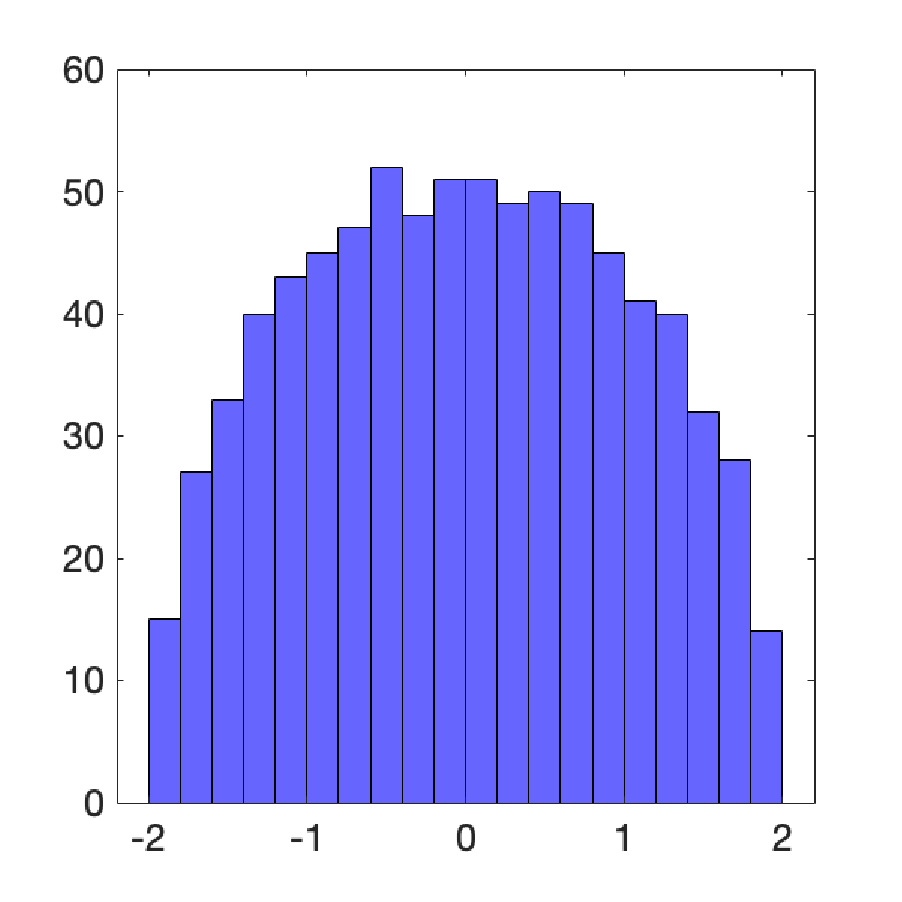}
          \caption{$\widehat{\bm X}_1$ via entropy-based FCA}
            \label{fig:xhat1_ent}
            \end{subfigure}
             \begin{subfigure}[t]{0.24\textwidth}
              \includegraphics[width = \textwidth,trim={0cm 0cm 0cm 0cm},clip]{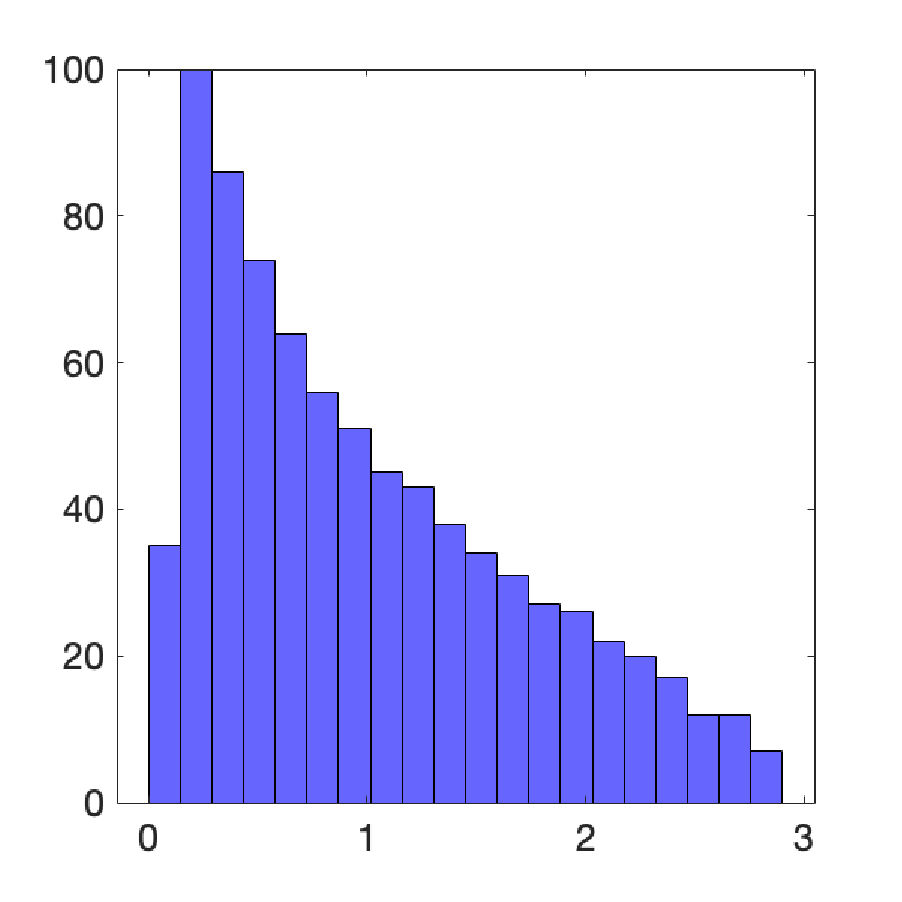}
          \caption{$\bm X_2$}
            \label{fig:x2}
            \end{subfigure}  
             \begin{subfigure}[t]{0.24\textwidth}
              \includegraphics[width = \textwidth,trim={0cm 0cm 0cm 0cm},clip]{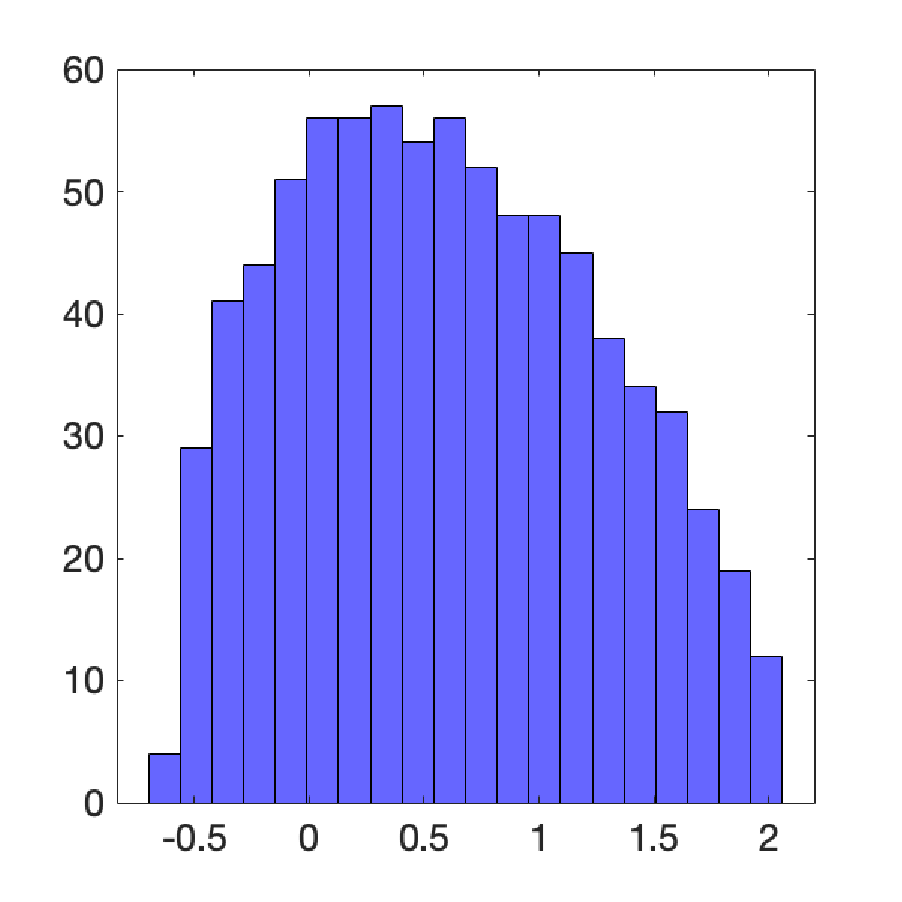}
          \caption{$\bm Z_2$}
            \label{fig:z2}
            \end{subfigure}
            \begin{subfigure}[t]{0.24\textwidth}
              \includegraphics[width = \textwidth,trim={0cm 0cm 0cm 0cm},clip]{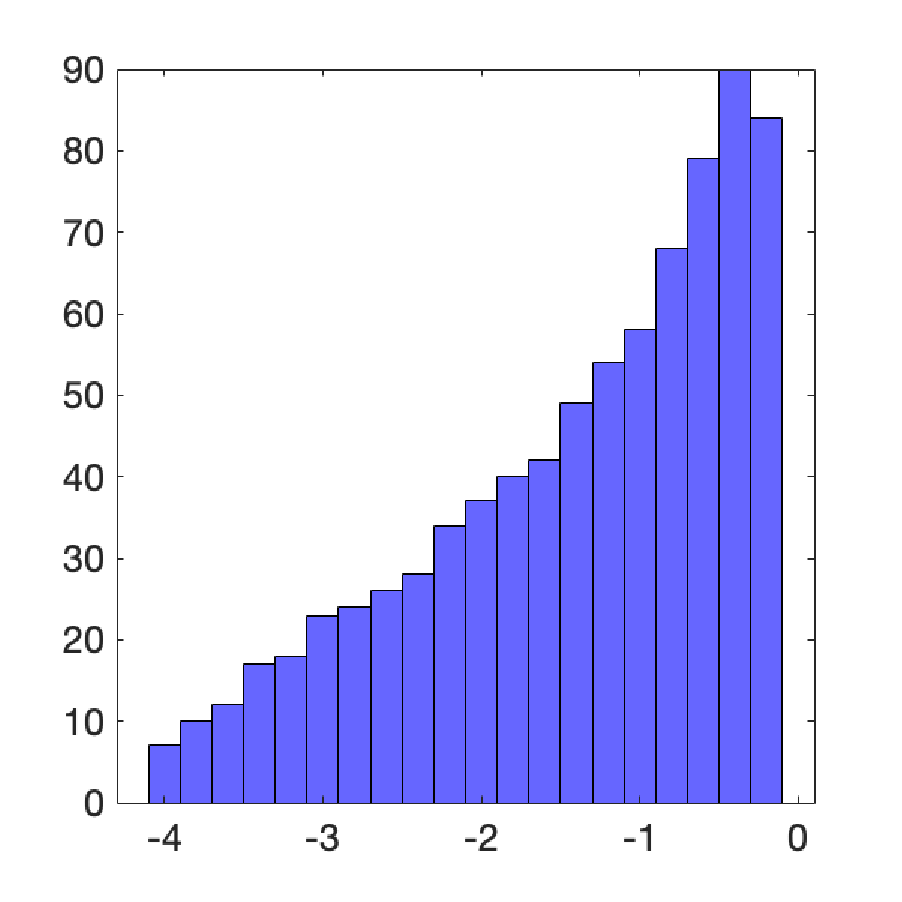}
          \caption{$\widehat{\bm X}_2$ via kurtosis-based FCA}
            \label{fig:xhat2_kurt}
            \end{subfigure}
            \begin{subfigure}[t]{0.24\textwidth}
              \includegraphics[width = \textwidth,trim={0cm 0cm 0cm 0cm},clip]{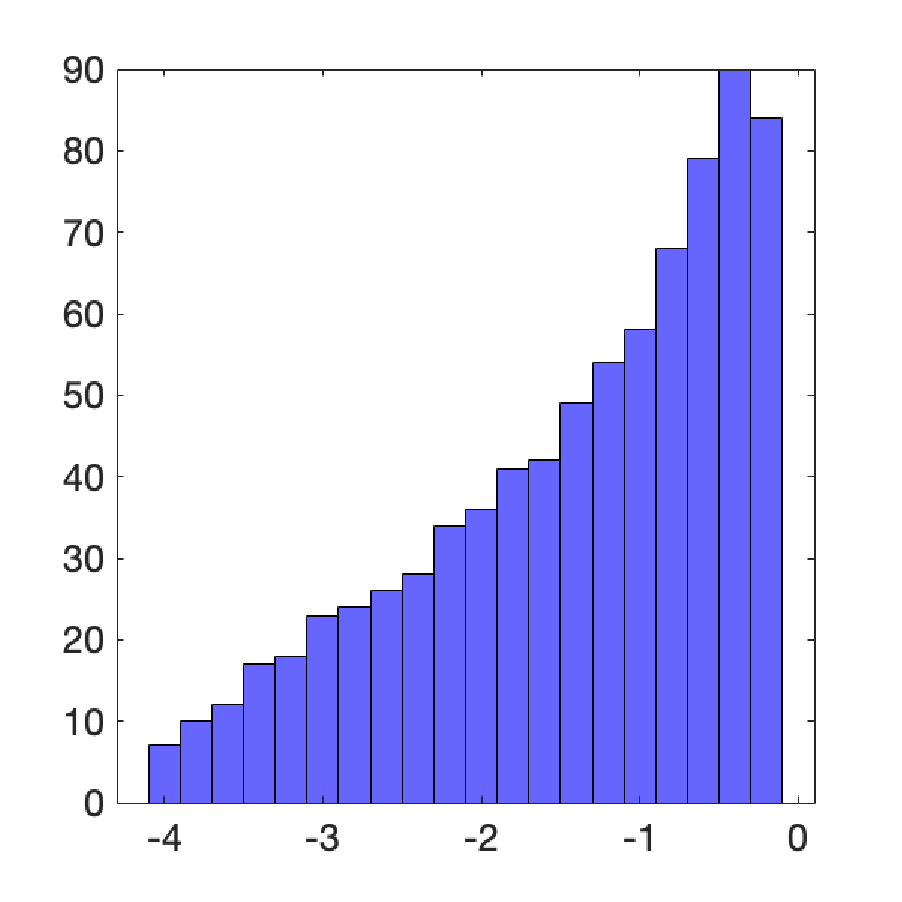}
          \caption{$\widehat{\bm X}_2$ via entropy-based FCA}
            \label{fig:xhat2_ent}
            \end{subfigure}
            \caption{An experiment in the separation of self-adjoint matrices. 
            %. Spectrums of (a)  $X_1$, (b) $X_2$, (c) $Z_1$, (d) $Z_2$, (e) $\widehat X_1$ by FCF \ref{alg:fcf_kurt_sym}, (f) $\widehat X_1$ by FCF \ref{alg:fcf_kurt_sym}, (g) $\widehat X_1$ by FCF \ref{alg:fcf_kurt_sym} based on free entropy, (h) $\widehat X_2$ by FCF \ref{alg:fcf_kurt_sym} based on free entropy. Note that (e) and (g) recover (a), (f) and (h) recover (b) up to sign flip. 
          The mixing matrix $\bm A = [0.5,0.5;-0.5,0.6]$, $N = 800, M = 1600$. 
          % We use the fminsearch function in Matlab to solve the univariate optimization as discussed near \eqref{eq:nc2para}. 
          The average errors over 200 trials are $ 8.67\times 10^{-3}$  for kurtosis-based FCF and $ 6.44\times 10^{-3}$ for entropy-based FCF.
          }
           \label{fig:sym_rm_separation}
            \end{figure}

\subsection{Unmixing of rectangular matrices with rectangular FCA}
\label{sec:rec_rm_separation}

We now show that the rectangular FCA can successfully, while not perfectly, unmix mixtures of rectangular matrices. To that end, we let $\bm{X}_2$ be an $N \times M$ Gaussian matrix with i.i.d. $\mathcal{N}(0,1/M)$ entries and set \(\bm{X}_1 = \bm{UDV}^T\) where 
           \begin{equation*}
        \begin{aligned}
          \bm{U}(i,j) &= \frac{2}{N} \sin{[\frac{\pi}{2N}(i + 1)(2j+1)]} \qquad \text{for \(1\leq i,j\leq N\)} \\
            \bm{V}(i,j) &= \frac{2}{M} \cos{[\frac{\pi}{2M}i(2j+1)]} \qquad \text{for \(1\leq i,j\leq M\)},
        \end{aligned}
        \end{equation*}
 so that $\bm{U}$ and $\bm{V}$ thus constructed are orthogonal matrices. We 
 pick a `nice' function \(f(z)\) and set the diagonal matrix $\bm D \in \mathbb{R}^{N \times M}$ such that   \(\bm{D}(i,i) = f((i-1/2)/N)), i = 1, \cdots, N .\)
  
 The singular value spectra of $\bm{X}_1$ and $\bm{X}_2$ are plotted in Figures \ref{fig:x1_rec} and \ref{fig:x2_rec}. As before, we mix the matrices as in  \eqref{eq:general matrix model}. Figures \ref{fig:z1_rec} and \ref{fig:z2_rec} display the singular value spectra of $\bm{Z}_1$ and $\bm{Z}_2$.
 
 We now note that the singular value distributions of  $\bm{X}_1$ and $\bm{X}_2$ converge to a non-random limit and that the distribution of $\bm{X}_2$ is bi-orthogonally invariant. Thus, following the discussion after Definition \eqref{def:asy_ind_rec}, $\bm X_1$ and $\bm X_2$ are asymptotically free. 
 Moreover, only $\bm X_2$ has a limiting distribution which converges to that of an abstract free  Poisson rectangular element.
 
 Hence, we can apply rectangular FCA to factorize $\bm{Z} = \begin{bmatrix} \bm{Z}_1 & \bm{Z}_2 \end{bmatrix}^T$  using Algorithm \ref{alg:fcf_pro}  and obtain estimates $\widehat{\bm A}$, $\widehat{\bm{X}}_1$ and $\widehat{\bm{X}}_2$ which should be good estimates of $\bm{A}$, $\bm{X}_1$ and $\bm{X}_2$ respectively. 
 
 Figures \ref{fig:xhat1_kurt_rec} and \ref{fig:xhat2_kurt_rec} display the eigen-spectra of the matrices $\widehat{\bm{ X}}_1$ and $\widehat{\bm{X}}_2$ returned by rectangular free kurtosis-based FCA. Comparing Figures \ref{fig:xhat1_kurt_rec}, \ref{fig:xhat2_kurt_rec} and  \ref{fig:x1_rec}, \ref{fig:x2_rec} reveals that rectangular free kurtosis-based FCA successfully unmixes the mixed matrices  well. Figures \ref{fig:xhat1_ent_rec} and \ref{fig:xhat2_ent_rec} show that rectangular free entropy based FCA successfully unmixes the mixed matrices. Both free kurtosis and free entropy based unmixing have comparably small but not zero error, which we compute over $200$ Monte-Carlo realizations. This is expected since the matrices are asymptotically free but the simulations are with finite dimensional matrices.

 \begin{figure}
          \centering
            \begin{subfigure}[t]{0.24\textwidth}
              \includegraphics[width = \textwidth,trim={0cm 0cm 0cm 0cm},clip]{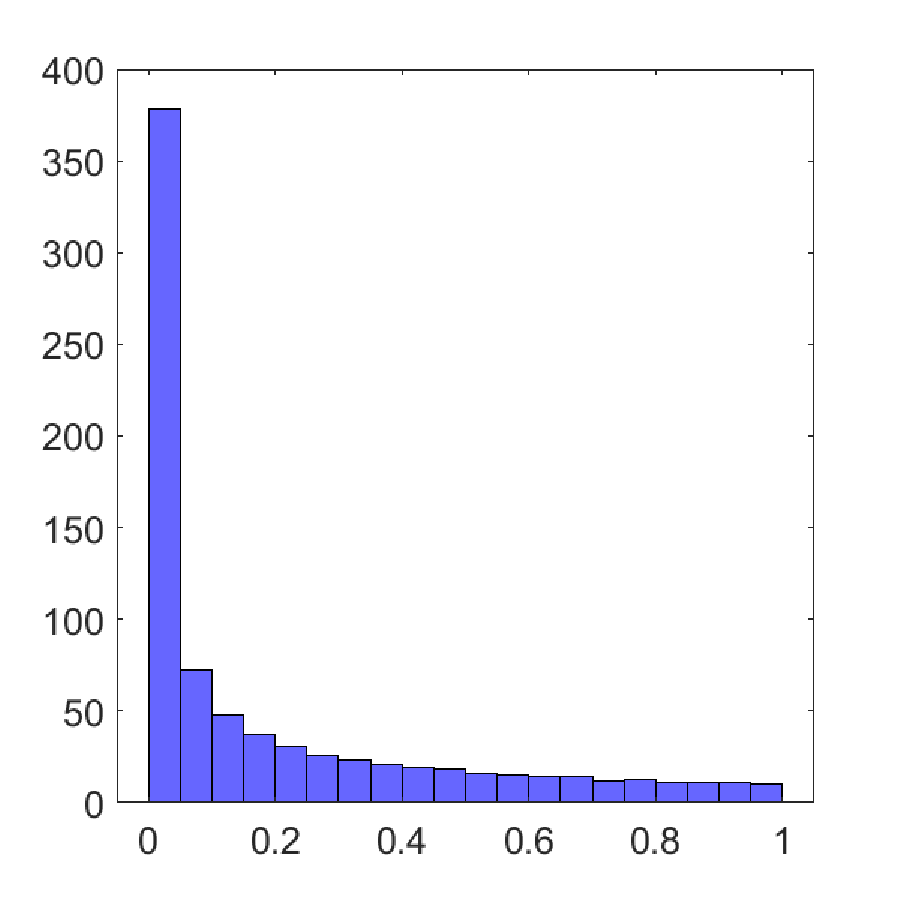}
          \caption{$\bm X_1$}
            \label{fig:x1_rec}
            \end{subfigure}
           \begin{subfigure}[t]{0.24\textwidth}
              \includegraphics[width = \textwidth,trim={0cm 0cm 0cm 0cm},clip]{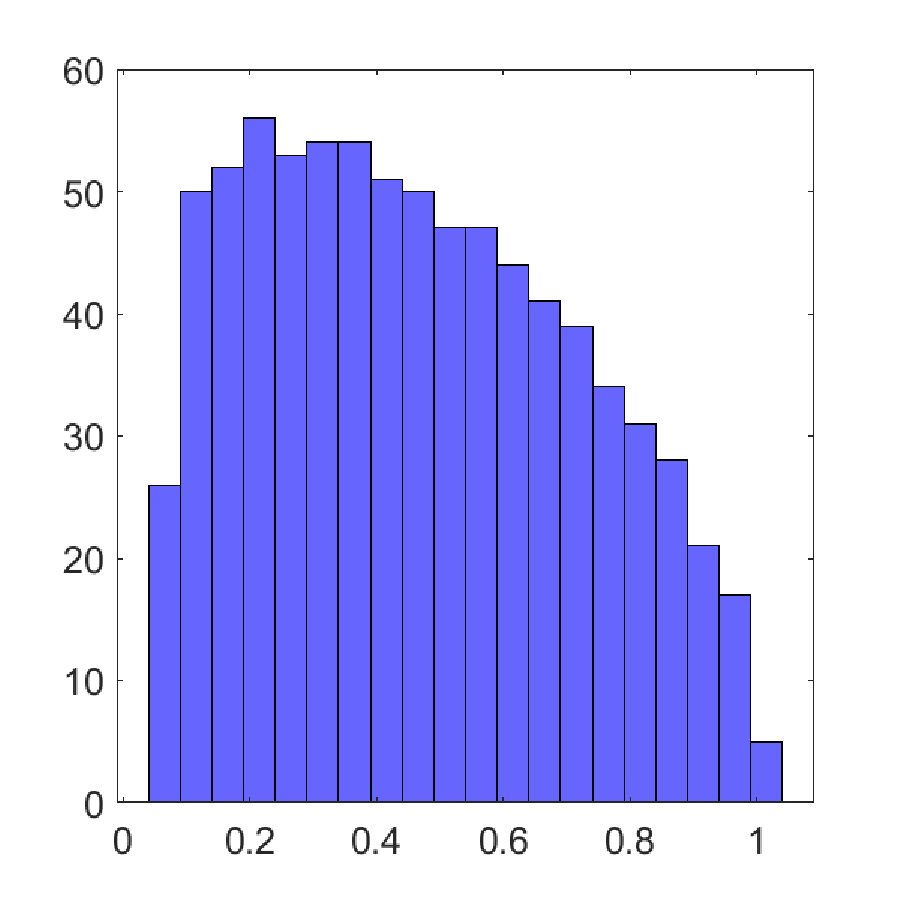}
          \caption{$\bm Z_1$}
            \label{fig:z1_rec}
            \end{subfigure}   
               \begin{subfigure}[t]{0.24\textwidth}
              \includegraphics[width = \textwidth,trim={0cm 0cm 0cm 0cm},clip]{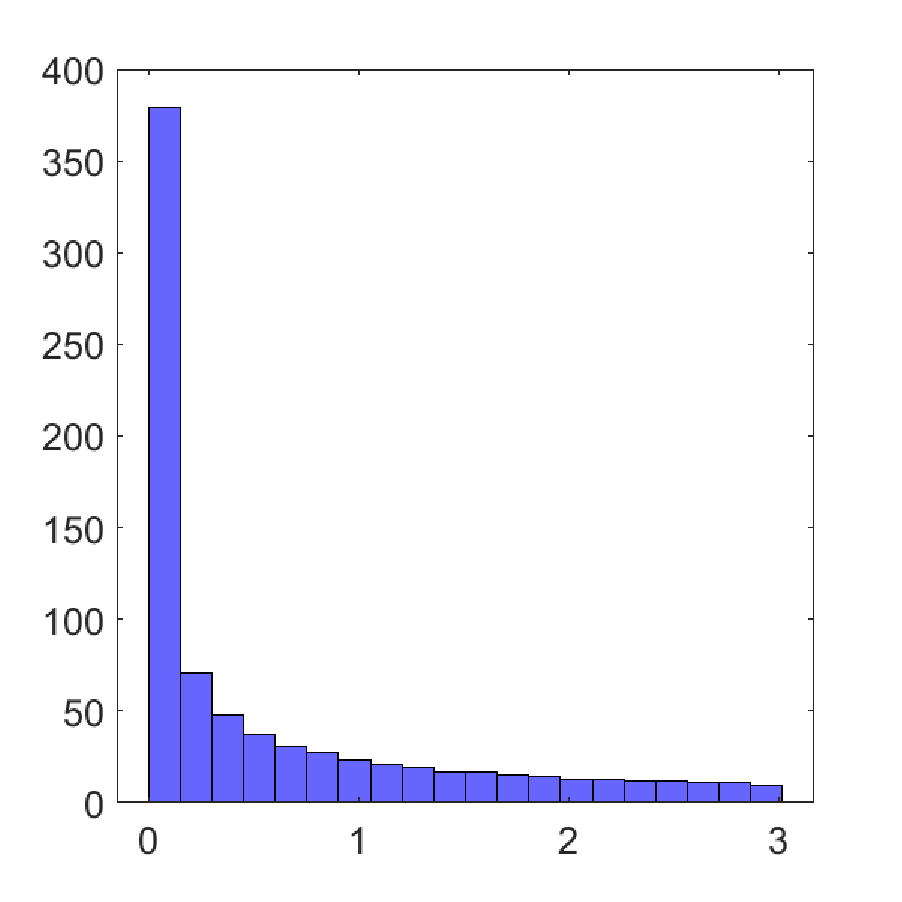}
          \caption{$\widehat{\bm X}_1$ via kurtosis-based FCA}
            \label{fig:xhat1_kurt_rec}
            \end{subfigure}
        \begin{subfigure}[t]{0.24\textwidth}
              \includegraphics[width = \textwidth,trim={0cm 0cm 0cm 0cm},clip]{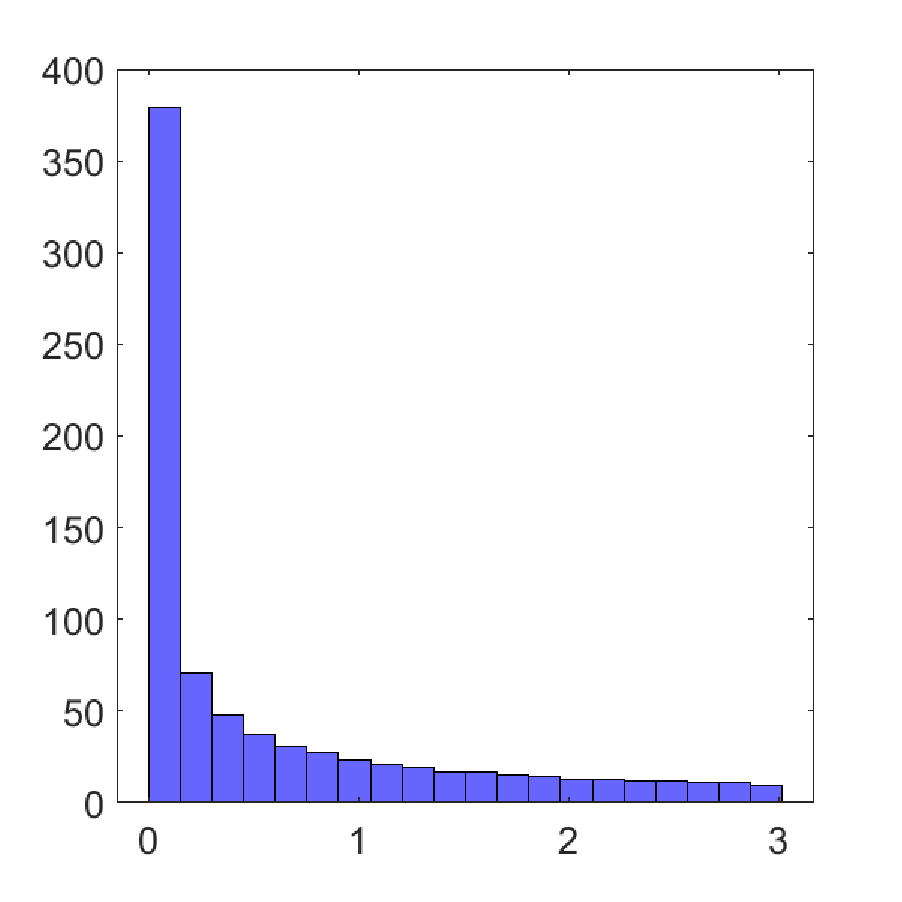}
          \caption{$\widehat{\bm X}_1$ via entropy-based FCA}
            \label{fig:xhat1_ent_rec}
            \end{subfigure}
            \begin{subfigure}[t]{0.24\textwidth}
              \includegraphics[width = \textwidth,trim={0cm 0cm 0cm 0cm},clip]{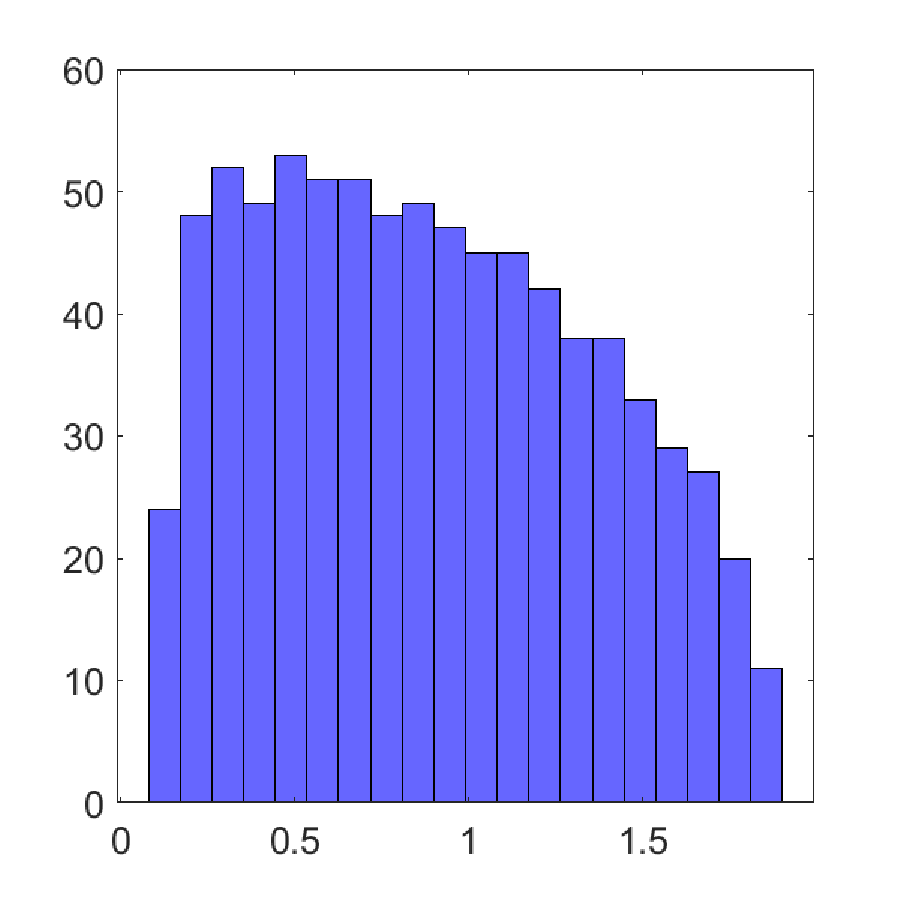}
          \caption{$\bm X_2$}
            \label{fig:x2_rec}
            \end{subfigure}                    
            \begin{subfigure}[t]{0.24\textwidth}
              \includegraphics[width = \textwidth,trim={0cm 0cm 0cm 0cm},clip]{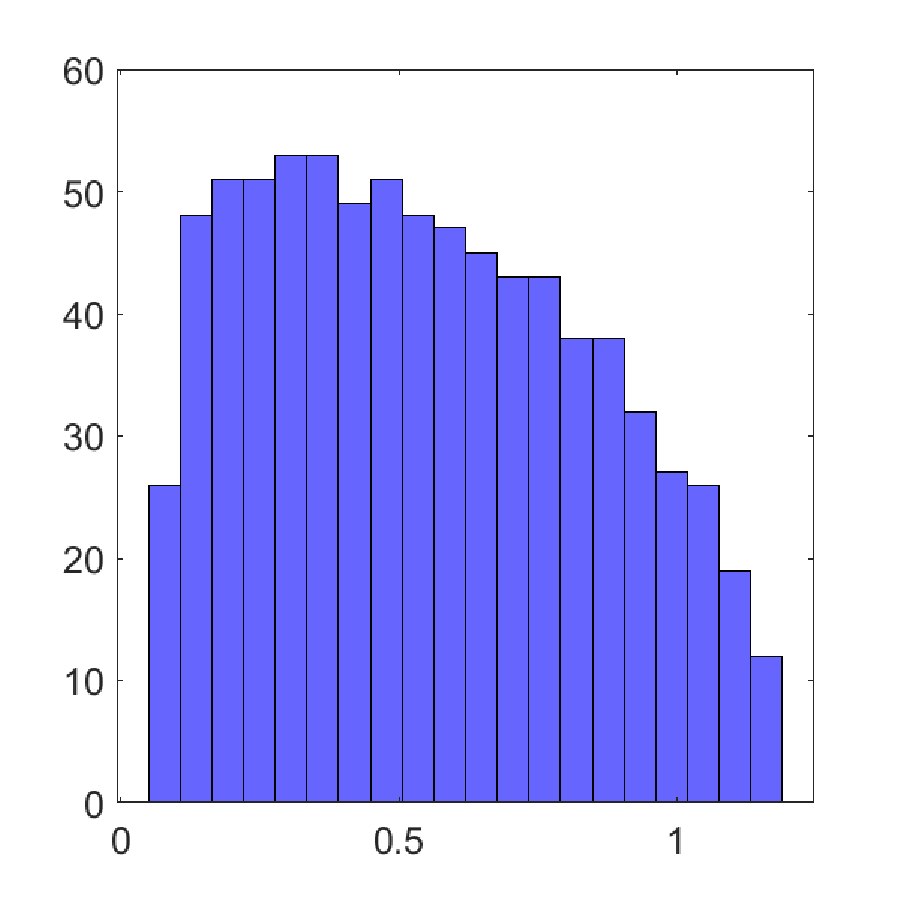}
          \caption{$\bm Z_2$}
            \label{fig:z2_rec}
            \end{subfigure}
            \begin{subfigure}[t]{0.24\textwidth}
              \includegraphics[width = \textwidth,trim={0cm 0cm 0cm 0cm},clip]{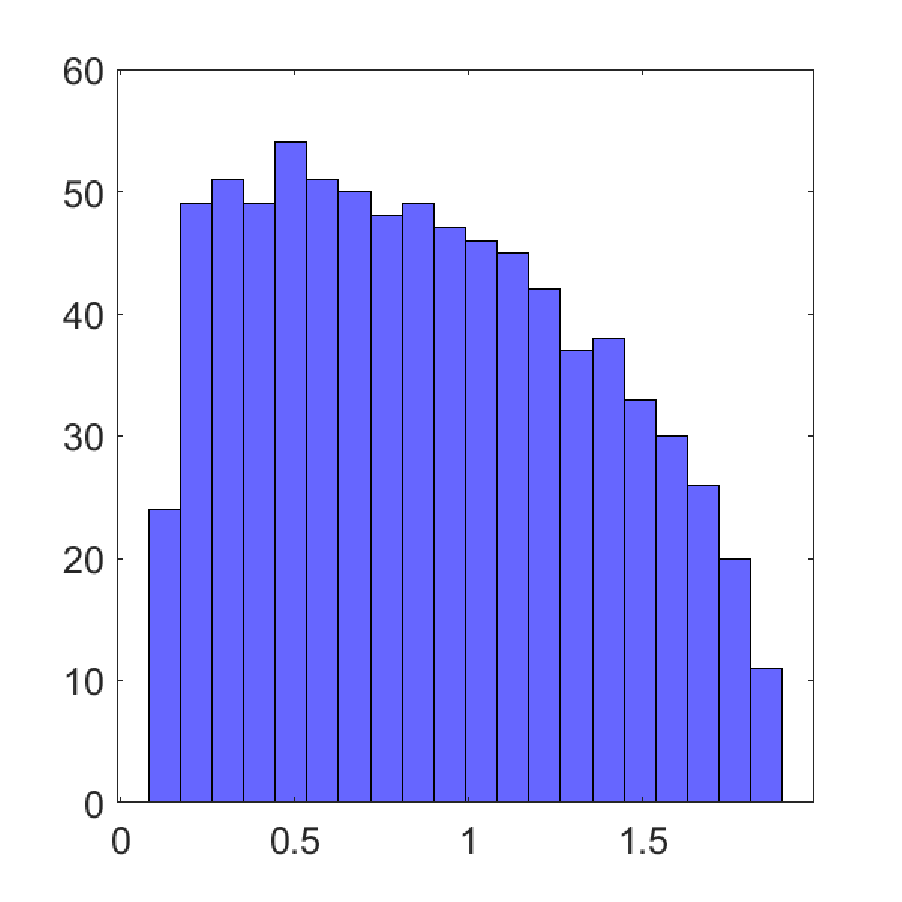}
          \caption{$\widehat{\bm X}_2$ via kurtosis-based FCA}
            \label{fig:xhat2_kurt_rec}
            \end{subfigure}
            \begin{subfigure}[t]{0.24\textwidth}
              \includegraphics[width = \textwidth,trim={0cm 0cm 0cm 0cm},clip]{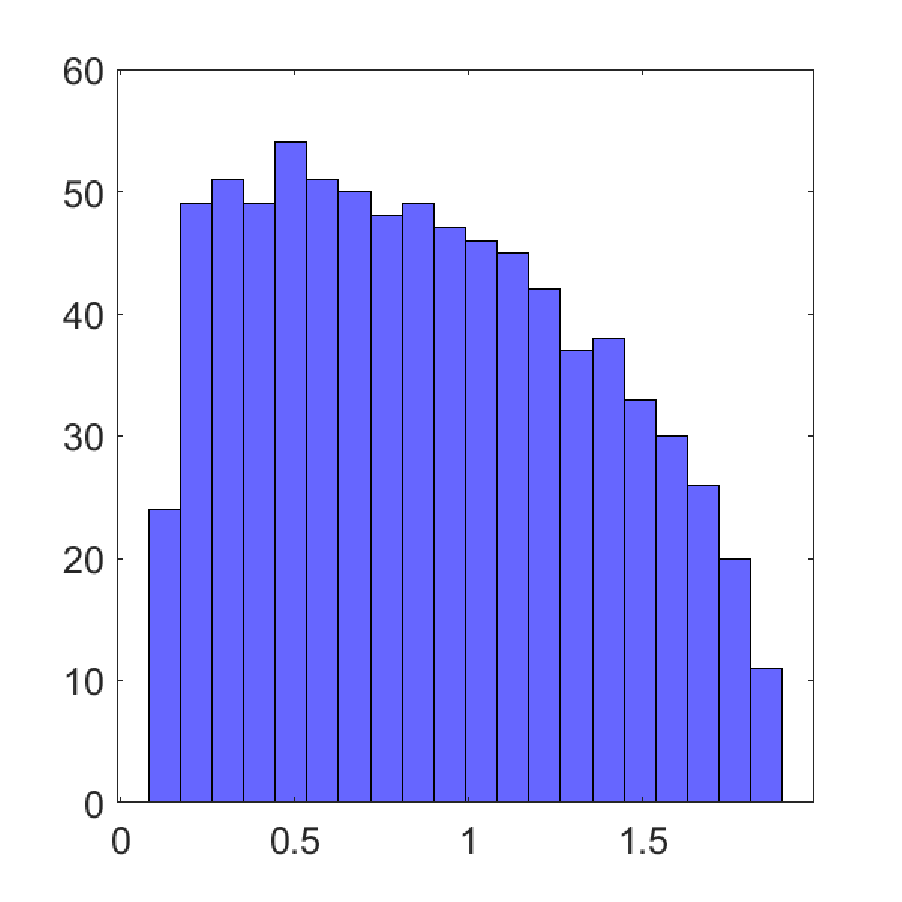}
          \caption{$\widehat{\bm X}_2$ via entropy-based FCA}
            \label{fig:xhat2_ent_rec}
            \end{subfigure}
            \caption{An experiment in the separation of rectangular random matrices.        
            In this simulation, $\bm A = [0.5,0.5;-0.5,0.6]$, $N = 800, M = 1000$ and $f(x) = (x-1)^4$. 
            The average errors over 200 trials are $ 1.55\times 10^{-3}$  for kurtosis-based FCF and $ 8.81\times 10^{-4}$ for entropy-based FCF. 
}
\label{fig:rec_rm_separation}
\end{figure}

\subsection{Unmixing mixed images using rectangular FCA}
\label{sec:locust}

We now consider the problem of unmixing mixed images. Grayscale images can be viewed as matrices so rectangular FCA can be used to separate the mixed images. We can also apply ICA to unmix the images via reshaping images to vectors and we shall compare the unmixing performance of FCA with that of ICA. Algorithm \ref{alg:icf_kurt} in Appendix \ref{sec:algica} describes the independent component factorization (ICF) mirroring the language we used for FCF.

We set $\bm{X}_1$ to be the grayscale image  of the locust in Figure \ref{fig::locust}. The  matrix $\bm{X}_2$ is a Gaussian random matrix of the same size with i.i.d. zero mean, unit variance entries as depicted in  Figure \ref{fig::Gau_noise}. We mix the matrices following \eqref{eq:general matrix model} and display the mixed images in Figures \ref{fig::mix1} and \ref{fig::mix2}.

Next, we apply (rectangular) free kurtosis based FCA to the mixed images $\bm Z = [\bm Z_1, \bm Z_2]^T$ and display the unmixed image that is closes to that of the locust in  Figure \ref{fig::fca_recover}. The unmixed image obtained by using (classical) kurtosis based ICA is displayed in Figure \ref{fig::ica_recover}. Both methods return unmixed images that are close to the original image of the locust. A closer inspection of Figures \ref{fig::fca_recover}  \ref{fig::ica_recover} reveals that FCA better unmixes the images than ICA as illustrated in Figures \ref{fig::fca_zoom_in} and  \ref{fig::ica_zoom_in}. Quantitatively speaking, when averaged over $200$ Monte-Carlo realizations of the noise, we find that the denoising error for kurtosis-based FCF is $5.35\times 10^{-3}$ while the error for kurtosis-based ICF is $2.42\times 10^{-1}$, thereby illustrating the superiority of FCF over ICF for this task. 

To gain additional insight on the improved unmixing performance of FCA relative to ICA for this example, we investigate the landscape of their respective objective functions. To that end we first note that the mixing matrix 
$$\bm{A} =  \dfrac{1}{2} \begin{bmatrix} \sqrt{2} &  \sqrt{2}\\ \sqrt{2} & -\sqrt{2}\end{bmatrix},$$
is orthogonal and so we can recast the spherical manifold optimization problem underlying FCA and ICA as a 1-D optimization problem in polar coordinates. In other words, we can parameterize the optimization problem in (\ref{eq:ica_opt}) and (\ref{eq:fca_opt}) in terms of $\bm{w} := \bm{w}_{\theta} = \begin{bmatrix} \cos(\theta) & \sin(\theta) \end{bmatrix}^T$. Similarly, optimization \eqref{eq:ent1} can be parameterized with $\bm{W} = [ \cos(\theta)  \sin(\theta); -\sin(\theta), \cos(\theta) ]^T$.
We compute and display the landscape of the objective functions corresponding to maximization of the classical kurtosis $\abs*{\cumu_4(\theta)}$, free kurtosis $\abs*{\clc_4(\theta)}$ and the free entropy $E(\theta)$ for $\theta \in [0,2 \pi]$ in Figures \ref{fig::icf_kurtosis},  \ref{fig::fca_kurtosis_polar} and \ref{fig::fca_entropy_polar}, respectively.

The dashed red line in these figures corresponds to the ground truth freely independent component direction associated with $\theta_{1} = \pi/4$ associated with the first column of the mixing matrix $\bm{A}$; the other direction (not displayed) is orthogonal and corresponds to the second column of $\bm{A}$ and is associated with $\theta_2 = 3\pi/4$.

Figures \ref{fig::fca_kurtosis_polar} and \ref{fig::fca_entropy_polar}, reveal that  $\abs*{\cumu(\theta)}$ and $E(\theta)$ are maximized at angles $\theta$ very close to $\theta_1 = \pi/4$. In contrast, Figure \ref{fig::icf_kurtosis} reveals that  $\abs*{\clc(\theta)}$ is maximized at an angle $\theta$ further away from $\theta_1$ than is the case for the FCA algorithms. This is why FCA better unmixes the images than ICA.

There is a more interesting story in these plots. Figure \ref{fig::fca_kurtosis_polar} shows that the $\abs*{\cumu(\theta_2)}$ for $\theta_2 = 3\pi/4$ is very close to zero, as expected because $\bm{X}_2$ is a Gaussian random matrix and in the large matrix limit the free rectangular kurtosis of its free counterpart is identically zero. The classical kurtosis of a Gaussian random variable is also zero. A closer inspection reveals that the classical kurtosis of the locust image is also close to zero (the scale of the polar plot initially obscures this fact!) while its free kurtosis is significantly greater than zero (or that of $\bm{X}_2$). 

The fact that the locust image  $\bm{X}_1$ and the Gaussian image $\bm{X}_2$  have a higher ``contrast'' in their free kurtosis  relative to their classic kurtosis is why FCA does better at unmixing them than ICA. Figure \ref{fig:diagmram} captures this perspective and suggests a direction for future research in more precisely defining how the ``contrasts'' between the scalar (or ICA) and matrix (or FCA) embeddings affects the realized unmixing performance.

\begin{comment}
We begin by applying Algorithm \ref{alg:free_whiten} to the $\bm{X}_i$ individually to obtain centered and whitened $\widetilde{\bm X}_i$.
Let 
$$
    \begin{bmatrix} \widetilde{\bm Z_1}\\ \widetilde{\bm Z_2} \end{bmatrix} = \bm A \otimes \bm I  \begin{bmatrix} \widetilde{\bm X_1}\\ \widetilde{\bm X_2} \end{bmatrix}
$$
with $\bm{A} = [\sqrt{2}, \sqrt{2}; -\sqrt{2}, \sqrt{2}]/2$ as in Figure \ref{fig:locustdenoise}. 
Now, with empirical rectangular free entropy $\widehat \cumu_4(\cdot)$ as defined in Table \ref{table:fcf_Fhat}, consider the function 
\begin{equation}
  \cumu(\theta) = |\widehat \cumu_4(\cos{\theta}\widetilde{\bm Z}_1+\sin{\theta}\widetilde{\bm Z}_2)|.
\end{equation}
For the entropy-based FCA, recalling the empirical rectangular free entropy $\widehat \ent(\cdot)$ from Table \ref{table:fcf_Fhat}, we consider
\begin{equation}
  E(\theta) = -\widehat \ent(\cos{\theta}\widetilde{\bm Z}_1+  \sin{\theta}\widetilde{\bm Z}_2) - \widehat \ent(-\sin{\theta}\widetilde{\bm Z}_1+\cos{\theta}\widetilde{\bm Z}_2).
\end{equation}
For kurtosis-based ICA, we set
\begin{equation}
  \icacumu(\theta) = |\widehat \icacumu_{4}(\cos{\theta}\widetilde{z}_1+\sin{\theta}\widetilde{z}_2)|,
\end{equation}
where $\tilde{z}_i$ denotes the result of applying Algorithm \ref{alg:whiten_ica} to $\widetilde Z_i$ and $\widehat c_4(\cdot)$ denotes the empirical kurtosis (see \eqref{eq:def_clc}). 

\end{comment}

  \begin{figure}
    \centering
            \begin{subfigure}[t]{0.32\textwidth}
            \includegraphics[width = \textwidth,trim={0cm 0cm 0cm 0cm},clip]{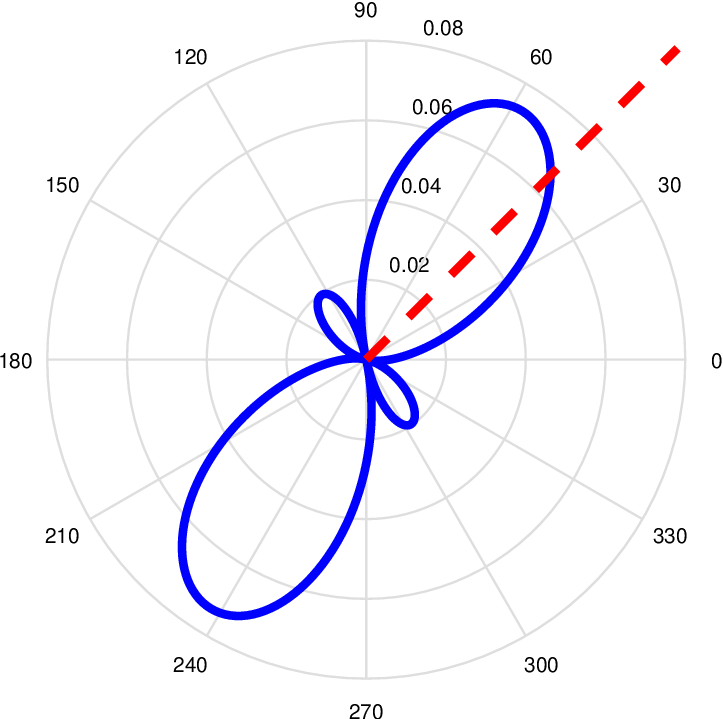}
            \caption{Polar graph of \(\abs*{\cumu(\theta)}\)}
            \label{fig::icf_kurtosis}
            \end{subfigure}
            \begin{subfigure}[t]{0.32\textwidth}
            \includegraphics[width = \textwidth,trim={0cm 0cm 0cm 0cm},clip]{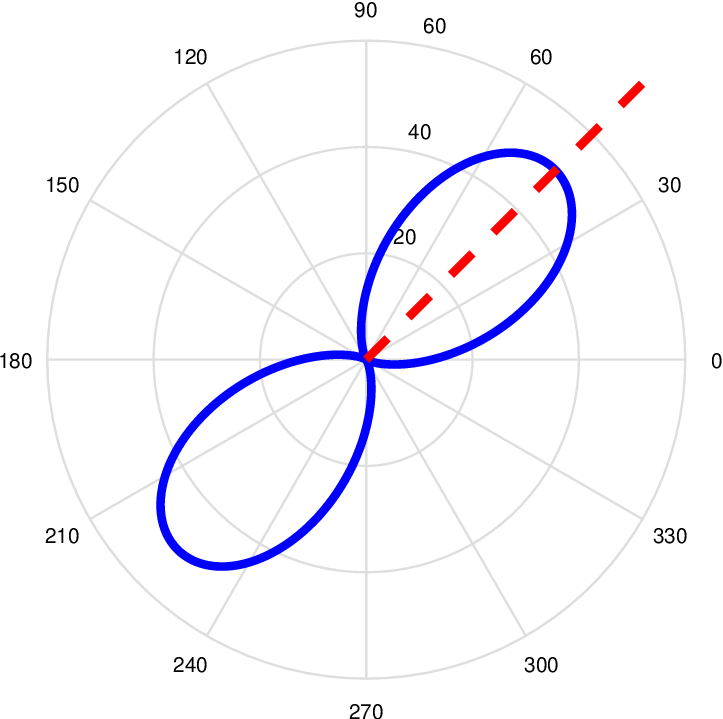}
            \caption{Polar graph of \(\abs*{\clc(\theta)}\)}
            \label{fig::fca_kurtosis_polar}
            \end{subfigure}
            \begin{subfigure}[t]{0.32\textwidth}
            \includegraphics[width = \textwidth,trim={0cm 0cm 0cm 0cm},clip]{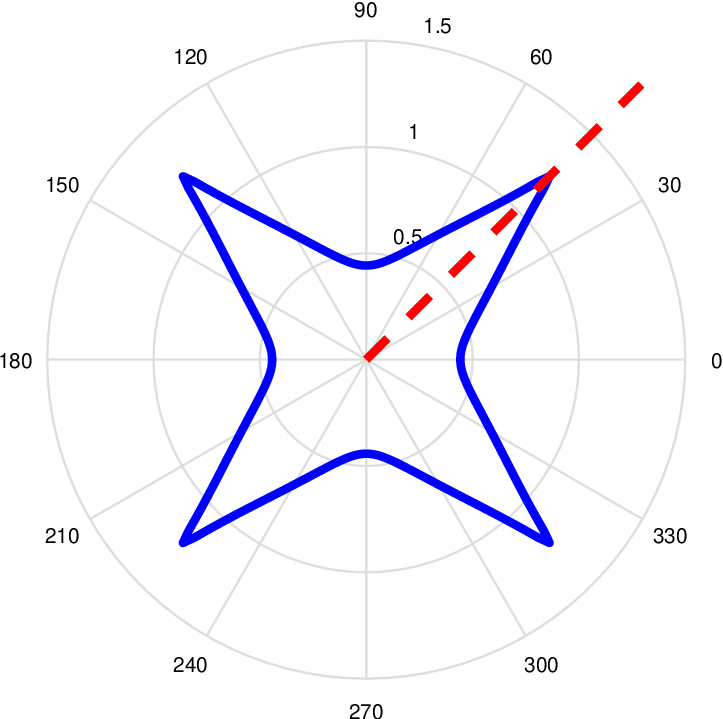}
            \caption{Polar graph of \(E(\theta)\)}
            \label{fig::fca_entropy_polar}
            \end{subfigure}
            \caption{(a) Polar graph of \(c(\theta)\) (b) Polar graph of \(\kappa(\theta)\) (c) Polar graph of \(E(\theta)\). The red dashed lines stand for the direction of $45^\circ$. Note that the directions of maximum of $\kappa(\theta)$ and $E(\theta)$ agree with the red line well, while the maximum of $c(\theta)$ is off. Because of the randomness of Gaussian noise, for different trials, the polar graph of ICA will vary. However, the  polar graphs of FCA are relatively stable.}
  \end{figure}

\subsection{Unmixing performance of free kurtosis vs free entropy FCA vs ICA}

% For the previous example, Figure \ref{fig::fca_entropy_polar} displays landscape of the free entropy optimization function. Comparing that to the landscape of the free kurtosis optimization function in Figure \ref{fig::fca_kurtosis_polar}, we might expect from that  free entropy based FCA better estimates the mixing matrix  than free kurtosis based FCA. 

We now compare the performance of FCF and ICF as a function of the dimensionality of the system, since the errors in FCF and ICF are both governed by the deviation from some limiting large sample quantities (or large matrix size).
Here, we adopt the same setup as in Section \ref{sec:rec_rm_separation} with whitened $\bm{X}_1$ and $\bm{X}_2$ matrices and $\bm{A} = [\sqrt{2}, \sqrt{2}; -\sqrt{2}, \sqrt{2}]/2 $. We increase \(N,M\) in a fixed ratio, and obtain an estimate of the unmixing matrix using kurtosis based FCA, entropy based FCA, kurtosis based ICA and entropy based ICA and compute the unmixing error over 200 Monte-Carlo realizations. 

Figures \ref{fig::conv_kurt} and  \ref{fig::cdf_kurt} show that free kurtosis based FCA and kurtosis based ICA realize similar unmixing performance. However, Figures \ref{fig::conv_entropy} and  \ref{fig::cdf_entropy}  show that 
free entropy based FCA has a lower error than entropy based ICA, while both have errors that decay at the same rate. 

In order to compare FCF and ICF in practical setting, we repeat the above separation experiments with $\bm{X}_1$ and $\bm{X}_2$ being images from Caltech-256 Dataset \cite{griffin2007caltech}. Since images there are of various sizes, for every pair of images $\bm{X}_1$ and $\bm{X}_2$, we first align them by their top left corners and crop the common parts. The resulting equal-sized images then get whitened and mixed. Finally, kurtosis, entropy based FCA and ICA are applied to compute the unmixing error. We run the experiment over 1000 random sampled image pairs from Caltech-256 Dataset. We find that for both kurtosis and entropy based methods, FCA is comparable to ICA when applied to pratical image separation (see Figure \ref{fig::k4_cluster} and \ref{fig::ent_cluster} ). Also, we note that the error of entropy based FCA is below 1.5 times error of entropy based ICA for 993 out of 1000 samples, while we don't have a theoretical explanation yet.

\begin{figure}
          \centering
            \begin{subfigure}{0.45\textwidth}
              \includegraphics[width = \textwidth,trim={0cm 0cm 0cm 0 },clip]{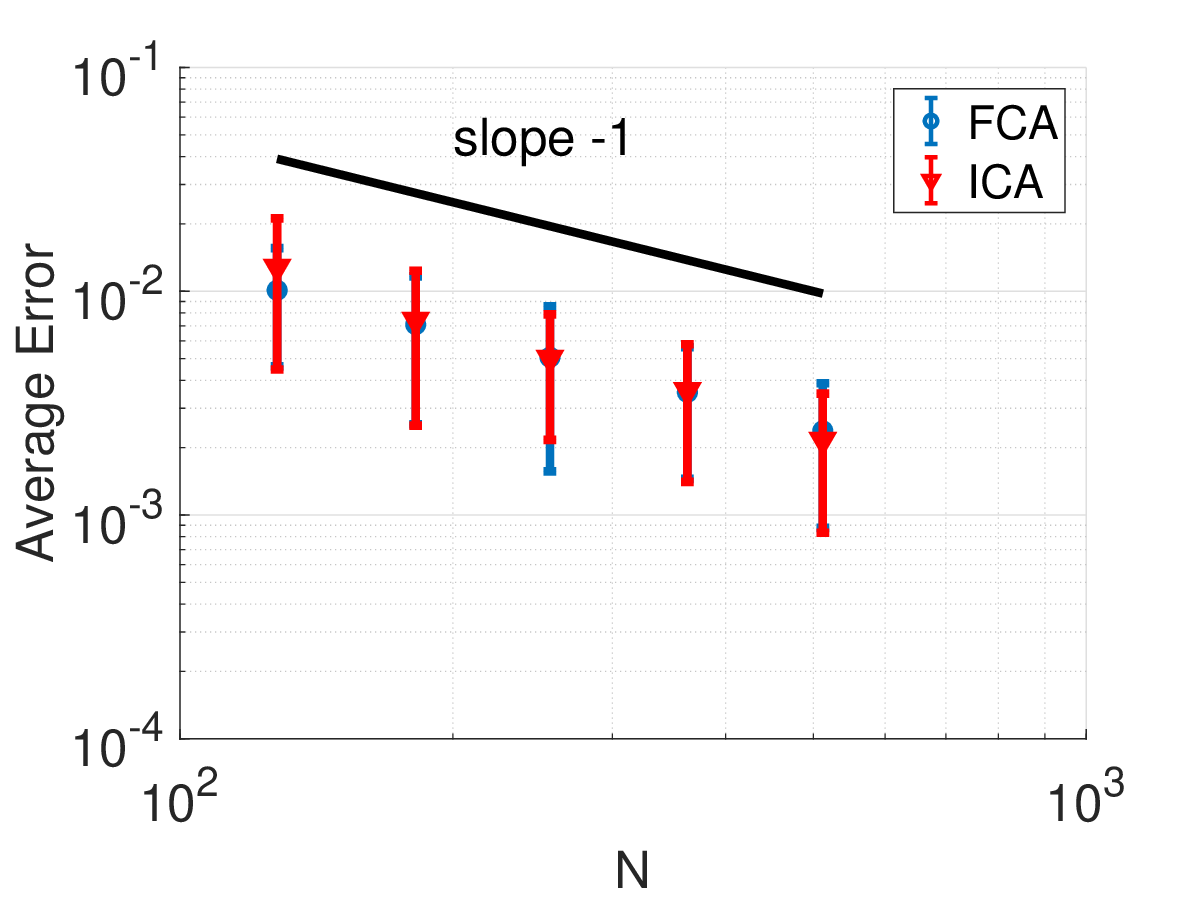}
                \caption{Kurtosis based Methods}
                \label{fig::conv_kurt}
            \end{subfigure}
            \begin{subfigure}{0.45\textwidth}
              \includegraphics[width = \textwidth,trim={0cm 0cm 0cm 0cm },clip]{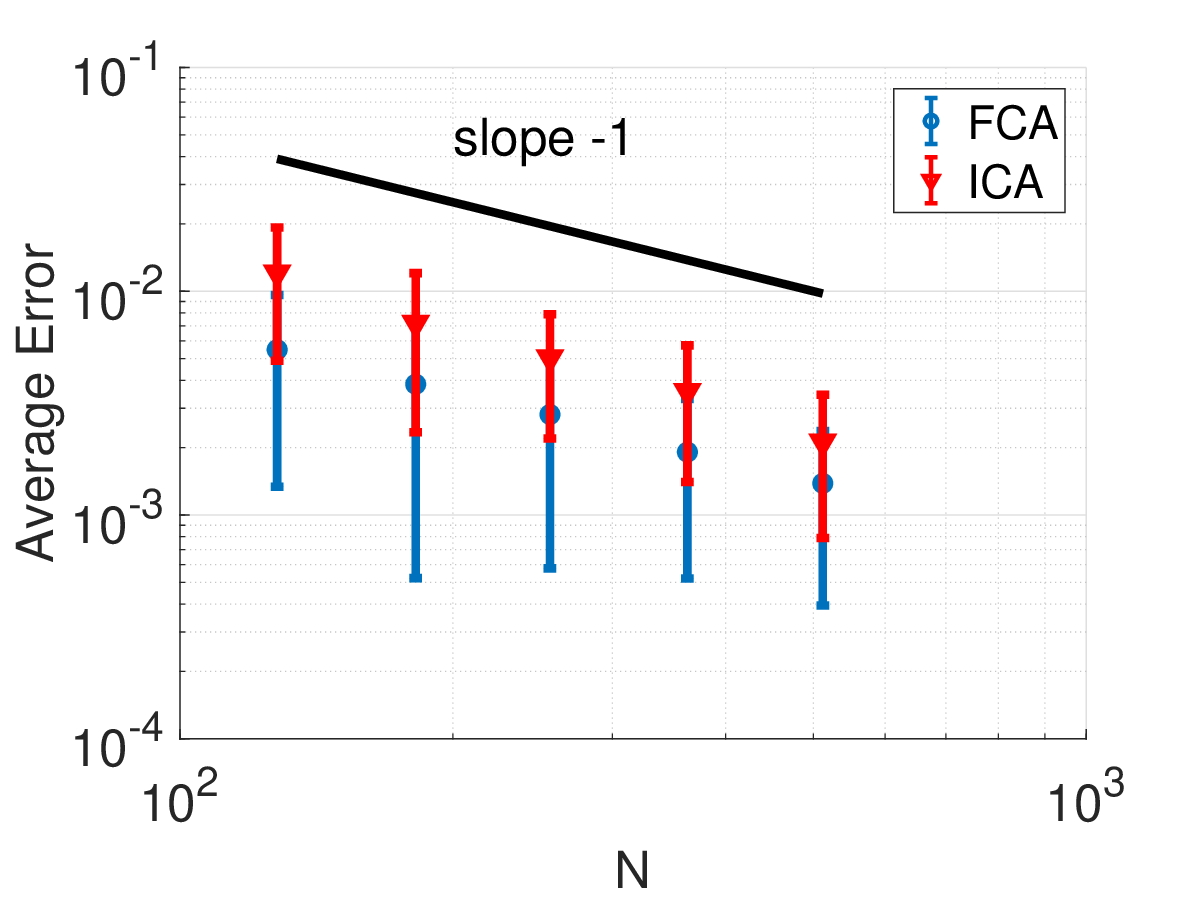}
                \caption{Entropy based Method}
                \label{fig::conv_entropy}
            \end{subfigure} \\
            \begin{subfigure}{0.45\textwidth}
              \includegraphics[width = \textwidth, trim={0 0 0 0 },clip]{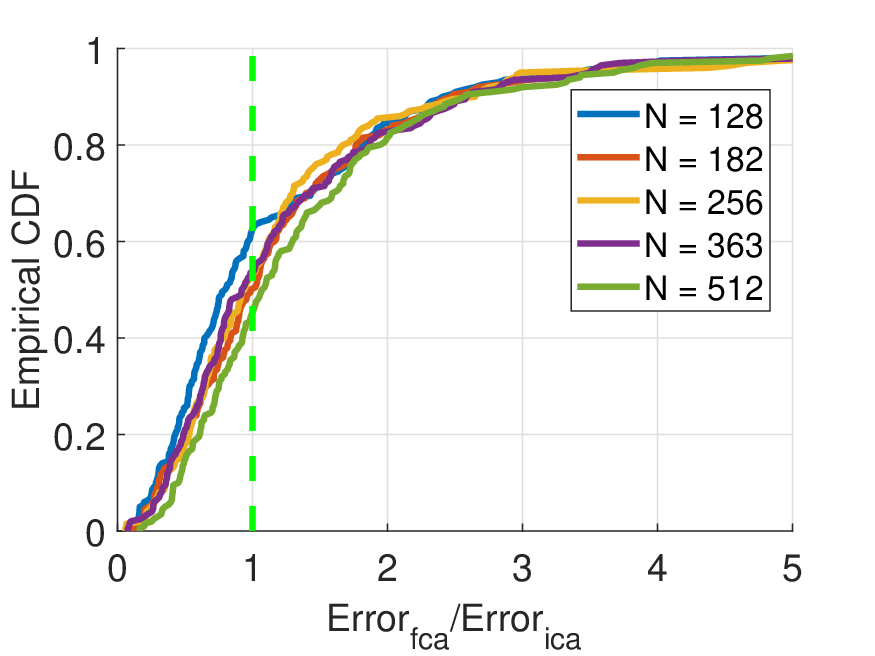}
                \caption{\(\text{Error}_{fca}/\text{Error}_{ica}\) (Kurtosis)}
                \label{fig::cdf_kurt}
            \end{subfigure}
            \begin{subfigure}{0.45\textwidth}
              \includegraphics[width = \textwidth, trim={0 0 0 0 },clip]{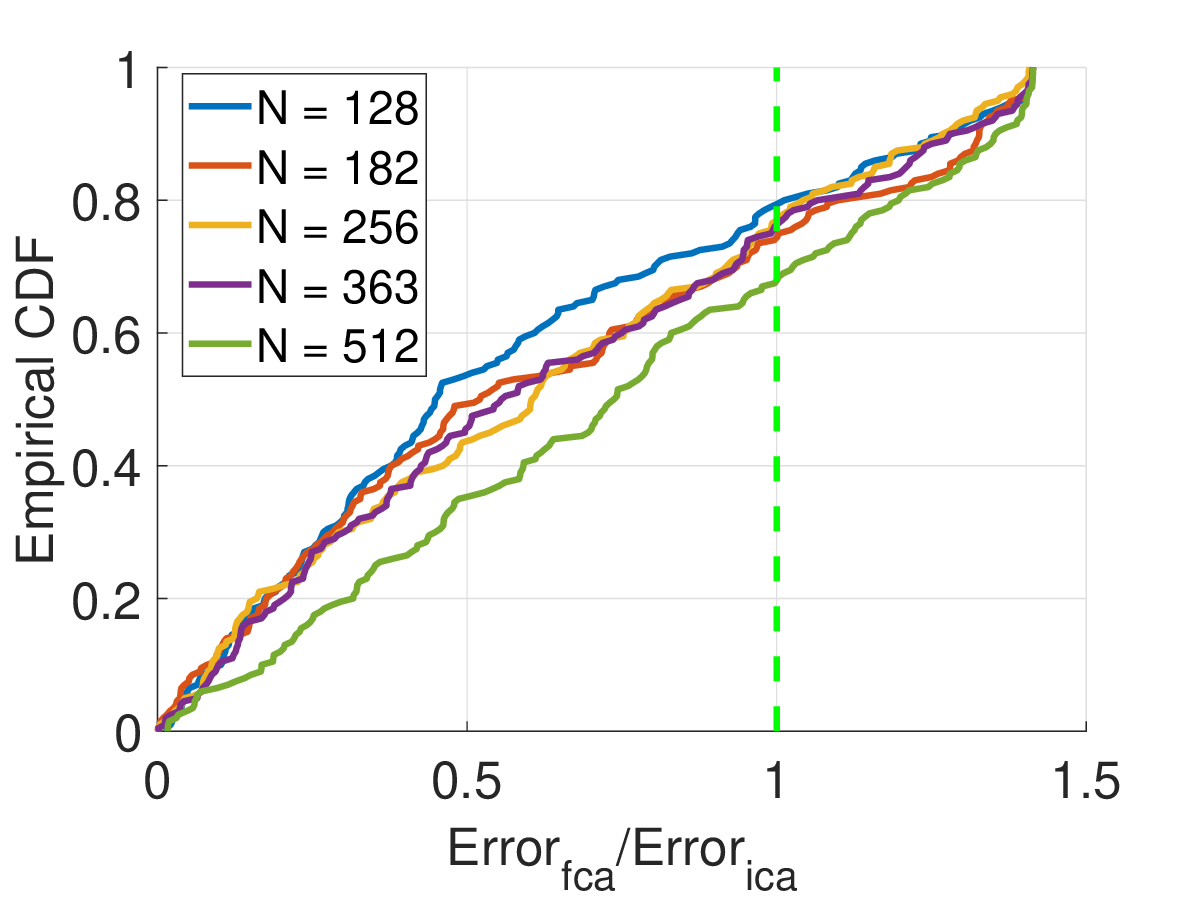}
                \caption{\(\text{Error}_{fca}/\text{Error}_{ica}\) (Entropy)}
                \label{fig::cdf_entropy}
            \end{subfigure}
            \caption{(a) Averaged (over 200 trials) errors of kurtosis based FCA and ICA for increasing dimension. (b) Averaged error of kurtosis based FCA and ICA for increasing dimension. (c) CDF of \(\text{Error}_{fca}/\text{Error}_{ica}\) for the kurtosis-based method. (d) CDF of \(\text{Error}_{fca}/\text{Error}_{ica}\) for the entropy-based method. All methods appear to have a convergence rate of \(N^{-1}\). In this simulation, we set \(N/M = 0.8\) and \(f(x) = (x-1)^4\).}
            \label{fig:fca comparison}
\end{figure}

\begin{figure}
          \centering
            \begin{subfigure}{0.45\textwidth}
              \includegraphics[width = \textwidth,trim={0cm 0cm 0cm 0cm},clip]{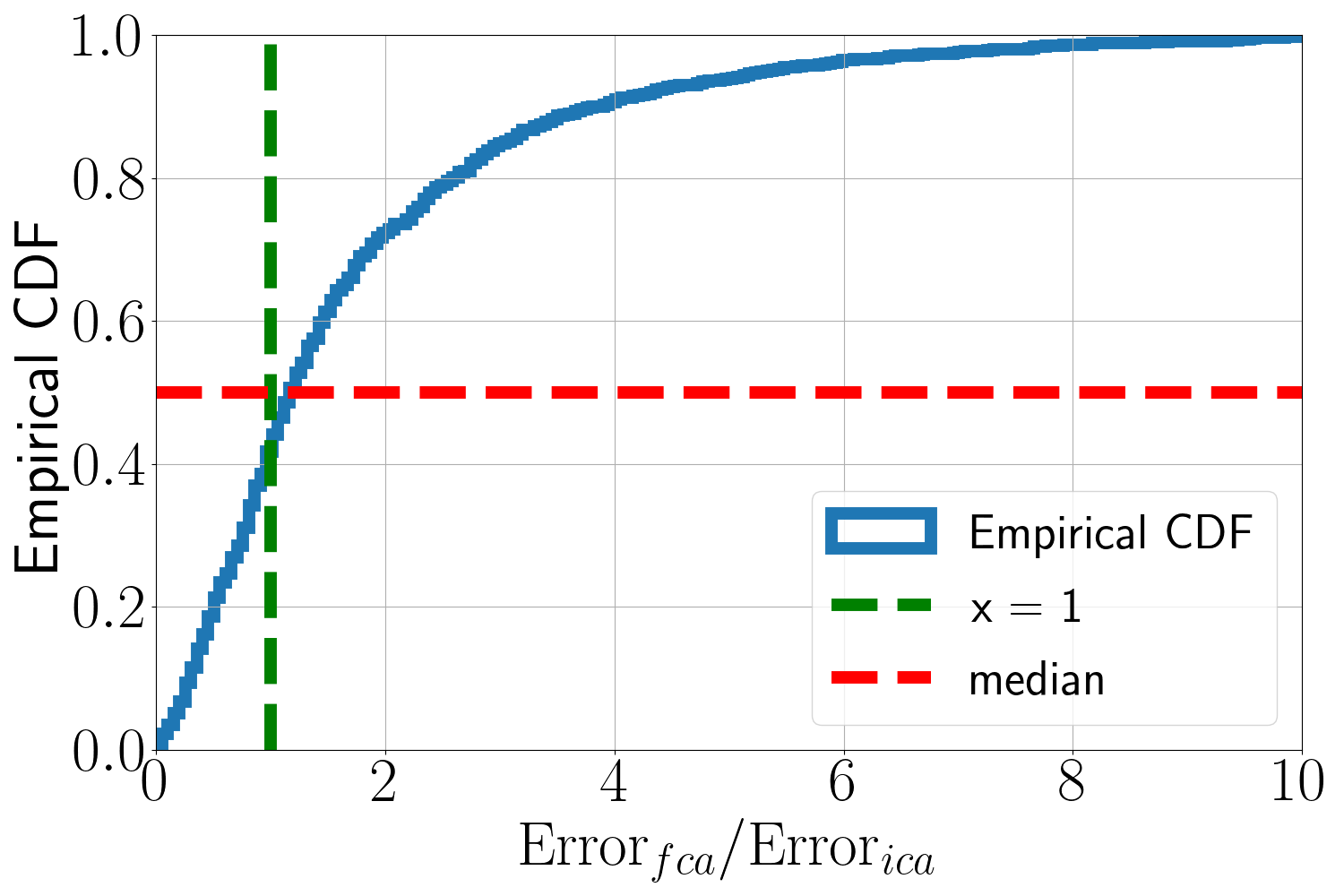}
                \caption{Kurtosis based Methods}
                \label{fig::k4_cluster}
            \end{subfigure}
            \begin{subfigure}{0.45\textwidth}
              \includegraphics[width = \textwidth,trim={0cm 0cm 0cm 0cm },clip]{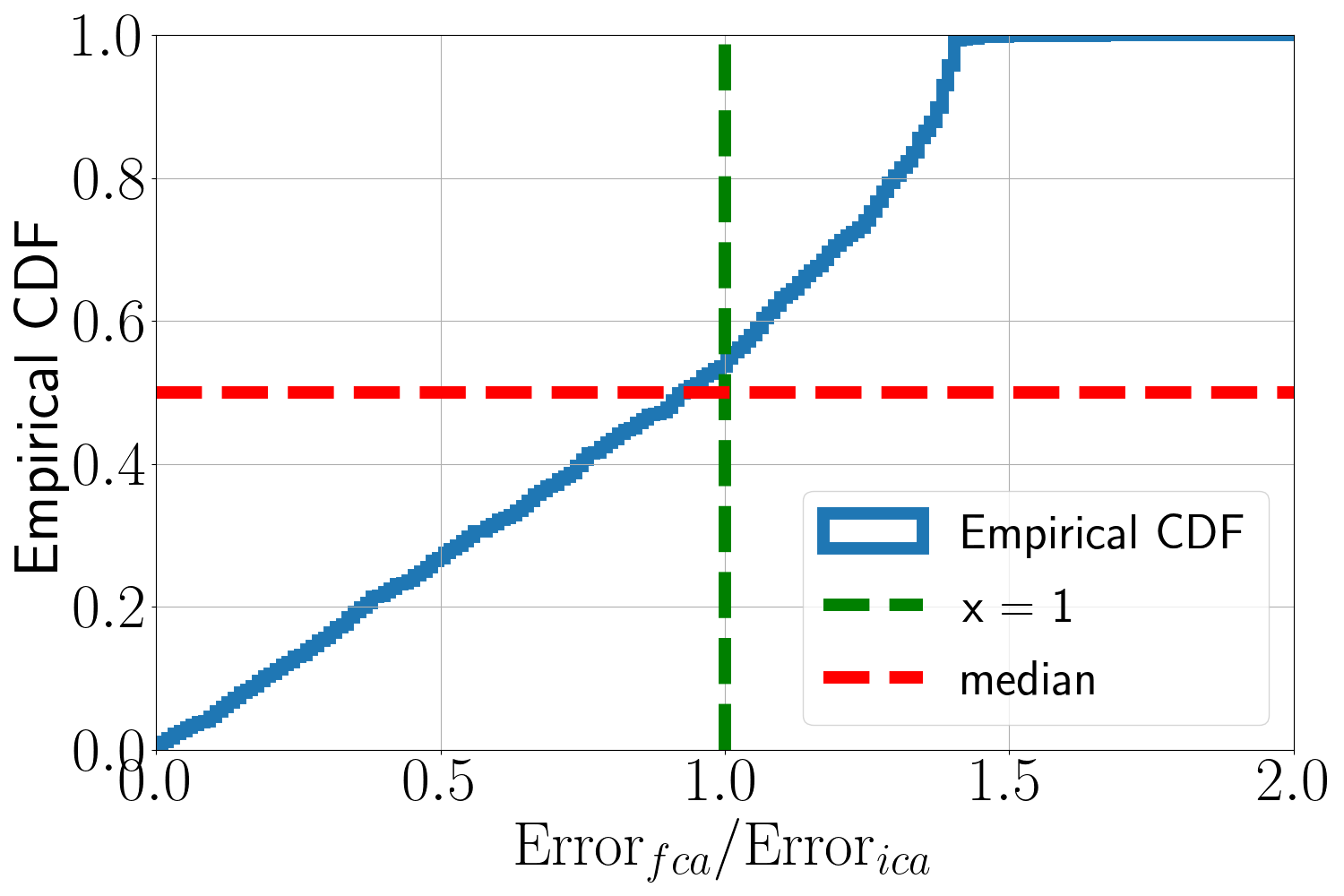}
                \caption{Entropy based Method}
                \label{fig::ent_cluster}
            \end{subfigure} 
            \caption{Comparison of FCA and ICA on image separation for 1000 random sampled image pairs fropm Caltech-256 dataset; (a) CDF of \(\text{Error}_{fca}/\text{Error}_{ica}\) for the kurtosis-based method. (b) CDF of \(\text{Error}_{fca}/\text{Error}_{ica}\) for the entropy-based method.}
            \label{fig:fca comparison cluster}
\end{figure}

\subsection{Unmixing mixed waveforms using rectangular FCA}
\label{sec:spectrogram_embed}
Let $\bm{x}_1$ and $\bm{x}_2$ denote two vectors representing the audio signals whose waveforms are displayed in  Figures \ref{fig:x1_sw} and \ref{fig:x2_sw}, respectively. Their mixture produces signals whose waveforms are displayed in  Figures \ref{fig:z1_sw} and \ref{fig:z2_sw}, respectively. This is the famous cocktail party problem \citep{haykin2005cocktail} and ICA is known to succeed in unmixing the mixed signals. Figures \ref{fig:xhat1_ica_sw} and  \ref{fig:xhat2_ica_sw} confirm that it does. 

In this setting the mixed waveforms are modeled as
\begin{equation}\label{eq:mixing model samples}
    \begin{bmatrix}
    \bm{z}_1^T \\ \vdots \\ \bm{z}_{s}^T
    \end{bmatrix} = \bm{A} \begin{bmatrix}
    \bm{x}_1^T \\ \vdots \\ \bm{x}_{s}^T 
    \end{bmatrix}.
\end{equation}
There is no matrix in sight in (\ref{eq:mixing model samples}), so we can seemingly not apply FCA directly. 

The key insight is that we are at liberty to design a linear matrix embedding operator $\mathcal{M}: \bm x \in \mathbb{R}^{n} \mapsto \bm{X} \in \mathbb{F}^{M \times N}$ for some integer $M$ and $N$. A simple example is by reshaping the $n = MN$ vector into an $M \times N$ matrix. Here linearity implies that for any scalars $\alpha$ and $\beta$ we have that 
$$ \mathcal{M}(\alpha \bm x + \beta \bm y) = \alpha \mathcal{M}(\bm x) + \beta \mathcal{M}(\bm{y}).$$
Then as a consequence of the linearity of the embedding operator we have that 
\begin{equation}\label{eq:reshaped mixing model}
    \begin{bmatrix}
    \mathcal{M}(\bm{z}_1)^T \\ \vdots \\ \mathcal{M}(\bm{z}_s)^T
    \end{bmatrix} = (\bm{A} \otimes \bm{I}_N) \begin{bmatrix}
    \mathcal{M}(\bm{x}_1)^T  \\ \vdots \\ \mathcal{M}(\bm{x}_s)^T  
    \end{bmatrix} 
\end{equation}
so that it fits (\ref{eq:general matrix model}) and we can apply FCA to estimate the mixing matrix and thus unmix the mixed waveforms.   

For the cocktail party problem we used a (complex-valued) STFT embedding, as described in Algorithm \ref{alg:spec_fcf}, and computed the mixed (complex-valued) STFT matrices $\bm{Z}_1$ and $\bm{Z}_2$ displayed in Figure \ref{fig:spectrogram} (Here we adapt the convention that each row corresponding to the spectrum for a particular time window). Since the mixing matrix is real-valued we modified the FCA algorithms slightly by whitening using only real part of the covariance matrix. 

Figures \ref{fig:xhat1_fca_sw} and \ref{fig:xhat2_fca_sw} show that FCA successfully unmixes the complex-valued STFT matrices of the latent waveforms. Figure \ref{fig:wave1} shows that FCA succeeds in unmixing the waveforms and that FCA better unmixes the waveforms than ICA. Figure \ref{fig:wave2} illustrates a setting where ICA does better. 

These experiments illustrate our general point that FCA can be used wherever ICA has been used and that they perform comparably well. The key step is embedding a  vector waveform as a matrix in a way that preserves the mixing model. We used the STFT embedding here -- other linear embeddings could be used as well. Determining the optimal embedding so we can reason about why FCA does better than ICA for the setup in Figure \ref{fig:wave1} but does not for the setup in Figure \ref{fig:wave2} is a natural next question. 

\begin{figure}
          \centering
            \begin{subfigure}[t]{0.30\textwidth}
              \includegraphics[width = \textwidth,trim={0cm 0cm 0cm 0cm},clip]{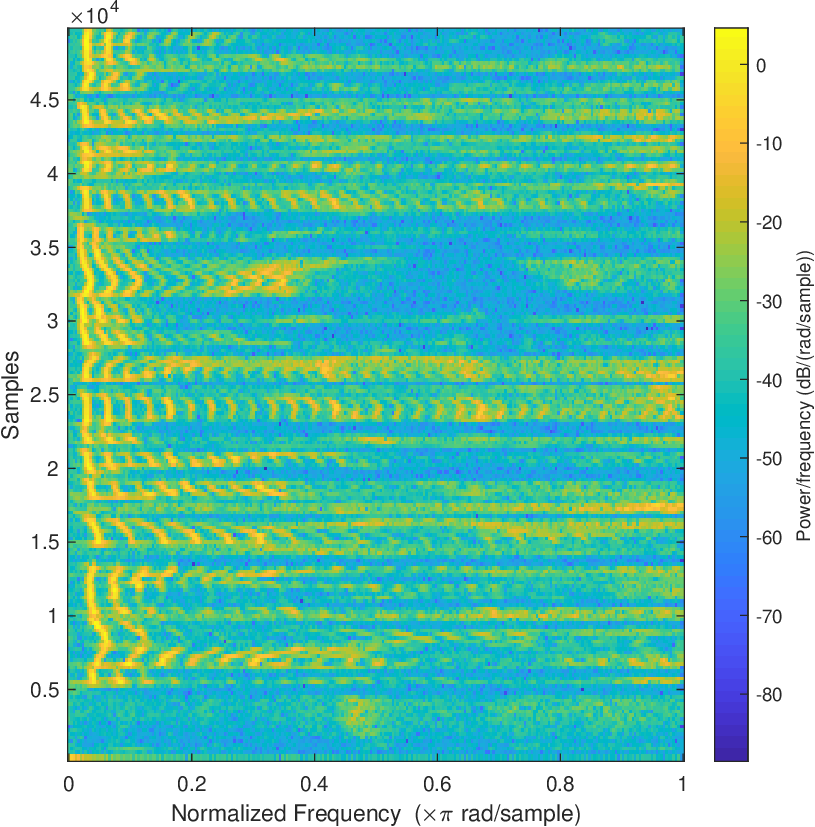}
          \caption{$\bm X_1$}
            \label{fig:x1_sw_spec}
            \end{subfigure}
            \begin{subfigure}[t]{0.30\textwidth}
              \includegraphics[width = \textwidth,trim={0cm 0cm 0cm 0cm},clip]{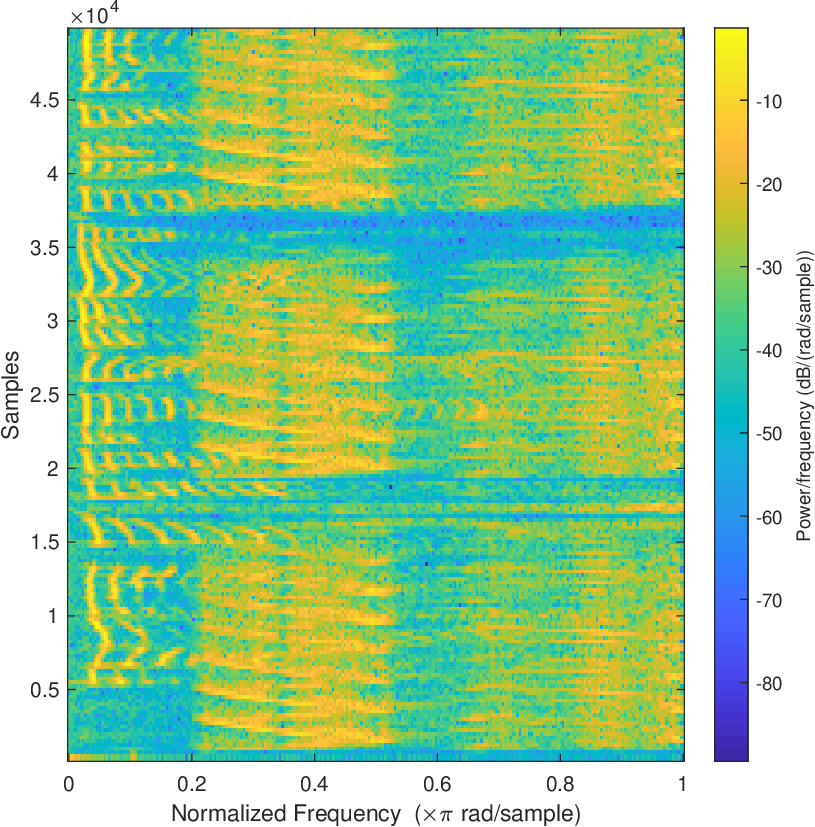}
          \caption{$\bm Z_1$}
            \label{fig:z1_sw_spec}
            \end{subfigure}  
            \begin{subfigure}[t]{0.30\textwidth}
              \includegraphics[width = \textwidth,trim={0cm 0cm 0cm 0cm},clip]{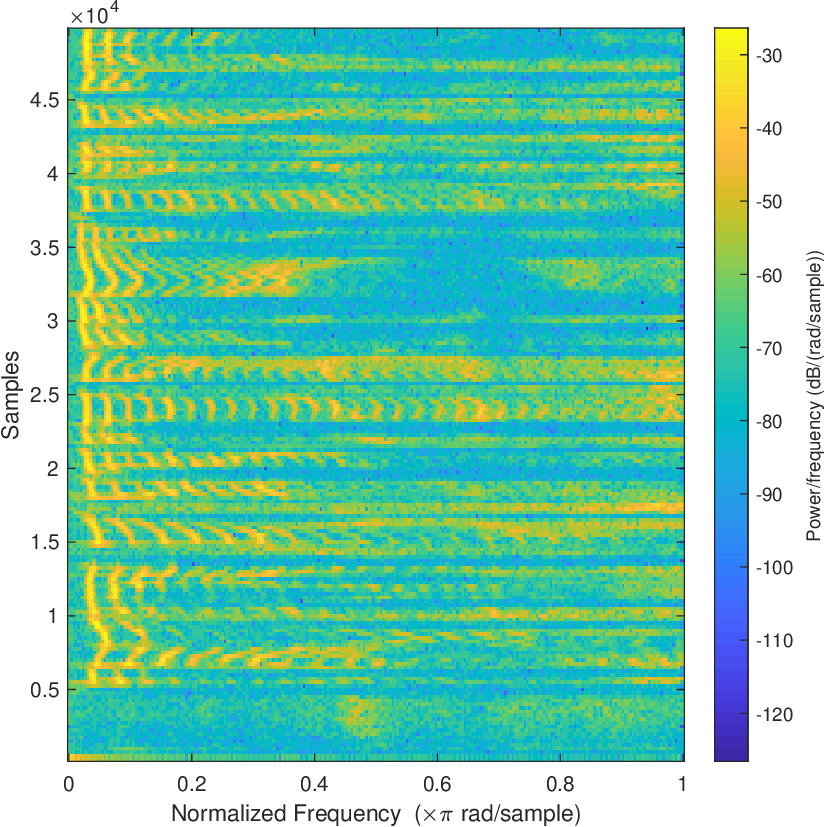}
          \caption{$\widehat{\bm X}_1$ via FCA with STFT embedding}
            \label{fig:xhat1_fca_sw_spec}
            \end{subfigure}\\
            \begin{subfigure}[t]{0.30\textwidth}
              \includegraphics[width = \textwidth,trim={0cm 0cm 0cm 0cm},clip]{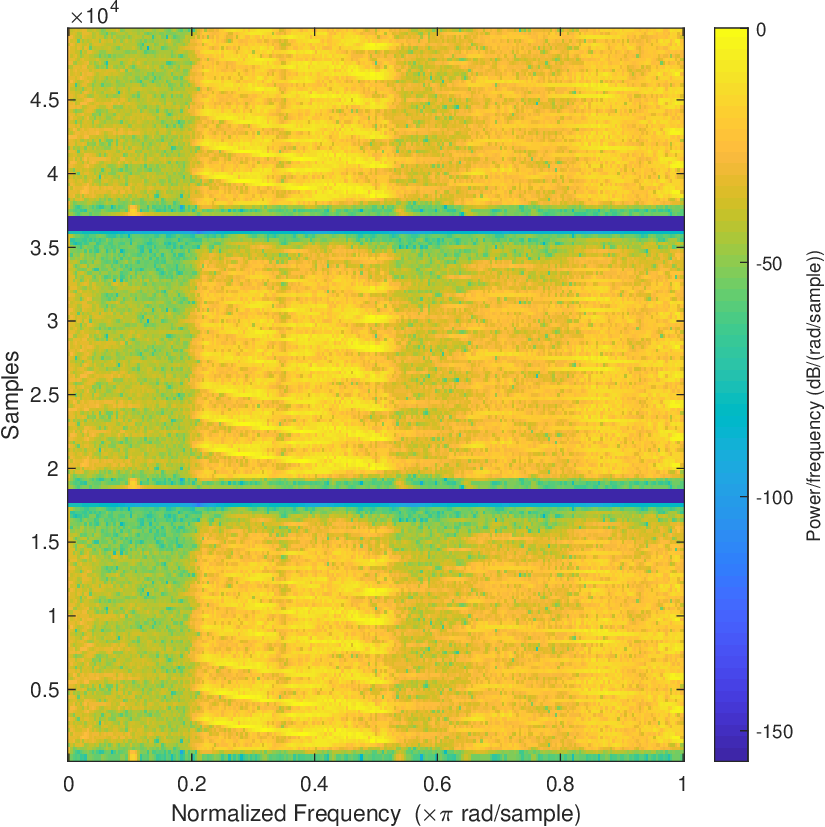}
          \caption{$\bm X_2$}
            \label{fig:x2_sw_spec}
            \end{subfigure} 
            \begin{subfigure}[t]{0.30\textwidth}
              \includegraphics[width = \textwidth,trim={0cm 0cm 0cm 0cm},clip]{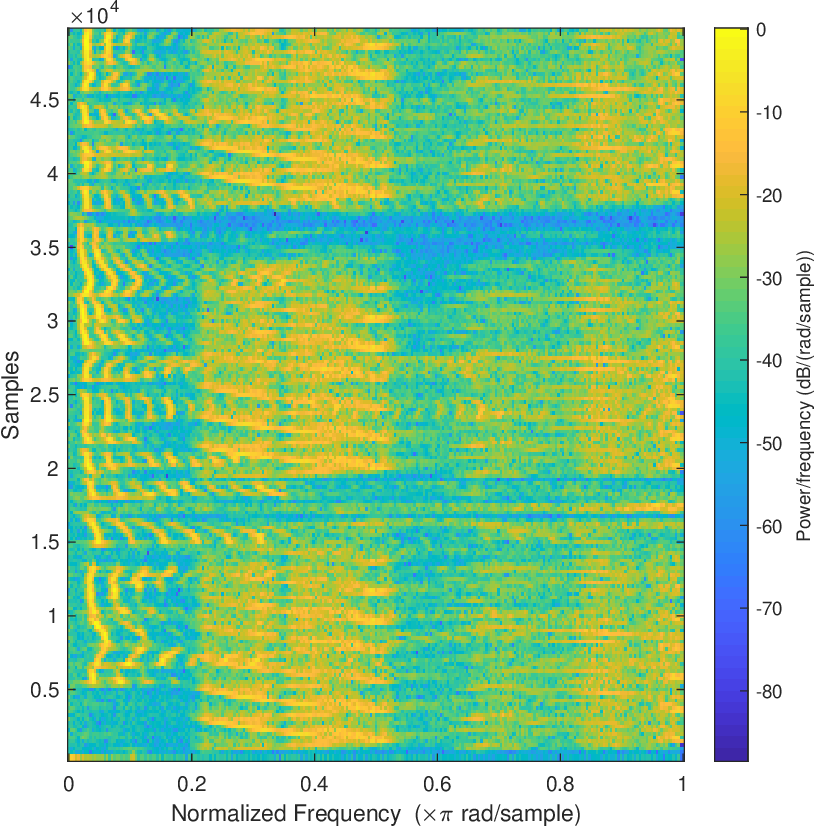}
          \caption{$\bm Z_2$}
            \label{fig:z2_sw_spec}
            \end{subfigure}
            \begin{subfigure}[t]{0.30\textwidth}
              \includegraphics[width = \textwidth,trim={0cm 0cm 0cm 0cm},clip]{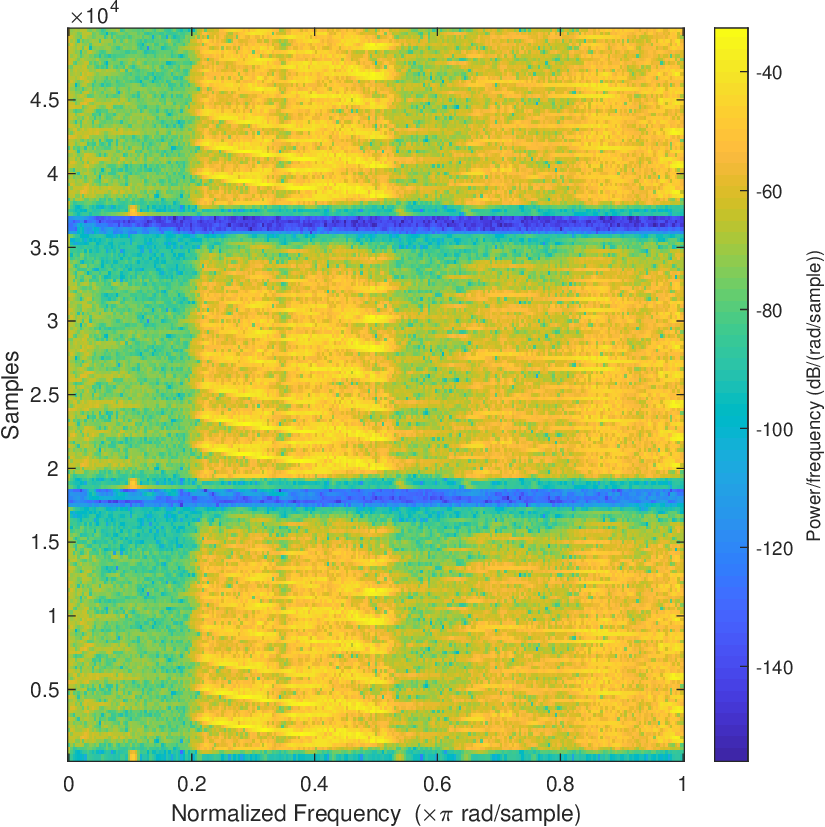}
          \caption{$\widehat{\bm X}_2$ via FCA with STFT embedding}
            \label{fig:xhat2_fca_sw_spec}
            \end{subfigure}
            \caption{The spectrogram (the magnitude of STFT matrices) of signals. Note that spectrogram is for demonstration and FCA is applied to STFT matrices directly. For the STFT, we adapt the Hamming window of $250$ samples, the number of overlapped samples is $125$, the number of DFT points is $256$.}
           \label{fig:spectrogram}
            \end{figure}
            
\subsection{Unmixing  rectangular matrices using self-adjoint FCA and more}

We can take this embedding idea even further by embedding mixed rectangular matrices modeled as  (\ref{eq:mixing model samples}) and embedding them as self-adjoint matrices as described in Algorithm  \ref{alg:embed_rec_fcf} and then using self-adjoint FCA to unmix them. Or, we may even embed rectangular matrices into another rectangular matrix with a different number of rows and columns as described in Algorithm \ref{alg:embed_rec_fcf}. Determining the right matricial embedding  adds another aspect to the question of optimal embedding selection as in Figure \ref{fig:diagmram}.

% Acknowledgements should go at the end, before appendices and references

\section{Conclusions and Open Problems}
\label{sec:conclusions}
\begin{figure}[t]
  \centering
    \begin{subfigure}[t]{0.24\textwidth}
      \includegraphics[width = \textwidth,trim={0cm 0cm 0cm 0cm},clip]{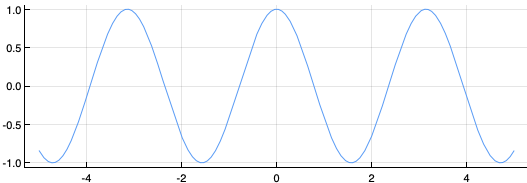}
  \caption{Cosine wave}
    \label{fig:wave2x1}
    \end{subfigure}
    \begin{subfigure}[t]{0.24\textwidth}
      \includegraphics[width = \textwidth,trim={0cm 0cm 0cm 0cm},clip]{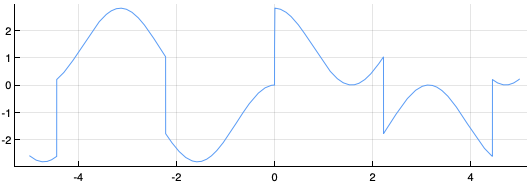}
  \caption{Mixed wave1}
    \label{fig:wave2z1}
    \end{subfigure}  
    \begin{subfigure}[t]{0.24\textwidth}
      \includegraphics[width = \textwidth,trim={0cm 0cm 0cm 0cm},clip]{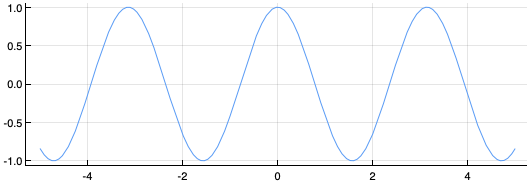}
  \caption{Wave1 via ICF}
    \label{fig:wave2ica1}
    \end{subfigure}
    \begin{subfigure}[t]{0.24\textwidth}
      \includegraphics[width = \textwidth,trim={0cm 0cm 0cm 0cm},clip]{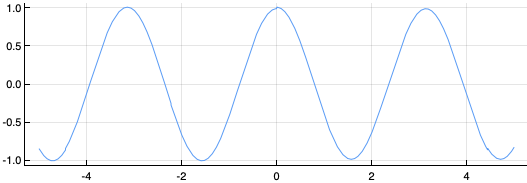}
  \caption{Wave1 via FCA}
    \label{fig:wave2fca1}
    \end{subfigure}
    \begin{subfigure}[t]{0.24\textwidth}
      \includegraphics[width = \textwidth,trim={0cm 0cm 0cm 0cm},clip]{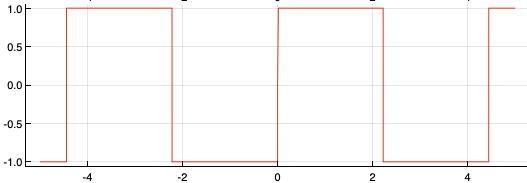}
  \caption{Square wave}
    \label{fig:wave2x2}
    \end{subfigure} 
    \begin{subfigure}[t]{0.24\textwidth}
      \includegraphics[width = \textwidth,trim={0cm 0cm 0cm 0cm},clip]{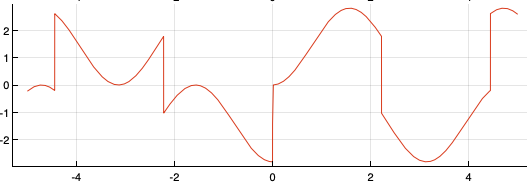}
  \caption{Mixed wave2}
    \label{fig:wave2z2}
    \end{subfigure}
    \begin{subfigure}[t]{0.24\textwidth}
      \includegraphics[width = \textwidth,trim={0cm 0cm 0cm 0cm},clip]{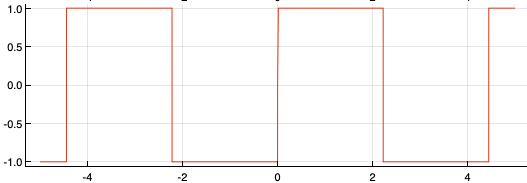}
  \caption{Wave2 via ICA}
    \label{fig:wave2ica2}
    \end{subfigure}
    \begin{subfigure}[t]{0.24\textwidth}
      \includegraphics[width = \textwidth,trim={0cm 0cm 0cm 0cm},clip]{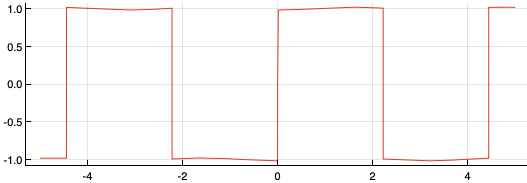}
  \caption{Wave2 via FCA}
    \label{fig:wave2fca2}
    \end{subfigure}
    \caption{
    An experiment in waveform separation using ICA and FCA. Note that subplots (c), (g) (unmixed waves via ICA) and (d), (h) (unmixed waves via FCA) both recover (a), (e). In this simulation, $\bm{A} =[\sqrt{2}, \sqrt{2}; -\sqrt{2}, \sqrt{2}]/2$ in (\ref{eq:general matrix model}). 
    The errors for ICA and FCA are $7.70\times 10^{-5}$ and $1.36\times 10^{-2}$ respectively.}
\label{fig:wave2}
\end{figure}

We have developed free component analysis as a non-commutative analog of independent component analysis. We proved that when certain identifiability conditions are met then mixtures of  self-adjoint and rectangular variants can be unmixed using self-adjoint and non-self-adjoint/rectangular FCA.  We developed an algorithm for umixing mixtures of  matrices based on FCA and demonstrated how FCA can be used to unmix images (viewed as matrices), speech signals and waveforms (when embedded as STFT matrices) and images where it initially fails (via FCA on a free subset of the mixed images). 

\subsection{Open Problems}

We now list some directions for future research. These include developing a non-linear extension of FCA analogous to non-linear ICA  \citep{eriksson2002blind, brakel2017learning, hyvarinen2017nonlinear, hyvarinen2016unsupervised, almeida2003misep, hyvarinen2018nonlinear}, a fast algorithm for  FCA analogous to fast ICA \citep{hyvarinen1999fast,chen2006efficient, oja2006fastica} and algorithms for sparse FCA analogous to sparse ICA \citep{comon2010handbook, bofill2001underdetermined}.

A more general line of inquiry is related to the so-called one-unit learning  work in ICA. In ICA, it is known that instead of maximizing the kurtosis, we can equivalently maximize a large  class of so-called contrast functions $G(\cdot)$ of the form \citep[Equation (2)]{hyvarinen1997one}
$$J_{G}(\bm{w}) = \mathbb{E}_x[G(\bm{w}^T x)] - \mathbb{E}{\gamma}[G(\gamma)],$$
where $G(\cdot)$ is non-quadratic well-behaving even function and $\gamma$ is a standardized Gaussian random variable. Developing the analog of this theory for the self-adjoint and rectangular FCA settings will allow for a finer study of the statistical efficiency of the FCA algorithms in the finite matrix setting akin to the work by \cite{arora2012provable} and facilitate the development of asymptotically consistent and statistically efficient estimators akin to the work by  \cite{chen2006efficient}. 

Our simulations showed that free entropy based FCA outperformed free kurtosis based FCA (see Figure \ref{fig::cdf_entropy}). Computing the free entropy is computationally more expensive than computing the free kurtosis. In ICA, the mutual information is approximated via a cumulant expansion \citep[Section 3.1, pp. 295]{comon_1994}. Developing a rapidly converging approximation to free entropy in terms of the free cumulants that converges faster than the approximation in \cite[Exercise 5, pp. 190]{mingo2017free} would lead a faster FCA that we expect to be statistically more efficient than free kurtosis based FCA.

% We note that there are variants of ICA making use of spectrogram. 
% In \cite{pourazad2006heart} Pourazad et al consider the setting where the source signals are convoluted with transfer functions before mixing. There, they use spectrogram to convert convolution into multiplication before engaging ICA. 
In Section \ref{sec:spectrogram_embed}, we discuss the application of FCA to STFT matrices of signals. The usage of short time fourier transform matrix there is for a matricial representation of vector signals that fits FCA model. We assume the STFT matrices of sources signals are freely independent and we are still in the linear mixture regime where we have the number of STFT matrices from observed signals matches number of sources. We note that in the context of ICA, STFT, as a projection from single-channel into multi-channel, is used in single channel separation \cite{davies2007source}. 
Authors of \cite{casey2000separation, gao2003blind, barry2005single, mika2020single} apply ICA to rows of spectrogram (the magnitude of STFT matrices) assuming the statistical independence of spectral basis vectors spanning source signals. 
One can also assume the statistical independence of time variation of source signals and apply ICA to columns of STFT matrices as in \cite{gao2003blind}. 
It is natural and interesting to develop a single channel FCA. 
The main blocker at the moment is we are not aware of an analogy that projects a single matrix to multiple matrices in a meaningful way.

\subsection{Open Problem: Using FCA to construct new matrix models for freeness}

We can use the ICA  to decompose small patches of an image into linear independent combinations of ICA basis vectors that can be learned from the data via the ICA factorization \citep{hoyer2000independent, bell1997independent}. Figure \ref{fig:icapatch} displays the $36$ ICA bases patches thus obtained by reshaping into $6 \times 6$ matrices the $36$ ICA bases vectors corresponding to each of the columns of the $36 \times 36$ $\bm{W}_{\sf ica}$ matrix obtained by an kurtosis based ICA factorization of the $6 \times 6$ patches of the panda image in Figure \ref{fig:panda}. 

We can similarly use FCA to decompose the  patches of an image into ``as free as possible'' matrices. Figure \ref{fig:fcapatch} shows the $36$ free patch bases obtained by displaying the matricial elements of the $\bm{X}_{\sf fca}$ array of matrices of the panda image. Each sub-image in the panda is a linear combination of these free patches.

 \begin{figure}[ht]
    \centering
    \begin{subfigure}[t]{0.45\textwidth}
    \includegraphics[width = \textwidth,trim={0cm 0cm 0cm 0cm},clip, angle =180]{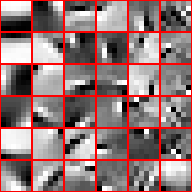}
    \caption{Patch ICA on panda patches.}
    \label{fig:icapatch}
    \end{subfigure}
    \begin{subfigure}[t]{0.45\textwidth}
    \includegraphics[width = \textwidth,trim={0cm 0cm 0cm 0cm},clip, angle =180]{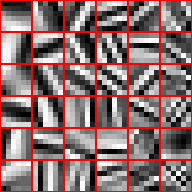}
    \caption{Patch ICA on hedgehog patches.}
    \label{fig:icapatchhedge}
    \end{subfigure}
    \begin{subfigure}[t]{0.45\textwidth}
   \includegraphics[width = \textwidth,trim={0cm 0cm 0cm 0cm},clip]{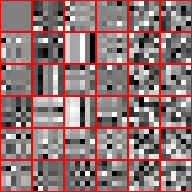}
   \caption{Patch FCA on panda patches.}
     \label{fig:fcapatch}
    \end{subfigure}
    \begin{subfigure}[t]{0.45\textwidth}
    \includegraphics[width = \textwidth,trim={0cm 0cm 0cm 0cm},clip]{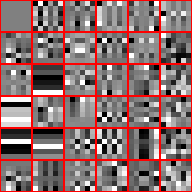}
    \caption{Patch FCA on hedgehog patches.}
    \label{fig:fcapatchhedge}
    \end{subfigure}
    \caption{Patch bases obtained via an ICA (top row) or FCA (bottom row) factorization of $6 \times 6$ patches of the panda  and the hedgehog images from Figures \ref{fig:panda} and \ref{fig:hedgehog} respectively. }
    \label{fig:patch}
\end{figure}

The patch FCA versus patch ICA bases vectors for  the hedgehog image in Figure \ref{fig:hedgehog} are shown in Figures \ref{fig:fcapatchhedge} and \ref{fig:icapatchhedge}.  Comparing the ICA bases vectors in Figure \ref{fig:icapatch} to the FCA patch  bases reveals that the ICA bases contain diagonal elements whereas the FCA bases are more checkerboard like and are even reminiscent of the 2D- DCT matrix. The ICA patch bases seem to depend on the image much more strongly than the FCA patch bases. Both the FCA and ICA patch bases are more structured than we might have expected.  

Since FCA worked in unmixing the panda and hedgehog images and since each of these images is composed of a linear combination of FCA extracted free patches, this suggests a way of constructing  not-so-random matrix models from random linear combinations of not-so-random (sub) matrices that are asymptotically free. This line of inquiry would complement the recent work by 
\citep{anderson2014asymptotically,cebron2016universal,male2011traffic} in developing not-as-random matrix models that are asymptotically free.  

FCA can serve as a valuable computational tool for reasoning and formulating mathematically plausible conjectures about matricial freeness in not-so-random matrices.

\subsection{Open Problem: Improving FCA by ``more free'' sub-matrix selection}

FCA (and ICA) do not always succeed in unmixing images. See, for example Figure \ref{fig:jennifer_brad} where applying FCA to the mixed images does not produce a good estimate of the mixing matrix. In Figure \ref{fig:jennifer_brad_sub}, we show how we can better estimate the mixing matrix by applying FCA to sub-matrices instead. In this example,  we can reason that FCA on the whole matrix fails because the in-alignment faces make the matrices ``less free'' whereas the sub-matrices are ``more free''. 

We can formalize this idea further by examining how random or not-random the left and right singular vectors of the matrices are. Asymptotically free matrices have left and right singular vectors that are isotropically random relative to each other. Hence, if $V_1$ and  $V_2$ are $N \times N$ right singular vectors of two matrices and if $V_1$ and $V_2$ are independent and isotropically random, then  we expect the entries of $V_1^TV_2$ to be delocalized and having the values of order   $N^{-1/2}$. We can employ a similar argument for the left singular vectors. 

We can use this as a heuristic for quantifying how close-to-free two matrices we are trying to unmix are. 

For the panda and hedgehog images in Figure \ref{fig:panhogmix}, we can see from Figures \ref{fig:uprodpan} and \ref{fig:vprodpan} that the right and left singular vectors respectively are more uniform and so we might FCA to succeed as it indeed did.

In contrast, for the matrices in the Figure \ref{fig:jennifer_brad}, the right and left singular vectors of the matrices in Figures \ref{fig:uprodjenn} and \ref{fig:vprodjenn} respectively are not that uniform and so we might expect FCA to fail as it did.

The sub-matrices on which we applied FCA in Figure \ref{fig:jennifer_brad_sub} are ``more free'' than the matrices in Figure \ref{fig:jennifer_brad} and so FCA worked better in the former case than in the latter. ICA similarly fails as FCA when applied to the whole matrices and similarly succeeds when applied to the sub-matrices. 

New algorithmic methods for identifying ``more (freely) independent'' sub-matrices to improve the unmixing performance of FCA (and ICA) would be invaluable in applications where practitioners have applied FCA (or ICA) and given up because it seemingly did  not succeed. Such methods would help make FCA, and ICA, (even) great(er) (again)! 

\begin{figure}[t]
  \centering
    \begin{subfigure}[t]{0.3\textwidth}
      \includegraphics[width = \textwidth,trim={0cm 0cm 0cm 0cm},clip]{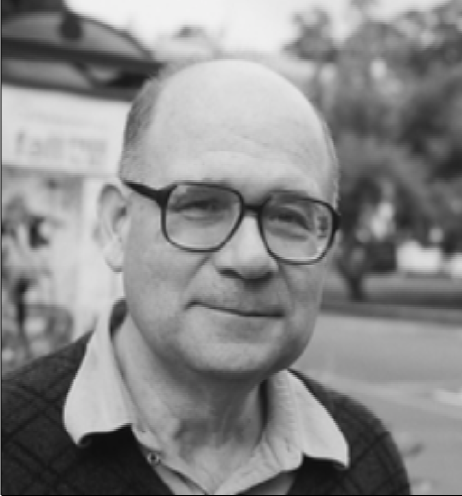}
  \caption{$X_1$}
    \label{fig:jbx1}
    \end{subfigure}
    \begin{subfigure}[t]{0.3\textwidth}
      \includegraphics[width = \textwidth,trim={0cm 0cm 0cm 0cm},clip]{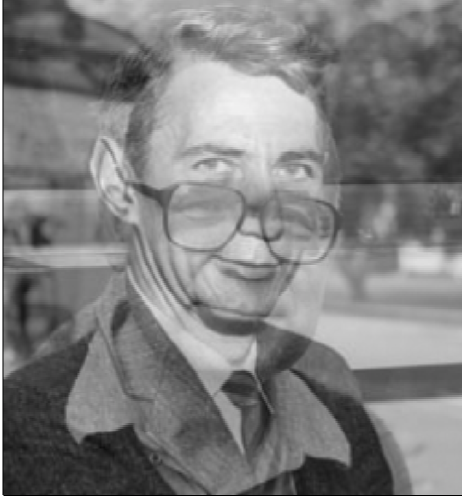}
  \caption{$Z_1$}
    \label{fig:jbz1}
    \end{subfigure}  
    \begin{subfigure}[t]{0.3\textwidth}
      \includegraphics[width = \textwidth,trim={0cm 0cm 0cm 0cm},clip]{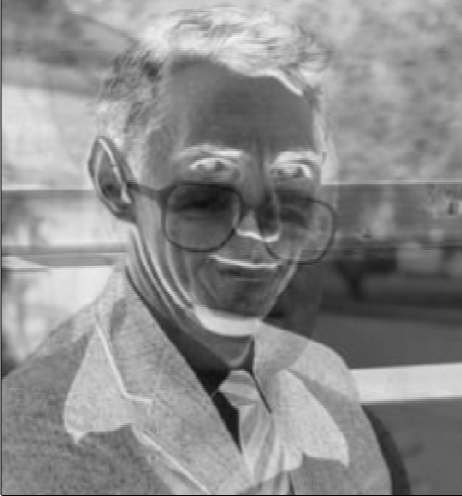}
  \caption{$\widehat X_1$ via FCA}
    \label{fig:jbfca1}
    \end{subfigure}
    \begin{subfigure}[t]{0.3\textwidth}
      \includegraphics[width = \textwidth,trim={0cm 0cm 0cm 0cm},clip]{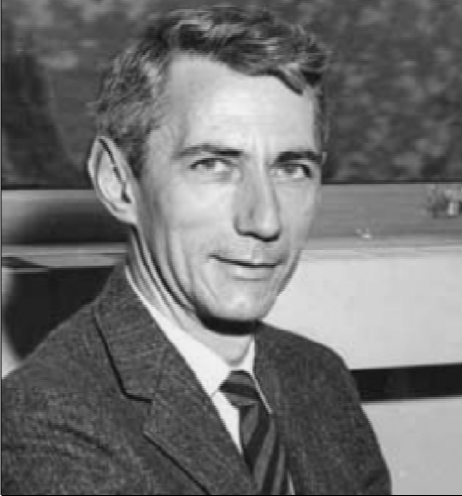}
  \caption{$X_2$}
    \label{fig:jbx2}
    \end{subfigure} 
    \begin{subfigure}[t]{0.3\textwidth}
      \includegraphics[width = \textwidth,trim={0cm 0cm 0cm 0cm},clip]{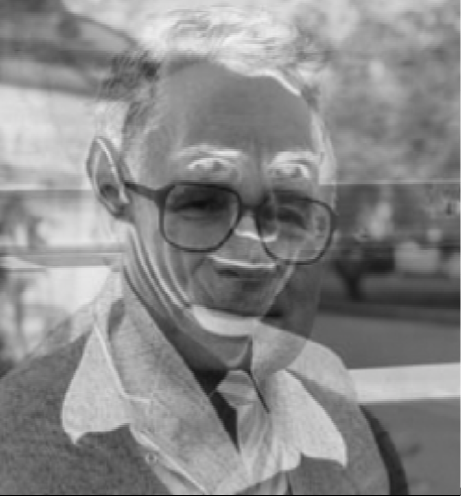}
  \caption{$Z_2$}
    \label{fig:jbz2}
    \end{subfigure}
    \begin{subfigure}[t]{0.3\textwidth}
      \includegraphics[width = \textwidth,trim={0cm 0cm 0cm 0cm},clip]{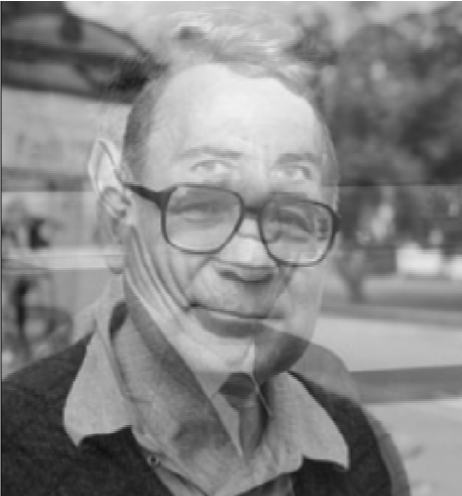}
  \caption{$\widehat X_2$ via FCA}
    \label{fig:jbfca2}
    \end{subfigure}
    \caption{An Application of FCA to images not so free.  The mixing matrix is $\bm{A} =[\sqrt{2}, \sqrt{2}; -\sqrt{2}, \sqrt{2}]/2$}
\label{fig:jennifer_brad}
\end{figure}

\begin{figure}[t]
  \centering
    \begin{subfigure}[t]{0.3\textwidth}
      \includegraphics[width = \textwidth,trim={0cm 0cm 0cm 0cm},clip]{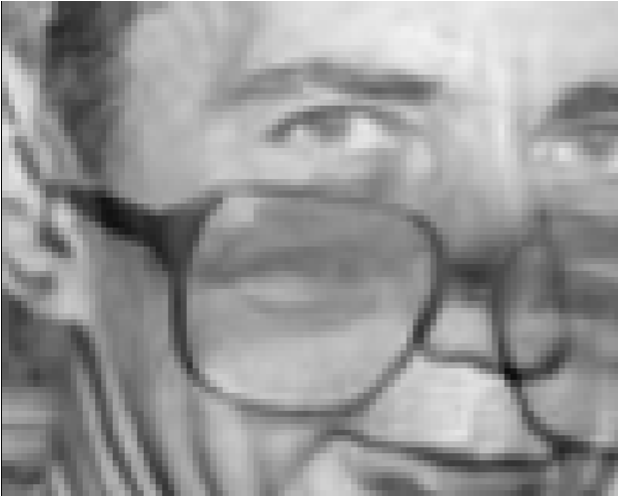}
  \caption{Sub matrix of $Z_1$}
    \label{fig:jbmixsub1}
    \end{subfigure}
    \begin{subfigure}[t]{0.3\textwidth}
      \includegraphics[width = \textwidth,trim={0cm 0cm 0cm 0cm},clip]{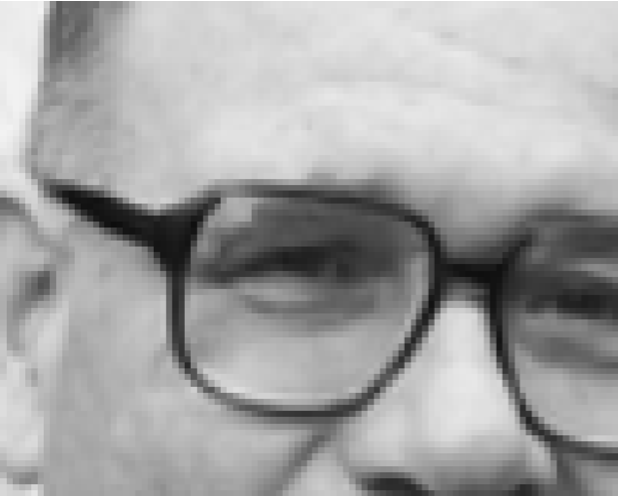}
  \caption{$\widehat X_1$ by FCF}
    \label{fig:jbfcasub1}
    \end{subfigure}  
    \begin{subfigure}[t]{0.3\textwidth}
      \includegraphics[width = \textwidth,trim={0cm 0cm 0cm 0cm},clip]{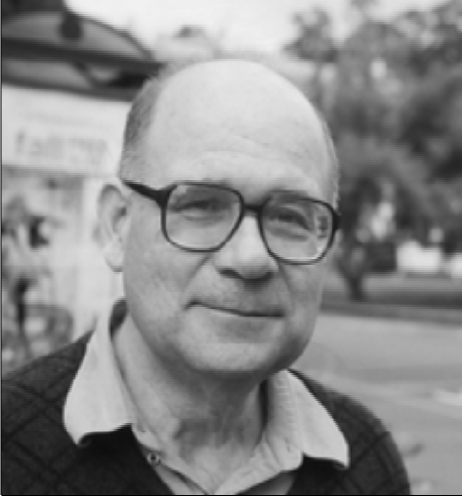}
  \caption{Full unmixed image 1}
    \label{fig:jbfcafull1}
    \end{subfigure}
    \begin{subfigure}[t]{0.3\textwidth}
      \includegraphics[width = \textwidth,trim={0cm 0cm 0cm 0cm},clip]{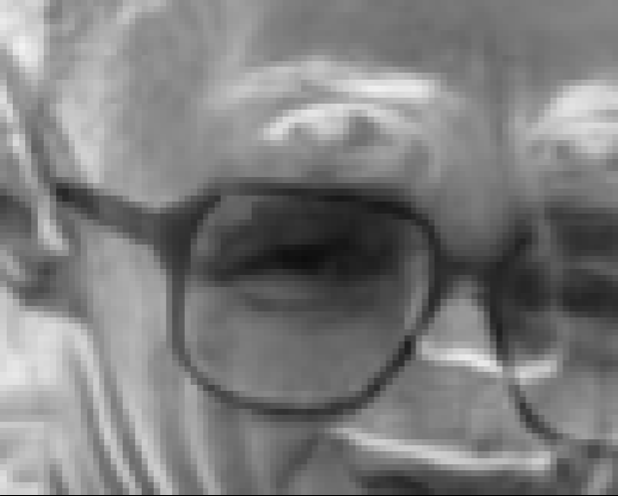}
  \caption{Sub matrix of $Z_2$}
    \label{fig:jbmixsub2}
    \end{subfigure} 
    \begin{subfigure}[t]{0.3\textwidth}
      \includegraphics[width = \textwidth,trim={0cm 0cm 0cm 0cm},clip]{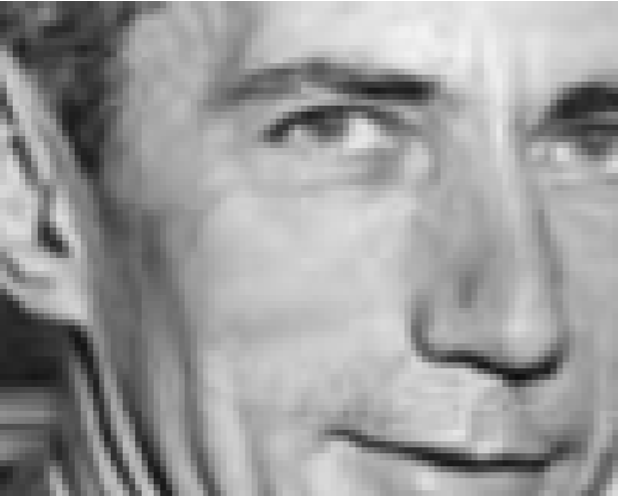}
  \caption{$\widehat X_2$ by FCF}
    \label{fig:jbfcasub2}
    \end{subfigure}
    \begin{subfigure}[t]{0.3\textwidth}
      \includegraphics[width = \textwidth,trim={0cm 0cm 0cm 0cm},clip]{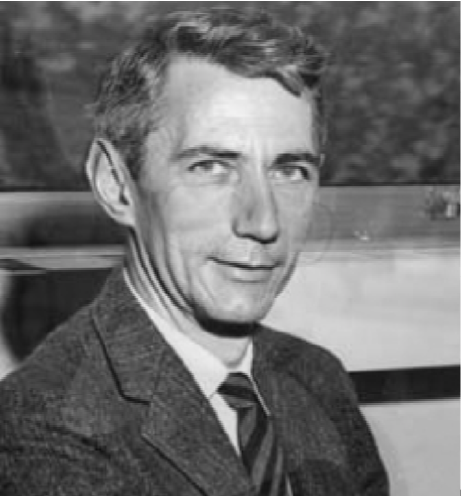}
  \caption{Full unmixed image 2}
    \label{fig:jbfcafull2}
    \end{subfigure}
    \caption{Application of FCA to sub images gives better results. The mixing matrix is $\bm{A} =[\sqrt{2}, \sqrt{2}; -\sqrt{2}, \sqrt{2}]/2$}
\label{fig:jennifer_brad_sub}
\end{figure}

\begin{figure}[t]
  \centering
    \begin{subfigure}[t]{0.4\textwidth}
      \includegraphics[width = \textwidth,trim={0cm 0cm 0cm 0cm},clip]{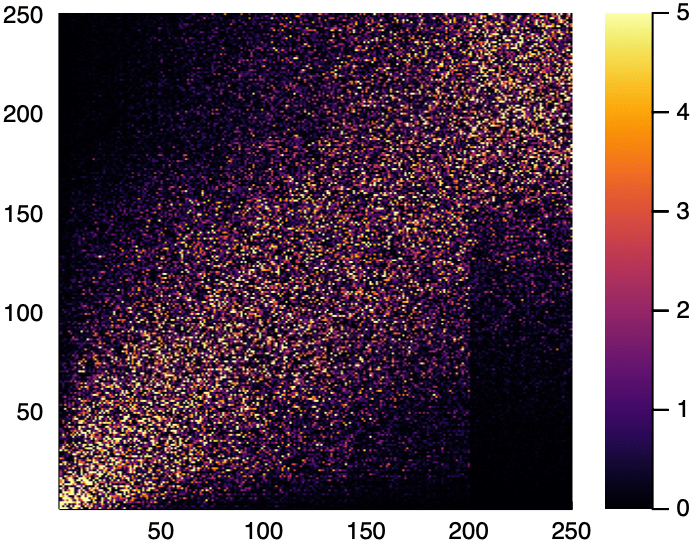}
  \caption{}
    \label{fig:uprodjenn}
    \end{subfigure}
    \begin{subfigure}[t]{0.4\textwidth}
      \includegraphics[width = \textwidth,trim={0cm 0cm 0cm 0cm},clip]{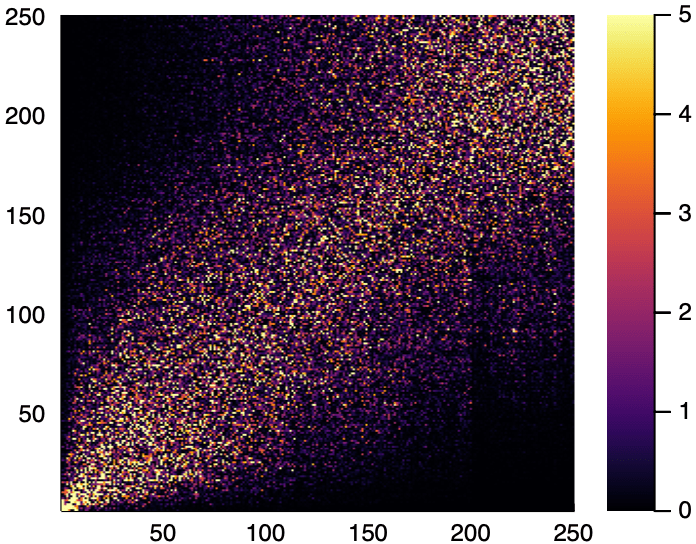}
  \caption{}
    \label{fig:vprodjenn}
    \end{subfigure}  
    \begin{subfigure}[t]{0.4\textwidth}
      \includegraphics[width = \textwidth,trim={0cm 0cm 0cm 0cm},clip]{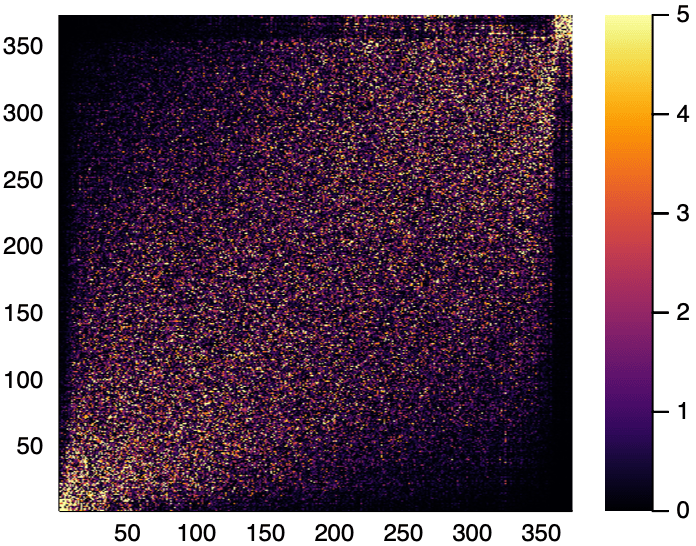}
  \caption{}
    \label{fig:uprodpan}
    \end{subfigure} 
    \begin{subfigure}[t]{0.4\textwidth}
      \includegraphics[width = \textwidth,trim={0cm 0cm 0cm 0cm},clip]{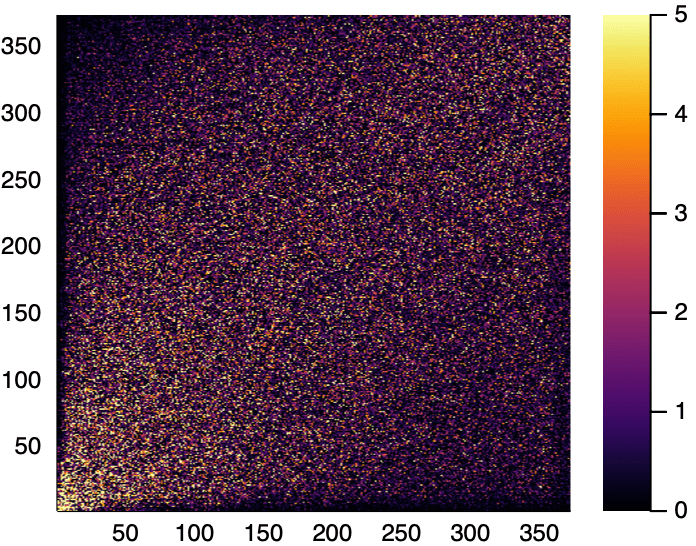}
  \caption{}
    \label{fig:vprodpan}
    \end{subfigure}
    \caption{Normalized square of inner products between: (a) left singular vectors of Figure \ref{fig:jbx1} and \ref{fig:jbx2}, (b) right singular vectors of Figure \ref{fig:jbx2} and \ref{fig:jbx1}, (c) left singular vectors of Figure \ref{fig:panda} and \ref{fig:hedgehog}, (d) right singular vectors of Figure \ref{fig:panda} and \ref{fig:hedgehog}. We observe that inner product between the  right singular vectors of Figure \ref{fig:jbx2} and \ref{fig:jbx1} (corresponding to (b)) are clearly not uniform.}
\label{fig:singvec_prod}
\end{figure}

\subsection{Open Problem: Convergence guarantee}
\label{sec:op_conv}
As discussed in Remark \ref{rmk:nonconvex}, the optimization problems \eqref{eq:opt_kurt} and \eqref{eq:kurt3} are non-concave. 
Also, in practice, we are subject to noise, finite sample and finite dimension (recall that the random matrices are asymptotically free). 
Therefore, while FCF seems to work empirically in the numerical simulations, a further theoretical guarantee of the convergence to the global maximum need to be studied. 
Along this line, Arora, Ge, et al give a provable convergence guarantees for ICA \cite{arora2012provable}. 
The setup considered there is analogous to Theorem \ref{thm:kurt1} (recover one column orthogonal matrix at a time). 
It is natural to ask whether one can establish similar result for FCA. 
Furthermore, it is interesting to develop optimization algorithm with provable convergence guarantees for setup considered by Theorem \ref{thm:kurt3} and \ref{thm:ent}.  

\clearpage
\section*{Acknowledgements}

We thank Peter Bickel for inspiring us to revisit FCA via a serendipitous meeting at the Santa Fe Institute in December 2015. That meeting, and his remarks on ICA and all the ways in which it is natural,  provided the spark for us spending the rest of that workshop and the following month thinking about all the ways that FCA was natural for random matrices and images. We implemented our first FCA algorithm soon thereafter and leaned into the theory after getting, and being overjoyed by, the image separation results in Figure \ref{fig::fca_recover}.

We thank Arvind Prasadan for his detailed comments and suggestions on earlier versions of this manuscript. We are grateful to Alfred Hero for his suggestion to try the denoising simulation in Figure \ref{fig::locust} which brought into sharp focus for us for the first time that FCA could do (much) better than ICA (this was a simulation we had been avoiding till because we feared the opposite!). This work has benefited from Roland Speicher's many insightful comments and suggestions and from Octavio Arizmendi Echegaray's remarks that made us better understand the underlying free probabilistic structures that made some of the FCA identifiability related questions fundamentally different than their ICA counterparts.

This research was supported by ONR grant N00014-15-1-2141, DARPA Young Faculty Award D14AP00086 and  ARO MURI W911NF-11-1-039.

A \texttt{Julia}  implementation of the FCA algorithm as well as code to reproduce the  simulations and figures in this paper, is available at Github \citep{fca}.

\newpage

\appendix
\addcontentsline{toc}{section}{Appendices}
% \renewcommand{\thesubsection}{\Alph{subsection}}
% what is free random variables
\clearpage

\section{What is freeness of random variables?}
\label{sec:math_prelim}

The goal of this section is to introduce the freeness of non-commutative random variable. 
We first discuss independence (freeness) in the context of the scalar probability, free probability for self-adjoint (non-commutative) random variables and free probability for rectangular (non-commutative) random variables respectively. 
We focus on the the behavior of (free) cumulants and (free) entropy of independent (free) random variables, which are the basis ICA (FCA). 
The connection between in independent random matrices and free random variables is given at the end.

For a detailed introduction of free probability, readers are referred to \citep{roland_2006, free_entropy_HP, mingo2017free}
\subsection{Prologue: What is independence of commuting random variables?}  
\label{subsec:scalarindependence}
Here, we briefly review statistical independence in scalar probability. We state the behavior of cumulants and entropy of independent random variables, which are the basis of ICA. 
In the end, we discuss the unique role the Gaussian random variables play in ICA.

\subsubsection{Mixed moments point of view}

Let $I$ denotes an index set, and $(\alge_i)_{i\in I}$ denote random variables.
    They are independent if for any $n \in \N$ and $m_1, \cdots, m_n \geq 0$, 
    $$
      \E[\alge_{i(1)}^{m_1} \cdots \alge_{i(n)}^{m_n}] = \E[\alge_{i(1)}^{m_1}] \cdots \E[\alge_{i(n)}^{m_n}].
    $$
    if $i(j) \in I$, $j = 1,\cdots n$ are all distinct.
    An alternative definition is that for any polynomials $P_1,\cdots P_n$ of one variables,
    \begin{equation}  \label{eq:def scalar indpendence}
      \E[P_1(\alge_{i(1)})\cdots P_n(\alge_{i(n)})] = 0
    \end{equation}
    if $\E[P_j(\alge_{i(j)})] = 0$ for all $j = 1, \cdots, n$ and $i(j) \in I$, $j = 1,\cdots n$ are all distinct.

\subsubsection{Cumulants -- kurtosis and higher order -- independent additivity}

The (joint) cumulants of $n$ random variables $a_1, \cdots, a_k$ is defined by
    \begin{equation}    \label{eq:def scalar cumu}
        \icacumu_n(a_1, \cdots, a_n) = \sum_{\pi} (\abs*{\pi} - 1)! (-1)^{\abs*{\pi} - 1} \prod_{B \in \pi}\E \left[\prod_{i \in B} a_i\right],
    \end{equation}
    where $\pi$ runs through all partitions of $\{1, \cdots, m\}$, $B$ runs through all blocks of partition $\pi$.
    Equivalently, $\{\icacumu_m\}_{m \geq 1}$ is defined through
    \begin{equation} \label{eq:def scalar cumu 2}
      E(x_1 \cdots x_n) = \sum_{\pi} \prod_{B \in \pi} \icacumu_{\abs*{B}} (a_i:i\in B)
    \end{equation}
    
    The reason that ICA adapts an optimization problem involving cumulants is the following property:
    if $(x_i)_{i \in I}$ are independent, then for any $n\in \N$
    \begin{equation}
    \label{eq:cumu_ind_scalar}
      \icacumu_n(x_{i(1)}, \cdots, x_{i(n)}) = 0
    \end{equation}
    whenever there exists $1\leq \ell, k\leq n$ with $i(\ell) \neq i(k)$. 
    That is, any cumulants involving two (or more) independent random variables is zero.
    Adapt the notation
    $$
      \icacumu_n(x) := c_n(x, \cdots, x).    $$
    A quick consequence of \eqref{eq:cumu_ind_scalar} is that for independent $x_1$ and $x_2$.
    \begin{equation}    \label{eq:scalar cumu ind additivity}
      \icacumu_n(x_1 + x_2) = \icacumu_n(x_1) + \icacumu_n(x_2).
    \end{equation}

\subsubsection{Entropy -- independent additivity}
For random variables $x_1,\cdots, x_n$ with joint distribution $f(x_1,\cdots,x_n)$, the (joint) entropy is defined by \citep{cover2012elements}
    \begin{equation}    \label{eq:scalar ent}
      h(x_1,\cdots,x_n) = -\int f(\alpha_1,\dots,\alpha_n) \log f(\alpha_1,\cdots,\alpha_n) \mathrm{d} \alpha_1\cdots \mathrm{d} \alpha_n.
    \end{equation}
    
    The joint entropy of a set of variables is less than or equal to the sum of the individual entropies of the variables in the set,
    \begin{equation}
    \label{eq:ent_ind_scalar}
      h(x_1,\cdots,x_n) \leq h(x_1) + \cdots + h(x_n).
    \end{equation}
    In particular, the equality in \eqref{eq:ent_ind_scalar} holds if and only if $x_1,\cdots, x_n$ are independent.
    Therefore, entropy is regarded as a measure of independence and thus can be used in ICA.
    
    We also want to recall another handful property of entropy. 
    For random vectors $x, y$ satisfying linear relation $y = \bm A x$, we have that 
    \begin{equation}
    \label{eq:ent_linear_transformation}
    h(y_1, \cdots, y_n) = h(x_1, \cdots, x_n) + \log \abs*{\mathrm{det}\bm A}. 
    \end{equation}
    In particular, the entropy is invariant under orthogonal linear transformation.  

\subsubsection{Why Gaussians cannot be unmixed: Gaussians have zero higher order cumulants}

In ICA, the optimization problem people used finds the independent direction by maximizing the kurtosis (fourth cumulant). 
    However, all cumulants of order larger than $2$ for Gaussian random variables vanish.
    Thus ICA fails to unmix Gaussian random variables.
ICA based on the entropy also fails to unmix Gaussian random variables, as nontrivial mixtures of independent Gaussian random variables can still be independent Gaussian.
On the other hand, it was shown that this is the only case where ICA does not work \citep{comon_1994}. 
A result of this kind is called an identifiability condition.

\subsection{Freeness of self-adjoint random variables}  \label{subsec:prelim self adjoint}
We first introduce the definition of probability space for non-commutative random variables.
The starting point is the an unital algebra of non-commutative variables.
\begin{definition}
Let $\alg$ be a vector space over $\C$ equipped with product $\cdot : \alg \times \alg \mapsto \alg$. Denote the vector space addition by $+$, we call $\alg$ an algebra if for all $a, b, c\in \alg$ and $\alpha \in \C$,
\begin{enumerate}
  \item[(a)] $a(bc) = (ab)c$,
  \item[(b)] $a(b + c) = ab + ac$,
  \item[(c)] $\alpha (ab) = (\alpha a)b = a(\alpha b)$.
\end{enumerate}
We call $\alg$ a unital algebra if there is a unital element $1_\alg$ such that, for all $a \in \alg$
\begin{equation}
  a = a 1_\alg = 1_\alg a.
\end{equation}
An algebra $\alg$ is called a $*$-algebra if it is also endowed with an antilinear $*$-operation $\alg \ni a \mapsto a^* \in \alg$, such that $(\alpha a)^* = \bar \alpha a^*$, $(a^*)^* = a$ and $(ab)^* = b^*a^*$ for all $\alpha \in \C$, $a, b \in \alg$. 
\end{definition}
Note that $ab = ba$ does not necessarily hold for general $a,b \in \alg$, i.e., they are non-commutative. 

\begin{definition}
\label{def:fps}
A (non-commutative) $*$-probability space $(\alg, \fnl)$ consists of a unital $*$-algebra and a linear functional $\fnl: \alg \rightarrow \C$, which serves as the `expectation'. 
We also require that $\fnl$ satisfies 
\begin{itemize}
    \item[(a)] (positive) $\fnl(aa^*) \geq 0$ for all $a \in \alg$. 
    \item[(b)] (tracial) $\fnl(ab) = \fnl(ba)$ for all $a, b \in \alg$.
    \item[(c)]  $\fnl(1_\alg) = 1$. 
\end{itemize}
The elements $a \in \alg$ are called non-commutative random variables. (We may omit the word non-commutative if there is no ambiguity.)
Given a series of random variables $\alge_1, \cdots, \alge_k \in \alg$, for any choice of $n \in \N$, $i(1),\cdots,i(n) \in [1..k]$ and $\epsilon_1, \cdots, \epsilon_n \in \{1, *\}$, $\fnl(\alge_{i(1)}^{\epsilon_1}\cdots \alge^{\epsilon_n}_{i(n)})$ is a mixed moment of $\{\alge_i\}_{i = 1}^k$.
The collection of all moments is called the joint distribution of $\alge_1,\cdots, \alge_k$.
\end{definition}

The moments of general random variables can be complex-valued; self-adjoint random variables, which are defined below, necessarily have real-valued moments and will be the object of our study.

\begin{definition}
Let $(\alg, \fnl)$ be a non-commutative probability space, a element $a \in \alg$ is self-adjoint if $a = a^*$. 
In particular, the moments of self-adjoint elements are real (see Remark 1.2 in \cite{roland_2006}). 
\end{definition}

The counterpart of independence in free probability is freely independence or simply free.
We now consider the freeness of self-adjoint random variables from various perspectives as in Section \ref{subsec:scalarindependence}.

\subsubsection{Mixed moments point of view}

The following official definition of freeness should be compared with \eqref{eq:def scalar indpendence}.
\begin{definition}  \label{def:free self-adjoint}
Let $(\alg, \fnl)$ be a non-commutative probability space and fix a positive integer $n \geq 1$. 

For each $i \in I$, let $\alg_i \subset \alg$ be a unital subalgebra. The subalgebras $(\alg_i)_{i \in I}$ are called freely independent (or simply free), if for all $k \geq 1$
\begin{equation*}
\fnl(\alge_1\cdots \alge_k) = 0
\end{equation*}
whenever \(\fnl(\alge_j) = 0\) for all $j = 1, \cdots, k,$ and neighboring elements are from diffierent subalgebras, i.e. \(\alge_j \in \alg_{i(j)}\), \(i(1) \neq i(2), i(2) \neq i(3),\cdots, i(k-1)\neq i(k)\).

In particular, a series of elements \((\alge_i)_{i \in I}\) are called free if the subalgebras generated by $\alge_i$ and $\alge_i^*$ are free. 
\end{definition}

\subsubsection{Free cumulants -- free additivity}

The analog of cumulants for non-commutative random variables is called free cumulants, which was proposed by Roland Speicher \cite{Roland_1994, roland_2006}. 

The notion of non-crossing partition lies underneath the free probability and free cumulants. 

\begin{definition}[Non-crossing Partition, Definition 9.1 of \cite{roland_2006}]
\label{def:noncom_partition}
Consider set $S = [1..n]$. 
\begin{enumerate}
    \item[(a)] We call $\pi = \{V_1, \cdots, V_r\}$ a partition of the set $S$ if and only if $V_i$ ($1\leq i \leq r$) are pairwise disjoint, non-void subsets of $S$ such that $\cup_{i =1}^r V_{i} = S$. We call $V_1, \cdots, V_r$ the block of $\pi$. Given two elements $a, b \in S$, we write $a \sim_\pi b$ if for $a$ and $b$ belong to the same block of $\pi$. 
    \item[(b)] A partition $\pi$ of the set $S$ is called non-crossing if there does not exist any $a_1 < b_1 < a_2 < b_2 $ in $S$ such that $a_1 \sim_\pi a_2 \nsim b_1 \sim_{\pi} b_2$.
    \item[(c)] The set of all non-crossing parations of $S$ is denoted by $NC(n)$. 
\end{enumerate}
\end{definition}

\begin{definition}
\label{def:freecum}
Given a $*$-probability space $(\alg, \fnl)$, the free cumulants refer to a family of multilinear functionals $\{\cumu_m: \alg^m \mapsto \C\}_{m \geq 1}$. Here, the multilinearity means that $\cumu_m$ is linear in one variable when others hold constant, i.e., for any $\alpha, \beta \in \C$ and $a, b \in \alg$,
\begin{equation}
\label{eq:multilinear}
  \cumu_m(\cdots, \alpha a + \beta b, \cdots) = \alpha\cumu_m(\cdots, a, \cdots) + \beta\cumu_m(\cdots, b, \cdots).
\end{equation}
\end{definition}

Explicitly, for $a_1, \cdots, a_n \in \alg$, their mixed free cumulant is defined through (cf. \eqref{eq:def scalar cumu 2})
    \begin{equation}
      \fnl(a_1\cdots a_n) = \sum_{\pi \in NC(n)} \prod_{B \in \pi} \cumu_{\abs*{B}}\left(a_i:i\in B \right).
    \end{equation}
    Equivalently (cf. \eqref{eq:def scalar cumu}), 
    \begin{equation}
      \cumu_n(a_1, \cdots, a_n) = \sum_{\pi \in NC(n)} \mu(\pi, \bm{1}_n) \sum_{B \in \pi} \varphi\left(\prod_{i \in B} a_i\right), % typo here
    \end{equation}
    where $\mu$ is the M\"{o}bius function on $NC(n)$.
\begin{example}
We have that
$$
  \cumu_1(a_1) = \fnl(a_1),
$$
$$
  \cumu_2(a_1, a_2) = \fnl(a_1a_2) - \fnl(a_1)\fnl(a_2),
$$
\begin{align*}
  \cumu_3(a_1,a_2,a_3) & =  \fnl(a_1a_2a_3)  - \fnl(a_1)\fnl(a_2a_3) -  \fnl(a_2)\fnl(a_1a_3) \\
  & -  \fnl(a_3)\fnl(a_1a_2) + 2 \fnl(a_1)\fnl(a_2)\fnl(a_3).
\end{align*}
\end{example}

Recall that in the scalar probability, mixed cumulants of independent random variables vanish (see \eqref{eq:cumu_ind_scalar}). 
The same holds for the free cumulants in the free probability.
\begin{theorem}[Theorem 11.16 of \cite{roland_2006}] 
\label{thm:vanish_mix_cumu}
      Let \((\alg,\fnl)\) be a \\ 
  non-commutative probability space with associated free cumulants \((\cumu_\ell)_{\ell \in \N}\). Consider random variables $(\alge_i)_{i\in I}$. Assume that they are freely independent. Then for all \(n \geq 2\), and $i(1), \cdots, i(n) \in I$, we have \(\cumu_n(a_{i(1)},\cdots,a_{i(n)}) = 0\) whenever there exist \(1\leq l,k \leq n\) with \(i(l)\neq i(k)\).
\end{theorem}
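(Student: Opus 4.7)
The plan is to prove this classical Nica--Speicher theorem by induction on $n \geq 2$, with the moment-cumulant relation serving as the bridge between the moment-level definition of freeness and the cumulant statement we wish to establish. Before the main induction I would record the auxiliary identity $\cumu_n(1_\alg, a_2, \ldots, a_n) = 0$ for $n \geq 2$, which follows from a brief separate induction using the moment-cumulant inversion; by multilinearity of $\cumu_n$ this reduces the entire problem to the case where each $a_j$ is centered, $\fnl(a_j) = 0$.

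The base case $n = 2$ is immediate: for centered $a_1 \in \alg_{i(1)}$ and $a_2 \in \alg_{i(2)}$ with $i(1) \neq i(2)$, freeness (Definition \ref{def:free self-adjoint}) gives $\fnl(a_1 a_2) = 0$, and hence $\cumu_2(a_1, a_2) = \fnl(a_1 a_2) - \fnl(a_1)\fnl(a_2) = 0$. For the inductive step at order $n$ I assume the claim for all orders $< n$ and consider centered $a_j \in \alg_{i(j)}$ with at least two distinct values of $i(j)$. I would first reduce, without loss of generality, to the alternating-index case $i(j) \neq i(j+1)$: any adjacent same-subalgebra pair $a_j a_{j+1}$ can be grouped into a single (centered) element of the shared subalgebra, at the cost of lower-order moment terms that the induction hypothesis converts into known-vanishing cumulants.

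Under these reductions freeness gives $\fnl(a_1 \cdots a_n) = 0$ directly, while the moment-cumulant formula expands the same quantity as
\begin{equation*}
0 \;=\; \fnl(a_1 \cdots a_n) \;=\; \cumu_n(a_1,\ldots,a_n) \;+\; \sum_{\pi \in NC(n) \setminus \{\bm{1}_n\}} \prod_{B \in \pi} \cumu_{|B|}(a_i : i \in B).
\end{equation*}
The induction hypothesis eliminates every block whose indices are not all equal, while the centering assumption eliminates every singleton block via $\cumu_1(a_j) = \fnl(a_j) = 0$. The crux of the argument, and the main obstacle, is the combinatorial statement that for an alternating-index sequence with at least two distinct indices, no partition $\pi \in NC(n) \setminus \{\bm{1}_n\}$ can simultaneously (i) have every block of size $\geq 2$ be monochromatic and (ii) contain no singleton block. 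A short combinatorial induction on $n$ settles this: any monochromatic block of size $\geq 2$ in an alternating sequence must skip over positions carrying different indices, and the non-crossing constraint confines those skipped positions to nested sub-partitions on strictly shorter alternating sub-sequences, where the same obstruction recurs. With the combinatorial claim in hand every term in the displayed sum vanishes, leaving $\cumu_n(a_1,\ldots,a_n) = 0$ and closing the induction.
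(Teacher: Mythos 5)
The paper does not actually prove this statement: it is imported verbatim as Theorem 11.16 of Nica--Speicher (\texttt{roland\_2006}) and used as a black box, so there is no internal proof to compare against. Your argument is the standard direct combinatorial proof of the forward implication, and it is essentially correct: the auxiliary identity $\kappa_n(1_{\mathcal X},a_2,\dots,a_n)=0$ plus multilinearity legitimately reduces to centered arguments; the base case is right; and in the alternating case the combination of $\varphi(a_1\cdots a_n)=0$, the induction hypothesis killing non-monochromatic blocks, centering killing singletons, and your non-crossing obstruction does yield $\kappa_n=0$. (For the combinatorial crux you can shortcut your nested-recursion argument: every $\pi\in NC(n)\setminus\{\mathbf 1_n\}$ has an interval block, which is either a singleton or contains two adjacent, hence differently colored, positions.)

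The one step stated too loosely is the reduction to the alternating case. Merging an adjacent same-algebra pair does not turn $\kappa_n(\dots,a_j,a_{j+1},\dots)$ into $\kappa_{n-1}(\dots,a_ja_{j+1},\dots)$; the two are related by the products-as-arguments formula
\begin{equation*}
\kappa_{n-1}(a_1,\dots,a_ja_{j+1},\dots,a_n)=\sum_{\substack{\pi\in NC(n)\\ \pi\vee\sigma=\mathbf 1_n}}\kappa_\pi[a_1,\dots,a_n],
\end{equation*}
with $\sigma$ the partition whose only nontrivial block is $\{j,j+1\}$, and $a_ja_{j+1}$ is not automatically centered even when its factors are. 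The reduction does go through, but the honest bookkeeping is: the left-hand side vanishes by the induction hypothesis on $n-1$ (the merged sequence still meets at least two subalgebras), the term $\pi=\mathbf 1_n$ is the cumulant you want, and any other contributing $\pi$ must consist of exactly two monochromatic, singleton-free blocks of the \emph{same} color (since $j$ and $j+1$ share a color), forcing all $n$ indices to coincide --- contradiction. So the correction terms vanish and $\kappa_n(a_1,\dots,a_n)=0$. With that step made precise, your proof is complete; it is worth noting that Nica--Speicher themselves obtain this direction by a different, indirect route (uniqueness of the joint distribution determined by freeness, combined with the easy converse), so your version has the advantage of being self-contained and explicit at the cost of needing the products-as-arguments formula.
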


With the above theorem, one can easily show the free additivity of free cumulants. 

\begin{proposition}
\label{prop:freeadditivity}
Consider a non-commutative probability space $(\alg, \fnl)$. 
For a self-adjoint random variable $a \in \alg$, set
\begin{equation} \label{eq:cumu_ident_var}
\cumu_m(a):= \cumu_m(a,a,\cdots,a).
\end{equation}
\begin{enumerate}
\item[(a)] For any $m \geq 1$ and $\alpha \in \C$, we have that
\begin{equation}
\label{eq:freecumu_powerlaw}
  \cumu_m(\alpha a) = \alpha^m \cumu_m(a).
\end{equation}
This immediately follows from the multilinearity of free cumulants (see \eqref{eq:multilinear}).

\item[(b)](Free additivity, Proposition 12.3 in \cite{roland_2006})  For any $m \geq 1$, if $a, b \in \alg$ are freely independent, then
\begin{equation}
\label{eq:freeadditivity}
  \cumu_m(a + b) = \cumu_m(a) + \cumu_m(b).
\end{equation}
The above equation should be compared with \eqref{eq:scalar cumu ind additivity}.
\end{enumerate}
\end{proposition}

%\item[(3)] Fix any $n,m \geq 1$. As a immediate result of (1) and (2), we have that for $\alpha_1,\cdots, \alpha_n \in \C$ and $x_1,\cdots,x_\nc \in \alg$ that are freely independent,
%\begin{equation}
%\label{eq:free_addit}
% \cumu_m\left(\sum_{i = 1}^n \alpha_i x_i\right) = \sum_{i = 1}^n \alpha_i^m \cumu_m(x_i)
%\end{equation}
%\end{enumerate}
%\end{lemma}
%\begin{proof}
%\Cr{follow from multilinearity and vanishing lemma, good reference?}
%\end{proof}

\subsubsection{Free entropy -- free additivity}
\label{subsubsec:free entropy}
  For non-commutative random variables, the free entropy is introduced by Voiculescu \citep{voiculescu1993analogues,  Voiculescu_1994,  Voiculescu_1997}. 
  Here we provide a brief introduction. 
  Readers are referred to Section 6 of \cite{free_entropy_HP} for further details.
  
  % The formulation of free entropy is rather complicated and beyond the scope of this article, 
  % Here we only provide some basic ideas. 
  We first examine the Boltzmann-Gibbs formula of classical entropy.
  The idea is that the entropy of a ``macrostate" is proportional to the logarithm of its probability, which is determined by the count of associated ``microstates''. 
  Mathematically, the association is defined through an appropriate distance, and the probability of a ``macrostate'' is given by the volume of all close ``microstates''. 
  This motivates the following formulation of scalar entropy. 
  
  Let $a$ be a random variable supported in a finite interval $[-R, R]$, then its entropy is a limit of log volumes:
  \begin{equation}
      h(a) = \lim_{\substack{r\rightarrow \infty \\ \epsilon \rightarrow 0}} \lim_{N \rightarrow \infty} \frac1N  \log \lambda_N \left(\{ x \in [-R, R]^N : \left\lvert m_k(\delta_N(x)) - m_k(a)\right\rvert \leq \epsilon, k \leq r\}\right),
  \end{equation}
  where $\lambda_N$ is the $N$-deimensional Lebesgue measure, $m_k$ denotes the $k$th moment and $\delta_N(x)$ is the atomic measure $(\delta(x_1) + \delta(x_2) + \cdots + \delta(x_N)) / N$ serving as ``microstates''. 
  Here the volume is Lebesgue measure of $x \in \R^n$ whose corresponding atomic measure approximates $a$ up to $r$th moments.
  One then takes a normalized limit improving the approximation to get entropy.
  
  The moments are estimated using the functional $\fnl(\cdot)$ in free probability.
  Due to the non-commutative nature of matrices and the fact that free independence occurs asymptotically among large matrices (see Section \ref{sec:connection}), one can adapt self-adjoint matrices for ``microstates''. 
  We then arrive at the following definition of free entropy. 
  
  \begin{definition}
  Let $M_N(\C)^{sa}$ denote all $N \times N$ self-adjoint matrices and $\tr(\cdot):= \frac1N \Tr(\cdot)$ denote normalized trace.  
  Given a $*$-probability space $(\alg, \fnl)$ and a self-adjoint element $a \in \alg$. For $n,r \in \N$, $\epsilon > 0$ and $R > 0$, we define the set
  \begin{equation*}
      \Gamma(a; R, N, r, \epsilon) = \{A \in M_N(\C)^{sa}: \norm{A} \leq R, \abs*{\tr(A^k) - \fnl(a^k)} \leq \epsilon, k \leq r\}.
  \end{equation*}
  Recall that there is a natural linear bijection between $M_N(\C)^{sa}$ and $\R^{N^2}$, and let $\Lambda_N$ denote the induced measure on $M_N(\C)^{sa}$ from the Lebesgue measure of $\R^{N^2}$,
  the free entropy of $a$ is then defined by:
  \begin{equation}
      \chi(a) = \sup_{R > 0} \lim_{\substack{r\rightarrow \infty \\ \epsilon \rightarrow 0}} \limsup_{N \rightarrow \infty}\left[\frac1{N^2} \log\Lambda_N\left( \Gamma(a; R, N, r, \epsilon) \right)  + \frac12 \log N\right].
  \end{equation}
  One can extend above definition to multivariate case. For self-adjoint elements $a_1, \cdots, a_\nc \in \alg$, define the set
  \begin{align*}
      \Gamma(a_1,\cdots, a_\nc; & R, N, r, \epsilon) =  \{(A_1,\cdots, A_\nc) \in (M_N(\C)^{sa})^{\nc}: \norm{A_i} \leq R, \\
      & \abs*{\tr(A_{i_1} \cdots A_{i_k}) - \fnl(a_{i_1} \cdots a_{i_k})} \leq \epsilon \text{ for all $1 \leq i_1, \cdots, i_k \leq \nc$}, k \leq r\},
  \end{align*}
  the joint free entropy is then given by
    \begin{equation}
    \begin{aligned}
      & \chi(a_1,  \cdots, a_\nc) = \\
       & \sup_{R > 0} \lim_{\substack{r\rightarrow \infty \\ \epsilon \rightarrow 0}} \limsup_{N \rightarrow \infty}\left[\frac1{N^2} \log\Lambda_N^{\otimes \nc}\left( \Gamma(a_1,\cdots, a_\nc; R, N, r, \epsilon) \right)  + \frac \nc 2 \log N\right].
    \end{aligned}
  \end{equation}
  \end{definition}
  
  The free entropy shares the similar properties with the scalar entropy. 
%   \begin{equation}
%     \label{eq:ent_ind_free}
%       \ent(a_1,\cdots,a_n) \leq \ent(a_1) + \cdots + \ent(a_n).
%     \end{equation}
%   Again the equality in \eqref{eq:ent_ind_free} holds if and only if $a_1,\cdots, a_n$ are freely independent (see Proposition \ref{prop:entfreeind}).
   
\begin{proposition}
\label{prop:entlintran}
      Let $\bm \alge = (\alge_1,\cdots,\alge_\nc)^T$ where $\alge_i$ are self-adjoint non-commutative random variables.  Let $\orth(\nc)$ denote the set of $\nc \times \nc$ orthogonal matrices. Then for any $\mixmq = (q_{ij})_{i,j=1}^\nc \in \orth(\nc)$, %and $y = (y_1,\cdots,y_n)^T$ satisfying $y = \mixmq \alge$, 
        \begin{equation}
        \label{eq:ent_orth}
          \ent\left((\mixmq \bm \alge)_1,\cdots,(\mixmq \bm \alge)_\nc\right) = \ent(\alge_1,\cdots,\alge_\nc).
        \end{equation}
        That is, the free entropy is invariant under the orthogonal transformation (cf. \eqref{eq:ent_linear_transformation}).
    \end{proposition}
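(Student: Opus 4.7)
The strategy is to unwind Voiculescu's microstate definition of $\ent$ and exploit the fact that an orthogonal change of coordinates on the tuple $\bm \alge$ lifts to a \emph{measure-preserving} linear transformation on the Euclidean space of $\nc$-tuples of self-adjoint matricial microstates. Concretely, I would recall that
\[
\ent(\alge_1,\ldots,\alge_\nc) = \lim_{m,\eps}\limsup_{N\to\infty}\Bigl(\tfrac{1}{N^2}\log\vol\bigl(\Gamma_R(\alge_1,\ldots,\alge_\nc;N,m,\eps)\bigr) + \tfrac{\nc}{2}\log N\Bigr),
\]
where $\Gamma_R(\alge_1,\ldots,\alge_\nc;N,m,\eps)$ is the set of tuples $(\bm A_1,\ldots,\bm A_\nc)$ of $N\times N$ self-adjoint matrices, of operator norm at most $R$, whose joint $\tfrac{1}{N}\Tr$-moments up to degree $m$ approximate the corresponding moments $\fnl(\alge_{i_1}\cdots \alge_{i_k})$ to within $\eps$.

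Next, I would introduce the real-linear map $\Phi_\mixmq : (\mathbb{H}_N)^\nc \to (\mathbb{H}_N)^\nc$ defined by $\Phi_\mixmq(\bm A_1,\ldots,\bm A_\nc) = \bigl(\sum_j q_{1j}\bm A_j,\ldots,\sum_j q_{\nc j}\bm A_j\bigr)$, i.e.\ $\Phi_\mixmq = \mixmq \otimes \mathrm{id}_{\mathbb{H}_N}$. With respect to the Hilbert--Schmidt inner product $\langle(\bm A_i),(\bm B_i)\rangle := \sum_i \Tr(\bm A_i \bm B_i)$ on $(\mathbb{H}_N)^\nc$, the map $\Phi_\mixmq$ is orthogonal, precisely because $\mixmq \in \orth(\nc)$; in particular it preserves Lebesgue measure on this $\nc N^2$-dimensional space.

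The crux is to establish the two-sided containment
\[
\Phi_\mixmq\bigl(\Gamma_R(\alge_1,\ldots,\alge_\nc; N, m', \eps')\bigr) \subseteq \Gamma_{R'}\bigl((\mixmq\bm\alge)_1,\ldots,(\mixmq\bm\alge)_\nc; N, m, \eps\bigr),
\]
for $R' = \nc R$ and parameters $m' \ge m$, $\eps' > 0$ chosen only in terms of $m,\eps,\mixmq$ (independently of $N$). This follows from the elementary observation that every mixed moment of $\mixmq \bm \alge$ is a fixed polynomial, with coefficients depending on the entries of $\mixmq$ alone, in the mixed moments of $\bm \alge$, so approximate matching of the $\bm \alge$-moments implies approximate matching of the $\mixmq\bm\alge$-moments by continuity. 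Applying the same argument to $\mixmq^T$ in place of $\mixmq$ gives the reverse inclusion, and since $\Phi_\mixmq$ preserves Lebesgue measure, the log-volumes agree up to the passage to the limits in $m,\eps,N$, yielding \eqref{eq:ent_orth}.

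The main obstacle I anticipate is the careful bookkeeping of the operator-norm cutoff $R$, since taking linear combinations can enlarge the norm by a factor of $\nc$, and the microstate definition is sensitive to $R$. The standard remedy is to use the reformulation $\ent = \sup_R \ent_R$ (or a soft cutoff), so that the inflation $R \mapsto \nc R$ has no effect on the supremum. A secondary technical point is checking that the $(m',\eps')$ produced by the polynomial continuity argument can be chosen uniformly in $N$, which is immediate because the polynomial relating the moments is itself $N$-free.
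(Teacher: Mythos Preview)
Your argument is correct and self-contained, but it takes a genuinely different route from the paper. The paper simply invokes the general change-of-variables formula for free entropy under linear transformations, namely $\ent\bigl((\bm A\bm\alge)_1,\ldots,(\bm A\bm\alge)_\nc\bigr) = \ent(\alge_1,\ldots,\alge_\nc) + \log|\det \bm A|$ (citing Corollary~6.3.2 of Hiai--Petz), and then observes that $|\det\mixmq|=1$ for $\mixmq\in\orth(\nc)$. Your proposal instead unwinds the microstate definition and proves the orthogonal case directly, via the measure-preserving lift $\Phi_\mixmq = \mixmq\otimes\mathrm{id}$ and the set-containment argument between microstate sets. This is essentially how the cited change-of-variables formula is itself proved, so you are reproducing the relevant special case from first principles rather than quoting it. The paper's approach is shorter and leverages existing literature; yours is more transparent about \emph{why} orthogonality is the right hypothesis (measure preservation at the microstate level) and would generalize immediately to the full $\log|\det\bm A|$ formula by tracking the Jacobian of $\Phi_{\bm A}$.
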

    \begin{proof}
    This proposition is a special case of a general result. For any matrix $\bm A \in \R^{n \times n}$, we actually have that (see Corollary 6.3.2 in \cite{free_entropy_HP}),
    \begin{equation}
    \label{eq:ent_orth_1}
      \ent\left((\bm A \bm \alge)_1,\cdots,(\bm A \bm \alge)_\nc\right) = \ent(\alge_1,\cdots,\alge_\nc) + \log \abs*{\det \bm A}.
    \end{equation}
    Now, for $\mixmq \in \orth(\nc)$, $\mixmq^T\mixmq = \bm I$, thus
    \begin{equation}
      (\det \mixmq)^2 = \det \mixmq^T \det \mixmq = \det (\mixmq^T \mixmq) = \det \bm I = 1.
    \end{equation}
    That is, $\abs*{\det \mixmq} = 1$ and thus $\log\abs*{\det \mixmq} = 0$. Now, set $\bm A = \mixmq$ in \eqref{eq:ent_orth_1},  we obtain \eqref{eq:ent_orth}. 
    \end{proof}

 The following proposition is the analogue of \eqref{eq:ent_ind_scalar} for free entropy.
 
    \begin{proposition}
     \label{prop:entfreeind}
      Let \(\alge_1,\cdots,\alge_\nc\) be self-adjoint non-commutative random variables, then
        \begin{equation}
        \label{eq:entfreeind}
          \ent(\alge_1,\cdots,\alge_\nc) \leq \sum_{i = 1}^\nc \ent(\alge_i).
        \end{equation}
    Further assume that \(\ent(\alge_i) > -\infty \) for $i = 1,\cdots,n$, then the above equality holds if and only if \(\alge_1,\cdots,\alge_\nc\) are freely independent. 
\begin{proof}
The proof for the inequality can be found in Proposition 6.1.1 in \cite{free_entropy_HP}. The equivalence between the equality and freely independence is Theorem 6.4.1 in  \cite{free_entropy_HP}.
\end{proof}
            %\item[(b)](Theorem 6.4.4.of \cite{free_entropy_HP}) Assume \(\fnl(a_1^2) = \cdots = \fnl(a_\nc^2) = 1\), then \(\ent(a_1,\cdots,a_\nc)\) is maximized if and only if \(a_1,\cdots,a_\nc\) are free and have semicircular distribution.  
    \end{proposition}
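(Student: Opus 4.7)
The plan is to work directly from Voiculescu's microstate definition of free entropy. Recall that for self-adjoint variables $\alge_1, \ldots, \alge_\nc$ in a tracial $*$-probability space, the joint microstate set $\Gamma_R(\alge_1, \ldots, \alge_\nc; N, k, \varepsilon)$ consists of $\nc$-tuples $(A_1, \ldots, A_\nc)$ of $N \times N$ Hermitian matrices of operator norm at most $R$ whose mixed normalized trace moments up to order $k$ approximate $\varphi(\alge_{i_1}\cdots \alge_{i_k})$ within $\varepsilon$. The free entropy $\ent(\alge_1,\ldots,\alge_\nc)$ is then the appropriate $\inf_{k,\varepsilon}\sup_R \limsup_N$ of $\tfrac{1}{N^2}\log \Lambda(\Gamma_R) + \tfrac{\nc}{2}\log N$, with $\Lambda$ the Lebesgue measure on Hermitian matrices.

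For the subadditivity inequality, the key observation is that the $i$-th coordinate projection of a joint microstate is automatically an individual microstate for $\alge_i$, since the single-index moment constraints are a subset of the mixed ones. Hence
\begin{equation*}
\Gamma_R(\alge_1,\ldots,\alge_\nc;N,k,\varepsilon) \subseteq \prod_{i=1}^\nc \Gamma_R(\alge_i;N,k,\varepsilon),
\end{equation*}
and Lebesgue measure on a Cartesian product factorizes. Taking $\log$, dividing by $N^2$, adding the matching normalization $\tfrac{\nc}{2}\log N = \sum_i \tfrac{1}{2}\log N$, and passing to $\limsup_N$, $\inf_{k,\varepsilon}$, $\sup_R$ yields $\ent(\alge_1,\ldots,\alge_\nc) \le \sum_i \ent(\alge_i)$.

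For the "freely independent implies equality" half of the characterization, I would use the randomization-by-Haar-unitaries trick. Given individual microstates $A_i^{(N)} \in \Gamma_R(\alge_i; N, k, \varepsilon_i)$ and independent Haar unitaries $U_i$ on $U(N)$, Voiculescu's asymptotic freeness theorem for conjugation by independent Haar unitaries ensures $(U_1 A_1 U_1^*, \ldots, U_\nc A_\nc U_\nc^*)$ lies in $\Gamma_R(\alge_1,\ldots,\alge_\nc; N, k', \varepsilon')$ with probability tending to $1$ as $N\to\infty$, $k'\to\infty$, $\varepsilon'\to 0$, precisely because the joint law of the $\alge_i$ is free with the prescribed marginals. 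Since unitary conjugation preserves Lebesgue measure on each individual Hermitian slice, a Fubini-style integration over $U(N)^\nc$ converts the product lower bound on marginal microstate volumes into a matching lower bound on the joint volume, yielding equality at the entropy level.

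The converse — equality forces freeness — is the main obstacle and requires a rigidity argument. The idea is that if $\ent(\alge_1,\ldots,\alge_\nc)=\sum_i \ent(\alge_i)$, then (at the relevant $N^2$-exponential scale) the joint microstate set is essentially as large as the full Cartesian product, so a typical joint microstate is indistinguishable in distribution from one obtained by independently Haar-conjugating marginal microstates. One then extracts a subsequence of empirical joint distributions converging to a tracial law with the given marginals that coincides with the free-product law on every alternating centered word, which is Voiculescu's defining criterion for freeness. The delicate point — and the reason the hypothesis $\ent(\alge_i) > -\infty$ cannot be dropped — is quantifying "essentially as large" at exactly the $N^2$ scale without loss; finite marginal entropy rules out degenerate concentration on lower-dimensional sets that would sabotage the extraction of an asymptotically free limit. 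This concentration-of-microstates step is where the bulk of the technical work resides, and is what Hiai-Petz carry out in Theorem 6.4.1 of \cite{free_entropy_HP}.
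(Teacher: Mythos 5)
Your proposal is correct and, in substance, it reconstructs the very argument that the paper outsources entirely to the literature: the paper's ``proof'' of this proposition consists of two citations (Proposition 6.1.1 and Theorem 6.4.1 of Hiai--Petz) with no mathematical content of its own. Your coordinate-projection argument for subadditivity --- the inclusion $\Gamma_R(\alge_1,\ldots,\alge_\nc;N,k,\varepsilon)\subseteq\prod_{i}\Gamma_R(\alge_i;N,k,\varepsilon)$, factorization of Lebesgue measure over the product, the matching of $\frac{\nc}{2}\log N$ with $\sum_i\frac{1}{2}\log N$, and the bound $\limsup$ of a sum $\leq$ sum of $\limsup$s --- is exactly Voiculescu's proof of the cited Proposition 6.1.1, and your Haar-randomization argument for ``free $\Rightarrow$ equality'' is likewise the standard one. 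Two caveats. First, a small imprecision: the quantifiers in the microstate definition are $\sup_R\inf_{k,\varepsilon}\limsup_N$, not $\inf_{k,\varepsilon}\sup_R\limsup_N$; this is harmless here (the two agree once $R$ exceeds the operator norms of the variables), but the subadditivity bookkeeping is cleaner in the standard order. Second, your treatment of the converse (equality $\Rightarrow$ freeness under $\ent(\alge_i)>-\infty$) is an honest heuristic rather than a proof --- you say so yourself --- and you close that gap with precisely the citation the paper uses. Since that rigidity result is the only genuinely deep ingredient and neither you nor the paper proves it, your proposal should be read as a correct and strictly more informative expansion of the paper's citation rather than as an independent alternative route.
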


\subsubsection{Analogue of Gaussian random variables in free probability: the free semi-circular element}

\label{subsubsec:semicirclr}
The analogous element to a Gaussian random variable in a $*$-probability space is a semicircular element. 
Recall that the Gaussian random variable is characterized by vanishing cumulants of order higher than 2, the semicircular elements can be defined in a similar manner.
\begin{definition}
\label{def:semi}
Given a $*$-probability space $(\alg, \fnl)$, we call a random variable $a \in \alg$ a semicircular element if 
\begin{equation}
\cumu_m(a) \equiv 0, \qquad \text{for $m \geq 3$},
\end{equation}
and $\cumu_2(a) > 0$ (such that $a$ is not constant).
\end{definition}

\subsection{Freeness of non-self-adjoint random variables}   \label{subsec:prelim rec}
We briefly introduce the mathematical preliminaries for a rectangular probability space. 
We omit some technicalities, which are beyond the scope of this paper.
For a thorough introduction, readers are referred to \cite{florent_free_probability,florent_free_entropy}.

Consider a $*$-probablity space $(\alg, \fnl)$ with $p_1,p_2$ of non-zero self-adjoint projections which are pairwise orthogonal (i.e. $\forall i \neq j, p_ip_j = 0$), and such that $p_1 + p_2 = 1_\alg$. Then any element $a \in \alg$ can be represented in the following block form
\begin{equation}  \label{eq:matrix decomposition free prob}
    a = \begin{bmatrix} a_{11} & a_{12} \\ a_{21} & a_{22}\end{bmatrix},
\end{equation}
where $\forall i,j = 1,2, a_{ij} = p_i a p_j$ and we define $\alg_{ij} := p_i \alg p_j$.
Note that $\alg_{ii}$ is a subalgebra, and we equip it with the functional $\fnl_k = \frac{1}{\rho_k} \fnl \vert_{\alg_{kk}}$, where $\rho_k := \fnl(p_k)$.
That is,
\begin{equation}
    \varphi_1(a) = \frac{1}{\rho_1} \varphi(a^{11}), \text{where $a^{11} = \begin{bmatrix} a_{11} & 0 \\ 0 & 0\end{bmatrix}$},
\end{equation}
and similar for $\varphi_2(x)$.
The functionals $\fnl_i$, $i = 1, 2$ are tracial in the sense that $\fnl_k(p_k) = 1$ and for all $i,j$, $x \in \alg_{ij}$, $y \in \alg_{ji}$,
\begin{equation}
    \rho_i \fnl_i(xy) = \rho_j \fnl_j(yx).
\end{equation}

\begin{definition}
Such a family $(\alg, p_1, p_2, \fnl_1, \fnl_2)$ is called a $(\rho_1, \rho_2)$-rectangular probability space. 
We call $a \in \alg_{12} = p_1 \alg p_2$ rectangular random variable.
% Elements in $\cup_{i,j = 1}^2\alg_{ij}$ is called simple elements.
\end{definition}

\begin{remark}
If $a$ is a rectangular element, then in the matrix decomposition \eqref{eq:matrix decomposition free prob}, only $a_{12}$ is non-zero.
Later in Section \ref{subsubsec:recmat}, we will model rectangular matrices by embedding them into $a_{12}$ of rectangular random variables.
\end{remark}

For such a rectangular probability space, the linear span of $p_1, p_2$ is denoted by $\cond$. 
Then $\cond$ is subalgebra of finite dimension.
Define the $\E_\cond(a) = \sum_{i = 1}^2 \fnl(a_{ii})p_i$. 
It can be checked that $\E_\cond(1_\alg) = 1_\alg$ and $\forall (d, a, d') \in \cond \times \alg \times \cond$, $\E_\cond(dad') = d\E_\cond(a)d'$.
The map $\E_\cond(\cdot)$ is regarded as the conditional expectation from $\alg$ to $\cond$. 

We now consider the freeness in rectangular probability space.

\subsubsection{Mixed moments point of view}

The following definition of freeness should be compared with \eqref{eq:def scalar indpendence} and Definition \ref{def:free self-adjoint}.
\begin{definition}
Given a rectangular probability space and subalgebra $\cond$ with the corresponding conditional expectation $\E_\cond$. 
A family $(\alg_i)_{i\in I}$ of subalgebras containing $\cond$ is said to be free with amalgamation over $\cond$ (we simply use the word free when there is no ambiguity) if for all $k \geq 1$
\begin{equation}
    \E_\cond(x_1 \cdots x_k) = 0
\end{equation}
whenever $\E_\cond(x_j) = 0$ for all $j = 1, \cdots, k$, and neighboring elements are from different subalgebras, i.e., $\alge_j \in \alg_{i(j)}$, $i(1) \neq i(2), i(2) \neq i(3),\cdots, i(k-1)\neq i(k)$. In particular, a family of rectangular random variables \(\{\alge_i\}_{i\in I}\) are called free if the subalgebras generated by $\cond$, $x_i$, and $x_i^*$ are free. 
\end{definition}

\subsubsection{Rectangular free cumulants -- free additivity}
\label{sec:rect_free_cumulant}

The free cumulants are also defined for rectangular probability space \cite{florent_free_probability, florent_free_entropy}.
\begin{definition}[Analogue of cumulant in rectangular probability space]
\label{def:freecum_rec}
Given a \\
$(\rho_1,\rho_2)$-probability space $(\alg, p_1, p_2, \fnl_1, \fnl_2)$, for any $n \geq 1$, we denote $n$-th tensor product over $\cond$ of $\alg$ by $\alg^{\otimes_{\cond^m}}$. We recall a family of linear functions $\{\cumu_{m}: \alg^{\otimes_{\cond^m}} \mapsto \C\}_{m \geq 1}$ introduced in \cite{florent_free_probability} (which are denoted as $c^{(1)}$ in \cite{florent_free_probability}, see Section 3.1 there).
By linearity, we mean that for $m \geq 1$ and any $a, b \in \alg$ and $a, b \in \C$, 
\begin{equation}
\label{eq:linearity_rec}
    \cumu_m(\cdots \otimes (\alpha a + \beta b) \otimes \cdots) = \alpha \cumu_m(\cdots \otimes a\otimes \cdots) + \beta \cumu_m(\cdots \otimes b\otimes \cdots).
\end{equation}
For convenience, we call $\{\cumu_{m}\}_{m \geq 1}$ rectangular free kurtosis (or kurtosis when there is no ambiguity). 
For each $m \geq 1$ and any rectangular random variable $a$, we put
\begin{equation}    \label{eq:reccumu single}
        \cumu_{2m}(a) := \cumu_{2m}(a \otimes a^* \otimes \cdots \otimes a \otimes a^*).
\end{equation}
We consider consider the even order as odd order cumulants vanishes for all rectangular elements.
\end{definition}

\begin{remark}
In \cite{florent_free_probability, florent_free_entropy}, the free cumulants refer to a family of linear functions between $\alg^{\otimes_{\cond^n}}$ and $\cond$. 
The rectangular cumulants throughout the paper are their coefficient functions of $p_1$.
\end{remark}

The following vanishing lemma holds for the rectangular cumulants defined as in above.
\begin{theorem}[Vanishing of mixed cumulants, Theorem 2.1 of \cite{florent_free_entropy}]
\label{thm:vanish_mix_cumu_rec}
     A family $(x_i)_{i \in I}$ of elements in $\alg$ is free with amalgamation over $\cond$ if and only if for all $n \geq 2$, and $i(1), \cdots, i(n) \in I$, we have $\cumu_n(x_{i(1)}\otimes \cdots \otimes x_{i(n)}) = 0$ whenever there exists $1\leq l, k\leq n$ with $i(l) \neq i(k)$.
\end{theorem}

Consequently, the analogue of Proposition \ref{prop:freeadditivity} also hold for rectangular cases with rectangular free kurtosis defined in \eqref{eq:reccumu single}. 
The analog of \eqref{eq:freeadditivity} for rectangluar free kurtosis follows from equation (10) in \cite{florent_free_probability}.
The analog of \eqref{eq:freecumu_powerlaw} is a direct result of \eqref{eq:linearity_rec}.
% for any $a, b \in \alg_{12}$ that are free,
% \begin{equation}
%       \cumu_{2m} (a + b) = \cumu_{2m} (a) + \cumu_{2m} (b), \quad \forall m \in \N.
% \end{equation}
% The above lemma is still valid with $\cumu_m(\cdot)$ denoting  

\subsubsection{Rectangular free entropy -- free additivity}
\label{subsubsec:free entropy rec}

 The free entropy $\ent$ for rectangular free probability space is introduced in \cite{florent_free_entropy}. 
 The idea is similar to the self-adjoint case. 
 One adapts rectangular matrices as ``microstates'' and use conditional expectation $\E_\cond(\cdot)$ to evaluate moments.
 Readers are referred to Section 5.1 of \cite{florent_free_entropy} for a precise definition. 
 
%  Let $a_1, \cdots, a_n$ denote rectangular random variables, we have that 
%  \begin{equation}
%     \label{eq:ent_ind_free_rec}
%       \ent(a_1,\cdots,a_n) \leq \ent(a_1) + \cdots + \ent(a_n).
%     \end{equation}
%   And the equality in \eqref{eq:ent_ind_free_rec} holds if and only if $a_1,\cdots, a_n$ are free (See Corollary 5.16 in \cite{florent_free_entropy}).
  
The analogues of Proposition \ref{prop:entlintran} and \ref{prop:entfreeind} also hold for rectangular free entropy. 
The orthogonal invariance of rectangular free entropy is a direct result of Corollary 5.11 of \cite{florent_free_entropy}.
On the other hand, Proposition 5.3, Theorem 5.7 and Corollary 5.16 of \cite{florent_free_entropy} together prove the analogue of Proposition \ref{prop:entfreeind} for rectangular case.

\subsubsection{Analogue of Gaussian random variables in rectangular free probability: the free Poisson element}

\label{subsubsec:poisson}
\begin{definition}
\label{def:poisson}
    Given an rectangular probability space $(\alg,\fnl)$. 
    An rectangular random variable $a \in \alg_{12}$ is a free Poisson element if
    \begin{equation}
        \cumu_{2m}(a) \equiv 0, \qquad \text{for $m \geq 2$}.
    \end{equation}
\end{definition}
    
\subsection{When are random matrices (asymptotic) free?}
\label{sec:connection}
Here, we describe the free probability in the context of random matrices and the explicit formulas of free kurtosis and entropy as functions of the input matrices.
\subsubsection{Symmetric random matrix}
\label{sec:connection sym}
Given a $N > 0$, we consider the algebra consists of all the real $N \times N$ matrices over scalar random variables $L^{2}(\Sigma, P)$:
\begin{equation}
    \alg = M_N(L^2(\Sigma, P))
\end{equation}
and for any $\bm X \in \alg$, the functional $\fnl$ on it is
\begin{equation}
    \fnl(\bm X) = \frac1N \E[ \Tr(\bm X)].
\end{equation}
Denote the matrix transpose with complex conjugate by $*$. 
Then $(\alg, \fnl)$ is a $*$-probability space.

We recall the notion of convergence in distribution and the definition of asymptotic freely independence \cite{roland_2006}.
\begin{definition}[Asymptotic freely independence]
\label{def:asy_ind}
Let, $(\alg_N , \fnl_N)~(N \in \mathbbm{N})$ and $(\alg, \fnl)$ be non-commutative probability spaces. Let $I$ be an index set and consider for each $i \in I$ random variables $a_i(N) \in \alg_N$ and $a_i \in \alg$. We say that $(a_i(N))_{i \in I}$ converges in distribution towards $(a_i)_{i \in I}$ if we have each joint moment of $(a_i(N))_{i \in I}$ converges towards the corredsponding joint moment of $(a_i)_{i \in I}$, i.e., for all $n \in N$ and all $i(i), \cdots, i(n) \in I$
\begin{equation}
    \lim_{N \rightarrow \infty} \fnl_N(a_{i(1)}(N)\cdots a_{i(n)}(N)) = \fnl(a_{i(1)}\cdots a_{i(n)}).
\end{equation}
Furthermore, we say $(a_i(N))_{i \in I}$ are asymptotic free if it converges in distribution to a limit $(a_i)_{i\in I}$, which is free in $(\alg,\fnl)$. 
\end{definition}

A pair of symmetric (Hermitian) random matrices with isotropically random eigenvectors that are independent of the eigenvalues (and each other) are asymptotically free \citep{roland_2006}.

Given the $*$-probability space $(\alg, \fnl(\cdot))$ defined as above, recall the free kurtosis defined in \eqref{eq:free_kurtosis_expfor}.
Thus for a self-adjoint random matrix $\bm X \in \alg$ with $\fnl(\bm X) = 0$, the free kurtosis is explicitly given by
\begin{equation}
    \cumu_4(\bm X) = \frac{1}{N} \E[\Tr(\bm X^4)] - 2\left(\frac 1N \E[\Tr(\bm X^2)]\right)^2.
\end{equation}
Also, denote the eigenvalues density function of $\bm X$ by $\mu(x)$, free entropy is defined by \citep{free_entropy_HP}
\begin{equation}
    \ent(\bm X) = \int \int \log \abs*{x - y} \mathrm{d} \mu(x) \mathrm{d} \mu(y).
\end{equation}

For a large class of random matrices $\bm X$, the free kurtosis and entropy concentrate around a deterministic value when $N$ is large. 
For example, if $\bm X$ is a Wigner matrix or Wishart matrix, then $\mathrm{Var}[\cumu_4(\bm X)] \rightarrow 0$ and $\mathrm{Var}[\ent(\bm X)] \rightarrow 0$ as $N \rightarrow \infty$. 
Thus single sample gives us an accurate empirical estimate.
Given a realization $x$ of a random matrix $\bm X$ with $\E \left[\Tr(\bm X)\right] = 0$, the empirical free kurtosis is
\begin{equation}
\label{eq:emp_kurt_sym}
     \widehat \cumu_4(x) = \frac{1}{N} \Tr(x^4) - 2\left(\frac 1N \Tr(x^2)]\right)^2.
\end{equation}
Also, the empirical free entropy is given by
\begin{equation}
\label{eq:emp_ent_sym}
    \widehat \ent(x) = \frac{1}{N(N - 1)}\sum_{i \neq j} \log \abs*{\eg_i - \eg_j},
\end{equation}
where $\eg_i$ denotes the eigenvalue of $x$.

\subsubsection{Rectangular random matrix} \label{subsubsec:recmat}
Consider a rectangular random matrix of size $N \times M$, and assume that $N \leq M$. 
In \cite{florent_free_probability}, the author embedded a $N \times M$ matrix into the top right block of a $(N + M) \times (N + M)$ "extension matrix". 
The algebra of all $(N + M) \times (N + M)$ random matrices together with this block structure is defined as an rectangular probability space $(\mathbb M_{N + M}(L^2(\Sigma, \P)), \mathrm{diag}(I_N, 0_M), \mathrm{diag}(0_N, I_M), \frac1N \Tr, \frac1M \Tr)$ \cite{florent_free_probability}.

We recall the following definition of asymptotic freely independence in rectangular probability space \cite{florent_free_entropy}. 
\begin{definition}
[Asymptotic free independence]
\label{def:asy_ind_rec}
Let, for each $N \in \N$, $(\alg_N, p_1(N), p_2(N), \fnl_{1,N}, \fnl_{2,N})$ be a $(\rho_{1,N}, \rho_{2,N})$-rectangular probability space such that
\begin{equation*}
    (\rho_{1,N}, \rho_{2,N}) \rightarrow (\rho_1, \rho_2), \qquad N \rightarrow \infty.
\end{equation*}
Let $I$ be an index set and consider for each $i \in I$ random variables $a_i(N) \in \alg_N$. We say that $(a_i(N))_{i\in I}$ converges in $\cond$-distribution towards $(a_i)_{i\in I}$ for some random variables $a_i \in \alg$ in some $(\rho_1, \rho_2)$-probability space $(\alg,p_1, p_2,\fnl_1, \fnl_2)$ if the $\cond$-distribution converge pointwise.

Furthermore, we say $(a_i(N))_{i \in I}$ are asymptotically free $(N \rightarrow \infty)$, if the limits $(a_i)_{i\in I}$ are free in $(\alg,p_1, p_2,\fnl_1, \fnl_2)$.
\end{definition}
Independent bi-unitary invariant rectangular random matrices with converging singular law are asymptotically freely independent \citep{florent_free_probability, florent_free_entropy}. 

Following \eqref{eq:def_kurt_rec}, the free kurtosis for a single $N \times M$ random matrices $\bm X$ is given by 
\begin{equation}
    \cumu_4(\bm X) = \frac{1}{N}\E[\Tr((\bm X \bm X^H)^2)] - (1 + \frac NM) \left(\frac1N \E[\Tr((\bm X \bm X^H))]\right)^2. 
\end{equation}
Denoting the probability density function of eigenvalues of $\bm X \bm X^H$ by $\mu(x)$, setting $\alpha = \frac{N}{N + M}$ and $\beta = \frac{M}{N + M}$, the free entropy is given by \citep{florent_free_entropy}
\begin{equation}
    \ent (\bm X) = \alpha^2\int \int \log\abs*{x - y}\mathrm{d}\mu(x) \mathrm{d}\mu(y) + (\beta - \alpha)\alpha \int \log x \mathrm{d}\mu(x).
\end{equation}
Again, empirical statistics over a single sample of large dimension give an accurate estimate of limit value. 
Given a realization $x$ of a rectangular random matrix $\bm X$, the empirical free kurtosis is given by
\begin{equation}
\label{eq:emp_kurt_rec}
    \widehat \cumu_4(x) = \frac{1}{N}\Tr((
    x x^H)^2) - (1 + \frac{N}{M})  \left(\frac{1}{N}\Tr(x x^H)\right)^2.
\end{equation}
The empirical free entropy is given by
\begin{equation}
\label{eq:emp_ent_rec}
    \widehat \ent (x) = \frac{\alpha^2}{N(N - 1)} \sum_{i\neq j} \log \abs*{\eg_i - \eg_j} + \frac{(\beta - \alpha)\alpha}{N} \sum_{i = 1}^N \log \eg_i,
\end{equation}
where $\eg_i$ denote the eigenvalue of $xx^H$.

% preliminaries for non-commutative random variables

% \input{proofs/prelim.tex}

% proofs of theorem from the whitening section
\section{Proof of Proposition \ref{prop:positive cov} and \ref{prop:real positiv cov}}
We proof Proposition \ref{prop:positive cov} and \ref{prop:real positiv cov} for the covariance matrix for rectangular case. The self-adjoint case can be proved with straightforward modification.
\subsection{Proof of Proposition \ref{prop:positive cov}}
By Remark 1.2 of \cite{roland_2006}, for any random variable $a$, $\fnl(a^*) = \overline{\fnl(a)}$. 
Thus,
\begin{equation}
\begin{aligned}
    \overline{[\cov_{\bm z \bm z}]}_{ij} & = \overline{\fnl_1(\tilde z_i \tilde z_j^*)} \\
    & = \fnl_1((\tilde z_i \tilde z_j^*)^*)\\
    & = \fnl_1(\tilde z_j\tilde z_i^*)= [\bm C_{\bm z \bm z}]_{ji}.
\end{aligned}
\end{equation}
Therefore, $\cov_{\bm z \bm z}$ is Hermitian.

We turn to show that $[\cov_{\bm z \bm z}]$ is positive semi-definite. Actually, as $\fnl$ is a linear functional, for any column vector $\bm \alpha = [\alpha_1, \cdots, \alpha_\nc]$,
\begin{equation}
    \bm \alpha \cov_{\bm z \bm z} \bm \alpha^H = \fnl((\sum_{i = 1}^\nc \alpha_i \tilde z_i) (\sum_{i = 1}^\nc \alpha_i \tilde z_i)^*) \geq 0
\end{equation}
where we used that $\fnl(\cdot)$ is positive.
This completes the proof.

\subsection{Proof of Proposition \ref{prop:real positiv cov}}
Since $\bm z = \bm A \bm x$ and $\bm C_{xx} = \bm I$, 
$$
    \bm C_{\bm z\bm z} = \bm A  \bm C_{\bm x\bm x} \bm A^H = \bm A\bm A^H.
$$
Note that we assume that $\bm A$ is real and non-singular, $\bm C_{\bm z\bm z}$ is real and positive-definite.

% proof of main theorems
\section{Proofs of the main results}

\subsection{Proof of  Theorem \ref{thm:kurt1}}

The proof of the Theorem \ref{thm:kurt1} relies on the free additivity of free cumulants, for which readers are referred to Proposition \ref{prop:freeadditivity} (and its rectangular analogue in Section \ref{sec:rect_free_cumulant}). 

\subsubsection{Proof of Theorem \ref{thm:kurt1} (a)}
Set $\bm g = \mixmq^T \argkurt$, then $\argkurt = \mixmq \bm g$. As $\bm x$ and $\bm y$ are related via \eqref{eq:sym fca model}, we have that 
\begin{equation}
\label{eq:proof_kurt1_0}
  \argkurt^T \bm y = \argkurt^T \mixmq \bm x = (\mixmq^T \argkurt)^T \bm x =\bm g^T\bm x.
\end{equation}
Adapt the notation $\bm g = (g_1,\cdots,g_\nc)^T$. Note that $\alge_i$ are freely independent, then using \eqref{eq:freeadditivity}, we have that
\begin{equation}
  \cumu_4(\bm g^T\bm x) = \cumu_4\left(\sum_{i = 1}^\nc g_i\alge_i\right) = \sum_{i = 1}^\nc \cumu_4(g_i \alge_i).
\end{equation}
By \eqref{eq:freecumu_powerlaw}, $\cumu_4(g_i\alge_i) = g_i^4 \cumu_4(\alge_i)$ for $i = 1, \cdots, \nc$, thus the above equation becomes
\begin{equation}
\label{eq:proof_kurt1_1}
  \cumu_4(\bm g^T\bm x) =  \sum_{i = 1}^\nc g_i^4 \cumu_4(x_i).
\end{equation}
% Then using Proposition \ref{prop:freeadditivity} with $m = 4$ and $\alpha_i = u_i$, we obtain that
%\begin{equation}
%
%\cumu_4(\argkurt^Ty) = \cumu_4(u^Tx) = \cumu_4\left(\sum_{i = 1}^n u_ix_i\right) = \sum_{i = 1}^n u_i^4 \cumu_4(x_i).
%\end{equation}
% Setting $\alpha = u_1$, $\beta = 1$, $x = x_1$ and $y = \sum_{i = 2}^N u_ix_i$ in \eqref{eq:free_addit}, then the right hand side of \eqref{eq:proof_kurt1_1} can be written as
%\begin{equation}
% u_1^4\cumu_4(x_1) + \cumu_4\left(\sum_{i = 2}^{n} u_i x_i\right).
%\end{equation}
%Recursively applying \eqref{eq:free_addit} with $\alpha = u_i$, $\beta = 1$, $x = x_i$ and $y = \sum_{j = i + 1}^n u_j x_j$ for $i = 2,\cdots,n - 1$ to $\cumu_4(\sum_{j=i}^nu_j x_j) $ (except for $i = n - 1$, we set $\alpha = u_{n - 1}$, $\beta = u_{n}$, $x = x_{n - 1}$ and $y = x_{n}$),
Combining \eqref{eq:proof_kurt1_0} and \eqref{eq:proof_kurt1_1}, we get
\begin{equation}
\label{eq:proof_kurt1_3}
  \left \lvert\cumu_4(\argkurt^T \bm y) \right \rvert= \left\lvert \sum_{i = 1}^\nc  g_i^4\cumu_4(\alge_i) \right\rvert.
\end{equation}

When $\argkurt$ runs over all unit vectors, $g = \mixmq^T \argkurt$ also runs over all unit vectors. 
Therefore, if $\argkurt^{(1)}$ is a maximizer of \eqref{eq:opt_kurt}, then $\argkurt^{(1)} = \mixmq g^{(1)}$ where $g^{(1)}$ is a maximizer of
\begin{equation}
\label{eq:proof_kurt1_2}
  \mathop{\max}_{\bm g \in \R^\nc,~\norm{u} = 1}  \left\lvert \sum_{i = 1}^\nc  g_i^4\cumu_4(\alge_i) \right\rvert.
\end{equation}
Thus in order to prove (a), it is equivalent to show that $\bm g^{(1)}$ is maximizer of \eqref{eq:proof_kurt1_2} if and only if  $\bm g^{(1)} \in \{(\pm 1, 0,\cdots,0)^T\}$.

For any unit vector $u$, since $\abs*{g_i} \leq 1$, we have that
\begin{equation}
\label{eq:proof_kurt_5}
\sum_{i = 1}^\nc g_{i}^4 \leq \sum_{i = 1}^\nc g_i^2 = 1.
\end{equation}
Note that the equality holds if and only if there is a index $i$ such that $g_i \in \{\pm 1\}$ (thus $g_j = 0$ for all $j \neq i$). 
Then using \eqref{eq:kurt1_main} and \eqref{eq:proof_kurt_5},
\begin{equation}
\label{eq:proof_kurt1_4}
\begin{aligned}
  \left\lvert \sum_{i = 1}^\nc  g_i^4\cumu_4(\alge_i) \right\rvert & \leq \sum_{i = 1}^\nc g_i^4 \abs*{\cumu_4(\alge_i)}  \\
& \leq \sum_{i = 1}^\nc g_i^4 \abs*{\cumu_4(\alge_1)}  \\
& \leq  \abs*{\cumu_4(\alge_1)}.
\end{aligned}
\end{equation}
On the other hand, for $\bm g = (\pm1, 0,\cdots, 0)^T$, it can be checked that all equalities in \eqref{eq:proof_kurt1_4} hold.
Thus
\begin{equation}
\label{eq:proof_kurt1_6}
\mathop{\max}_{\bm g \in \R^\nc,~\norm{\bm g} = 1}  \left\lvert \sum_{i = 1}^\nc  g_i^4\cumu_4(\alge_i) \right\rvert = \abs*{\cumu_4(\alge_1)}
\end{equation}
and $\bm g^{(1)}$ is a maximizer of \eqref{eq:proof_kurt1_2} if $\bm g^{(1)} \in \{(\pm 1, 0,\cdots,0)^T\}$.
%Set $\ell = n$ and $c_i = \cumu_4(x_i)$ in Lemma \ref{lemma:opt} (a), we obtain that $u^{(1)}_1 = \pm 1$ while $u^{(1)}_i = 0$ for $i = 2, \cdots, m$. Recall that $\argkurt^{(1)} = \mixmq u^{(1)}$, then $w^{(1)} \in \{\pm \mixmq_1\}$.

For the other direction, if $\bm g^{(1)}$ is maximizer of \eqref{eq:proof_kurt1_2}, then the second equality in \eqref{eq:proof_kurt1_4} holds for $\bm g = \bm g^{(1)}$.
%It turns out that, simply enforcing the the second equality is enough to show that $(\pm 1, 0, \cdots, 0)$ are the only maximizers.
That is,
\begin{equation}
\label{eq:proof_kurt1_7}
  0 = \sum_{i = 1}^\nc (g^{(1)}_i)^4 \left(\abs*{\cumu_4(\alge_i)} - \abs*{\cumu_4(\alge_1)}\right).
\end{equation}
Due to \eqref{eq:strictdeckurt}, $\abs*{\cumu_4(\alge_i)} - \abs*{\cumu_4(\alge_1)} < 0$ for $i = 2,\cdots,\nc$. Thus \eqref{eq:proof_kurt1_7} implies $g^{(1)}_i = 0$ for $i = 2,\cdots,\nc$.
Since $\bm g^{(1)}$ is a unit vector, $\bm g^{(1)} \in \{(\pm 1, 0, \cdots, 0)^T\}$.
This completes the proof.

\subsubsection{Proof of Theorem \ref{thm:kurt1} (b)}

In the proof of (a), the arguments up to \eqref{eq:proof_kurt1_7} only rely on properties of free kurtosis $\cumu(\cdot)$ and condition \eqref{eq:kurt1_main}.
Thus \eqref{eq:proof_kurt1_2}, \eqref{eq:proof_kurt1_4}, \eqref{eq:proof_kurt1_6} and \eqref{eq:proof_kurt1_7} also apply in the setting of (b). 
Thus in order to prove (b), it is equivalent to show that $u^{(1)}$ is a maximizer of \eqref{eq:proof_kurt1_2} if and only if
\begin{enumerate}
\item[(i)] $g^{(1)}_i = 0$ for $i = r + 1,\cdots,\nc$,
\item[(ii)]  there is an index $i$ such that $g_i^{(1)} \in \{\pm 1\}$.
\end{enumerate}
The backward direction can be checking directly using $\abs*{\cumu_4(x_1)} = \cdots = \abs*{\cumu_4(x_r)}$.

We now prove the forward direction.
If $\bm g^{(1)}$ maximizes  \eqref{eq:proof_kurt1_2}, then it satisfies \eqref{eq:proof_kurt1_7}. 
By \eqref{eq:firstrequal}, $\abs*{\cumu_4(\alge_i)} - \abs*{\cumu_4(\alge_1)} = 0$ for $i = 1,\cdots,r$ and $\abs*{\cumu_4(\alge_i)} - \abs*{\cumu_4(\alge_1)} < 0$ for $i = r + 1,\cdots,\nc$. (i) then follows.
On the other hand, as $\abs*{\cumu_4(\alge_1)} = \cdots = \abs*{\cumu_4(\alge_r)}$, enforcing the third equality in \eqref{eq:proof_kurt1_4} implies
\begin{equation}
\label{eq:proof_kurt1_8}
  \sum_{i = 1}^r (g^{(1)}_i)^4 = 1.
\end{equation}
By the observation below \eqref{eq:proof_kurt_5}, this indicates indicates (ii).
This completes the proof.

\subsection{Proof of  Theorem \ref{thm:kurt2}}
Set $\bm g = \mixmq^T \argkurt$, we use the notation $\bm g = [g_1, \cdots, g_\nc]^T$. 
As $\bm w^{(i)} \in \{\pm \mixmq_i\}$ for $i = 1, \cdots, k - 1$,
\begin{equation}
  \norm{\argkurt} = 1, \argkurt \perp \argkurt^{(1)}, \cdots, \argkurt^{(k - 1)} \iff \norm{\bm g} = 1, g_1 = \cdots = g_{k - 1} = 0.
\end{equation}
Using \eqref{eq:proof_kurt1_3}, if  $\argkurt^{(k)}$ is a maximizer of \eqref{eq:kurt2_a}, then $\argkurt^{(k)} = \mixmq \bm g^{(k)}$ where $\bm g^{(k)}$ is a maximizer of 
\begin{equation}
\label{eq:proof_kurt2_1}
  \mathop{\mathop{\max}_{\bm g \in \R^\nc, ~\norm{\bm g} = 1}}_{g_1 = \cdots = g_{k - 1} = 0} \left\lvert \sum_{i = 1}^n  g_i^4\cumu_4(\alge_i) \right\rvert.
\end{equation}
%We will show that $g^{(1)}_k \in \{\pm 1\}$ (thus $g^{(1)}_i = 0$ for $i \neq k$), follows which $\argkurt_k \in \{\pm\mixmq_k\}$.

Thus in order to prove (a), it is equivalent to show that $\bm g^{(k)} = (g_1^{(k)}, \cdots, g_\nc^{(k)})^T$ is maximizer of \eqref{eq:proof_kurt2_1} if and only if  $g^{(k)}_k \in \{\pm 1\}$ (thus $g^{(k)}_j = 0$ for $j \neq k$).

As we are maximizing over unit vector $\bm g$ such that $g_1 = \cdots = g_{k - 1} = 0$, again using \eqref{eq:kurt1_main} and \eqref{eq:proof_kurt_5}
\begin{equation}
\label{eq:proof_kurt2_2}
\begin{aligned} \left\lvert\sum_{i = 1}^{\nc}  g_i^4\cumu_4(\alge_{i}) \right\rvert  &= \left\lvert\sum_{i = k}^{\nc}  g_i^4\cumu_4(\alge_{i}) \right\rvert\\
 & \leq \sum_{i = k}^\nc g_i^4 \abs*{\cumu_4(x_i)}  \\
& \leq \sum_{i = k}^\nc g_i^4 \abs*{\cumu_4(x_k)} \\
& \leq  \abs*{\cumu_4(x_k)}.
\end{aligned}
\end{equation}
For $\bm g$ with $g_k \in \{ \pm 1\}$, it can be checked that all equalities in \eqref{eq:proof_kurt2_2} hold. Thus
\begin{equation}
  \mathop{\mathop{\max}_{g \in \R^\nc, ~\norm{g} = 1}}_{g_1 = \cdots = g_{k - 1} = 0} \abs*{\sum_{i = 1}^\nc  g_i^4\cumu_4(x_i) } = \abs*{\cumu_4(x_k)},
\end{equation} 
and $\bm g^{(k)}$ is a maximizer if $g^{(k)}_k \in \{ \pm 1\}$.

For the other direction, if $\bm g^{(k)}$ is a maximizer of \eqref{eq:proof_kurt2_1}, all equalities in  \eqref{eq:proof_kurt2_2} hold with $g = g^{(k)}$. 
In particular, the third equality in  \eqref{eq:proof_kurt2_2} implies
\begin{equation}
\label{eq:proof_kurt2_3}
    0 = \sum_{i = k}^\nc \left(g^{(k)}_i\right)^4 \left(\abs*{\cumu_4(x_i)} - \abs*{\cumu_4(x_k)}\right).
\end{equation}
Due to \eqref{eq:strictdeckurt}, $\abs*{\cumu_4(x_i)} - \abs*{\cumu_4(x_k)} < 0$ for $i = k + 1,\cdots,n$. 
Thus \eqref{eq:proof_kurt2_3} implies that $g^{(k)}_i = 0$ for $i = k + 1, \cdots, \nc$.
Since $\bm g^{(k)}$ is a unit vector, $g^{(k)}_k \in \{\pm 1\}$.
This completes the proof.

\subsection{Proof of  Theorem \ref{thm:kurt3}}

We prove Theorem \ref{thm:kurt3} by showing the following:

\begin{itemize}
    \item[(a)] $\mixmq$ is a maximizer of \eqref{eq:kurt3}.
    \item[(b)] For any permutation matrix $\permm$ and signature matrix $\signm$, $\mixmq \signm \permm$ is a maximizer of \eqref{eq:kurt3}.
    \item[(c)] Any maximizer $\Argkurt$ of \eqref{eq:kurt3} satisfies $\Argkurt = \mixmq \signm \permm$ for some permutation matrix $\permm$ and signature matrix $\signm$.
\end{itemize}

\subsubsection{Proof of (a)}
We prove (a) by showing that
\begin{equation}
\label{eq:thm:kurt3_2}
  \mathop{\max}_{\Argkurt \in \orth(\nc)} \sum_{i = 1}^\nc \left \lvert \cumu_4\left((\Argkurt^T \bm y)_i\right) \right \rvert =  \sum_{i = 1}^\nc \abs*{\cumu_4(\alge_i)}
\end{equation}
and $\Argkurt = \mixmq$ reaches the maximum. 
Set $\bm G = \mixmq^T\Argkurt \in \orth(\nc)$. As $\bm x$ and $\bm y$ are related via \eqref{eq:sym fca model},
\begin{equation}
\begin{aligned}
  \Argkurt^T \bm y & = \Argkurt^T \mixmq \bm x\\
  & = (\mixmq^T\Argkurt)^T \bm x \\
  & = \bm G^T \bm x.
\end{aligned}
\end{equation}
Adapt the notation $\bm G = (g_{ij})_{i,j= 1}^\nc$. Then for all $i = 1,\cdots,n$, $(\Argkurt^T \bm y)_i = (\bm G^T \bm x)_i = \sum_{j = 1}^\nc g_{ji} x_j$. 
Together with \eqref{eq:freeadditivity} and \eqref{eq:freecumu_powerlaw}, for any $i = 1, \cdots, \nc$, we have that
\begin{equation}
\begin{aligned}
  \cumu_4((\Argkurt^T \bm y)_i) = &  \cumu_4\left(\sum_{j = 1}^\nc g_{ji} x_j\right) \\ 
  = & \sum_{j = 1}^\nc \cumu_4\left(g_{ji}\alge_j\right) \\
  = & \sum_{j = 1}^\nc g_{ji}^4 \cumu_4(\alge_j).
\end{aligned}
\end{equation} 
Apply triangular inequality to above equation, we get
\begin{equation}
\label{eq:thm:kurt3_0}
  \left \lvert \cumu_4((\Argkurt^T \bm y)_i)\right \rvert \leq \sum_{j = 1}^\nc g_{ji}^4 \left \lvert\cumu_4\left( \alge_j\right)\right \rvert.
\end{equation}
Note that $(g_{j1},\cdots,g_{jn})^T$ is a unit vector, by \eqref{eq:proof_kurt_5}, $\sum_{j = 1}^\nc g_{ji}^4 \leq 1$. 
Then summing \eqref{eq:thm:kurt3_0} over $i = 1, \cdots, n$, we obtain that
\begin{equation}
\label{eq:thm:kurt3_1}
\begin{aligned}
   \sum_{i = 1}^\nc \left \lvert \cumu_4((\Argkurt^T y)_i)\right \rvert \leq & \sum_{i = 1}^\nc \sum_{j = 1}^\nc g_{ij}^4 \left \lvert\cumu_4\left( \alge_j\right)\right \rvert \\
    = & \sum_{j = 1}^\nc \left(\sum_{i = 1}^\nc g_{ji}^4\right)\abs*{\cumu_4(\alge_j)} \\
    \leq & \sum_{j = 1}^\nc \abs*{\cumu_4(\alge_j)}.
\end{aligned}
\end{equation}
%Now, it is enough to show that $\sum_{i = 1}^\nc \abs*{\cumu_4}\left((\mi\algemq^Ty)_i\right)$ reaches above upper bound.
Actually, for $\Argkurt = \mixmq$, $\mixmq^T \bm y = \mixmq^T \mixmq \bm x = \bm x$, thus
\begin{equation}
\label{eq:thm:kurt3_3}
\sum_{i = 1}^\nc \left \lvert \cumu_4((\mixmq^T \bm y)_i)\right \rvert  = \sum_{i = 1}^\nc \abs*{\cumu_4(\alge_i)}.
\end{equation}
Equations \eqref{eq:thm:kurt3_3} and \eqref{eq:thm:kurt3_1} together imply \eqref{eq:thm:kurt3_2}. Then by \eqref{eq:thm:kurt3_3},  $\mixmq$ is a maximizer of \eqref{eq:kurt3}.

\subsubsection{Proof of (b)}

We first introduce several notations. For a permutation matrix $\permm = (p_{ji})_{i,j = 1}^\nc$, there is a associate permutation $\sigma$ such that $p_{\sigma(i)i} = 1$ and $p_{ji} = 0$ for all $i = 1, \cdots, \nc$ and $j \neq \sigma(i)$. For a signature matrix $\signm$, we denote its $i$-th diagonal elements by $S_i$. 

Now for any $\permm$ and $\signm$, under the light of \eqref{eq:thm:kurt3_2}, it is desired to show that \\
$\sum_{i = 1}^\nc \left \lvert \cumu_4\left(((\mixmq\permm\signm)^T \bm y)_i\right) \right \rvert =  \sum_{i = 1}^\nc \abs*{\cumu_4(\alge_i)}$.
As $\bm x$ and $\bm y$ satisfy \eqref{eq:sym fca model}, we have
\begin{equation}
\label{eq:thm:kurt3_4}
\begin{aligned}
  (\mixmq\permm\signm)^T \bm y = & \signm^T \permm^T \mixmq^T \bm y \\
     = & \signm^T \permm^T \bm x \\
     = & (S_1\alge_{\sigma(1)}, \cdots,S_\nc x_{\sigma(\nc)})^T.
\end{aligned}
\end{equation}
As $S_i \in \{\pm 1\}$, by \eqref{eq:freecumu_powerlaw}
\begin{equation}
\label{eq:thm:kurt3_6}
  \cumu_4(S_ix_{\sigma(i)}) = S_i^4 \cumu_4(x_{\sigma(i)}) = \cumu_4(x_{\sigma(i)}).
\end{equation}
Combining \eqref{eq:thm:kurt3_4} and \eqref{eq:thm:kurt3_6} together, we obtain that
\begin{equation}
\begin{aligned}
\sum_{i = 1}^\nc \left \lvert \cumu_4\left(((\mixmq\permm\signm)^T \bm y)_i\right) \right \rvert = & \sum_{i = 1}^\nc \left \lvert \cumu_4(S_i\alge_{\sigma(i)}) \right \rvert  \\
= & \sum_{i = 1}^\nc \abs*{\cumu_4(\alge_{\sigma(i)})} \\
= & \sum_{i = 1}^\nc \abs*{\cumu_4(\alge_{i})}.
\end{aligned}
\end{equation}
This completes the proof of (b).

\subsubsection{Proof of (c)}

By $(b)$, any matrix $\widehat \Argkurt$ of the form $\widehat \Argkurt = \mixmq \permm \signm$ is a maximizer.
For the other direction, we want to show that any maximizer $\widehat \Argkurt$ can be written in the this form.

Actually, if $\widehat \Argkurt$ is a maximizer, we consider $ (\widehat g_{ij})_{i,j = 1}^\nc = \widehat{\bm{G}}= \mixmq^T \widehat \Argkurt$. 
The thrid equality of \eqref{eq:thm:kurt3_1} holds with $g_{ij} = \widehat g_{ij}$. 
That is, 
\begin{equation}
\label{eq:thm:kurt3_5}
  \sum_{j = 1}^\nc \left(\sum_{i = 1}^\nc \widehat g_{ji}^4\right)\abs*{\cumu_{4}(\alge_j)} = \sum_{j = 1}^\nc \abs*{\cumu_{4}(\alge_j)}. 
\end{equation}
Since we assume the components of $\alge$ has non-zero free kurtosis (see \eqref{eq:kurt3_assumption}) and $\sum_{i = 1}^\nc \widehat g_{ji}^4 \leq 1$ for $j = 1,\cdots, \nc$, \eqref{eq:thm:kurt3_5} is equivalent to
\begin{equation}
  \sum_{i = 1}^\nc \widehat g_{ji}^4 = 1, \qquad \text{for $j = 1,\cdots, s$}.
\end{equation}
By the observation below \eqref{eq:proof_kurt_5}, for each $j$, there is a $i$ such that $\widehat g_{ji} \in \{\pm 1\}$ while $\widehat g_{jk} = 0$ for $k \neq i$. 
That is, each column of $\widehat{\bm G}$ has exactly one non-zero entry.
By Proposition \ref{prop:orth_ps_one}, $\widehat{\bm G} \in \orth_{sp}$ and thus $\widehat{\bm G} = \permm \signm$ for some permutation matrix $\permm$ and signature matrix $\signm$. 
Rcall that $\widehat \Argkurt = \mixmq \widehat{\bm G}$, we arrive at $\widehat \Argkurt = \mixmq \permm \signm$. 
This completes the proof.

\subsection{Proof of  Theorem \ref{thm:ent}}

The proof of Theorem \ref{thm:ent} relies on the orthogonal invariance and subadditivity of free entropy, for which readers are referred to Proposition \ref{prop:entlintran} and \ref{prop:entfreeind} (and their rectangular analogues in Section \ref{subsubsec:free entropy rec}).

As in the proof of Theorem \ref{thm:kurt3}, we will show the following:

\begin{itemize}
    \item[(a)] $\mixmq$ is a maximizer of \eqref{eq:ent1}. 
    \item[(b)] For any permutation matrix $\permm$ and signature matrix $\signm$, $\mixmq \signm \permm$ is a maximizer of  \eqref{eq:ent1}.
    \item[(c)] Any maximizer $\Argkurt$ of \eqref{eq:ent1} satisfies $\Argkurt = \mixmq \signm \permm$ for some permutation matrix $\permm$ and signature matrix $\signm$.
\end{itemize}

\subsubsection{Proof of (a)}
 Set $\bm Z = \bm Q^T\Argkurt$. As $\bm \alge$ and $\bm y$ are related via \eqref{eq:sym fca model}, $\bm W^T \bm y = (\bm Q \bm Z)^T \bm Q \bm \alge = \bm Z^T \bm \alge$. Then by \eqref{eq:entfreeind},
\begin{equation}
\label{eq:ent_proof0}
  \sum_{i = 1}^\nc \ent\left((\bm W^T \bm y)_i\right) = \sum_{i = 1}^\nc \ent\left((\bm Z^T \bm \alge)_i\right) \geq \ent\left((\bm Z^T \alge)_1,\cdots,(\bm Z^T \bm \alge)_\nc\right).
\end{equation}
On the other hand, note that $Z$ is an orthogonal matrix, then by \eqref{eq:ent_orth}, 
\begin{equation}
\label{eq:ent_proof5}
   \ent\left((\bm Z^T \bm \alge)_1,\cdots,(\bm Z^T \bm \alge)_\nc\right) = \ent\left(\alge_1,\cdots,\alge_\nc\right)
\end{equation}
Combining \eqref{eq:ent_proof0} and \eqref{eq:ent_proof5} together, we obtain that, for any $\Argkurt \in \orth(\nc)$,
\begin{equation}
\label{eq:ent_proof-1}
\sum_{i = 1}^\nc \ent\left((\bm W^T \bm y)_i\right)  \geq  \ent\left(\alge_1,\cdots,\alge_\nc\right)
\end{equation}

Now consider $\Argkurt = \mixmq$. 
As $\mixmq^T \bm y = \mixmq^T \mixmq \bm \alge = \bm \alge$, we have that 
\begin{equation}
\label{eq:ent_proof1}
\sum_{i = 1}^\nc \ent\left((\mixmq^T \bm y)_i\right) = \sum_{i = 1}^\nc \ent\left(\alge_i\right).
\end{equation}
On the other hand, as $\alge_i$ are freely independent, then by Proposition \ref{prop:entfreeind},
\begin{equation}
  \sum_{i = 1}^\nc \ent(\alge_i) = \ent(\alge_1,\cdots,\alge_\nc).
\end{equation}
Then \eqref{eq:ent_proof1} becomes
\begin{equation}
\label{eq:ent_proof-2}
  \sum_{i = 1}^\nc \ent\left((\mixmq^T \bm y)_i\right) = \ent\left(\alge_1,\cdots,\alge_\nc\right).
\end{equation}
Equations \eqref{eq:ent_proof-2} and \eqref{eq:ent_proof-1} together indicate
\begin{equation}
\label{eq:ent_proof3}
   \min_{\Argent \in \orth(\nc)} \sum_{i = 1}^\nc \ent\left((\bm W^T \bm y)_i\right) = \ent\left(\alge_1,\cdots,\alge_\nc\right)
\end{equation}
and $\mixmq$ is a maximizer of \eqref{eq:ent1}.

\subsubsection{Proof of (b)}

Adapt the notations introduced in the proof of Theorem \ref{thm:kurt3} (b). For any permutation matrix $\permm$ associated with permutation $\sigma$ and signature matrix $\signm = \diag(S_1,\cdots,S_\nc)$, we have that (see \eqref{eq:thm:kurt3_4})
\begin{equation}
  (\mixmq\permm\signm)^Ty = (S_1\alge_{\sigma(1)}, \cdots ,S_\nc \alge_{\sigma(n)})^T.
\end{equation}
Thus
\begin{equation}
\label{eq:ent_proof2}
  \sum_{i = 1}^\nc \ent\left(((\mixmq \permm \signm)^T y)_i\right) = \sum_{i = 1}^\nc \ent(S_i\alge_{\sigma(i)}). 
\end{equation}
As $S_i \in \{\pm 1\}$ can be regard as $1$-by-$1$  orthogonal matrices, then the $1$-dimensional verision of \eqref{eq:ent_orth_1} yields
\begin{equation}
  \ent(S_i \alge_{\sigma(i)}) = \ent(\alge_{\sigma(i)}), \qquad \text{for $i = 1, \cdots, n$}.
\end{equation}
Then \eqref{eq:ent_proof2} becomes
\begin{equation}
\sum_{i = 1}^\nc \ent\left(((\mixmq \permm \signm)^T y)_i\right)   = \sum_{i = 1}^\nc \ent(\alge_{i}). 
\end{equation}
Under the light of \eqref{eq:ent_proof3}, $\mixmq \permm \signm$ is a maximizer of \eqref{eq:ent1}.

\subsubsection{Proof of (c)}

By $(b)$, any matrix $\widehat \Argkurt$ of the form $\widehat \Argkurt = \mixmq \permm \signm$ is a maximizer.
For the other direction, it is enough to show that, any maximizer $\widehat \Argkurt$ of \eqref{eq:ent1} can be written in the form $\widehat \Argkurt = \mixmq \permm \signm$ for some permutation matrix $\permm$ and signature matrix $\signm$. 
Actually, if $\widehat \Argkurt$ maximize \eqref{eq:ent1}, then by \eqref{eq:ent_proof3},
\begin{equation}
\label{eq:ent_proof4}
\sum_{i = 1}^\nc \ent\left((\widehat \Argkurt ^T y)_i\right) = \ent\left(\alge_1,\cdots,\alge_\nc\right)
\end{equation}
Since $\widehat \Argkurt^T\mixmq$ is a orthogonal matrix, then by \eqref{eq:ent_orth} and \eqref{eq:sym fca model},
\begin{equation}
\begin{aligned}
  \ent\left(\alge_1,\cdots,\alge_\nc\right) & = \ent\left((\widehat \Argkurt^T \mixmq \alge)_1,\cdots,(\widehat \Argkurt^T \mixmq \alge)_\nc\right) \\
  &= \ent\left((\widehat \Argkurt^T y)_1,\cdots,(\widehat \Argkurt^T y)_\nc\right) 
\end{aligned}
\end{equation}
Then \eqref{eq:ent_proof4} becomes
\begin{equation}
  \sum_{i = 1}^\nc \ent\left((\widehat \Argkurt^T y)_i\right) = \ent\left((\widehat \Argkurt^T y)_1,\cdots,(\widehat \Argkurt^T y)_\nc\right) 
\end{equation}
By Proposition \ref{prop:entfreeind}, the above equation indicates that $\widehat \Argkurt^Ty$ has freely independent components. As we assume that $\alge$ has at most one semi-circular element, Theorem \ref{thm:identifiability} implies that $\widehat \Argkurt = \mixmq \permm \signm$ for some permutation matrix $\permm$ and signature matrix $\signm$. This completes the proof.

\subsection{Proof of  Theorem \ref{thm:identifiability}}

\begin{definition}
We denote all matrix of size $\nc \times \nc$ which are product of a permutation matrix and a signature matrix by
\begin{equation}
  \orth_\ps = \orth_\ps(\nc) := \{\permm\signm ~\vert ~ \text{$\permm$ is a permutation matrix, $\signm$ is a signature matrix}\}.
\end{equation}
Let $\orth := \orth(\nc)$ denotes the sets of orthogonal matrix of size $\nc \times \nc$. Note that any permutation matrix $\permm$ and signature matrix $\signm$ belong to $\orth$. 
Furthermore, it can be checked that $\orth_\ps$ is a subgroup of $\orth$.
\end{definition}

We first prove two propositions of $\orth_{\ps}$.
An orthogonal matrix must contain at least one nonzero entry in each column (and each row). 
On the other hand, the matrix belonging to $\orth_\ps$ has exactly one nonzero entry in each column (and each row). 
The following proposition states that this characterizes the matrices contained in $\orth_\ps$.
\begin{proposition}
\label{prop:orth_ps_one} Fix a positive integer $\nc \geq 1$,
$\mixmq \in \orth(\nc)$ has exactly one non-zero entry in each column if and only if $\mixmq \in \orth_\ps(\nc)$.
\end{proposition}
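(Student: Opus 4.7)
The plan is to handle both directions by unpacking the definitions and using orthonormality of columns. Neither direction is deep; the whole proposition is essentially a bookkeeping argument, and the only place where care is needed is forcing the nonzero positions of distinct columns to lie in distinct rows.

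For the backward direction, suppose $\mixmq = \permm \signm$ with $\permm$ a permutation matrix and $\signm = \diag(S_1, \ldots, S_\nc)$ a signature matrix. Then the $i$-th column of $\mixmq$ is $S_i$ times the $i$-th column of $\permm$, which is $\pm e_{\sigma(i)}$ for the permutation $\sigma$ associated with $\permm$. This has exactly one nonzero entry, completing this direction.

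For the forward direction, suppose $\mixmq \in \orth(\nc)$ has exactly one nonzero entry in each column. Since $\mixmq$ is orthogonal, each column $\mixmqq_i$ is a unit vector, and its unique nonzero entry therefore has absolute value $1$; call its row index $\sigma(i)$ and its value $S_i \in \{\pm 1\}$, so that $\mixmqq_i = S_i e_{\sigma(i)}$. I then claim the map $i \mapsto \sigma(i)$ is a bijection on $[1..\nc]$. If instead $\sigma(i) = \sigma(j)$ for some $i \neq j$, the inner product of the corresponding columns would be $\ip{\mixmqq_i}{\mixmqq_j} = S_i S_j \neq 0$, contradicting orthogonality. Hence $\sigma$ is a permutation.

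Define $\permm$ to be the permutation matrix associated with $\sigma$ (so $\permm e_i = e_{\sigma(i)}$) and $\signm = \diag(S_1, \ldots, S_\nc)$. Then the $i$-th column of $\permm \signm$ is $\permm(S_i e_i) = S_i e_{\sigma(i)} = \mixmqq_i$, so $\mixmq = \permm \signm \in \orth_\ps(\nc)$. The only subtle step is the injectivity of $\sigma$, which I would expect to be the main obstacle a careless writer might overlook, but it follows immediately from orthogonality as shown.
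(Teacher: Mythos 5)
Your proof is correct and follows essentially the same elementary route as the paper: both directions are handled by direct verification using orthonormality of the columns. The only cosmetic difference is that the paper establishes that the nonzero entries occupy distinct rows via a counting-plus-nonsingularity argument (there are $\nc$ nonzero entries total and every row of a nonsingular matrix must contain one) and then reads off the $\pm 1$ values from the diagonal orthogonal matrix $\permm^T\mixmq$, whereas you get injectivity of $\sigma$ directly from pairwise orthogonality of columns and the $\pm 1$ values from each column being a unit vector; both are sound.
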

\begin{proof}
If $\mixmq \in \orth_{\ps}$, then $\mixmq = \permm \signm$ for some permutation matrix $\permm$ and signature matrix $\signm$.  Thus it follows that $\mixmq$ has exactly one non-zero entry in each column.

For the other direction, consider an arbitrary $\mixmq \in \orth(\nc)$ with exactly one non-zero entry in each column. 
Note that $\mixmq$ has totally $n$ non-zero entries. 
As $\mixmq$ is non-singular, it also has exactly one non-zero entry in each row.
As a result, there exists a permutation matrix $\permm$ such that $\permm^T \mixmq $ is a diagonal matrix. 

On the other hand, note that $(\permm^T\mixmq)^T(\permm^T\mixmq) = \mixmq^T\mixmq = I$, $\permm^T\mixmq$ is a diagonal orthogonal matrix. 
Thus the diagonal entries of $ \permm^T\mixmq$ are either $+1$ or $-1$. 
Then there exists a signature matrix $\signm$ such that $ \permm^T \mixmq = \signm$. 
That is equivalent to $\mixmq = \permm \signm \in \orth_{\ps}$. 
This completes the proof.
\end{proof}

By above proposition, for any $\mixmq \in \orth \backslash \orth_{\ps}$, there must be a column with more than one non-zero entry.
For the later purpose, we prove a stronger result.

\begin{proposition}
\label{prop:orth_ps}
Given any $\nc \geq 2$, consider matrix $\mixmq = (q_{ij})_{i,j= 1}^s \in \orth(\nc) \backslash \orth_\ps(\nc)$. 
Then there is a $2\times 2$ submatrix of $\mixmq$ with all $4$ entries non-zero. 
Explicitly, there exist $i,j,k,\ell \in \{1,\cdots,n\}$ ($i \neq j$, $k \neq \ell$) such that all $q_{ik}$, $q_{i\ell}$, $q_{jk}$, and $q_{j\ell}$ are non-zero.
\end{proposition}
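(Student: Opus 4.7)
The plan is to leverage Proposition \ref{prop:orth_ps_one} to find one ``bad'' column, and then exploit the row-orthogonality of $\mixmq$ to locate a second column that pairs up with it. Since $\mixmq \in \orth(\nc) \setminus \orth_{\ps}(\nc)$, Proposition \ref{prop:orth_ps_one} tells us that $\mixmq$ does \emph{not} have exactly one non-zero entry in each column, so some column must contain at least two non-zero entries. Pick such a column, call its index $k$, and let $i \neq j$ be two row indices with $q_{ik} \neq 0$ and $q_{jk} \neq 0$. This gives two of the four desired non-zero entries for free; the task reduces to finding a column index $\ell \neq k$ such that $q_{i\ell}$ and $q_{j\ell}$ are both non-zero simultaneously.

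For this, I would invoke the orthogonality of rows $i$ and $j$ of $\mixmq$. Since $\mixmq \in \orth(\nc)$, we have $\mixmq \mixmq^T = \bm I$, and because $i \neq j$ the corresponding off-diagonal identity reads
\begin{equation*}
  \sum_{m = 1}^{\nc} q_{im}\, q_{jm} \;=\; 0.
\end{equation*}
The $m = k$ term contributes $q_{ik}\, q_{jk} \neq 0$ by construction, so the remaining sum $\sum_{m \neq k} q_{im}\, q_{jm}$ must equal $-q_{ik}\, q_{jk} \neq 0$. In particular, it cannot vanish term by term, so there exists at least one index $\ell \neq k$ with $q_{i\ell}\, q_{j\ell} \neq 0$, i.e. both $q_{i\ell}$ and $q_{j\ell}$ are non-zero. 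Combined with the non-vanishing of $q_{ik}$ and $q_{jk}$, the four entries $q_{ik}, q_{i\ell}, q_{jk}, q_{j\ell}$ form a $2 \times 2$ submatrix all of whose entries are non-zero, as required.

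There is no real obstacle here: the argument is a clean two-step combinatorial-plus-linear-algebra observation, and the only thing to check carefully is that the indices $i, j$ (from the ``bad'' column) and $\ell$ (from the orthogonality relation) are indeed distinct and well defined, which is immediate from $i \neq j$ and the fact that the $k$-th contribution to the row inner product is non-zero. Note also that the hypothesis $\nc \geq 2$ is used implicitly: in a single column with two non-zero entries we automatically have $\nc \geq 2$ rows, and the orthogonality sum above contains at least one other index $\ell \neq k$ precisely because the sum cannot collapse to just the $k$-th term.
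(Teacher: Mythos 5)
Your proof is correct, and it takes a genuinely different --- and in fact more economical --- route than the paper's. The paper also begins by locating a column with at least two non-zero entries, but then stays column-wise: it first records the auxiliary observation that two orthogonal vectors share either zero or at least two non-zero positions, and then needs a separate dimension-counting step (if the remaining $\nc-1$ columns all vanished on the non-zero support of the bad column they would span a space of dimension at most $\nc-2$, contradicting orthonormality) to produce a \emph{second column} sharing a non-zero position with the first, before invoking the observation to upgrade ``at least one shared position'' to ``at least two.'' You instead switch to rows after the first step: the bad column already hands you two rows $i \neq j$ that share the non-zero position $k$, and the row-orthogonality relation $\sum_{m} q_{im}q_{jm}=0$ immediately forces a second shared position $\ell \neq k$. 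This is the same underlying fact as the paper's observation, but applied where the sharing is automatic, so the dimension argument disappears entirely. The only small point worth making explicit is that every column of an orthogonal matrix is a unit vector and hence has at least one non-zero entry, which is what turns ``not exactly one non-zero entry per column'' from Proposition \ref{prop:orth_ps_one} into ``some column has at least two.''
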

\begin{proof}
%In order to prove the proposition, we need to show (i) all orthogonal matrix with exactly one non-zero entry in each row and each column belongs to $\orth_\ps$, and (ii) if for some $i\in \{1,\cdots,n\}$, $i$-th column of $\mixmq$ has more than two non-zero entries, then there must be another column index $j \neq i$ such that there exist $k,\ell \in \{1,\cdots,n\}$ and $k\neq \ell$ with that $q_{ik}$, $q_{i\ell}$, $q_{jk}$, and $q_{j\ell}$ are all non-zero.

%We first prove (i). For any $\mixmq \in \orth$ with exactly one non-zero entry in each row and each column, there is a permutation matrix $\permm$ such that $\permm^T \mixmq $ is a diagonal matrix. Note that $(\permm^T\mixmq)^T(\permm^T\mixmq) = \mixmq^T\mixmq = I$, thus $\permm^T\mixmq$ is a diagonal orthogonal matrix, thus the diagonal entries of $ \permm^T\mixmq$ belongs to $\{\pm 1\}$. Then there exists a signature matrix $\signm$ such that $ \permm^T \mixmq = \signm$. That is equivalent to say $\mixmq = \permm \signm \in \orth_{\ps}$. (i) is proved.

We first make the following observation. Two orthogonal vectors either share $0$ or more than $2$ positions for non-zero entries.
Actually, consider any $u = (u_1,\cdots,u_\nc)^T$ and $v = (v_1,\cdots,v_\nc)^T$ such that $u$ and $v$ are orthogonal.
Assume that there is exactly one index $k$ such that both $u_k$ and $v_k$ are non-zero, then 
\begin{equation}
  u^T v = \sum_{i = 1}^\nc u_iv_i = u_kv_k \neq 0.
\end{equation}
This contradicts the fact that $u^T v = 0$.

Now we are ready to prove the proposition.
Denote $i$-th columns of $\mixmq$ by $\mixmq_i$, for $i = 1,\cdots,\nc$. 
Note that the $\{\mixmq_i\}_{i = 1}^\nc$ form an orthonormal basis.
As $\mixmq \in \orth(\nc) \backslash \orth_\ps(\nc)$, there must be a column containing more than two non-zero entries.
Without lose of generality, assume it is $\mixmq_1$.
If all $\mixmq_2,\cdots,\mixmq_{\nc}$ share $0$ positions of non-zero entry with $\mixmq_1$, then $\{\mixmq_i\}_{i= 2}^\nc$ span a linear space of dimension less than $n - 2$. 
This contradicts with the fact that $\{\mixmq_i\}_{i= 2}^\nc$ span a linear space of dimension $\nc - 1$.
Thus there must exist a $j \in \{2,\cdots,\nc\}$ such that $\mixmq_1$ and $\mixmq_j$ share at least one positions for non-zero entry. 
By the observation we made in the last paragraph, $\mixmq_1$ and $\mixmq_j$ then share at least two positions of non-zero entry. 
This completes the proof.
\end{proof}

\begin{corollary}
\label{coro:entrynonzero}
Fix a positive integer $n \geq 2$ and a $\mixmq \in \orth(\nc) \backslash \orth_\ps(\nc)$. There exists indexes $i,j,k,\ell \in [1,..,\nc]$ ($i \neq j$ and $k \neq \ell$), such that for any $m \geq 3$, 
\begin{equation}
\label{eq:entrynonzero}
 q_{ik}^{m - 1}q_{jk} \neq 0, \quad \text{and} \quad q_{i\ell}^{m - 1}q_{j\ell} \neq 0. 
\end{equation}
In particular, if $\nc = 2$, then for any $m \geq 3$,
\begin{equation}
\label{eq:entrynonzero2}
 q_{11}^{m - 1}q_{21} \neq 0, \quad \text{and} \quad q_{12}^{m - 1}q_{22} \neq 0. 
\end{equation}
\end{corollary}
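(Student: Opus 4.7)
The plan is to derive the corollary as an almost immediate consequence of Proposition \ref{prop:orth_ps}. That proposition already guarantees the existence of indices $i \neq j$ and $k \neq \ell$ such that the four entries $q_{ik}, q_{i\ell}, q_{jk}, q_{j\ell}$ of $\mixmq$ are all non-zero. Since $\R$ is a field, any product of non-zero real numbers is non-zero, so for every integer $m \geq 3$ (indeed for every $m \geq 1$) we have $q_{ik}^{m-1} q_{jk} \neq 0$ and $q_{i\ell}^{m-1} q_{j\ell} \neq 0$. This establishes \eqref{eq:entrynonzero}.

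For the special case $\nc = 2$ in \eqref{eq:entrynonzero2}, I would specialize the argument. With $\nc = 2$, the only admissible choices for $(i,j)$ and $(k,\ell)$ (up to order) are $(1,2)$ and $(1,2)$. Thus the $2 \times 2$ submatrix produced by Proposition \ref{prop:orth_ps} is the entire matrix $\mixmq$ itself, so \emph{all four} entries $q_{11}, q_{12}, q_{21}, q_{22}$ are non-zero. The required inequalities then follow immediately by the same product-of-non-zeros argument applied with $(i,j,k,\ell) = (1,2,1,2)$.

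There is no substantive obstacle here: the work is entirely contained in Proposition \ref{prop:orth_ps}, and this corollary merely repackages its conclusion in the algebraic form needed later (presumably inside the identifiability proof, where expressions of the shape $q_{ik}^{m-1}q_{jk}$ arise from expanding $\cumu_{2m}$-type sums and one needs to certify that certain cross-terms do not vanish). The only thing worth being careful about is that the exponent $m - 1$ is at least $2$ when $m \geq 3$, but this plays no role since the field has no zero divisors; the statement is in fact robust and holds for $m \geq 2$ or even $m \geq 1$.
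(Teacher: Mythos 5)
Your proposal is correct and matches the paper's treatment: the paper states this as an immediate corollary of Proposition \ref{prop:orth_ps} (the existence of a fully non-zero $2\times 2$ submatrix) with no further argument, exactly as you do, including the observation that for $\nc = 2$ the submatrix is necessarily all of $\mixmq$. Your remark that the statement actually holds for all $m \geq 1$ is also accurate; the restriction $m \geq 3$ merely reflects where the corollary is used in the identifiability proof.
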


Theorem \ref{thm:identifiability} can be obtained as a corollary of the following lemma.
\begin{lemma}
\label{free_darmois}
Fix a $\nc \geq 2$, let \(\bm \alge = (\alge_1,\alge_2, \cdots, \alge_\nc)^T\) and \(\bm y = (y_1,y_2, \cdots, y_\nc)^T\) be two random vectors such that \(\bm y = \mixmq \bm \alge\), where \(\mixmq \in \orth(\nc)\). Assume $(\alge_i)_{i = 1}^\nc$ are freely independent. Now if $(y_i)_{i = 1}^\nc$ are freely independent, 
then at least one of the following happens:
\begin{enumerate}
    \item[(a)] $\mixmq \in \orth_\ps(\nc)$.
    \item[(b)] At least two components of $\bm \alge$ are semicircular (or Poisson in the non-self-adjoint setting). 
\end{enumerate}
\end{lemma}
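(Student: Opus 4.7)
The plan is to argue the contrapositive: assuming $\mixmq \notin \orth_\ps(\nc)$, I will deduce that at least two components of $\bm\alge$ are semicircular (or free Poisson in the rectangular setting). By Proposition \ref{prop:orth_ps} I can select row indices $i \neq j$ and column indices $k \neq \ell$ such that the four entries $q_{ik}, q_{i\ell}, q_{jk}, q_{j\ell}$ are simultaneously nonzero; the target then becomes showing that both $\alge_k$ and $\alge_\ell$ are semicircular. The driver is the assumed free independence of $(y_i)_{i=1}^\nc$, which forces $\cumu_m(y_{s_1}, \ldots, y_{s_m}) = 0$ for every non-constant tuple $(s_1, \ldots, s_m) \in \{i, j\}^m$. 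Expanding $y_s = \sum_r q_{sr}\alge_r$, using multilinearity of $\cumu_m$, and invoking Theorem \ref{thm:vanish_mix_cumu} to kill all cross terms of the free family $(\alge_r)$ collapses these identities to
\begin{equation*}
\sum_{r \in K} q_{ir}^{p}\, q_{jr}^{m-p}\, \cumu_m(\alge_r) = 0 \quad \text{for all } m \geq 2,\ p \in \{1, \ldots, m-1\},
\end{equation*}
where $K := \{r : q_{ir} q_{jr} \neq 0\} \supseteq \{k, \ell\}$.

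Next I will fix $m \geq 3$ and view the display as a linear system in $(\cumu_m(\alge_r))_{r \in K}$. Setting $t_r := q_{ir}/q_{jr}$ and $d_r := q_{jr}^m \cumu_m(\alge_r)$ rewrites it in Vandermonde form $\sum_{r \in K} t_r^p d_r = 0$ for $p = 1, \ldots, m-1$. The key input here is the orthogonality relation $\sum_r q_{ir} q_{jr} = 0$ coming from $\mixmq \in \orth(\nc)$: exactly the argument used at the end of the proof of Proposition \ref{prop:orth_ps} rules out that the restrictions of rows $i$ and $j$ to $K$ be proportional (that would force $\sum_{r \in K} q_{jr}^2 = 0$), so at least two of the ratios $t_r$ are distinct. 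When $|K| = 2$ this is already enough: the $2 \times 2$ subsystem from $p = 1, 2$ is nonsingular and therefore $\cumu_m(\alge_k) = \cumu_m(\alge_\ell) = 0$ for every $m \geq 3$, i.e., both $\alge_k$ and $\alge_\ell$ are semicircular.

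The hard part will be the regime $|K| \geq 3$ together with $3 \leq m \leq |K|$, where the $m-1$ equations from the single pair $(i, j)$ fall short of the $|K|$ unknowns and some of the ratios $t_r$ may coincide; in that case the argument above only yields cancellation relations among groups of $d_r$'s rather than their individual vanishing. I plan to close this gap by bringing in the analogous vanishing identities coming from every other pair of rows $(i', j')$, which are also free by hypothesis: each such pair produces a different set of ratios $q_{i'r}/q_{j'r}$ on $K$, and the combined system, supported by the full orthogonality of $\mixmq$, becomes overdetermined enough to force $\cumu_m(\alge_r) = 0$ for every $r \in K$ and every $m \geq 3$. Equivalently, the freeness of $y_i$ and $y_j$ may be rephrased as a Cauchy-type functional identity in the generating series $\sum_{m \geq 1} \cumu_m(\alge_r) t^m$, from which a free analogue of the Marcinkiewicz-style argument extracts polynomiality of degree at most two for each $r \in K$. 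The rectangular version of the lemma is handled by the same scheme, with Definition \ref{def:freecum_rec} and Theorem \ref{thm:vanish_mix_cumu_rec} replacing their self-adjoint counterparts, even-order rectangular free cumulants playing the role of $\cumu_m$ (odd orders vanish automatically for rectangular elements), and free Poisson elements replacing the semicirculars.
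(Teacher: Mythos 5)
Your skeleton (argue the contrapositive, invoke Proposition \ref{prop:orth_ps} to find a $2\times2$ all-nonzero submatrix, exploit vanishing of mixed free cumulants plus multilinearity) is exactly the paper's, but your proof does not close: you yourself flag the regime $|K|\geq 3$ with $3\leq m\leq|K|$ as ``the hard part'' and only sketch a plan for it. That is a genuine gap, not a formality. By restricting the cumulant arguments to the two-letter alphabet $\{y_i,y_j\}$ you obtain only $m-1$ independent relations $\sum_r q_{ir}^{p}q_{jr}^{m-p}\cumu_m(\alge_r)=0$ (the relation depends only on the multiset of indices, so different orderings give nothing new). For $\nc=3$, $m=3$ and a generic orthogonal $\mixmq$ with all entries nonzero this is two equations in three unknowns; knowing that at least two of the ratios $q_{ir}/q_{jr}$ are distinct cannot rescue an underdetermined Vandermonde system, and the grouping phenomenon you describe really does leave you with only sums of $\cumu_m(\alge_r)$ over level sets of the ratio rather than the individual values. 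The proposed rescue via ``a free analogue of the Marcinkiewicz-style argument'' is also unsafe here: the paper itself points out that the free analogue of Cram\'er's lemma fails, so such classical characterization machinery cannot be imported without proof.

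The fix --- and what the paper actually does --- is to enlarge the system in the last slot rather than in the exponents: consider $\cumu_m(y_i,\ldots,y_i,y_j,y_p)$ for \emph{all} $p=1,\ldots,\nc$, each of which vanishes by freeness of the $y$'s because the index tuple is non-constant. Expanding by multilinearity and killing cross terms via Theorem \ref{thm:vanish_mix_cumu} yields $\sum_{r}q_{ir}^{m-2}q_{jr}q_{pr}\,\cumu_m(\alge_r)=0$ for every $p$, so the coefficient matrix is $\mixmq$ times $\diag\bigl(q_{ir}^{m-2}q_{jr}\bigr)_r$. Invertibility of $\mixmq$ forces $q_{ir}^{m-2}q_{jr}\cumu_m(\alge_r)=0$ for every $r$, and Corollary \ref{coro:entrynonzero} supplies two indices $k\neq\ell$ at which the scalar factor is nonzero, giving $\cumu_m(\alge_k)=\cumu_m(\alge_\ell)=0$ for all $m\geq3$, i.e.\ two semicircular components. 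This one-line change eliminates the Vandermonde difficulty entirely and works uniformly in $\nc$; the rectangular case then goes through as you describe, with Theorem \ref{thm:vanish_mix_cumu_rec} replacing Theorem \ref{thm:vanish_mix_cumu} and free Poisson elements replacing semicirculars.
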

We first show that Theorem \ref{thm:identifiability} follows from Lemma \ref{free_darmois}.
\begin{proof}[Proof of Theorem \ref{thm:identifiability}]
As $\bm \alge$ and $\bm y$ satisfy \eqref{eq:sym fca model},  $\bm \alge = (\mixmq^T \Argkurt) \Argkurt^T \bm y$.
Now, by assumption, $\bm \alge$ and $\Argkurt^T \bm y$ have free components.  
Then according to Lemma \ref{free_darmois}, there are two possibilities: (a) $\mixmq^T \Argkurt  \in \orth_\ps$ or (b) $\bm \alge$ has at least two semicircular components.
As (b) has been excluded, (a) happens. 
That is, there exist a permutation matrix $\permm$ and a signature matrix $\signm$ such that $\mixmq^T \Argkurt  = \permm \signm$, i.e., $\Argkurt = \mixmq  \permm \signm$.
\end{proof}

\begin{proof}[Proof of Lemma \ref{free_darmois}]
We first consider self-adjoint setting.
If $\mixmq \in \orth_\ps(\nc)$, then the components of $\bm y$ are exactly the components of $\bm \alge$ with different order and possible sign change. It is not surprising that  $y_i$ are freely independent. 
In the following, we assume that $\mixmq \in \orth(\nc) \backslash \orth_\ps(\nc)$, and $\bm \alge, \bm y$ has free components, the goal is to show that $\bm \alge$ has at least two semicircular elements.

We start with the case where $n = 2$.
Then it is desired to show $\alge_1$ and $\alge_2$ are both semicircular elements.
Recall the Definition \ref{def:semi} for the semicircular element, it is enough to show $\cumu_m(\alge_i) \equiv 0$ for all $m\geq 3$ and $i = 1,2$.

Fix $m \geq 3$, we consider the mixed cumulants of $y_1,y_2$ of the specific form \\
$\cumu_m(y_1,\cdots,y_1, y_2, y_p)$ for $p = 1,2$. 
As $y_1,y_2$ are free-independent, these cumulants satisfies the condition of Theorem \ref{thm:vanish_mix_cumu} by noting that $i(1) = 1 \neq i(m - 1) = 2$. 
Thus these mixed cumulants vanishes, i.e.,
\begin{equation}
\label{eq:ident4}
\cumu_m(y_1,\cdots,y_1, y_2, y_p) = 0\quad \mathrm{for}~p = 1, 2.
\end{equation}

On the other hand, as $(y_i)_{i = 1}^2$ are linear combinations of $(\alge_i)_{i = 1}^2$, using multi-linearity of $\cumu_m(\cdot)$ (see \eqref{eq:multilinear}), we will express $\cumu_m(y_1,\cdots,y_1, y_2, y_p)$ as linear combinations of $\cumu_m(\alge_i)$ (recall the notation  \eqref{eq:cumu_ident_var}). 
Adapt the notation $\mixmq = (q_{ij})_{i,j= 1}^2$, then $y_i = \sum_{j = 1}^2 q_{ij}\alge_j$. We first derive the expression for $\cumu_m(y_1,\cdots,y_1, y_2, y_1)$ (i.e., $p = 1$),
\begin{equation}
\label{eq:ident1}
\cumu_m(y_1,\cdots,y_1, y_2, y_1) = \cumu_m\left(\sum_{j = 1}^2 q_{1j}\alge_{j}, \cdots, \sum_{j = 1}^2 q_{1j}\alge_{j}, \sum_{j = 1}^2 q_{2j}\alge_{j}, \sum_{j = 1}^2 q_{1j}\alge_{j}\right).
\end{equation}
Apply \eqref{eq:multilinear} to the right hand side of \eqref{eq:ident1} to expand the first variable,
\begin{equation}
\begin{aligned}
\cumu_m & (y_1,\cdots,y_1, y_2, y_1)  = \\ 
& \sum_{j_1 = 1}^2 q_{1j_1}  \cumu_m\left(\alge_{j_1}, \sum_{j = 1}^2 q_{1j}\alge_{j},\cdots, \sum_{j = 1}^2 q_{1j}\alge_{j}, \sum_{j = 1}^2 q_{2j}\alge_{j}, \sum_{j = 1}^2 q_{1j}\alge_{j}\right).
\end{aligned}
\end{equation}
Again apply \eqref{eq:multilinear} for the second variable, we obtain that
\begin{equation}
\begin{aligned}
\cumu_m( & y_1,\cdots,y_1, y_2, y_1) = \\
& \sum_{j_1 = 1}^2\sum_{j_2 = 1}^2 q_{1j_1} q_{1j_2} \cumu_m\left(\alge_{j_1}, \alge_{j_2},\cdots, \sum_{j = 1}^2 q_{1j}\alge_{j}, \sum_{j = 1}^2 q_{2j}\alge_{j}, \sum_{j = 1}^2 q_{1j}\alge_{j}\right).
\end{aligned}
\end{equation}
Repeat applying \eqref{eq:multilinear} for the rest $n-2$ variables, we arrive at
\begin{equation}
\label{eq:ident2}
\cumu_m(y_1,\cdots,y_1, y_2, y_1) = \sum_{j_1 = 1}^2\cdots \sum_{j_n = 1}^2\left( \prod_{\ell = 1}^{\nc - 2}q_{1 j_{\ell}} \right)q_{2j_{\nc - 1}}q_{1j_\nc} \cumu_m(\alge_{j_1}, \cdots, \alge_{j_\nc}).
\end{equation}
There are in total $2^\nc$ terms in above summation. 
Note that $\alge_1$ and $\alge_2$ are free independent. Then by Theorem \ref{thm:vanish_mix_cumu}, most of these cumulants vanish. 
For example, $\cumu_{\nc}(\alge_1,\alge_2,\cdots \alge_2) = 0$ where $j_1 = 1 \neq j_2 = 2$. Consequently, there are only two terms corresponding to the choices of indexes $j_1 = j_2 = \cdots = j_\nc = 1$ and $j_1 = j_2 = \cdots = j_\nc = 2$ survive. 
Thus using the notation \eqref{eq:cumu_ident_var}, \eqref{eq:ident2} can be written as
\begin{equation}
\label{eq:ident3}
  \cumu_m(y_1,\cdots,y_1, y_2, y_1) = q_{11}^{m-2}q_{21}q_{11}\cumu_m(\alge_1) + q_{12}^{m-2}q_{22}q_{12}\cumu_m(\alge_2).
\end{equation}
Combining \eqref{eq:ident3} with \eqref{eq:ident4}, we obtain that
\begin{equation}
\label{eq:ident5}
  q_{11}^{m-2}q_{21}q_{11}\cumu_m(\alge_1) + q_{12}^{m-2}q_{22}q_{12}\cumu_m(\alge_2) = 0.
\end{equation}
Repeat \eqref{eq:ident4} to \eqref{eq:ident5} for $\cumu_m(y_1,\cdots,y_1, y_2, y_2)$ (i.e., $p = 2$), we find that
\begin{equation}
\label{eq:ident6}
  q_{11}^{m-2}q_{21}q_{21}\cumu_m(\alge_1) + q_{12}^{m-2}q_{22}q_{22}\cumu_m(\alge_2) = 0.
\end{equation}
Writing \eqref{eq:ident5} and \eqref{eq:ident6} in the matrix form, we obtain that
\begin{equation}
    \label{eq:ident7}
    \begin{aligned}
      \begin{pmatrix}
        q_{11}^{m-2}q_{21}q_{11} & q_{12}^{m-2}q_{22}q_{12}\\q_{12}^{m-2}q_{21}q_{21}&q_{12}^{m-2}q_{22}q_{22}
        \end{pmatrix} & 
        \begin{pmatrix}
          \cumu_m(\alge_1) \\
            \cumu_m(\alge_2) 
        \end{pmatrix} = \\ 
        \begin{pmatrix}q_{11} & q_{12} \\ q_{21} & q_{22} \end{pmatrix} & \begin{pmatrix} 
        q_{11}^{m-2}q_{21} & 0 \\ 0 & q_{12}^{m-2}q_{22}
\end{pmatrix}
\begin{pmatrix}
          \cumu_m(\alge_1) \\
            \cumu_m(\alge_2) 
        \end{pmatrix} = \vec{0}
\end{aligned}
    \end{equation}
We actually get a linear equation system for $\cumu_m(\alge_1)$ and $\cumu_m(\alge_2)$. 
Note that $\mixmq = (q_{ij})_{i = 1}^2$ is an orthogonal matrix and thus is invertible. 
Thus \eqref{eq:ident7}  is equivalent to
\begin{equation}
  \begin{pmatrix} 
        q_{11}^{m-2}q_{21} & 0 \\ 0 & q_{12}^{m-2}q_{22}
\end{pmatrix}
\begin{pmatrix}
          \cumu_m(\alge_1) \\
            \cumu_m(\alge_2) 
        \end{pmatrix} = \vec{0}.
\end{equation}

Now, as $ \mixmq \in \orth(2) \backslash \orth_\ps(2)$, then by \eqref{eq:entrynonzero2},
above linear equation system has a unique solution, $\cumu_m(\alge_i) = 0$, $i = 1,2$. 
Note that this holds for all $m \geq 3$.
Then by Definition \ref{def:semi}, we conclude that $\alge_i$ for $i = 1, 2$ are semicircular elements. This concludes the proof for $n = 2$.
    
For general $n \geq 2$, as \(\mixmq \in \orth \backslash \orth_\ps \), by Corollary \ref{coro:entrynonzero}, there exist $i,j,k,\ell$ ($i\neq j$ and $k \neq \ell$) such that \eqref{eq:entrynonzero} holds. We will show that $\alge_k,\alge_\ell$ are semicircular elements. For fixed $m \geq 3$, we consider the vanishing mixed cumulants 
    \begin{equation}
    \cumu_m(y_i,\cdots,y_i, y_j, y_p) = 0\quad \mathrm{for}~p = 1, \cdots, \nc.
    \end{equation}
Use relation $y_i = \sum_{j = 1}^\nc q_{ij}\alge_j$ and multilinearity of $\cumu_m$, we can repeat \eqref{eq:ident4} to \eqref{eq:ident5} for each $\cumu_m(y_i,\cdots,y_i, y_j, y_p)$ and get
\begin{equation}
  q_{i1}^{m-1}q_{j1}q_{p1} \cumu_m(\alge_1) + \cdots + q_{i\nc}^{m-1}q_{j\nc}q_{p\nc} \cumu_m(\alge_\nc) = 0, \quad \text{for $p = 1,\cdots,\nc$.}
\end{equation}
Write above equations in the matrix form:
\begin{equation}
    \label{eq:ident8}
  \begin{pmatrix}q_{11} & \cdots & q_{1\nc} \\ \vdots & \ddots & \vdots \\q_{\nc 1} & \cdots & q_{\nc \nc} \end{pmatrix}\begin{pmatrix} 
        q_{i1}^{m-2}q_{j1} & &  \\ & \ddots & \\ & & q_{i\nc}^{m-2}q_{j\nc}
\end{pmatrix}
\begin{pmatrix}
          \cumu_m(\alge_1) \\
            \vdots\\
            \cumu_m(\alge_\nc) 
        \end{pmatrix} = \vec{0}.
\end{equation}
Again, $\mixmq = (q_{ij})_{i = 1}^\nc$ is invertible and $q_{ik}^{m-1}q_{jk} \neq 0$ (see \eqref{eq:entrynonzero}), thus $\cumu_m(\alge_k) = 0$. 
For the same reason, $\cumu_m(\alge_\ell) = 0$.  As these hold for all $m \geq 3$, $\alge_k,\alge_\ell$ are semicircular elements.

 For non-self-adjoint setting, the proof is exactly the same as above with with Theorem \ref{thm:vanish_mix_cumu} replaced by Theorem \ref{thm:vanish_mix_cumu_rec} and Definition \ref{def:semi} replace by Definition \ref{def:poisson}.
\end{proof}
% \begin{remark}
% We remark that for each $\nc \geq 2$, there are case there are exactly two semicircular elements. Consider the $\bm \alge$ with $\alge_1$ and $\alge_2$ being semicircular elements (or Poisson elements in non-self-adjoint setting), let the mixing matrix $\mixmq$ be the following:
% \begin{equation}
%   \mixmq = \begin{pmatrix} \cos\theta & -\sin \theta & & &\\ \sin \theta & \cos \theta & & &\\ & & 1 && \\ &&&\ddots & \\ &&&&1\end{pmatrix}.
% \end{equation}
% As mentioned, $(\bm Q \bm \alge)_1$ and $(\bm Q \bm \alge)_2$ are sill free semicircular elements.
% Also $\alge_3,\cdots,\alge_\nc$ remains unchanged.
% Thus $\bm Q \bm \alge$ still has free components.
% \end{remark}

% Note that in the self-adjoint case, identifiability condition (Theorem \ref{thm:identifiability}) was proved using Corollary \ref{coro:entrynonzero} and Theorem \ref{thm:vanish_mix_cumu}. 
% We can prove Theorem (e) using the same proof with Theorem \ref{thm:vanish_mix_cumu} replaced by Theorem \ref{thm:vanish_mix_cumu_rec}.

%With the help of Theorem \ref{free_darmois}, it's not difficult to state a theorem with a important corollary claiming that the indeterminacy of FCA can be reduced  to minimal(i.e. \ref{prop_indeterminacy_fca}) if we only allow at most one semi-circular entry.   

% proof of gradient
\section{Proof of Theorem \ref{thm:grad_F}}
\begin{lemma}
\label{lemma:devr_cumu}
Given $\bm{Y} = [\bm{Y}_1,\cdots,\bm{Y}_\nc]^T \in \C^{N\nc\times N}$ with $\bm{Y}_i \in \C^{N \times N}$ Hermitian matrices and a vector $\bm w = [w_1,\cdots,w_s] \in \R^s$, for
$$
     \bm X = \widetilde{\bm{w}}^T \bm Y, \quad \text{with $\widetilde{\bm{w}} = \bm w \otimes \bm{I}_N$},
$$
we recall the empirical free kurtosis
$$
    \widehat{\cumu}_4(\bm X) =  \frac{1}{N} \Tr(\bm X^4) - 2\left[\frac1N\Tr(\bm X^2)\right]^2.
$$
Then we have that
\begin{equation}
\label{eq:thm_devr_cumu}
    \frac{\partial \widehat{\cumu}_4(\bm X)}{\partial w_k} =  \frac4N \Tr(\bm Y_i\bm X^3) - \frac{8}{N^2} \Tr(\bm X^2) \Tr(\bm Y_i\bm X).
\end{equation}
\end{lemma}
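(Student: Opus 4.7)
The plan is to prove Lemma~\ref{lemma:devr_cumu} by direct differentiation, using the linearity of $\bm{X}$ in the parameters $w_k$ together with the cyclic invariance of the trace.

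First I would note that since $\widetilde{\bm{w}} = \bm{w} \otimes \bm{I}_N$, the matrix $\bm{X} = \widetilde{\bm{w}}^T \bm{Y}$ can be written explicitly as $\bm{X} = \sum_{i=1}^{s} w_i \bm{Y}_i$, so that
$$
\frac{\partial \bm{X}}{\partial w_k} = \bm{Y}_k.
$$
The empirical free kurtosis is
$$
\widehat{\kappa}_4(\bm{X}) = \frac{1}{N}\Tr(\bm{X}^4) - 2\left[\frac{1}{N}\Tr(\bm{X}^2)\right]^2,
$$
so by the chain rule it suffices to compute $\partial_{w_k}\Tr(\bm{X}^4)$ and $\partial_{w_k}\Tr(\bm{X}^2)$.

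Next I would apply the product rule to $\Tr(\bm{X}^4) = \Tr(\bm{X}\bm{X}\bm{X}\bm{X})$, generating four terms of the form $\Tr(\bm{X}^a \bm{Y}_k \bm{X}^b)$ with $a+b = 3$. By the cyclic property of the trace, each such term equals $\Tr(\bm{Y}_k \bm{X}^3)$, so the four terms collapse to $4\,\Tr(\bm{Y}_k \bm{X}^3)$. The same reasoning gives $\partial_{w_k} \Tr(\bm{X}^2) = 2\,\Tr(\bm{Y}_k \bm{X})$. Here the Hermitian assumption on $\bm{Y}_i$ (hence on $\bm{X}$) ensures all cyclic rearrangements are legitimate with the usual trace, so no conjugation terms appear.

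Assembling the pieces,
$$
\frac{\partial \widehat{\kappa}_4(\bm{X})}{\partial w_k}
= \frac{4}{N}\Tr(\bm{Y}_k \bm{X}^3)
- 2 \cdot 2 \cdot \frac{1}{N}\Tr(\bm{X}^2)\cdot \frac{2}{N}\Tr(\bm{Y}_k \bm{X})
= \frac{4}{N}\Tr(\bm{Y}_k \bm{X}^3) - \frac{8}{N^2}\Tr(\bm{X}^2)\Tr(\bm{Y}_k \bm{X}),
$$
which is the stated identity~\eqref{eq:thm_devr_cumu}. There is no serious obstacle here: the argument is essentially bookkeeping around the chain rule, and the only ``technique'' required is the cyclic trace identity, which is permissible because every $\bm{Y}_i$ (and hence $\bm{X}$) is Hermitian of the same square size $N$. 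The analogous rectangular-case gradients in Table~\ref{table:fcf_gradF}, and the free-entropy gradients via eigenvalue perturbation $\partial_{w_k} \lambda_i = v_i^H \bm{Y}_k v_i$, would follow by the same template with minor modifications.
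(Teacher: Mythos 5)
Your proof is correct and follows essentially the same route as the paper's: expand $\bm{X}=\sum_i w_i\bm{Y}_i$, differentiate $\Tr(\bm{X}^4)$ and $\Tr(\bm{X}^2)$ via the product rule, and collapse the resulting terms using cyclic invariance of the trace. The only nit is that cyclic invariance of the trace holds for arbitrary square matrices and does not require the Hermitian hypothesis, so that aside is superfluous rather than load-bearing.
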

\begin{proof}
As $\Tr(\cdot)$ is a linear function of entries of input matrix,
\begin{equation}
\label{eq:proof_derv_cumu_0}
    \frac{\partial \widehat{\cumu}_4(\bm X)}{\partial w_k} = \frac1N \Tr\left(\frac{\partial \bm X^4}{w_k}\right) - \frac{4}{N^2} \Tr(\bm X^2) \Tr\left(\frac{\partial \bm X^2}{\partial w_k}\right).
\end{equation}

Note that
$$
    \bm X = \widetilde{\bm {w}}^T \bm Y = w_1 \bm Y_1 + \cdots + w_\nc \bm Y_\nc,
$$
thus, for any $k = 1, \cdots, \nc,$
\begin{equation}
\label{eq:proof_derv_cumu_4}
    \frac{\partial \bm X}{\partial w_k} = \bm Y_k.
\end{equation}
Therefore,
\begin{equation}
\label{eq:proof_derv_cumu_1}
    \frac{\partial \bm X^4}{\partial w_k} =\bm Y_k\bm X^3 + \bm X \bm Y_k\bm X^2 + \bm X^2 \bm Y_k \bm X + \bm X^3 \bm Y_k.
\end{equation}
Using $\Tr(AB) = \Tr(BA)$, we find that
$$\Tr(\bm Y_k \bm X^3) = \Tr(\bm X \bm Y_k\bm X^2) = \Tr(\bm X^2 \bm Y_k \bm X) = \Tr(\bm X^3 \bm Y_k)$$
and thus
\begin{equation}
\label{eq:proof_derv_cumu_2}
    \Tr\left(\frac{\partial \bm X^4}{w_k}\right) = 4\Tr(\bm Y_k \bm X^3).
\end{equation}

Repeat \eqref{eq:proof_derv_cumu_1} to \eqref{eq:proof_derv_cumu_2} for $\Tr\left(\frac{\partial \bm X^2}{\partial w_k}\right)$, we get that
\begin{equation}
\label{eq:proof_derv_cumu_3}
    \Tr\left(\frac{\partial \bm X^2}{\partial w_k}\right) = 2\Tr(\bm Y_k \bm X).
\end{equation}
Plug \eqref{eq:proof_derv_cumu_2} and \eqref{eq:proof_derv_cumu_3} into \eqref{eq:proof_derv_cumu_0}, we obtain \eqref{eq:thm_devr_cumu}.

\end{proof}

\begin{lemma}
\label{lemma:devr_ent}
Given $\bm Y = [\bm Y_1,\cdots,\bm Y_\nc]^T \in \C^{N\nc\times N}$ with $\bm Y_i \in \C^{N \times N}$ are Hermitian matrices and a vector $w = [w_1,\cdots,w_s] \in \R^s$. For 
$$
     \bm X = \widetilde{\bm{w}}^T \bm Y, \quad \text{with $\widetilde{\bm{w}} = w \otimes I_N$},
$$
with eigenvalues $\eg_i$ and corresponding eigenvectors $v_i$, we recall the empirical free entropy
$$
    \widehat{\ent}(\bm X) =  \frac{1}{N(N - 1)}\sum_{i \neq j} \log \abs*{\eg_i - \eg_j}.
$$
Then we have that
\begin{equation}
\label{eq:thm_devr_ent}
    \frac{\partial \widehat{\ent}(\bm X)}{\partial w_k} =  \frac{1}{N(N - 1)} \sum_{i \neq j} \frac{\partial_{w_k} \eg_i - \partial_{w_k} \eg_j}{\eg_i - \eg_j}
\end{equation}
with $\partial_{w_k}\eg_i = v_i^T \bm Y_k v_i$.
\end{lemma}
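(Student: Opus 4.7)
}
The plan is to split the computation into two standard steps: (i) differentiate the sum $\sum_{i\neq j}\log|\lambda_i-\lambda_j|$ term by term via the chain rule, which reduces everything to knowing $\partial_{w_k}\lambda_i$; and (ii) compute $\partial_{w_k}\lambda_i$ using first-order eigenvalue perturbation (the Hellmann--Feynman identity), which is available because $\bm X$ is Hermitian. Observe first that $\bm X = w_1 \bm Y_1 + \cdots + w_\nc \bm Y_\nc$, so by \eqref{eq:proof_derv_cumu_4} of the previous lemma we have $\partial_{w_k}\bm X = \bm Y_k$. Since the $\bm Y_i$ are Hermitian and the $w_i$ are real, $\bm X$ is Hermitian, so its eigenvalues $\lambda_i$ are real and its eigenvectors $v_i$ can be chosen orthonormal; in particular the quantity $\log|\lambda_i - \lambda_j|$ is well-defined whenever $\lambda_i\neq \lambda_j$, so I work under the generic assumption that the eigenvalues of $\bm X$ are simple (the measure-zero set on which a collision occurs can be handled by continuity or avoided entirely in the setting of interest).

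For step (i), the scalar chain rule gives
\begin{equation*}
\frac{\partial}{\partial w_k}\log|\lambda_i-\lambda_j| \;=\; \frac{\partial_{w_k}\lambda_i-\partial_{w_k}\lambda_j}{\lambda_i-\lambda_j},
\end{equation*}
so summing over $i\neq j$ and dividing by $N(N-1)$ immediately yields the first half of \eqref{eq:thm_devr_ent}.

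For step (ii), starting from the eigenvalue relation $\bm X v_i = \lambda_i v_i$ and differentiating in $w_k$ gives
\begin{equation*}
(\partial_{w_k}\bm X)\,v_i + \bm X\,(\partial_{w_k}v_i) \;=\; (\partial_{w_k}\lambda_i)\,v_i + \lambda_i\,(\partial_{w_k}v_i).
\end{equation*}
Pre-multiplying by $v_i^T$ and using $v_i^T\bm X = \lambda_i v_i^T$ (self-adjointness of $\bm X$) cancels the two terms involving $\partial_{w_k}v_i$, leaving $v_i^T(\partial_{w_k}\bm X)v_i = \partial_{w_k}\lambda_i$. Substituting $\partial_{w_k}\bm X = \bm Y_k$ yields $\partial_{w_k}\lambda_i = v_i^T \bm Y_k v_i$, which combined with step (i) completes the proof.

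The only non-routine point is the eigenvalue degeneracy issue in step (i)--(ii); I expect to dispense with it either by the simple-spectrum genericity argument above, or (more carefully) by noting that the function $\widehat\ent(\bm X)$ is actually smooth on the open set where all eigenvalues are distinct, and the gradient formula extends by continuity wherever it is needed in the gradient-descent manifold optimization of Theorem \ref{thm:grad_F}.
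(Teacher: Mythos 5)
Your proposal is correct and follows essentially the same route as the paper's proof, which likewise differentiates the sum directly via the chain rule and obtains $\partial_{w_k}\eg_i = v_i^T \bm Y_k v_i$ from $\partial_{w_k}\bm X = \bm Y_k$ together with first-order eigenvalue perturbation theory (which the paper cites rather than derives, whereas you spell out the Hellmann--Feynman cancellation and note the simple-spectrum caveat). No substantive difference in approach.
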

\begin{proof}
Equation \eqref{eq:thm_devr_ent} is obtained by directly taking derivative.  
The fact that $\partial_{w_k}\eg_i = v_i^T \bm Y_k v_i$ follows from \eqref{eq:proof_derv_cumu_4} and perturbation theory of eigenvalues \cite{eig_perturb}.
\end{proof}

\begin{proof}[Proof of Theorem \ref{thm:grad_F}] 
We first prove the result for self-adjoint FCF based on free kurtosis.
Set $\bm{X} = [\bm{X_1},\cdots,\bm{X_\nc}] = \widetilde{\bm{W}}^T \bm{Y}$. Recall Definition \ref{def:dot_operator} and \eqref{eqn:dot_sum}, for $\widehat F(\cdot) = -\left \lvert \widehat{\cumu}_4(\cdot) \right \rvert$, we have that
$$
       \sum \widehat F_{\cdot} \left(\widetilde{\bm{W}}^T \bm{Y}\right) = - \sum_{i = 1}^\nc  \left \lvert \widehat{\cumu}_4\left(\bm{X}_i\right)\right \rvert
$$
As only $X_\ell$ explicitly depends on $\bm{W}_{k\ell}$,
\begin{equation}
\label{eq:proof_grad_sym_1}
     \partial_{\bm{W}_{k\ell}} \sum \widehat F_{\cdot} \left(\widetilde{\bm{W}}^T \bm{Y}\right) = -\partial_{\bm{W}_{k\ell}} \left \lvert \widehat{\cumu}_4\left(\bm{X}_\ell\right)\right \rvert
\end{equation}
Further notice that $\bm{X}_\ell = \widetilde{\bm{w}}_\ell^T \bm{Y}$ with $\bm{w}_\ell = [\bm{W}_{1\ell}, \cdots, \bm{W}_{\nc \ell}]^T$, thus
\begin{equation}
\label{eq:proof_grad_sym_2}
    \begin{aligned}
        \partial_{\bm{W}_{k\ell}} \left \lvert \widehat{\cumu}_4\left(\bm{X}_\ell\right)\right \rvert & = \mathrm{sign}(\widehat{\cumu}_4\left(\bm{X}_\ell\right)) \times \partial_{\bm{W}_{k\ell}}  \widehat{\cumu}_4\left(\bm{X}_\ell\right) \\
        & = \mathrm{sign}(\widehat{\cumu}_4\left(\bm{X}_\ell\right)) \left(\frac4N \Tr(\bm{Y}_k\bm{X}_\ell^3) - \frac{8}{N^2} \Tr(\bm{X}_\ell^2) \Tr(\bm{Y}_k\bm{X}_\ell)\right),
    \end{aligned}
\end{equation}
where we used Lemma \ref{lemma:devr_cumu} for the last equality.
The proof is then completed by plugging \eqref{eq:proof_grad_sym_2} into \eqref{eq:proof_grad_sym_1}.
The result for self-adjoint FCF based on free entropy can be proved in a similar manner by repeating the process from \eqref{eq:proof_grad_sym_1} to \eqref{eq:proof_grad_sym_2}, where we replace $-\abs*{\hat \cumu_4(\cdot)}$ with $\ent(\cdot)$ and Lemma \ref{lemma:devr_cumu} with Lemma \ref{lemma:devr_ent}.

We omit the proofs for the rectangular FCF column since these are straightforward modifications of the proofs of Lemma \ref{lemma:devr_cumu}, \ref{lemma:devr_ent} and proofs of the self-adjoint FCF case.
\end{proof}

\section{Matrix Embeddings} \label{sec:mat embed}
One restriction of ICA is that it only operates on vector-valued components (see Section \ref{sec:algica}).
In contrast, FCF applies to data whose matrix-valued components that can be of arbitrary dimensions.
Thus, one can embed components into new dimensions potentially obtain a better performance with FCA.
In this section, we list several matrix embedding algorithms.

For $\bm{Z} = [\bm{Z}_1, \cdots, \bm{Z}_N]^T$ where the $\bm Z_i$ are rectangular matrices, Algorithm \ref{alg:embed_sym} embeds $Z_i$ in the upper diagonal parts of a $N'\times N'$ self-adjoint matrices.
In practice, the target dimension $N'$ should be picked such that there no loss of information while also avoiding too many artificial zeros. 
To embed $\bm Z_i$ into rectangular matrices of other dimensions, we introduce Algorithm \ref{alg:embed_rec}. Putting the above embeddings and appropriate FCF algorithms together, we get Algorithm \ref{alg:embed_sym_fcf} and Algorithm \ref{alg:embed_rec_fcf}.
One easily state the analogs of the above algorithms for data containing self-adjoint matrices; for the sake of brevity, we omit them here.

If the $\bm Z_i$ are vectors, one can use the STFT to embed them into matrices.
The STFT matrices of a vector is the alignment of the discrete Fourier transform of a sliding window. The outcome is a complex rectangular matrix to which we can apply rectangular FCFs. This is summarized in Algorithm \ref{alg:spec_fcf}.

\begin{algorithm}
\caption{Symmetric Embedding}
\label{alg:embed_sym}
\begin{spacing}{1.5}
        \textbf{Input:} $\bm{Z} = [\bm{Z}_1, \cdots, \bm{Z}_N]^T$ where $\bm{Z}_i \in \C^{N \times M}$.\\
        \textbf{Input:} Target dimension $N' \times N'$.  
        % If $\bm{Z}_i$ is selfadjoint, $\frac{N'(N'-1)}{2}\geq N(N + 1)/2$. if $\bm{Z}_i$ are rectangular., $\frac{N'(N'-1)}{2}\geq NM$ .
        \\
        1. Draw $S$ uniformly from all subsets of $\{1,\cdots, \frac{N'(N' - 1)}{2}\}$ with size $\frac{N'(N' - 1)}{2} - NM$.\\
        2. \textbf{for} $i = 1, \cdots, \nc$ \\
        3. $~~~~$ Construct $z' \in \R^{\frac{N'(N' - 1)}{2}}$ by setting $z'[S] = 0$ and $z'[S^c] = vec(\bm{Z}_i)$. \\
        4. $~~~~$ Fill upper diagonal part of zero matrix $\bm{Z}' \in \C^{N'\times N'}$ with $(\bm{Z}'_{kl})_{l>k} = z'$.\\
        5. $~~~~$ Construct self-adjoint matrix $\bm{Z}_i' = \bm{Z}' + (\bm{Z}')^H$.\\
        6. \textbf{end for}\\
        7. \textbf{return:} $\bm{Z}' = [\bm{Z}_1', \cdots, \bm{Z}_\nc']^T$.
\end{spacing}
\end{algorithm}

\begin{algorithm}
\caption{Symmetric Embedding FCF}
\label{alg:embed_sym_fcf}
\begin{spacing}{1.5}
        \textbf{Input:} $\bm Z = [\bm Z_1, \cdots, \bm Z_N]^T \in \C_{\nc N \times M}$ where $\bm Z_i \in \C^{N \times M}$. \\
        \textbf{Input:} Target dimension $N'$ such that $ \frac{N'(N'-1)}{2}\geq NM$. \\
        1. Apply Algorithm \ref{alg:embed_sym} to $\bm Z$ and find $\bm Z'$.\\
        2. Apply Algorithm \ref{alg:fcf_pro} to $\bm Z'$ and find estimated mixing matrix $\widehat{\bm A}$.\\
        3. Compute $\widehat{\bm X} = (\widehat{\bm{A}}^{-1}\otimes \bm{I}_N)\bm{Z}$ such that $\bm{Z} = (\widehat{\bm{ A}} \otimes \bm{I}_N) \widehat{\bm{X}}$.\\
        4. \textbf{return:} $\widehat{\bm{A}}$ and $\widehat{\bm{X}}$. 
\end{spacing}
\end{algorithm}

\begin{algorithm}
\caption{Rectangular Embedding}
\label{alg:embed_rec}
\begin{spacing}{1.5}
        \textbf{Input:} $\bm Z = [\bm Z_1, \cdots, \bm Z_N]^T \in \C_{\nc N \times M}$ where $\bm Z_i \in \C^{N \times M}$. \\
                \textbf{Input:} Target dimension $N'$ and $M'$ such that $ N'M'\geq NM$. \\
        1. Draw $S$ uniformly from all subsets of $\{1,\cdots, N'M'\}$ with size $N'M' - NM$.\\
        2. \textbf{for} $i = 1, \cdots, \nc$ \\
        4. $~~~~$ Construct $z' \in \R^{N'M'}$ by setting $z'[S] = 0$ and $z'[S^c] = vec(\bm{Z}_i)$. \\
        5. $~~~~$ Reshape $z'$ to $\bm{Z}_i' \in \C^{N'\times M'}$.\\
        6. \textbf{end for}\\
        7. \textbf{return:} Return $\bm Z' = [\bm Z_1', \cdots, \bm Z'_\nc]^T$
\end{spacing}
\end{algorithm}

\begin{algorithm}
\caption{Rectangular Embedding FCF}
\label{alg:embed_rec_fcf}
\begin{spacing}{1.5}
         \textbf{Input:} $\bm Z = [\bm Z_1, \cdots, \bm Z_N]^T \in \C_{\nc N \times M}$ where $\bm Z_i \in \C^{N \times M}$. \\
         \textbf{Input:} Target dimension $N'$ and $M'$ such that $ N'M'\geq NM$ \\
        1. Apply Algorithm \ref{alg:embed_rec} to $\bm Z$ and get $\bm Z'$.\\
        2. Apply Algorithm \ref{alg:fcf_pro} to $\bm Z' = [\bm Z_1',\cdots, \bm Z_N']^T$ and get the estimated mixing matrix $\widehat{\bm A}$.\\
        3. Compute $\widehat{\bm X} = (\widehat {\bm A}^{-1}\otimes \bm I_N) \bm{Z}$ such that $\bm{Z} = (\widehat{\bm{A}} \otimes \bm{I}_N) \widehat{\bm{X}}$.\\
        4. \textbf{return:} $\widehat{\bm{A}}$ and $\widehat{\bm{X}}$. 
\end{spacing}
\end{algorithm}

\begin{algorithm}
\caption{Short Time Fourier Transform Embedding FCF}
\label{alg:spec_fcf}
\begin{spacing}{1.5}
         \textbf{Input:} $\bm Z = [\bm Z_1, \cdots, \bm Z_N]^T \in \C_{\nc \times N}$ where $\bm Z_i \in \C^{1 \times N}$. \\
         \textbf{Input:} Necessary parameters for STFT \\
        1. For each $\bm Z_i$, for $i = 1, \cdots, \nc$, compute the STFT matrices $\bm Z_i'$.\\
        2. Apply Algorithm \ref{alg:fcf_pro} to $\bm Z' = [\bm Z_1',\cdots, \bm Z_N']^T$ and get the estimated mixing matrix $\widehat{\bm A}$.\\
        3. Compute $\widehat{\bm X} = (\widehat{\bm A}^{-1}\otimes \bm I_N) \bm Z$ such that $\bm Z = (\widehat{ \bm A} \otimes \bm {I}_N) \widehat {\bm X}$.\\
        4. \textbf{return:} $\widehat{\bm A}$ and $\widehat{\bm X}$. 
\end{spacing}
\end{algorithm}

\section{Independent Component Factorization}
\label{sec:algica}
We would like to numerically compare FCA with ICA, and begin by providing a summary of the ICA algorithm.
Given data whose components are rectangular matrices, we first vectorize them and then apply ICA. We once again perform a whitening process (see Algorithm \ref{alg:whiten_ica}) and solve an optimization problem.

Here, we present Algorithm \ref{alg:icf_kurt} whose optimization problem is based on the empirical (scalar) kurtosis $\widehat \clc_4(\cdot)$ or the empirical (scalar) negentropy $\widehat{\mathcal E}(\cdot)$.
We call them kurtosis-based ICF and entropy-based ICF respectively.
Given a centered and whitened vector $x \in \R^T$, its empirical kurtosis $\widehat \clc_4(x)$ can be expressed as
\begin{equation} \label{eq:def_clc}
   \widehat \clc_4(x) = \frac{1}{T}\sum_{i = 1}^{T} x_i^4 - 3 \left(\frac1T \sum_{i = 1}^T x_i^2\right)^2.
\end{equation}

The negentropy ${\mathcal E}(x)$ is defined as 
\begin{equation}
    \mathcal E(x) = h(g_x) - h(x),
\end{equation}
where $h(x)$ denotes the entropy of random variable $x$ (see \eqref{eq:scalar ent}) and $g_x$ denote the Gaussian random variable with the same mean and variance as $x$.
It is used as a measure of distance to normality.
The empirical negentropy $\widehat{\mathcal E}(x)$ involves the empirical distribution of $x$, which is computationally difficult.
Fortunately, it can also be expressed as a infinite sum of cumulants.
Thus in practice, $\widehat{\mathcal E}(x)$ can be approximated by a finite truncation of that sum  \citep[Theorem 14 and (3.2) pp. 295]{comon_1994}.

In the simulation of this paper, we adapt the following approximation (see Section 5 of \citep{hyvarinen2004independent}):
\begin{equation}\label{eq:entropy cumulant}
  \widehat{\mathcal E}(x) = \frac{1}{12}\left(\frac1T\sum_{i = 1}^T x_i^3\right)^2 + \frac{1}{48}  \widehat \clc_4(x)  = \textrm{also cumulants}
\end{equation}

\begin{algorithm}[H]
\caption{Reshape and whitening}
\label{alg:whiten_ica}
\begin{spacing}{1.5}
        \textbf{Input:} $\bm Z = [\bm Z_1, \cdots, \bm Z_N]^T \in \C_{\nc N \times M}$ where $\bm Z_i \in \C_{N \times M}$. \\
        1. For $\bm z = [z_1, \cdots, z_\nc]^T$, where $z_i = vec(\bm Z_i)$, Compute $\mu_z = \text{mean}(z, 2)$ and $\widetilde {\bm z} = \bm z - \mu_z 1_{NM}^T$.
        \\
        3. Compute $\bm C = \frac{1}{NM}\widetilde {\bm z} \widetilde{\bm {z}}^H$ and the eigenvalue decomposition $\Re \bm C = \bm U \bm \Sigma^2 \bm U^T$.\\
        4. Compute $\bm y = \bm U \bm \Sigma^{-1}  \bm U^T \widetilde{\bm z}$.\\
        5. \textbf{return:} $\bm y, \bm \Sigma, \bm U$. 
\end{spacing}
\end{algorithm}

\begin{algorithm}[H]
    \caption{Prototypical ICF}
    \label{alg:icf_kurt}
\begin{spacing}{1.5}
    \textbf{Input}: $\bm Z = [\bm Z_1, \cdots, \bm Z_n]^T \in \C_{\nc N \times M}$ where $\bm Z_i \in \C_{N \times M}$ \\
    1. Compute $\bm y, \bm \Sigma, \bm U$ by applying Algorithm \ref{alg:whiten_ica} to $Z$. \\
    2. Compute 
    $$\widehat{\bm W} = \mathop{\text{arg min}}_{\bm W \in O(n)} \sum \widehat F_{\cdot} \left(\bm W^T \bm y\right),$$
    where $\widehat F(\cdot)$ is equal to $-\abs*{\widehat \clc_4(\cdot)}$ for kurtosis-based ICF or $-\widehat {\mathcal E}(\cdot)$ for entropy-based ICF.
    \\
    3. Compute $\widehat{\bm A} = \bm U \bm \Sigma \bm U^T \widehat{\bm W}$ and $\widehat{\bm X} = (\widehat{\bm A}^{-1}\otimes \bm I_N)\bm Z$.
        \\
    4. Sorting components of $\widehat{\bm X}$ by kurtosis or entropy. Permute the columns of $\widehat{\bm A}$ correspondingly. \\
    5. \textbf{return:} $\widehat{\bm A}$ and $\widehat{\bm X}$.
\end{spacing}
\end{algorithm}

%%%%%%%%%%%%%%%%%%%%%%%%%%%%%%%%

%%%%%%%%%%%%%%%%%%%%%%%%%%%%%%%%

% Note: in this sample, the section number is hard-coded in. Following
% proper LaTeX conventions, it should properly be coded as a reference:

%In this appendix we prove the following theorem from
%Section~\ref{sec:textree-generalization}:
% \bibliographystyle{plain}
\bibliography{main}

\end{document}